\definecolor{darkblue}{rgb}{0, 0, 0.5}
\definecolor{beaublue}{rgb}{0.74, 0.83, 0.9}
\definecolor{gainsboro}{rgb}{0.86, 0.86, 0.86}
\definecolor{kleinblue}{rgb}{0,0.18,0.65}
\def\eqref#1{equation~\ref{#1}}
\def\1{\bm{1}}
\newcommand{\train}{\mathcal{D_{\mathrm{tr}}}}
\def\rv{{\textnormal{v}}}
\def\rvc{{\mathbf{c}}}
\def\rvg{{\mathbf{g}}}
\def\rvv{{\mathbf{v}}}
\def\rvw{{\mathbf{w}}}
\def\rvx{{\mathbf{x}}}
\def\rmH{{\mathbf{H}}}
\def\rmW{{\mathbf{W}}}
\DeclareMathAlphabet{\mathsfit}{\encodingdefault}{\sfdefault}{m}{sl}
\SetMathAlphabet{\mathsfit}{bold}{\encodingdefault}{\sfdefault}{bx}{n}
\def\gE{{\mathcal{E}}}
\def\gF{{\mathcal{F}}}
\def\gX{{\mathcal{X}}}
\def\gY{{\mathcal{Y}}}
\def\gZ{{\mathcal{Z}}}
\newcommand{\R}{\mathbb{R}}
\DeclareMathOperator*{\argmin}{arg\,min}
\theoremstyle{plain}
\newtheorem{theorem}{Theorem}[section]
\newtheorem{proposition}[theorem]{Proposition}
\newtheorem{lemma}[theorem]{Lemma}
\newtheorem{corollary}[theorem]{Corollary}
\theoremstyle{definition}
\newtheorem{definition}[theorem]{Definition}
\theoremstyle{remark}
\newcommand{\irml}[0]{\text{IRMv1}\xspace}
\newcommand{\erm}[0]{\text{ERM}\xspace}
\newcommand{\Rad}{\textrm{Rad}}
\newcommand{\wilds}[0]{\textsc{Wilds}\xspace}
\newcommand{\ours}[0]{\text{FeAT}\xspace}
\newcommand{\oursi}[0]{\text{iFeAT}\xspace}
\newcommand{\oursfull}[0]{\textbf{F}eature \textbf{A}ugmented \textbf{T}raining \xspace}
\newcommand{\rfc}[0]{\text{Bonsai}\xspace}
\newcommand{\irmx}[0]{\text{IRMX}\xspace}
\newcommand{\vrex}[0]{\text{VREx}\xspace}
\newcommand{\cmnist}[0]{\textsc{ColoredMNIST}\xspace}
\newcommand{\dataset}{{\cal D}}
\newcommand{\envtrain}{{\gE_{\text{tr}}}}
\newcommand{\envall}{{\gE_{\text{all}}}}
\newcommand{\envalpha}{{\gE_{\alpha}}}
\newcommand{\rad}{{\text{Rad}}}
\newcommand{\std}[1]{\textit{(\scriptsize{$\pm$#1}})}
\newcommand{\abs}[1]{\left\lvert#1\right\rvert}
\newenvironment{myquotation}{\setlength{\leftmargini}{0em}\quotation}{\endquotation}
\definecolor{purple}{HTML}{000000}
\newcommand{\samethanks}[1][\value{footnote}]{\footnotemark[#1]}
\title{Understanding and Improving Feature Learning for Out-of-Distribution Generalization}
\author{
Yongqiang Chen$^{1}$\thanks{Equal Contribution. Work done during Yongqiang's internship at Tencent AI Lab.},\
Wei Huang$^2$\samethanks,\ \  
Kaiwen Zhou$^1$\samethanks \\
$^1$The Chinese University of Hong Kong\quad   $^2$RIKEN AIP\\
\texttt{\{yqchen\!,kwzhou\!,jcheng\}@cse\!.\!cuhk\!.\!edu\!.\!hk}\quad 
  \texttt{wei\!.\!huang\!.\!vr@riken\!.\!jp}
\\\vspace{-0.2in}
\AND
Yatao Bian$^3$,\ Bo Han$^4$,\ James Cheng$^1$ \\
$^3$Tencent AI Lab\quad  $^4$Hong Kong Baptist University\\
\texttt{yatao\!.\!bian@gmail\!.\!com}\quad 
\texttt{bhanml@comp\!.\!hkbu\!.\!edu\!.\!hk} \\
\setcounter{footnote}{0}
}
\begin{document}

\maketitle

\begin{abstract}
  A common explanation for the failure of out-of-distribution (OOD) generalization is that the model trained with empirical risk minimization (ERM) learns spurious features instead of invariant features.
  However, several recent studies challenged this explanation and found that deep networks may have already learned sufficiently good features for OOD generalization.
  Despite the contradictions at first glance, we theoretically show that ERM essentially learns \emph{both} spurious and invariant features, while ERM tends to learn spurious features faster if the spurious correlation is stronger.
  Moreover, when fed the ERM learned features to the OOD objectives,
  the invariant feature learning quality significantly affects the final OOD performance, as OOD objectives rarely learn new features.
  Therefore,
  ERM feature learning can be a \emph{bottleneck} to OOD generalization.
  To alleviate the reliance, we propose \oursfull (\ours), to enforce the model to learn richer features ready for OOD generalization.
  \ours iteratively augments the model to learn new features while retaining the already learned features.
  In each round, the retention and augmentation operations are performed on different subsets of the training data that capture distinct features.
  Extensive experiments
  show that \ours effectively learns richer features thus boosting the performance of various OOD objectives\footnote{Code is available at \url{https://github.com/LFhase/FeAT}.}.
\end{abstract}

\section{Introduction}
Understanding feature learning in neural networks is crucial to understanding how they generalize to different data distributions~\citep{mlp,ib,sgd_fl1,simple_bias,understand_ensemble,understand_benign}.
Deep networks trained with empirical risk minimization (ERM) learn highly predictive features that generalize surprisingly well to in-distribution (ID) data~\citep{erm,dl_book}. However, ERM also tends to learn \emph{spurious} features or shortcuts such as image backgrounds~\citep{camel_example,shortcut_dl,covid19_application,causaladv} whose correlations with labels do not hold in the out-of-distribution (OOD) data, and suffers from serious performance degeneration~\citep{wilds}.
Therefore, it is widely believed that the reason for the OOD failures of deep networks is that ERM fails to learn the desired features that have \emph{invariant} correlations with labels across different distributions~\citep{camel_example}.

However,
several recent works find that ERM-trained models have \emph{already learned sufficiently good features} that are able to generalize to OOD data~\citep{dare,dfr,dfrlearn}.
{In addition, when optimizing various penalty terms~\citep{causal_transfer,iga,andmask,vrex,sd,ib-irm,clove,fish,fishr,maple,ciga} that aim to regularize ERM to capture the invariant features (termed as OOD objectives)},
there also exists a curious phenomenon that the performance of OOD objectives largely relies on the pre-training with ERM before applying the OOD objectives~\citep{rfc,pair}.
As shown in Fig.~\ref{fig:ood_sweep},
the number of ERM pre-training epochs \emph{has a large influence} on the final OOD performance.
These seemingly contradicting phenomena raise a challenging research question:
\begin{myquotation}
  \emph{What features are learned by ERM and OOD objectives, respectively, and how do the learned features generalize to in-distribution and out-of-distribution data?}
\end{myquotation}

To answer the question,
we conduct a theoretical investigation of feature learning in a two-layer CNN network, when trained with ERM and a widely used OOD objective, \irml~\citep{irmv1}, respectively.
We use a variation of the data models proposed in~\cite{understand_ensemble,understand_benign},
and include features with different correlation degrees to the labels to simulate invariant and spurious features~\citep{two_bit}.

\begin{figure}[t]
  \vskip -0.15in
  \centering
  \subfigure{
    \includegraphics[width=0.55\textwidth]{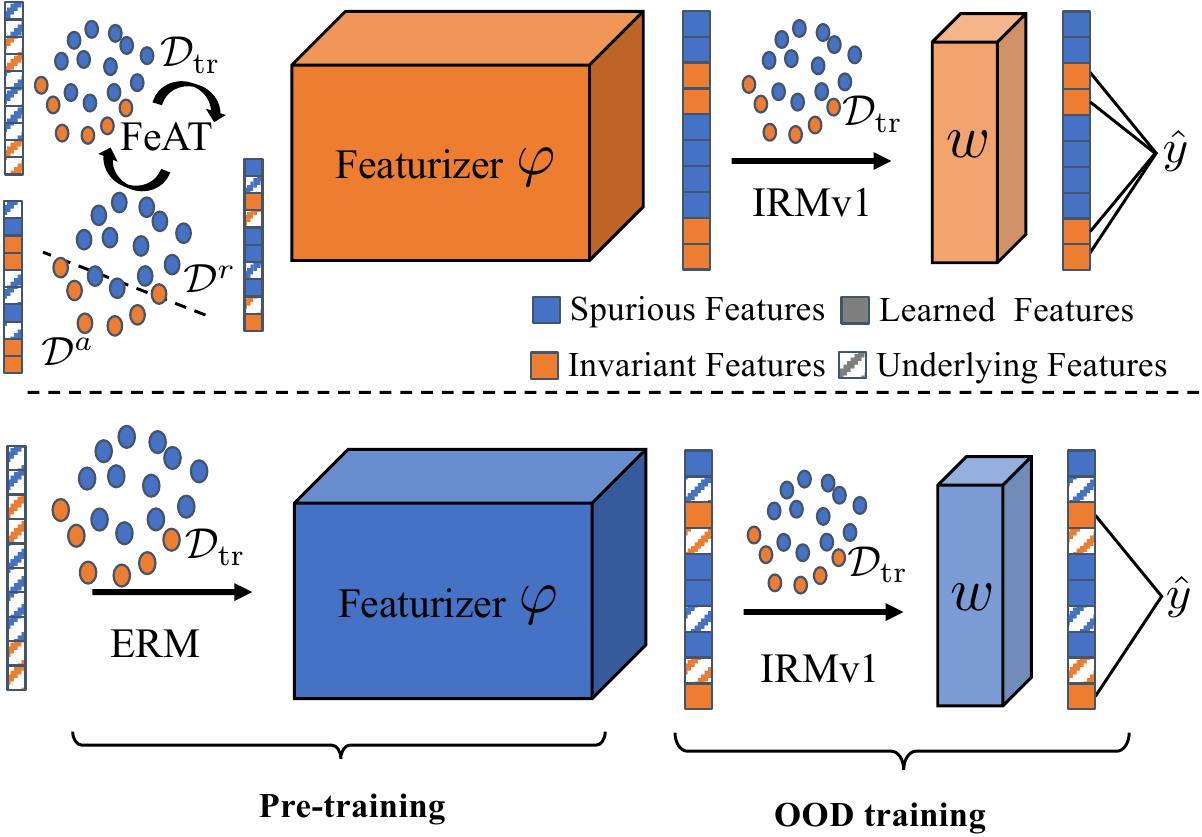}
    \label{fig:fat_illustration}
  }
  \subfigure{
    \includegraphics[width=0.4\textwidth]{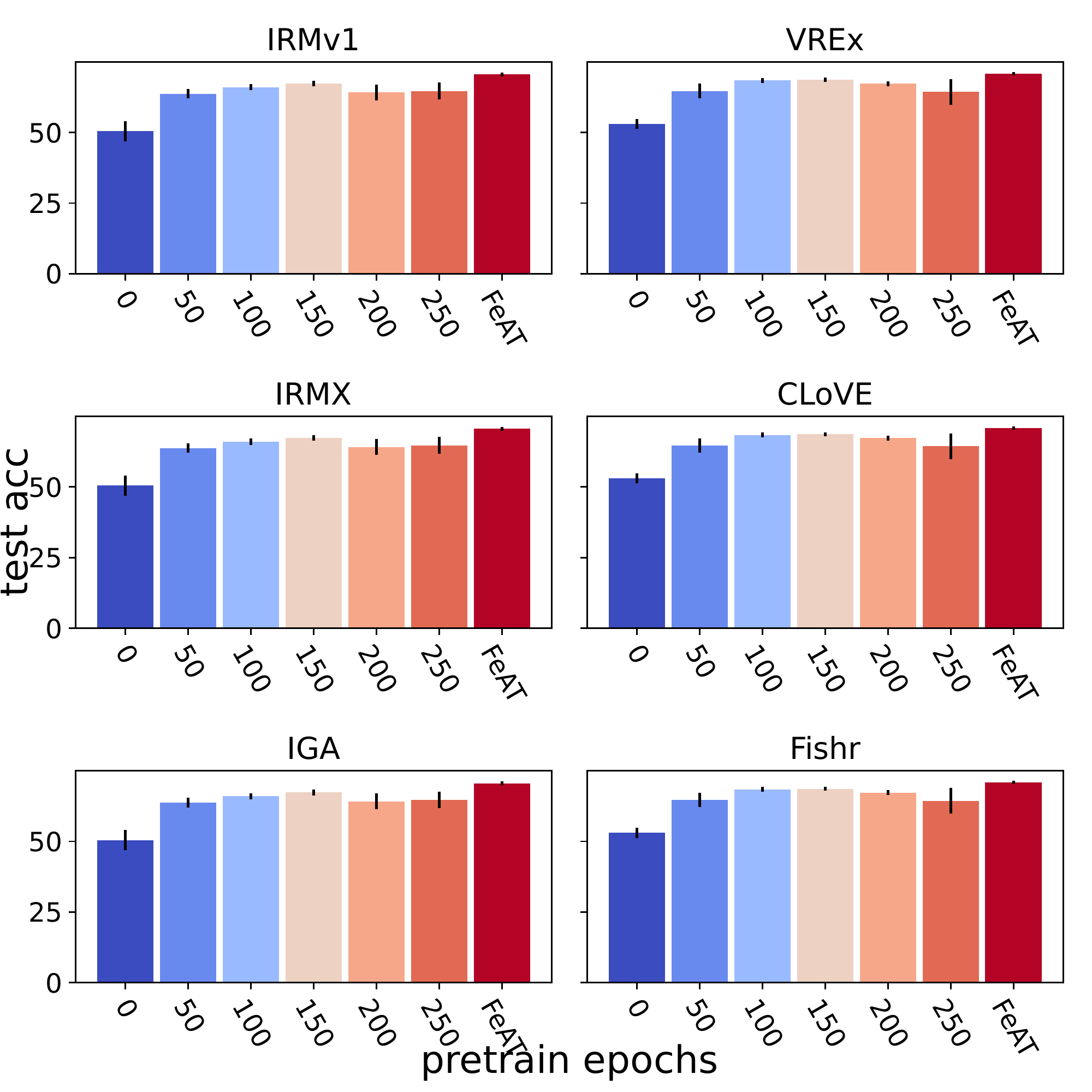}
    \label{fig:ood_sweep}
  }
  \vskip -0.15in
  \caption{\textit{(a) An illustration of \ours (top row) compared to ERM (bottom row).}
    Different colors in samples denote the respective dominant features.
    As the original data is dominated by spurious features (blue), ERM tends to learn more spurious features but limited invariant features (orange).
    Thus the OOD training with \irml can only leverage limited invariant features and achieve limited performance.
    In contrast, iteratively, \ours divides $\train$ into augmentation $D^a$ and retention sets $D^r$ that contain features not learned and already learned by the current model at the round, respectively. In each round, \ours augments the model with new features contained in the growing augmentation sets while retaining the already learned features contained in the retention sets, which will lead the model to learn richer features for OOD training and obtain a better OOD performance.
    Then \ours augments the model with new features while retaining the already learned features, which leads to richer features for OOD training and better OOD performance.
    \textit{(b) OOD Performance vs. the number of ERM pre-training epochs in \cmnist-025.} The performance of various OOD objectives largely relies on the quality of ERM-learned features. When there exist underlying useful features poorly learned by ERM, the OOD performance will be limited. In contrast, \ours learns richer features with $2$ rounds (or $300$ epochs) and improves the OOD performance.}
  \label{fig:flood_phenon}
  \vskip -0.15in
\end{figure}

First, we find that ERM essentially learns \emph{both} spurious features and invariant features (Theorem~\ref{thm:erm_learn_feat}).
The degrees of spurious and invariant feature learning are mostly controlled by their correlation strengths with labels.
Moreover, merely training with \irml \emph{cannot learn new} features (Theorem~\ref{thm:irmv1_not_learn}).
Therefore, the \emph{quality} of ERM feature learning affects the final OOD performance significantly.
Hence, as the number of ERM pre-training epochs increases, the model learns invariant features better and thus the final OOD performance will increase (Fig.~\ref{fig:flood_phenon}).
However, when ERM does not capture \emph{all} useful features for OOD generalization, i.e., there exist some useful features that are poorly learned by ERM, the model can hardly learn these features during OOD training and the OOD performance will be limited.
Given a limited number of pre-training steps, it could often happen due to low invariant correlation strength, the feature learning biases of ERM~\citep{simple_bias}, or the model architectures~\citep{what_shape}.
Consequently, ERM feature learning can be a \emph{bottleneck} to OOD generalization~\citep{imagenetv2}.

To remedy the issue, we propose  \oursfull (\ours), an iterative strategy to enforce the model to learn richer features.
As shown in Fig.~\ref{fig:fat_illustration}, in each round, \ours separates the train set into two subsets according to whether the underlying features in each set are already learned (Retention set $\dataset^r$) or not (Augmentation set $\dataset^a$), by examining whether the model yields correct ($\dataset^r$) or incorrect ($\dataset^a$) predictions for samples from the subsets, respectively.
Intuitively, $\dataset^a$ and $\dataset^r$ will contain distinct features that are separated in different rounds.
Then, \ours performs distributionally robust optimization (DRO)~\citep{dro,rfc} on all subsets, which \emph{augments} the model to learn new features by minimizing the maximal ERM losses on all $\dataset^a$ and \emph{retains} the already learned features by minimizing ERM losses on all $\dataset^r$.
Along with the growth of the augmentation and retention sets, \ours is able to learn richer features for OOD training and obtain a better OOD performance.
\ours terminates when the model cannot learn any new predictive features (Algorithm~\ref{alg:fat}).

We conduct extensive experiments on both \cmnist~\citep{irmv1,pair} and $6$ datasets from the challenging benchmark, \wilds~\citep{wilds},
and show that \ours effectively learns richer features and thus consistently improves the OOD performance when applied to various OOD objectives (Sec.~\ref{sec:exp}).
\section{Related Work}%
We discuss the most related work to ours and leave more details in Appendix~\ref{sec:related_work_appdx}.

\textbf{On Feature Learning and Generalization.}
Understanding feature learning in deep networks is crucial to understanding their generalization~\citep{mlp,ib,sgd_fl1,sgd_fl2,understand_ensemble,understand_benign,huang2023graph}.
Beyond the empirical probing~\citep{saliency,new_saliency,what_shape,toy_model}, \citet{understand_ensemble} proposed a new theoretical framework for analyzing the feature learning process of deep networks, which has been widely adopted to study various deep learning phenomena~\citep{understand_ssl,understand_adam,understand_benign,huang2023graph}.
However, how the learned features from ID data can generalize to OOD data remains elusive. The only exceptions are \citep{understand_da} and \citep{feat_distort}. \citet{feat_distort} find fine-tuning can distort the pre-trained features while fine-tuning can be considered as a special case in our framework. \citet{understand_da} focus on how data augmentation helps promote good but hard-to-learn features and improve OOD generalization.
\citet{pde} finds neural networks tend to learn spurious features under imbalanced groups.
In contrast, we study the direct effects of ERM and OOD objectives to feature learning and provide a theoretical explanation for the curious phenomenon~\citep{dare,dfrlearn}.
To the best of our knowledge, we are the \textit{first} to analyze the feature learning of ERM and OOD objectives and their interactions in the general OOD generalization setting.

\textbf{Rich Feature Learning.}
Recently many OOD objectives have been proposed to regularize  ERM such that the model can focus on learning invariant features~\citep{irmv1,vrex,sd,clove,fishr}.
However, the final OOD performance has a large dependence on the number of ERM pre-training epochs~\citep{rfc,pair}.
To remedy the issue, \citet{rfc} proposed \rfc to construct rich feature representations as network initialization for OOD training. Although both \rfc and \ours perform DRO on grouped subsets, \rfc rely on multiple initializations of the whole network to capture diverse features from the subsets, and complicated ensembling of the features, which requires more training epochs for convergence. In contrast, \ours relieves the requirements via direct augmentation-retention on the grouped subsets, and thus obtains better performance.
More crucially, although rich feature learning algorithms such as \rfc and weight averaging~\citep{diwa,eoa} have gained some successes, explanations about the reliance of OOD performance on ERM pre-training and why rich feature learning mitigates the issue remain elusive. In addition to a new rich feature learning algorithm, our work provides theoretical explanations for the success of rich feature learning in OOD generalization.

\section{Preliminaries and Problem Definition}
\label{sec:prelim}
\textbf{Notations.} We use old-faced letters for vectors and matrices otherwise for scalar;
$\| \cdot \|_2$ to denote the Euclidean norm of a vector or the spectral norm of a matrix,
while $\| \cdot \|_F$ for the Frobenius norm of a matrix.
${\bf I}_d$ refers to the identity matrix in $\mathbb{R}^{d \times d}$.
Full details are deferred to Appendix~\ref{sec:notations_appdx}.

Our data model $\dataset=\{ \rvx_i, y_i \}_{i=1}^n$ is adapted from~\citep{understand_ensemble,understand_benign} and further characterizes each data point $\rvx_i$ as invariant and spurious feature patches from the two-bit model~\citep{two_bit,pair}.
\begin{definition}\label{def:two_bit}
	$\dataset=\{\dataset_e\}_{e\in\envall}$ is composed of multiple subsets $\dataset_e$ from different environments $e\in\envall$, where each $\dataset_e=\{(\rvx_i^e,y_i^e)\}_{i=1}^{n_e}$ is composed of i.i.d. samples $(\rvx_i^e,y_i^e)\sim \mathbb{P}^e$.
	Each data $(\rvx^e,y^e)\in\dataset_e$ with $\rvx^e \in \mathbb{R}^{2d}$ and $y^e \in \{-1 , 1 \}$ is generated as follows:
	\vspace{-0.05in}
	\begin{enumerate}[label=(\alph*),leftmargin=*]
		\item Sample $y^e\in\{-1,1\}$ uniformly;\vspace{-0.02in}
		\item Given $y^e$, each input $\rvx^e=[ \rvx^e_1 , \rvx^e_2  ]$ contains a feature patch $\rvx_1$ and a noise patch $\rvx_2$, that are sampled as:\vspace{-0.02in}
		      \begin{equation*}
			      \begin{aligned}
				      \rvx_1 & = y \cdot \Rad(\alpha) \cdot  \rvv_1 + y \cdot \Rad(\beta) \cdot  \rvv_2  \quad
				      \rvx_2 = \boldsymbol{\xi}
			      \end{aligned}
		      \end{equation*}
		      where $\Rad(\delta)$ is a random variable taking value $-1$ with probability $\delta$ and $+1$  with probability $1-\delta$, $\rvv_1 = [1, 0,\ldots 0]^\top$ and $\rvv_2 = [0, 1, 0, \ldots 0]^\top$.\vspace{-0.02in}
		\item  A noise vector $\boldsymbol{\xi}$ is generated from the Gaussian distribution $\mathcal{N}(\mathbf{0}, \sigma_p^2 \cdot (\mathbf{I}_d - \rvv_1 \rvv_1^\top - \rvv_2 \rvv_2^\top  ))$
	\end{enumerate}
\end{definition}
Definition \ref{def:two_bit} is inspired by the structure of image data in image classification with CNN~\citep{understand_ensemble}, where the inputs consist of different patches, some of the patches consist of features that are related to the class label of the image, and the others are noises that are irrelevant to the label. In particular, $\rvv_1$ and $\rvv_2$ are feature vectors that simulate the invariant and spurious features, respectively.
Although our data model focuses on two feature vectors,
the discussion and results can be further generalized to multiple invariant and spurious features with fine-grained characteristics~\citep{understand_da}.
Following previous works~\citep{understand_benign}, we assume that the noise patch is generated from the Gaussian distribution
such that the noise vector is orthogonal to the signal vector $\rvv$.
Each environment is denoted as $\envalpha\!=\!\{(\alpha,\beta_e):0<\beta_e<1\}$, where $\rvv_1$ is the invariant feature as $\alpha$ is fixed while $\rvv_2$ is the spurious feature as $\beta_e$ varies across $e$.

\textbf{CNN model.} We consider training a two-layer convolutional neural network with a hidden layer width of $m$. The filters are applied to $\rvx_1$, $\rvx_2$, respectively,\footnote{When the environment $e$ is not explicitly considered, we will omit it for clarity.} and the second layer parameters of the network are fixed as $\frac{1}{m}$ and $-\frac{1}{m}$,
respectively. Then the network can be written as $f (\rmW, \rvx)\!=\!F_{+1}(\rmW_{+1}, \rvx) - F_{-1}(\rmW_{-1}, \rvx)$, where $F_{+1}(\rmW_{+1}, \rvx)$ and $F_{-1}(\rmW_{-1}, \rvx)$ are defined as follows:
\begin{equation}\label{eq:cnn}
	\begin{aligned}
		F_j(\rmW_j, \rvx ) = \frac{1}{m}\sum_{r=1}^m \left[  {\psi}(\rvw_{j,r}^\top \rvx_1) +  {\psi}(\rvw_{j,r}^\top \rvx_2) \right],
	\end{aligned}
\end{equation}
where $ {\psi(x)}$ is the activation function. We assume that all network weights are initialized as $\mathcal{N}(0,\sigma_0^2)$. %

\textbf{ERM objective.}
We train the CNN model by minimizing the empirical cross-entropy loss function:
\begin{equation}\label{eq:logistic_loss}
	\begin{aligned}
		\vspace{-0.05in}
		{L}(\rmW ) & = \sum_{e \in \mathcal{E}_{\mathrm{tr}}} \frac{1}{n_e}  \sum_{i=1}^{n_e} \ell( y^e_i \cdot f(\rmW, \rvx^e_i)),
		\vspace{-0.05in}
	\end{aligned}
\end{equation}
where $\ell(z)\!=\!\log(1\!+\!\exp(-z))$ and $\{\dataset_e\}_{e\in\envtrain}\!=\!\{\{ \rvx_i^e, y_i^e \}_{i=1}^{n_e}\}_{e\in\envtrain}$ is the trainset with $\sum_{e\in\envtrain}n_e\!=\!n$.

\textbf{OOD objective.}
The goal of OOD generalization is,
given the data from training environments $\{\dataset_e\}_{e\in\envtrain}$,
to find a predictor $f:\gX\rightarrow\gY$
that generalizes well to all (unseen) environments, or minimizes
$\max_{e\in\envall}L_e(f)$, where $L_e$ is the empirical risk under environment $e$.
The predictor $f=w\circ\varphi$ is usually composed of a featurizer $\varphi:\gX\rightarrow\gZ$ that learns to extract useful features, and a classifier $w:\gZ\rightarrow\gY$ that makes predictions from the extracted features.

Since we are interested in cases where the OOD objective succeeds in learning the invariant features. In the discussion below, without loss of generality, we study one of the most widely discussed OOD objective, \irml objective, from IRM framework \cite{irmv1}, and the data model where \irml succeeds.
Specifically, the IRM framework approaches OOD generalization by finding an invariant representation $\varphi$,
such that there exists a classifier acting on $\varphi$ that is
simultaneously optimal in $\envtrain$.
Hence, IRM leads to a challenging bi-level optimization problem as
\begin{equation}%
	\label{eq:irm}
	\min_{w,\varphi}  \ \sum_{e\in\envtrain}L_e(w\circ\varphi),
	\text{s.t.}
	\ w\in\argmin_{\bar{w}:\gZ\rightarrow\gY} L_e(\bar{w}\circ\varphi),\ \forall e\in\envtrain.
\end{equation}

Due to the optimization difficulty of Eq.~(\ref{eq:irm}), \citet{irmv1} relax Eq.~(\ref{eq:irm}) into \irml as follows:
\begin{equation}%
	\label{eq:irml}
	\min_{\varphi}  \sum_{e\in\envtrain}L_e(\varphi)+\lambda|\nabla_{w|w=1}L_e(w\cdot\varphi)|^2.
\end{equation}
Given the convolutional neural network (Eq.~\ref{eq:cnn}) and logistic loss (Eq.~\ref{eq:logistic_loss}), \irml can be written as
\begin{equation}\label{eq:irml_cnn}
	\begin{aligned}
		L_{\irml}(\rmW) & =   \sum_{e \in \mathcal{E}_{\mathrm{tr}}} \frac{1}{n_e} \sum_{i=1}^{n_e} \ell \left( y_i^e \cdot f(\rmW, \rvx_i^e) \right)                           +   \sum_{e \in \mathcal{E}_{\mathrm{tr}}}  \frac{\lambda }{n_e^2 }  \left( \sum_{i=1}^{n_e}  {\ell_i'^e}  \cdot y_i^e \cdot f(\rmW, \rvx_i^e)  \right)^2,
	\end{aligned}
\end{equation}
where $ {\ell_i'^e} = \ell'( y_i^e \cdot f(\rmW, \rvx_i^e)) =-  \frac{ \exp(-y_i^e \cdot f(\rmW, \rvx_i^e))}{1 + \exp(-y_i^e \cdot f(\rmW, \rvx_i^e))} $.
Due to the complexity of \irml, in the analysis below, we introduce $C_\irml^e$ for the ease of expressions. Specifically, we define $C_\irml^e$ as
\[
	C_\irml^e \triangleq \frac{1}{n_e}\sum_{i=1}^{n_e} {\ell'\big(y^e_i \hat{y}^e_i\big) \cdot y^e_i \hat{y}^e_i},
\]
where $\hat{y}^e_i \triangleq f(\rmW, \rvx^e_i)$ is the logit of sample $\rvx_i$ from environment $e$. The convergence of $C_\irml$ indicates the convergence of \irml penalty. The following lemma will be useful in our analysis.

\begin{lemma}(\citet{understand_benign})
	Let $\rvw_{j,r}(t)$\footnote{We use $\rvw_{j,r}(t)$, $\rvw_{j,r}^{(t)}$ and $\rvw_{j,r}^t$ interchangeably.} for $j \in\{+1, -1 \}$ and $r \in \{1,2,\ldots,m\}$ be the convolution filters of the CNN at $t$-th iteration of gradient descent. Then there exists unique coefficients {$\gamma^{inv}_{j,r}(t), \gamma^{spu}_{j,r}(t) \ge 0 $} and $\rho_{j,r,i}(t)$ such that,\vspace{-0.1in}
	\begin{align}
		\rvw_{j,r}(t) & = \rvw_{j,r}(0) + j \cdot \gamma^{inv}_{j,r}(t) \cdot \rvv_1 + j \cdot \gamma^{spu}_{j,r}(t) \cdot  \rvv_2    + \sum_{i=1}^n \rho_{j,r,i}(t) \cdot \| \boldsymbol{\xi}_i \|^{-2}_2 \cdot \boldsymbol{\xi}_i. \label{eq:decompostion}
	\end{align}
	\vspace{-0.15in}
\end{lemma}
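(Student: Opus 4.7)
The plan is to prove the decomposition by induction on the iteration $t$, using the observation that every gradient update lies in the span of $\{\rvv_1, \rvv_2, \boldsymbol{\xi}_1, \ldots, \boldsymbol{\xi}_n\}$, so the representation is preserved across steps.

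For the base case $t = 0$, all coefficients are set to zero and the decomposition collapses to $\rvw_{j,r}(0) = \rvw_{j,r}(0)$. For the inductive step, a direct computation of the gradient of the loss in Eq.~(\ref{eq:logistic_loss}) yields
\begin{equation*}
\nabla_{\rvw_{j,r}} L(\rmW(t)) = \frac{j}{m} \sum_{e \in \envtrain} \frac{1}{n_e} \sum_{i=1}^{n_e} \ell_i'^e \cdot y_i^e \cdot \bigl[\psi'(\langle \rvw_{j,r}(t), \rvx^e_{i,1}\rangle) \, \rvx^e_{i,1} + \psi'(\langle \rvw_{j,r}(t), \boldsymbol{\xi}_i\rangle) \, \boldsymbol{\xi}_i\bigr].
\end{equation*}
Since $\rvx^e_{i,1} = y_i^e \Rad(\alpha) \rvv_1 + y_i^e \Rad(\beta_e) \rvv_2$ and $(y_i^e)^2 = 1$, each summand is a linear combination of $\rvv_1$, $\rvv_2$, and $\boldsymbol{\xi}_i$. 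Hence $\rvw_{j,r}(t+1) = \rvw_{j,r}(t) - \eta \nabla_{\rvw_{j,r}} L$ retains the decomposition form with the recursions
\begin{align*}
\gamma^{inv}_{j,r}(t+1) &= \gamma^{inv}_{j,r}(t) - \frac{\eta}{m} \sum_{e \in \envtrain} \frac{1}{n_e} \sum_{i=1}^{n_e} \ell_i'^e \cdot \psi'(\langle \rvw_{j,r}(t), \rvx^e_{i,1}\rangle) \cdot \Rad(\alpha), \\
\gamma^{spu}_{j,r}(t+1) &= \gamma^{spu}_{j,r}(t) - \frac{\eta}{m} \sum_{e \in \envtrain} \frac{1}{n_e} \sum_{i=1}^{n_e} \ell_i'^e \cdot \psi'(\langle \rvw_{j,r}(t), \rvx^e_{i,1}\rangle) \cdot \Rad(\beta_e),
\end{align*}
with a parallel update for $\rho_{j,r,i}$ that picks up a factor $\|\boldsymbol{\xi}_i\|_2^2$ to cancel the $\|\boldsymbol{\xi}_i\|_2^{-2}$ normalization in Eq.~(\ref{eq:decompostion}).

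Uniqueness of the coefficients reduces to the almost-sure linear independence of $\{\rvv_1, \rvv_2, \boldsymbol{\xi}_1, \ldots, \boldsymbol{\xi}_n\}$. By construction each $\boldsymbol{\xi}_i$ is Gaussian in the orthogonal complement of $\mathrm{span}(\rvv_1, \rvv_2)$, and whenever the ambient dimension $d$ is sufficiently large relative to $n$, the noise patches are linearly independent with probability one; hence the coordinates of $\rvw_{j,r}(t) - \rvw_{j,r}(0)$ in this basis are uniquely determined.

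The main obstacle is establishing $\gamma^{inv}_{j,r}(t), \gamma^{spu}_{j,r}(t) \geq 0$ throughout training. The per-step increment carries the random sign $\Rad(\alpha)$ (respectively $\Rad(\beta_e)$), which is not deterministically positive, so monotonicity does not follow from a single-line sign check. The non-negativity requires a coupled induction that exploits (i) $\psi'(\cdot) \geq 0$ for the ReLU-type activation, (ii) $\ell_i'^e < 0$ for the logistic loss, and (iii) concentration of the sample-level random signs around their biased means, which are positive when $\alpha < 1/2$ and $\beta_e < 1/2$. Once the activation patterns stably align with the signal directions, the aggregate drift pushes both $\gamma^{inv}$ and $\gamma^{spu}$ upward; since the statement is cited directly from \citet{understand_benign}, we inherit their full induction that simultaneously controls the magnitudes of all coefficients to maintain non-negativity.
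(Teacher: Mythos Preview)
The paper does not supply its own proof of this lemma; it is quoted from \citet{understand_benign} and used as a black box, so there is no in-paper argument to compare against. Your inductive approach is the standard one and is exactly how such decompositions are established in that line of work: the gradient of the ERM loss with respect to $\rvw_{j,r}$ is manifestly a linear combination of $\rvv_1$, $\rvv_2$, and the $\boldsymbol{\xi}_i$'s, so the representation is preserved under GD, and uniqueness follows from almost-sure linear independence when $d$ is large enough. The recursions you write for $\gamma^{inv}_{j,r}$ and $\gamma^{spu}_{j,r}$ are precisely the ones the paper itself invokes later (see the proofs of Theorem~\ref{thm:irmv1_not_learn} and Proposition~\ref{pro:irmv1_with_erm_feat}). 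Your treatment of the non-negativity claim---flagging it as the only substantive obstacle and deferring to the coupled induction in the original reference---is appropriate; the present paper does not re-derive it either, and in fact its own arguments often bound $|\gamma^{inv}_{j,r}|$ rather than relying on the sign.
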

We refer Eq.~(\ref{eq:decompostion}) as the \textit{signal-noise decomposition} of $\rvw_{j,r}{(t)}$~\citep{understand_benign}. We add normalization factor $\|\boldsymbol{\xi}_{i}\|_{2}^{-2}$ in the definition so that  $\rho_{j,r}^{(t)} \approx \langle \rvw_{j,r}^{(t)}, \boldsymbol{\xi}_{i} \rangle$. Note that $\| \rvv_1\|_2 = \| \rvv_2 \|_2 = 1 $, the corresponding normalization factors are thus neglected.
Furthermore,{$\gamma^{inv}_{j,r}\approx\langle\rvw_{j,r},\rvv_1\rangle$ and $\gamma^{spu}_{j,r}\approx\langle\rvw_{j,r},\rvv_2\rangle$} respectively denote the degrees of invariant and spurious feature learning. %

\section{Theoretical Understanding of Feature Learning in OOD Generalization}
\label{sec:understand}
\begin{figure}[t]
	\vskip -0.15in
	\centering
	\subfigure[$C_\irml$, w/ PT]{\includegraphics[width=0.22\textwidth]{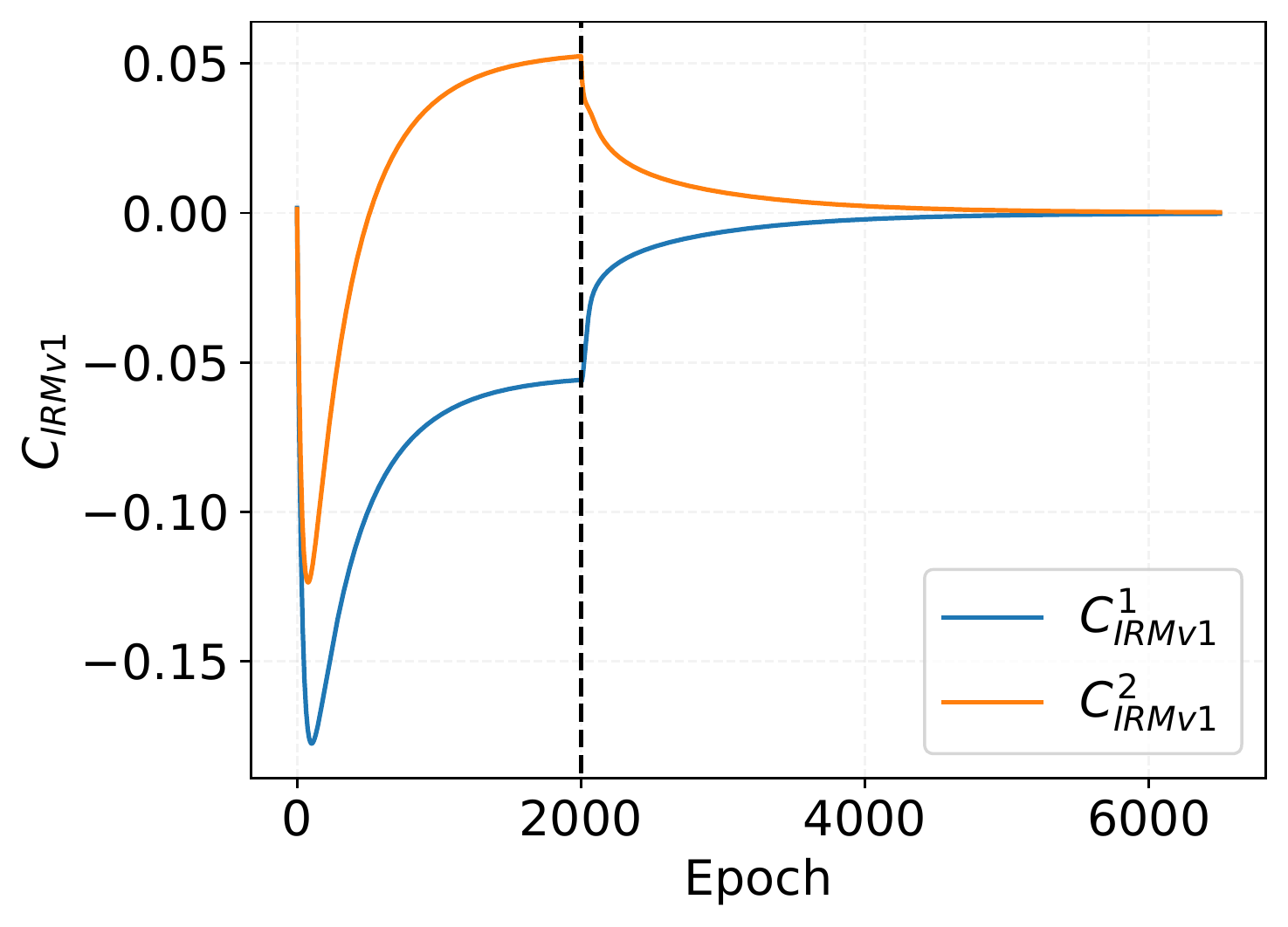}
		\label{f5a}
	}
	\subfigure[FL, w/ PT]{\includegraphics[width=0.248\textwidth]{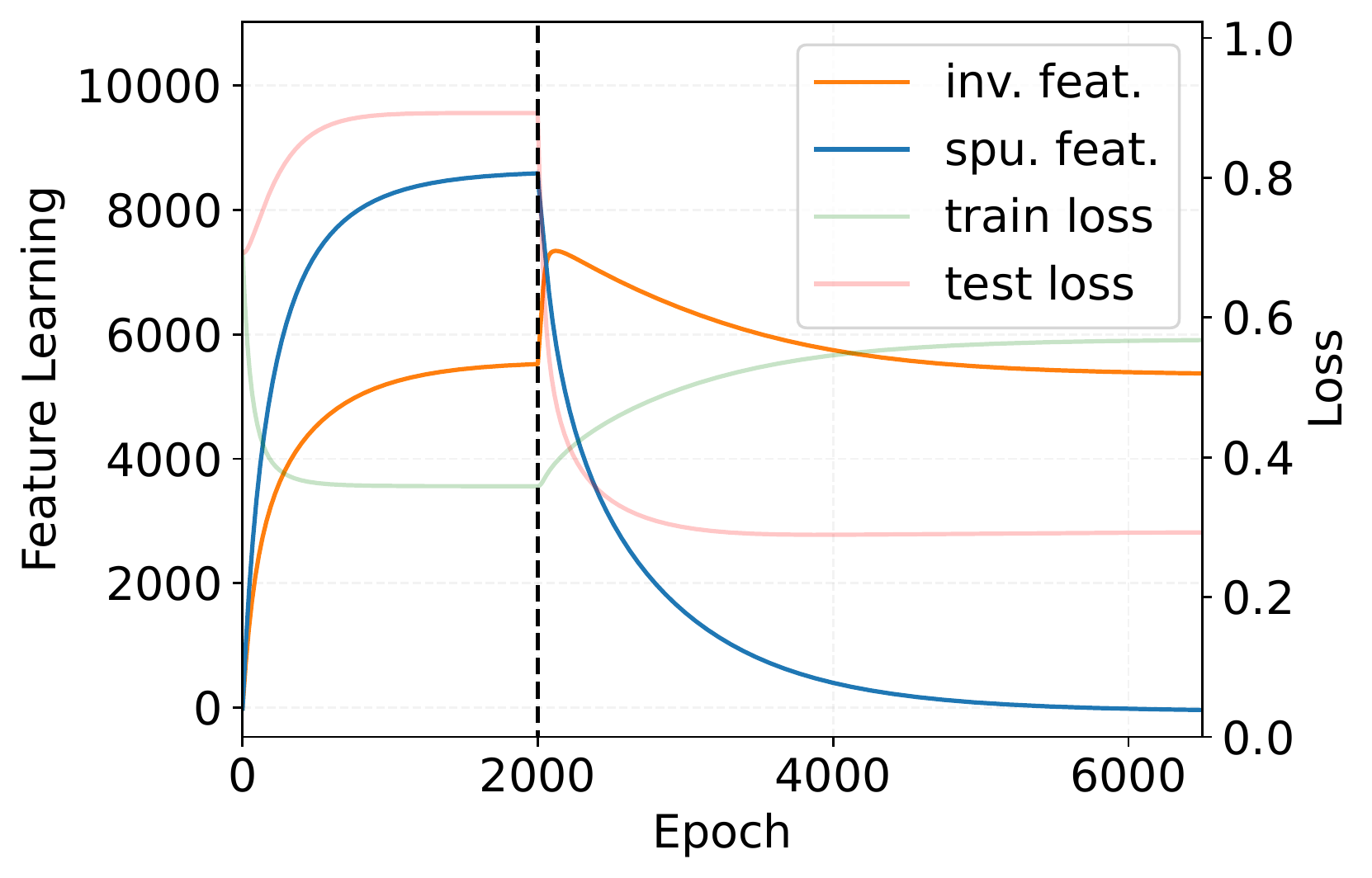}
		\label{f5b}}
	\subfigure[$C_\irml$, w/o PT]{\includegraphics[width=0.225\textwidth]{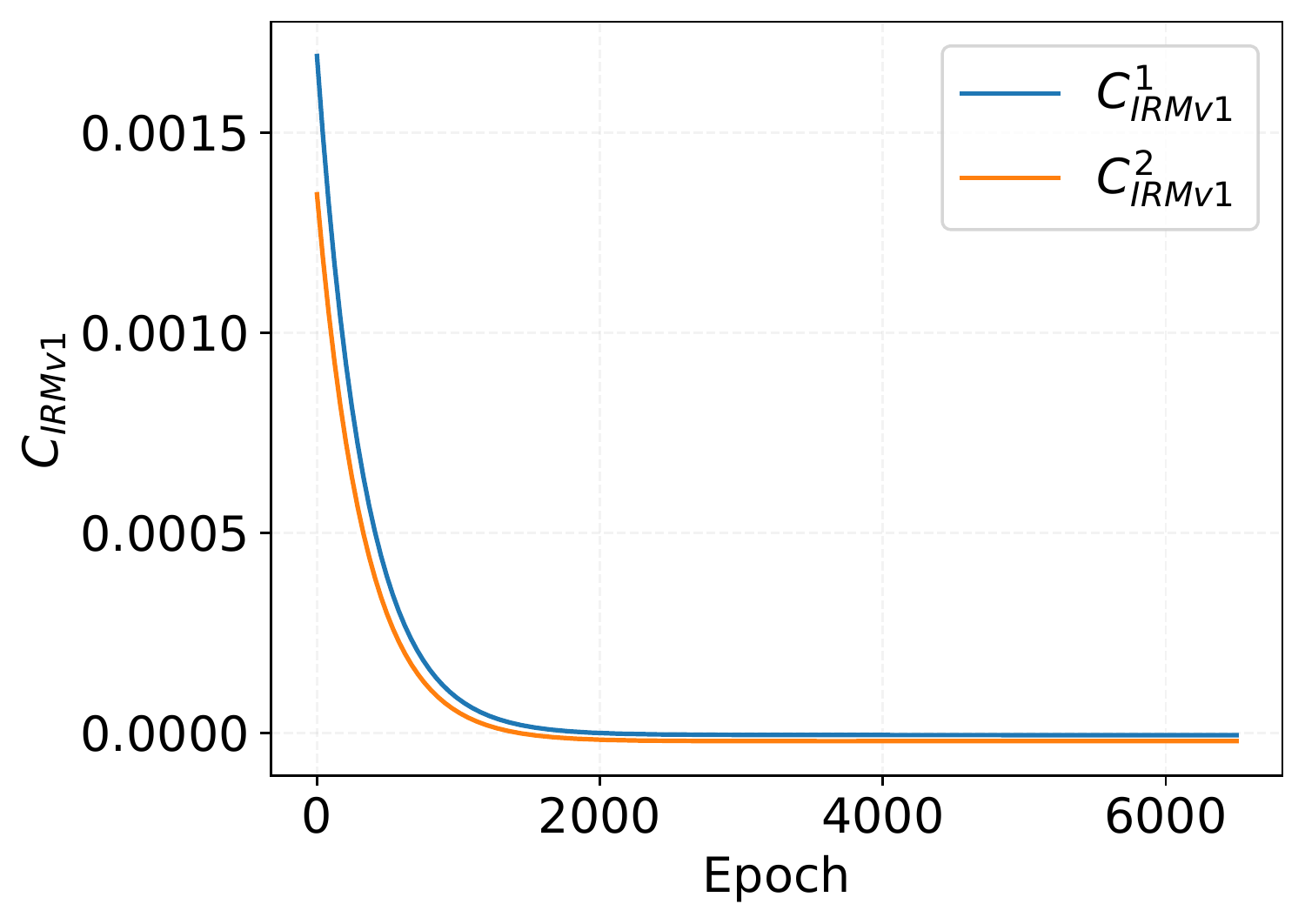} \label{f5c}}
	\subfigure[FL, w/o PT]{\includegraphics[width=0.25\textwidth]{
			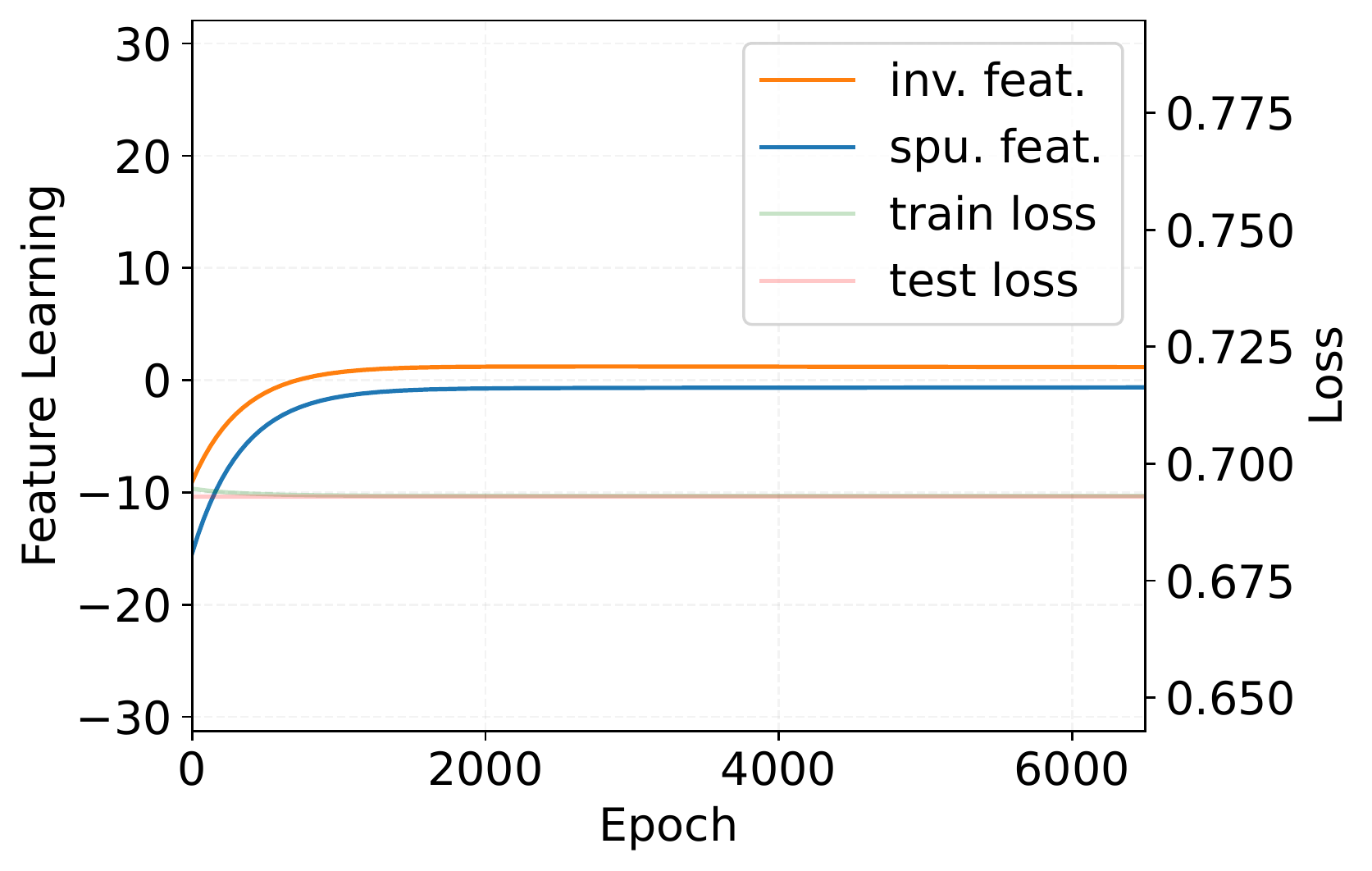} \label{f5d}}
	\vskip -0.1in
	\caption{The convergences of $C_\irml$ and feature learning coefficients (FL) with or without ERM pre-training (PT).
		The invariant and spurious feature learning terms are the mean of $\langle \mathbf{w}_{j,r}, j\mathbf{v}_1 \rangle$
		and $\langle \mathbf{w}_{j,r}, j\mathbf{v}_2 \rangle$ for $j\in \lbrace\pm 1\rbrace, r\in[m]$, respectively.
		The training environments are $\mathcal{E}_{tr}= \{(0.25, 0.1), (0.25, 0.2) \}$. The black dashed line indicates the end of pre-training. More details are given in Appendix \ref{exp:CNN_synthetic}.}
	\label{f5}
	\vskip -0.15in
\end{figure}
\subsection{ERM Feature Learning}
\label{sec:erm_learn_feat}

With the setup in Sec.~\ref{sec:prelim}, we first study the feature learning of the ERM objective. We consider a two training environments setup $\mathcal{E}_{tr} = \{(\alpha, \beta_1), (\alpha, \beta_2)\}$ where the signal of invariant feature is weaker than the average of  spurious signals (i.e., $\alpha > \frac{\beta_1 + \beta_2}{2}$), which corresponds to Figure \ref{f5}. For a precise characterization of the training dynamic, we adopted a minimal setup where $\psi(x) = x$ in Figure \ref{f5} and the following theorem, which already captures the key phenomenon in ERM feature learning. We study ERM feature learning with \textit{non-linear} activations in Appendix \ref{sec:erm_non_linear}.

\begin{theorem}\label{thm:erm_learn_feat} (Informal) For $\rho>0$, let $\underline{n} \triangleq \min_{e\in\mathcal{E}_{tr}}{n_e}$. Suppose that we run $T$ iterations of GD for the ERM objective. With sufficiently large $\underline{n}$ and $\psi(x)=x$,  assuming that (i) $\alpha, \beta_1, \beta_2<\frac{1}{2}$, and (ii) $\alpha > \frac{\beta_1 + \beta_2}{2}$,
	with properly chosen $\sigma_0^2$ and $\sigma_p^2$,
	there exists a constant $\eta$, such that with probability at least $1-2\rho$, both invariant and spurious features are converging and the increment of the spurious feature is larger than that of the invariant feature at any iteration $t\in\{0,\ldots,T-1\}$ (the detailed quantitative result of this gap can be found at (\ref{erm_eq7}) in Appendix \ref{sec:proof_erm_feat}).
\end{theorem}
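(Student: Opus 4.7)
The natural route is to run the signal-noise decomposition of the preceding lemma and track the three families of coefficients $\gamma^{inv}_{j,r}(t)$, $\gamma^{spu}_{j,r}(t)$, and $\rho_{j,r,i}(t)$ under gradient descent on the ERM objective. With $\psi(x)=x$ the network collapses to $f(\rmW,\rvx)=\frac{1}{m}\sum_r(\rvw_{+1,r}-\rvw_{-1,r})^\top(\rvx_1+\rvx_2)$, and the gradient takes the clean form
\[
\nabla_{\rvw_{j,r}} L(\rmW)=\frac{j}{m}\sum_{e\in\envtrain}\frac{1}{n_e}\sum_{i=1}^{n_e}\ell_i'^{e}\,y_i^e\,(\rvx_{1,i}^e+\rvx_{2,i}^e),
\]
so that projecting the gradient update onto $\rvv_1$, $\rvv_2$, and onto each $\boldsymbol{\xi}_i$ yields scalar recursions for the three coefficient families, reducing the problem to a one-dimensional dynamical analysis.

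\textbf{Core computation.} Because $\rvv_1^\top\rvx_{1,i}^e=y_i^e\Rad_i(\alpha)$ and $\rvv_2^\top\rvx_{1,i}^e=y_i^e\Rad_i(\beta_e)$ and the noise patches are orthogonal to $\rvv_1,\rvv_2$, the signal increments reduce to
\begin{align*}
\gamma^{inv}_{j,r}(t+1)-\gamma^{inv}_{j,r}(t) &=-\frac{\eta}{m}\sum_{e}\frac{1}{n_e}\sum_i \ell_i'^{e}(t)\,\Rad_i(\alpha),\\
\gamma^{spu}_{j,r}(t+1)-\gamma^{spu}_{j,r}(t) &=-\frac{\eta}{m}\sum_{e}\frac{1}{n_e}\sum_i \ell_i'^{e}(t)\,\Rad_i(\beta_e),
\end{align*}
while the noise coefficient satisfies $\Delta\rho_{j,r,i}(t)=-\eta j\ell_i'^{e}(t) y_i^e/(n_e m)$, a factor $\Theta(1/n_e)$ smaller than the signal updates. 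Since $\ell'<0$ and since assumption (i) plus Hoeffding give $\frac{1}{n_e}\sum_i \Rad_i(\delta)\ge 1-2\delta-o(1)>0$ with probability $1-\rho$ for each $\delta\in\{\alpha,\beta_e\}$, both signal coefficients are strictly increasing at every step, i.e.\ both features are being learned. Subtracting the two recursions,
\[
\Delta\gamma^{spu}_{j,r}-\Delta\gamma^{inv}_{j,r}=-\frac{\eta}{m}\sum_e\frac{1}{n_e}\sum_i\ell_i'^{e}(t)\bigl(\Rad_i(\beta_e)-\Rad_i(\alpha)\bigr),
\]
whose Rademacher mean is proportional to $\sum_e 2(\alpha-\beta_e)=2(2\alpha-\beta_1-\beta_2)>0$ by assumption (ii); a second application of Hoeffding turns this into a positive empirical gap with probability $1-\rho$, yielding the quantitative per-step difference claimed in the theorem.

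\textbf{Maintaining the induction and the main obstacle.} To promote these one-step bounds into a statement holding uniformly over $t\in\{0,\ldots,T-1\}$, I carry a short induction showing (a) the signal coefficients stay small enough that every logit $y_i^e f(\rmW(t),\rvx_i^e)$ lies in an interval on which $\ell'\in[-\tfrac12-\varepsilon,-\tfrac12+\varepsilon]$ uniformly across samples, and (b) the noise contribution $\sum_i\rho_{j,r,i}(t)\|\boldsymbol{\xi}_i\|_2^{-2}\langle\boldsymbol{\xi}_i,\boldsymbol{\xi}_{i'}\rangle$ to each logit is negligible, using the standard Gaussian estimates $|\langle\boldsymbol{\xi}_i,\boldsymbol{\xi}_{i'}\rangle|\lesssim\sigma_p^2\sqrt{d\log n}\ll\|\boldsymbol{\xi}_i\|_2^2\approx\sigma_p^2 d$ for $i\ne i'$ when $\underline{n}$ is large. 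Jointly these invariants let me pull a near-constant $|\ell'|\approx\tfrac12$ out of the per-step gap, reducing it to comparing two empirical Rademacher means. The main obstacle is precisely the coupling in (a): $\ell_i'^{e}(t)$ is a function of the current logit, which itself depends on $\rvx_i^e$ (through its $\Rad_i(\alpha),\Rad_i(\beta_e)$ components), so a naive ``expectation over signs'' argument is circular. The resolution is to choose $\eta$, $\sigma_0^2$, $\sigma_p^2$ and $T$ so that the $\ell'\approx-\tfrac12$ envelope survives the entire horizon, and to absorb the two failure events of probability $\rho$ each (Rademacher concentration and Gaussian noise concentration) into the advertised $2\rho$; the remaining bookkeeping then produces the explicit quantitative gap between $\Delta\gamma^{spu}$ and $\Delta\gamma^{inv}$ referenced in the appendix.
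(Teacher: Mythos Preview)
Your setup is right and matches the paper: with $\psi(x)=x$ the gradient projects cleanly onto $\rvv_1,\rvv_2$ and the noise directions, the signal increments are independent of $(j,r)$, and the noise contribution to each logit can be made $O(\underline{n}^{-1})$ by choosing $\sigma_0,\sigma_p$ small (this is exactly Lemma~\ref{lem:erm_concent_Q}). You also correctly identify the central difficulty: $\ell_i'^e(t)$ depends on the same sample's $\Rad_i(\alpha),\Rad_i(\beta_e)$, so one cannot simply take a Rademacher expectation inside the sum.

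The gap is in your proposed resolution. Forcing $\ell'\in[-\tfrac12-\varepsilon,-\tfrac12+\varepsilon]$ uniformly requires all margins $y_i^e\hat y_i^e$ to stay $O(\varepsilon)$, hence $\gamma^{inv},\gamma^{spu}=O(\varepsilon)$, for the entire horizon. But then the increments $\Delta\gamma^{inv},\Delta\gamma^{spu}$ are essentially constant in $t$, and you have no mechanism to show they \emph{decrease}, which is what ``converging'' means in the formal statement (the paper proves the increments form positive, monotonically decreasing sequences). Conversely, once the features grow to $\Theta(1)$ --- which they must for convergence to be meaningful --- your envelope collapses and the Hoeffding argument for the gap no longer applies. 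So the approach can handle only an early phase, not ``any iteration $t\in\{0,\ldots,T-1\}$''.

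The paper's proof bypasses this tension with a structural observation you are missing. Instead of tracking $\Delta\gamma^{inv}$ and $\Delta\gamma^{spu}$ separately, it tracks the \emph{sum} and \emph{difference} $\Delta_\Gamma^t\pm\Delta_\Lambda^t$. The point is that $\Rad_i(\beta_e)+\Rad_i(\alpha)\neq 0$ only when the two signs agree, in which case the margin $y_i^e\hat y_i^e=\pm\bigl(2\eta\sum_{k<t}(\Delta_\Gamma^k+\Delta_\Lambda^k)\bigr)+\mathbb{Q}_i^e$ depends only on the accumulated \emph{sum}; symmetrically, $\Rad_i(\beta_e)-\Rad_i(\alpha)\neq 0$ only when the signs differ, and then the margin depends only on the accumulated \emph{difference}. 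This exactly decouples the two sequences (up to the small $\mathbb{Q}$ term), and each satisfies a self-contained recursion of the form
\[
\mathcal{Q}^t=\frac{C_1-C_2\,\exp\!\bigl\{2\eta\sum_{k<t}\mathcal{Q}^k\bigr\}}{1+C_3\,\exp\!\bigl\{2\eta\sum_{k<t}\mathcal{Q}^k\bigr\}},
\]
with $C_1,C_2$ equal to the empirical group fractions $\overline C_{\pm1\pm1}$ controlled by Lemma~\ref{lem:erm_concent_C}. From this recursion one reads off positivity (assumptions (i),(ii) give $C_1>C_2$ up to concentration error) and monotone decrease for all $t$, with no small-logit hypothesis. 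That decoupling is the missing idea; once you have it, your remaining ingredients (noise control, Hoeffding on the $\overline C$'s, step-size choice) slot in exactly as you outlined.
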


As the formal statement of Theorem~\ref{thm:erm_learn_feat} is too complicated and lengthy, we leave it and its proof  in Appendix~\ref{sec:proof_erm_feat}, while giving an informal but more intuitive version here.
Theorem~\ref{thm:erm_learn_feat} states that ERM training learns both invariant feature and spurious feature at the same time, and if the average of  spurious signals is stronger, the coefficient of spurious feature learning will dominate that of invariant feature learning in the whole training process, corresponding to Figure \ref{f5b}. We establish the proof based on inspecting a novel recursive equation, which might be of independent interest. Note that Theorem \ref{thm:erm_learn_feat} can be directly generalized to handle any number of environments.

Speaking of implications, Theorem~\ref{thm:erm_learn_feat} provides answers to the seemingly contradicting phenomena that ERM fails in OOD generalization~\citep{camel_example,covid19_application}
but still learns the invariant features~\citep{dare,dfr,dfrlearn}.
In fact, ERM fails since it learns the spurious features more quickly, when spurious correlations are stronger than invariant correlations.
Nevertheless, invariant feature learning also happens, even when the spurious correlations are strong, so long as the invariant feature has a non-trivial correlation strength with the labels.
Therefore, simply re-training a classifier based on a subset of unbiased data on top of the ERM-trained featurizer achieves impressive OOD generalization performance~\citep{dare,dfr,dfrlearn}.
Theorem~\ref{thm:erm_learn_feat} also provides an explanation for the ID-OOD performance correlations when fine-tuning or training neural networks (especially large pre-trained models like CLIP~\citep{clip}, GPT~\citep{gpt3})~\citep{assaying_transfer,effective_robustness,robust_finetune,forgetting}. We provide a detailed discussion in Appendix~\ref{sec:related_work_appdx}.

\subsection{IRM Feature Learning}
\label{sec:ood_not_learn}
Although Theorem~\ref{thm:erm_learn_feat} states that ERM learns both invariant and spurious features,
the following questions remain unanswered: (1) whether \irml learns new features or simply amplifies the already learned ERM features, and (2) how the quality of the ERM-learned features affects the feature learning when \irml is incorporated.
We first study \irml training from scratch (w/o pre-training). %
\begin{theorem}\label{thm:irmv1_not_learn}
	Consider training a CNN model (\ref{eq:cnn}) with data model (\ref{def:two_bit}), define
	$\rvc(t) \triangleq \left[ C^1_\irml(\rmW,t),  C^2_\irml(\rmW,t), \cdots,  C^{|\envtrain|}_\irml(\rmW,t) \right],$
	and $\lambda_0 =  \lambda_{\min}(\mathbf{H}^\infty)$, where $ \mathbf{H}^\infty_{e,e'} \triangleq \frac{  1}{2m n_e n_{e'}} \sum_{i=1}^{n_e} \psi'(\langle \rvw_{j,r}(0) ,  \mathbf{x}^{e}_{1,i} \rangle ) \mathbf{x}^{e\top}_{1,i} \sum_{i'=1}^{n_{e'}} \psi'(\langle \rvw_{j,r}(0) ,  \rvx_{1,i'}^{e'} \rangle )    \rvx_{1,i'}^{e'}$. Suppose that dimension $d
		=  \Omega(\log(m/\delta))$, network width $m = \Omega(1/\delta)$, regularization factor $\lambda \ge 1/(\sigma_0  {|\mathcal{E}_{tr}|}^{3/2} )$, noise variance $\sigma_p = O(d^{-2})$, weight initial scale $\sigma_0 = O( \frac{ |\mathcal{E}_{tr} |^{7/2} \beta^3 L }{ d^{1/2}m^2\lambda^2_0 \log(1/\epsilon)} )$, then with probability at least $1-\delta$, after training time $T = \Omega \left( \frac{ \log(1/\epsilon)}{ \eta  \lambda \lambda_0 } \right) $, we have
	$\| \rvc(T )\|_2 \le  \epsilon, \
		{\gamma_{j,r}^{inv}}(T)\!=\!{o(1)}, \  {\gamma_{j,r}^{spu}}(T)\!=\!{o(1)}.$
\end{theorem}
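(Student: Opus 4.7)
The plan is to analyze the IRMv1 dynamics in the near-initialization (NTK-like) regime, showing that the penalty term drives $\rvc(t)\to \vzero$ exponentially fast through a linearized gradient flow governed by $\mathbf{H}^\infty$, while the signal coefficients $\gamma^{inv}_{j,r}(t)$ and $\gamma^{spu}_{j,r}(t)$ in the signal-noise decomposition accumulate only a negligible amount over the whole training window. First, I would compute $\nabla_{\rvw_{j,r}} L_{\irml}(\rmW)$ in closed form using (\ref{eq:irml_cnn}) and (\ref{eq:decompostion}). Because $\sigma_0$ is chosen very small, at initialization all logits $\hat{y}^e_i = f(\rmW(0),\rvx^e_i)$ are $O(\sigma_0\sqrt{d})$, so $\ell'(\cdot)\approx -1/2$ and $C^e_\irml(0)$ is of order $\sigma_0$. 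Under the regularization scaling $\lambda\ge 1/(\sigma_0|\envtrain|^{3/2})$, the penalty gradient dominates the ERM gradient throughout, so the effective dynamics are
\begin{equation*}
\rvw_{j,r}(t+1)-\rvw_{j,r}(t)\;\approx\;-\eta\lambda\,\sum_{e\in\envtrain}\frac{2 C^e_\irml(t)}{n_e}\sum_{i=1}^{n_e}\psi'(\langle \rvw_{j,r}(t),\rvx^e_i\rangle)\,y^e_i\,\rvx^e_i .
\end{equation*}

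Second, I would linearize around initialization: since $\psi'$ is evaluated at small arguments and the weights stay close to $\rvw_{j,r}(0)$ (to be verified by bootstrap), the update for $\rvc(t)$ is, to leading order, $\rvc(t+1) \approx (\mathbf{I}-\eta\lambda\,\mathbf{H}(t))\rvc(t)$, where $\mathbf{H}(t)$ stays uniformly close to $\mathbf{H}^\infty$. Standard NTK-style concentration (using $d=\Omega(\log(m/\delta))$ and $m=\Omega(1/\delta)$) gives $\lambda_{\min}(\mathbf{H}(t))\ge \lambda_0/2$, hence the exponential convergence
\begin{equation*}
\|\rvc(t)\|_2 \;\le\; \bigl(1-\tfrac{\eta\lambda\lambda_0}{2}\bigr)^t\|\rvc(0)\|_2,
\end{equation*}
which yields $\|\rvc(T)\|_2\le \epsilon$ at $T=\Omega(\log(1/\epsilon)/(\eta\lambda\lambda_0))$ as claimed.

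Third, I would track the signal-noise decomposition by projecting the above update onto $\rvv_1$ and $\rvv_2$. A single step changes $\gamma^{inv}_{j,r}$ and $\gamma^{spu}_{j,r}$ by at most $O(\eta\lambda\|\rvc(t)\|_2)$ times the Rademacher-biased feature strength (which is at most $1$), and similarly noise coefficients $\rho_{j,r,i}$ pick up $O(\eta\lambda\sigma_p\sqrt{d}\|\rvc(t)\|_2)$. Summing the geometric series gives
\begin{equation*}
\bigl|\gamma^{\star}_{j,r}(T)-\gamma^{\star}_{j,r}(0)\bigr| \;\le\; C\,\eta\lambda\sum_{t=0}^{T-1}\|\rvc(t)\|_2 \;\le\; \frac{C\,\|\rvc(0)\|_2}{\lambda_0} \;=\; O\!\left(\frac{\sigma_0}{\lambda_0}\right),\quad \star\in\{inv,spu\},
\end{equation*}
and the assumed bound on $\sigma_0$ makes this $o(1)$. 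The same bookkeeping, together with $\rvw_{j,r}(0)$ itself being $O(\sigma_0\sqrt{d})$, yields $\gamma^{inv}_{j,r}(T),\gamma^{spu}_{j,r}(T)=o(1)$.

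Finally, the main obstacle is closing the bootstrap: the linearization is valid only as long as the cumulative weight displacement is small enough that $\psi'$ and the Gram matrix $\mathbf{H}(t)$ stay close to their initial values, yet the bound on the displacement itself uses that very linearization. I would handle this with a standard continuity argument, defining a maximal time $T^\ast$ up to which $\|\rvw_{j,r}(t)-\rvw_{j,r}(0)\|_2$ stays below a threshold, showing that on $[0,T^\ast]$ the above estimates hold with strict inequality, and then concluding $T^\ast\ge T$. The interplay between the assumptions $\lambda\ge 1/(\sigma_0|\envtrain|^{3/2})$, $\sigma_0=O(|\envtrain|^{7/2}\beta^3 L/(d^{1/2}m^2\lambda_0^2\log(1/\epsilon)))$, and $\sigma_p=O(d^{-2})$ is precisely what makes the bootstrap go through, since they simultaneously force (a) the penalty to dominate, (b) the exponential decay to finish in $O(\log(1/\epsilon)/(\eta\lambda\lambda_0))$ steps, and (c) the total signal accumulation to remain $o(1)$.
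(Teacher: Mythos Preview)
Your proposal is correct and follows essentially the same approach as the paper: derive the (continuous-time, in the paper) dynamics $\dot\rvc(t)=2\lambda\mathbf H(t)\rvc(t)+\rvg(t)$, show $\mathbf H(t)$ stays $\lambda_0$-close to $\mathbf H^\infty$ and the ERM perturbation $\rvg(t)$ is uniformly small, conclude exponential decay of $\|\rvc(t)\|_2$, then bound the per-step increments of $\gamma^{inv}_{j,r},\gamma^{spu}_{j,r}$ by $O(\eta\lambda\|\rvc(t)\|_2)$ and sum. The only substantive difference is bookkeeping: the paper keeps both the $\ell'$ and $\ell''\hat y$ parts of the penalty gradient (your displayed update drops the latter) and bounds the ERM contribution as an explicit additive perturbation rather than arguing it is ``dominated,'' and it closes the feature-learning bound with the cruder $T\|\rvc(0)\|_2$ instead of your geometric sum---none of which changes the argument.
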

The proof is given in Appendix~\ref{sec:proof_irmv1_not_learn}. We highlight that Theorem \ref{thm:irmv1_not_learn} allows any number of training environments, which indicates a fundamental limitation of pure \irml training.
Intuitively, Theorem~\ref{thm:irmv1_not_learn} implies that,
when a heavy regularization of \irml is applied, the model will not learn any features, corresponding to Figure \ref{f5d}.
Instead, \irml suppresses any feature learning,
even at the beginning of the training.
Then, what would happen when given a properly pre-trained network?

After ERM pre-training, according to Theorem \ref{thm:erm_learn_feat}, we have $\abs{\langle  \rvw_{j,r}, \rvv_1 \rangle} = \Omega(1)$, $\abs{\langle \rvw_{j,r}, \rvv_2 \rangle} = \Omega(1)$, and $\abs{\langle \rvw_{j,r}, \boldsymbol{\xi} \rangle} = O(\sigma_0 \sigma_p \sqrt{d})$. Then, we have the following hold.

\begin{proposition}
	\label{pro:irmv1_with_erm_feat}
	Given the same setting as Theorem~\ref{thm:irmv1_not_learn}, suppose that $\psi(x) = x$, $\gamma_{j,r}^{inv}(t_1) =  \gamma_{j,r}^{inv}(t_1-1)$, and $ \gamma_{j,r}^{spu}(t_1 ) =  \gamma_{j,r}^{spu}(t_1-1)$ at the end of ERM pre-train $t_1$, $\delta>0$, and $ n > C \log(1/\delta)$, with $C$ being a positive constant, then with a high probability at least $1-\delta$, we have $ \sum_e C^e_\irml(t_1) =0,$ $\gamma_{j,r}^{inv}(t_1 + 1) >  \gamma_{j,r}^{inv}(t_1),$ and $\gamma_{j,r}^{spu}(t_1 + 1) <  \gamma_{j,r}^{spu}(t_1).$
\end{proposition}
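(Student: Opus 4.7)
My plan is to exploit the ERM--stationarity hypothesis at $t_1$ to reduce all three conclusions to a single one--step gradient computation projected onto $\rvv_1$ and $\rvv_2$. First I would translate $\gamma^{inv}_{j,r}(t_1)=\gamma^{inv}_{j,r}(t_1-1)$ and $\gamma^{spu}_{j,r}(t_1)=\gamma^{spu}_{j,r}(t_1-1)$ into first--order conditions: with $\psi(x)=x$ we have $\nabla_{\rvw_{j,r}}\hat y^e_i=\tfrac{j}{m}(\rvx^e_{1,i}+\rvx^e_{2,i})$, and since $\langle\boldsymbol{\xi}_i,\rvv_k\rangle=0$ by construction of the noise patch, projecting the ERM gradient onto the two feature directions and equating to zero yields the two workhorse identities
\[
\sum_e\frac{1}{n_e}\sum_{i=1}^{n_e}\ell'(y^e_i\hat y^e_i)\,\Rad_i(\alpha)=0,\qquad \sum_e\frac{1}{n_e}\sum_{i=1}^{n_e}\ell'(y^e_i\hat y^e_i)\,\Rad_i(\beta_e)=0.
\]

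Next I would prove $\sum_e C^e_\irml(t_1)=0$. Using the signal--noise decomposition together with linearity of $\psi$, one writes $y^e_i\hat y^e_i=A\,\Rad_i(\alpha)+B\,\Rad_i(\beta_e)+\varepsilon^e_i$, where $A\!=\!\tfrac{1}{m}\sum_r(\gamma^{inv}_{+1,r}+\gamma^{inv}_{-1,r})$, $B\!=\!\tfrac{1}{m}\sum_r(\gamma^{spu}_{+1,r}+\gamma^{spu}_{-1,r})$, and $\varepsilon^e_i=O(\sigma_0\sigma_p\sqrt d)$ carries the initialization and noise contributions (negligible under the hypotheses on $\sigma_0,\sigma_p$). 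Substituting into $C^e_\irml=\tfrac{1}{n_e}\sum_i\ell'(y^e_i\hat y^e_i)y^e_i\hat y^e_i$ and summing over environments gives
\[
\sum_e C^e_\irml(t_1)=A\sum_e\frac{1}{n_e}\sum_i\ell'(\cdot)\Rad_i(\alpha)+B\sum_e\frac{1}{n_e}\sum_i\ell'(\cdot)\Rad_i(\beta_e)+O(\sigma_0\sigma_p\sqrt d),
\]
and both leading sums vanish by the stationarity identities above, delivering the first assertion up to exponentially small remainders absorbed by the $1-\delta$ probability.

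For the monotonicity of $\gamma^{inv}$ and $\gamma^{spu}$ I would expand the single GD step on $L_\irml=L_{\text{ERM}}+\lambda\sum_e(C^e_\irml)^2$. The ERM part of the gradient contributes zero to the $\rvv_1$-- and $\rvv_2$--projections by stationarity, so the entire motion is governed by $2\lambda\sum_e C^e_\irml\,\nabla C^e_\irml$. A direct chain--rule calculation gives
\[
\langle\nabla_{\rvw_{j,r}}C^e_\irml,\rvv_1\rangle=\frac{j}{m n_e}\sum_i\Rad_i(\alpha)H^e_i,\qquad \langle\nabla_{\rvw_{j,r}}C^e_\irml,\rvv_2\rangle=\frac{j}{m n_e}\sum_i\Rad_i(\beta_e)H^e_i,
\]
with $H^e_i:=\ell''(y^e_i\hat y^e_i)y^e_i\hat y^e_i+\ell'(y^e_i\hat y^e_i)$. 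Invoking Hoeffding--type concentration over the Rademacher draws (valid whenever $n>C\log(1/\delta)$) collapses each projection to its binary--atom expectation up to a $\delta$--level deviation. Because $\alpha$ is common to all environments while $\beta_e$ varies, summing against the constraint $\sum_e C^e_\irml=0$ turns the $\rvv_2$--projection into a cross--environment variance--like term of definite sign proportional to $-\mathrm{Var}_e(\beta_e)\cdot B$, pushing $\gamma^{spu}$ strictly down; the $\rvv_1$--projection, by contrast, receives a strictly positive contribution through the common factor $(1-2\alpha)>0$ coupled with the residual imbalance across environments, pushing $\gamma^{inv}$ strictly up.

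The main obstacle I anticipate is the coupling between $\Rad_i(\alpha)$, $\Rad_i(\beta_e)$, and $H^e_i$ (since $H^e_i$ depends on $y^e_i\hat y^e_i$, hence on the same Rademacher variables). Concentration cannot simply factor out $(1-2\alpha)$ or $(1-2\beta_e)$; I would instead represent each $\tfrac{1}{n_e}\sum_i\Rad_i(\cdot)H^e_i$ through the four atoms $\Rad_i(\alpha),\Rad_i(\beta_e)\in\{\pm1\}$, exploit that $\ell''$ and $\ell'$ are bounded Lipschitz functions of $A\,\Rad(\alpha)+B\,\Rad(\beta_e)$, and verify that the $O(\sigma_0\sigma_p\sqrt d)$ noise/initialization remainders remain strictly dominated by the signal--level differential created by $\beta_1\neq\beta_2$ under the parameter regime inherited from Theorem~\ref{thm:irmv1_not_learn}. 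Once this dominance is established, the strict inequalities on both $\gamma^{inv}$ and $\gamma^{spu}$ follow for the one--step IRM update.
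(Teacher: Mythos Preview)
Your first part is essentially the paper's argument: from ERM stationarity you extract the two identities $\sum_e A_1^e\!:=\!\sum_e\frac{1}{n_e}\sum_i\ell'_i\Rad_i(\alpha)=0$ and $\sum_e B_1^e\!:=\!\sum_e\frac{1}{n_e}\sum_i\ell'_i\Rad_i(\beta_e)=0$, and since (with $\psi(x)=x$ and negligible noise) $y^e_i\hat y^e_i=\gamma_1\Rad_i(\alpha)+\gamma_2\Rad_i(\beta_e)+\varepsilon^e_i$, the penalty satisfies $C^e_\irml=\gamma_1 A_1^e+\gamma_2 B_1^e$ up to lower order, whence $\sum_e C^e_\irml=0$. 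This matches the paper.

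The gap is in your sign argument for the one--step increments. Your claim that ``summing against $\sum_e C^e_\irml=0$ turns the $\rvv_2$--projection into a term proportional to $-\mathrm{Var}_e(\beta_e)\cdot B$'' is not substantiated: with two environments and $C^1_\irml=-C^2_\irml$, the relevant quantity is $\sum_e C^e_\irml(B_1^e+B_2^e)=2C^1_\irml(B_1^1+B_2^1)$, and the sign of this product depends on the \emph{values} of $\gamma_1,\gamma_2$ at the ERM fixed point, not merely on $\mathrm{Var}_e(\beta_e)$. Likewise the ``common factor $(1-2\alpha)>0$'' heuristic for the $\rvv_1$--projection does not by itself pin down the sign of $\sum_e C^e_\irml(A_1^e+A_2^e)$, because $A_1^e,A_2^e$ are nonlinear functions of $(\gamma_1,\gamma_2)$ through $\ell',\ell''$. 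Your own caveat about the coupling between $\Rad_i(\alpha),\Rad_i(\beta_e)$ and $H^e_i$ is exactly the obstruction: conditioning on the four atoms does not produce a structural variance identity at $\Theta(1)$ feature levels.

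The paper closes this differently and much more concretely. After the four--atom decomposition you anticipate, it \emph{solves} the two stationarity equations $\sum_e A_1^e=0$, $\sum_e B_1^e=0$ in closed form for the ERM fixed point $(\gamma_1^\infty,\gamma_2^\infty)$ (in terms of $\alpha,\beta_1,\beta_2$), then substitutes these values---together with the specific environment parameters $(\alpha,\beta_1,\beta_2)=(0.25,0.1,0.2)$ under which the proposition is stated---into the explicit expressions for $A_1^e,A_2^e,B_1^e,B_2^e$ and $C^e_\irml$, and verifies the required signs directly. Hoeffding concentration transfers the population--level computation to finite $n$ with probability $1-\delta$. In short: the monotonicity is obtained by an explicit fixed--point computation plus a numerical sign check at the given $(\alpha,\beta_1,\beta_2)$, not by a general variance argument. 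To complete your proof along the paper's lines, you should (i) solve the two stationarity equations for $(\gamma_1,\gamma_2)$, and (ii) plug in the concrete environment parameters to evaluate the IRM increment signs.
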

The proof is given in Appendix~\ref{sec:proof_irmv1_with_erm_feat}, which takes converged feature learning terms from Theorem \ref{thm:erm_learn_feat} as the inputs.
Proposition \ref{pro:irmv1_with_erm_feat} demonstrates that with sufficient ERM pre-training, \irml can enhance the learning of invariant features while suppressing the learning of spurious features, which is verified in Figure \ref{f5b} and \ref{f5a}.
Thus, when given the initialization with better learned invariant features, i.e., longer ERM pre-training epochs,
\irml improves the invariant feature better.
Proposition~\ref{pro:irmv1_with_erm_feat} explains why the final OOD performance highly depends on the ERM pre-training~\citep{rfc,pair}.

\subsection{Limitations of ERM Feature Learning}\label{sec:erm_limits}
Combining results from both Sec.~\ref{sec:erm_learn_feat} and Sec.~\ref{sec:ood_not_learn},
we know that the invariant features will be learned during ERM pre-training and discovered during OOD training.
However, given poorly learned invariant features, can \irml still improve it?
In practice, there often exist some invariant features that are not properly learned by ERM.
For example, in our data model Def.~\ref{def:two_bit} when the invariant correlation is much weaker than the spurious correlation, given a limited number of training steps, the spurious feature learning can dominate the invariant feature learning.
Besides, when considering other factors such as the simplicity bias of ERM~\citep{simple_bias} or the inductive biases of the network architecture~\citep{what_shape}, it is more likely that there exist invariant features that are not properly learned~\citep{imagenetv2}. Then we have:

\begin{corollary}
	\label{cor:irmv1_with_bad_feat}
	Consider training the CNN with the data generated from Def.~\ref{def:two_bit},
	suppose that $\psi(x) = x$, $\gamma_{j,r}^{inv}(t_1 ) =  o(1) $, and $\gamma_{j,r}^{spu}(t_1 ) =  \Theta(1)$ at the end of ERM pre-training $t_1$. Suppose that $\delta>0$, and $ n > C \log(1/\delta)$, with $C$ being a positive constant, then with a high probability at least $1-\delta$, we have $\gamma_{j,r}^{inv}(t_1 + 1) <  \gamma_{j,r}^{inv}(t_1).$
\end{corollary}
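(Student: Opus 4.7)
The plan is to analyse the one-step IRMv1 update at $t_1$ via the signal-noise decomposition~(\ref{eq:decompostion}) and show that, under the current hypothesis ($\gamma^{inv}$ small, $\gamma^{spu}$ macroscopic), the heavily-weighted IRMv1 penalty dominates the ERM gradient and pushes $\gamma_{j,r}^{inv}$ \emph{downward}. Concretely, (\ref{eq:decompostion}) yields
\begin{equation*}
\gamma_{j,r}^{inv}(t_1+1)-\gamma_{j,r}^{inv}(t_1) = -\eta\, j\,\langle \nabla_{\rvw_{j,r}} L_\irml(t_1),\rvv_1\rangle
\end{equation*}
up to an $O(\sigma_0\sigma_p\sqrt{d})$ noise/initialization residual absorbed by the scaling hypotheses inherited from Theorem~\ref{thm:irmv1_not_learn}, so the goal reduces to proving $j\,\langle\nabla_{\rvw_{j,r}} L_\irml,\rvv_1\rangle>0$.

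Next, I would split $\nabla L_\irml = \nabla L_{ERM} + 2\lambda\sum_{e} C^e_\irml\,\nabla C^e_\irml$. With $\psi(x)=x$, the linearised logit reads $y^e f(\rmW,\rvx^e) = A\,\Rad(\alpha) + B\,\Rad(\beta_e) + \text{noise}$, where $A\triangleq 2\bar{\gamma}^{inv}=o(1)$ and $B\triangleq 2\bar{\gamma}^{spu}=\Theta(1)$ aggregate the channel coefficients. The ERM projection mirrors the step underlying Theorem~\ref{thm:erm_learn_feat} and, in isolation, would still \emph{increase} $\gamma^{inv}$, contributing $j\langle\nabla L_{ERM},\rvv_1\rangle = -\Theta((1-2\alpha)/m)$. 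For the penalty, I would Taylor-expand $C^e_\irml$ and $\nabla C^e_\irml\cdot\rvv_1$ in $A$ about $A=0$; using $\ell''(z)=\ell''(-z)$ and evaluating the Rademacher expectations in closed form gives the leading-order values $C^e_\irml\approx B[\sigma(B)-(1-\beta_e)]$ and $j\langle\nabla C^e_\irml,\rvv_1\rangle\approx \tfrac{1-2\alpha}{m}\{(1-2\beta_e)[\ell''(B)B+\sigma(B)] - (1-\beta_e)\}$. The hypothesis $n>C\log(1/\delta)$ combined with Hoeffding/Bernstein replaces sample averages by these expectations with probability at least $1-\delta$. A sign analysis using $\alpha,\beta_e<1/2$ and $B=\Theta(1)$ then shows that both factors share a common sign across environments, so $j\cdot 2\lambda\sum_e C^e_\irml\langle\nabla C^e_\irml,\rvv_1\rangle = \Theta(\lambda/m)>0$; under the inherited lower bound $\lambda\ge 1/(\sigma_0|\envtrain|^{3/2})$ from Theorem~\ref{thm:irmv1_not_learn}, this penalty contribution dominates the negative ERM contribution. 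Combining the two pieces yields $j\langle\nabla L_\irml,\rvv_1\rangle>0$ and therefore $\gamma_{j,r}^{inv}(t_1+1)<\gamma_{j,r}^{inv}(t_1)$.

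The main obstacle is the uniform-sign step. The conclusion inverts that of Proposition~\ref{pro:irmv1_with_erm_feat} precisely because $\sum_e C^e_\irml(t_1)\neq 0$ here (ERM has not yet reached a stationary point), so the penalty's leading first-order correction in the invariant direction pulls the small $A$ further down rather than trimming the dominant $B$ in a single step. Verifying this uniformly over admissible $(\alpha,\beta_e)$ with $B=\Theta(1)$ requires careful manipulation of the $\sigma$/$\ell''$ identities, and one must simultaneously ensure that the Hoeffding errors and the $O(\sigma_0\sigma_p\sqrt d)$ noise/initialization residuals are of strictly lower order than the $\Theta(\lambda/m)$ principal term so that the sign cannot be flipped; this is the delicate part of the calculation.
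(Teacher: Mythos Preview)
Your decomposition and leading-order expressions are correct and mirror the paper's argument: project the IRMv1 update onto $\rvv_1$, isolate the ERM piece (which by itself still increases $\gamma^{inv}$), and show that the penalty piece $2\lambda\sum_e C^e_\irml\,\nabla C^e_\irml\cdot\rvv_1$ dominates with the opposite sign. The formulas $C^e_\irml\approx B[\sigma(B)-(1-\beta_e)]$ and $j\langle\nabla C^e_\irml,\rvv_1\rangle\approx\tfrac{1-2\alpha}{m}\bigl\{(1-2\beta_e)[\ell''(B)B+\sigma(B)]-(1-\beta_e)\bigr\}$ are exactly what one obtains by setting $\gamma_1=0$ in the paper's $A^e_1,A^e_2,B^e_1$ expressions.

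The gap is your uniform-sign step, and it is a real one. The paper does not attempt a general sign argument; its appendix restatement of the corollary explicitly fixes $\envtrain=\{(0.25,0.1),(0.25,0.2)\}$, and the proof ends by substituting these numbers (with $\gamma_1\approx 0$) into the update rule and asserting the inequality. Your claim that ``both factors share a common sign across environments'' fails as a uniform statement over $B=\Theta(1)$: at $(\alpha,\beta_1,\beta_2)=(0.25,0.1,0.2)$ and $B=2$ one gets $\sigma(2)\approx 0.881$, $\ell''(2)\cdot 2+\sigma(2)\approx 1.091$, hence $C^1_\irml\approx -0.038<0$ but $C^2_\irml\approx +0.162>0$, while both brackets $(1-2\beta_e)[\ell''(B)B+\sigma(B)]-(1-\beta_e)$ are negative; the resulting $\sum_e C^e_\irml(A^e_1+A^e_2)\approx -0.011<0$, so for this $B$ the penalty would push $\gamma^{inv}$ \emph{up}. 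The inequality does hold for moderate $B$ (e.g.\ $B\lesssim 1.5$ in this environment), which is presumably the regime the paper has in mind, but $\alpha,\beta_e<\tfrac12$ and $B=\Theta(1)$ alone cannot deliver the sign. To close the argument you must either restrict $B$ explicitly or, as the paper does, fix the environment and verify by direct evaluation rather than by a parameter-free sign identity.
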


Corollary~\ref{cor:irmv1_with_bad_feat} shows that \irml
requires sufficiently well-learned features for OOD generalization.
It is also consistent with the experimental results in Fig.~\ref{f5b},~\ref{f5c}, and Fig.~\ref{fig:flood_phenon}, where all the OOD objectives only achieve a performance comparable to random guesses.

\section{Feature Augmentated Training}
\label{sec:fat_sol}

\subsection{Rich Features for OOD Generalization}
\label{sec:fat_theory}
The results in Sec.~\ref{sec:understand} imply the necessity of learning 
all potentially useful features during the pre-training stage for OOD generalization. 
Otherwise, the OOD training is less likely to enhance the poorly learned features.
It also explains the success of learning diverse and rich features by weight averaging~\citep{diwa,eoa} and rich feature construction (or \rfc)~\citep{rfc}, and other approaches~\citep{fft,recycling}.

Despite the empirical success, however, the learning of rich features in both \rfc and weight averaging is unstable and expensive. On the one hand, they may discard previously learned useful features or fail to explore all the desired features as it is hard to evaluate the quality of the intermediate learned features. On the other hand, they also need multiple initializations and training of the whole networks with different random seeds to encourage the diversity of feature learning, which brings more instability and computational overhead, especially when applied to large and deep networks.

\subsection{The FeAT Algorithm}
\label{sec:fat_alg}
To overcome the limitations of previous rich feature learning algorithms, we propose  \oursfull (\ours), that directly augment the feature learning in an iterative manner.

\begin{wrapfigure}{r}{0.6\textwidth}
\vskip-0.3in
\begin{minipage}{0.6\textwidth}
\begin{algorithm}[H]
	\caption{\ours: \oursfull }
	\label{alg:fat}
	\begin{algorithmic}[1]
		\STATE \textbf{Input:} Training data $\train$; the maximum augmentation rounds $K$; predictor $f:=w\circ\varphi$; length of inner training epochs $t$; termination threshold $p$;
		\STATE Initialize groups $G^a\leftarrow {\train}, G^r\leftarrow \{\}$;
		\FOR{$k \in [1,\ldots, K]$}
		\STATE Randomly initialize $w_k$;
		\FOR{$j \in [1,\ldots, t]$}
		\STATE Obtain $\ell_\ours$ with $G$ via Eq.~\ref{eq:fat_upd};
		\STATE Update $w_k, \varphi$ with $\ell_\ours$;
		\ENDFOR
		\STATE \texttt{// Early Stop if $f_k=w_k\circ\varphi$ fails to find new features.}
		\IF{Training accuracy of $f_k$ is smaller than $p$}
		\STATE Set $K=k-1$ and terminate the loop;
		\ENDIF
		\STATE Split $\train$ into groups $\dataset_k^a,\dataset_k^r$ according to whether $f_k$ classifies the examples in $\train$ correctly or not;
		\STATE Update groups $G^a \leftarrow G^a \cup \{\dataset_k^a\}, G^r\leftarrow G^r\cup\{\dataset_k^r\}$;
		\ENDFOR
		\STATE Synthesize the final classifier $w\leftarrow\frac{1}{K}\sum_{i=1}^{K}w_i$;
		\STATE \textbf{return} $f=w\circ\varphi$;
	\end{algorithmic}
\end{algorithm}
\end{minipage}
\vskip-0.2in
\end{wrapfigure}
Intuitively, the potentially useful features presented in the training data are features that have non-trivial correlations with labels, or using the respective feature to predict the labels is able to achieve a \emph{non-trivial training performance}.
Moreover, the invariance principle assumes that the training data comes from different environments~\citep{irmv1}, which implies that each set of features can only dominate the correlations with labels in a \emph{subset} of data.
Therefore, it is possible to differentiate the distinct sets of useful features entangled in the training data into different subsets, where ERM can effectively learn the dominant features presented in the corresponding subset as shown in Theorem~\ref{thm:erm_learn_feat}.

The intuition naturally motivates an iterative rich feature learning algorithm, i.e., \ours, that identifies the subsets containing distinct features and explores to learn new features in multiple rounds. The details of \ours are given in Algorithm~\ref{alg:fat}, where we are given a randomly initialized or pre-trained model $f=w\circ\varphi$ that consists of a featurizer $\varphi$ and a classifier $w$. 
In round $k$, \ours first identifies the subset that contains the already learned features by collecting the samples where $f$ yields the correct prediction, denoted as $G^r_k$, and the subset of samples that contains the features that have not been learned, denoted as $G^a_k$.

At the $k$-th round, given the grouped subsets $G=\{G^r,G^a\}$ with $2k-1$ groups, where $G^a=\{\dataset_i^a\}_{i=0}^{k-1}$ is the grouped sets for new feature augmentation, and $G^r=\{\dataset_i^r\}_{i=1}^{k-1}$ is the grouped sets for already learned feature retention (notice that $\dataset_0^r$ is the empty set), where $\dataset_i^a$ and $\dataset_i^r$ are the corresponding augmentation and retention set elicited at $i$-th round.
\ours performs distributionally robust optimization (DRO)~\citep{dro,rfc} on $G^a$ to  explore new features that have not been learned in previous rounds.
Meanwhile, \ours also needs to \emph{retain} the already learned features by minimizing the empirical risk at $G^r$, for which we store and use the historical classifiers $w_i$ with the current featurizer to evaluate the feature retention degree.
Then, the \ours objective at round $k$  is
\begin{equation}\label{eq:fat_upd}
	\ell_\ours =
	\max_{\dataset_i^a\in G^a}\ell_{\dataset_i^a}(w_k\circ\varphi)+\lambda\cdot \sum_{\dataset_i^r\in G^r}\ell_{\dataset_i^r}(w_{i}\circ\varphi),
\end{equation}
where $\ell_{\dataset_i}(w\circ\varphi)$ refers to the empirical risk of $w\circ\varphi$ evaluated at the subset $\dataset_i$, and $\{w_i|1\leq i\leq k-1\}$ are the historical classifiers trained in round $i$.
The final classifier is the average of all historical classifiers as they already capitalize all the learned features in each round.

\textbf{Relations with previous rich feature learning algorithms.}
Compared with previous rich feature learning algorithms,
\ours \emph{directly} trades off the exploration of new features and the retention of the already learned features. 
Although \rfc also adopts DRO to explore new features, 
the isolation of new feature exploration and already learned feature synthesis makes the feature learning in \rfc more unstable. 
In other words, \rfc can not evaluate the intermediate feature learning results due to the \emph{indirect} feature exploration and synthesis.
Consequently, \rfc can not control when to stop the new feature exploration, and thus may fail to explore all of the desired features or discard important features.
Besides, the multiple re-initializations and re-training of the whole network in \rfc could also lead to suboptimal performance and meanwhile require more computational overhead.

\textbf{Practical implementations.}
Algorithm~\ref{alg:fat} requires to store $2K-1$ subsets and a larger memory cost in training the network, which may cause additional storage burden when $\varphi$ contains a massive amount of parameters~\citep{wilds}. Hence, we propose a lightweight variant of \ours (denoted as \oursi) which only retains the latest subsets and historical classifiers ($\dataset_{k-1}^a, \dataset_{k-1}^r, w_{k-1}$ at the $k$-th round). Throughout the whole experiment, we will use \oursi and find that \oursi already achieves state-of-the-art. More details are given in Appendix~\ref{sec:ifat_appdx}.

As \oursi stores only the latest augmentation and retention subsets, inspecting the training performance for termination check (line 10 of Algorithm~\ref{alg:fat}) may not be suitable. However, one can still
inspect the performance in either an OOD validation set to check the quality of the intermediate feature representations, or the retention set to check whether learning new features leads to a severe contradiction of the already learned features (\ours should terminate if so).

Compared to ERM, the additional computational and memory overhead introduced in \ours mainly lie in the \ours training and partitioning. At each training step, \ours needs $(k-1)$ additional forward and backward propagation, the same as Bonsai, while \ours only needs $\min(1,k-1)$ additional propagation. Besides, Bonsai additionally require another round of training with $(K-1)$ additional propagation, given $K$ total rounds. More details are given in Appendix~\ref{sec:comput_analysis_appdx}.

\section{Empirical Study}
\label{sec:exp}
\begin{table}[t]
    \vskip -0.15in
    \caption{OOD performance on \cmnist datasets initialized with different representations.}
    \vspace{-0.2in}
    \small\center\sc
    \label{tab:sythetic}
    \begin{center}
        \resizebox{\textwidth}{!}{
            \begin{tabular}{l|rrrr|rrrr}
                \toprule
                                           &
                \multicolumn{4}{c}{\cmnist-025}
                                           &
                \multicolumn{4}{c}{\cmnist-01}                                                                                                                                           \\
                \rule{0pt}{8pt}            &
                \multicolumn{1}{c}{ERM-nf} &
                \multicolumn{1}{c}{ERM}
                                           &
                \multicolumn{1}{c}{\rfc}
                                           &
                \multicolumn{1}{c}{\ours}
                                           &
                \multicolumn{1}{c}{ERM-nf}
                                           &
                \multicolumn{1}{c}{ERM}
                                           &
                \multicolumn{1}{c}{\rfc}
                                           &
                \multicolumn{1}{c}{\ours}
                \\
                \cmidrule(lr){2-5}\cmidrule(lr){6-9}
                ERM                        & $17.14$ \std{0.73}                     & $12.40$ \std{0.32}                     & $11.21$ \std{0.49}          & $\mathbf{17.27}$ \std{2.55}
                                           & $73.06$ \std{0.71}                     & $73.75$ \std{0.49}                     & $70.95$ \std{0.93}          & $\mathbf{76.05}$ \std{1.45} \\
                IRMv1                      & $67.29$ \std{0.99}                     & $59.81$ \std{4.46}                     & $70.28$ \std{0.72}          & $\mathbf{70.57}$ \std{0.68}
                                           & $76.89$ \std{3.25}                     & $73.84$ \std{0.56}                     & $76.71$ \std{4.10}          & $\mathbf{82.33}$ \std{1.77} \\
                V-Rex                      & $68.62$ \std{0.73}                     & $65.96$ \std{1.29}                     & $70.31$ \std{0.66}          & $\mathbf{70.82}$ \std{0.59}
                                           & $83.52$ \std{2.52}                     & $81.20$ \std{3.27}                     & $82.61$ \std{1.76}          & $\mathbf{84.70}$ \std{0.69} \\
                \irmx                      & $67.00$ \std{1.95}                     & $64.05$ \std{0.88}                     & $70.46$ \std{0.42}          & $\mathbf{70.78}$ \std{0.61}
                                           & $81.61$ \std{1.98}                     & $75.97$ \std{0.88}                     & $80.28$ \std{1.62}          & $\mathbf{84.34}$ \std{0.97} \\
                IB-IRM                     & $56.09$ \std{2.04}                     & $59.81$ \std{4.46}                     & $70.28$ \std{0.72}          & $\mathbf{70.57}$ \std{0.68}
                                           & $75.81$ \std{0.63}                     & $73.84$ \std{0.56}                     & $76.71$ \std{4.10}          & $\mathbf{82.33}$ \std{1.77} \\
                CLOvE                      & $58.67$ \std{7.69}                     & $65.78$ \std{0.00}                     & $65.57$ \std{3.02}          & $\mathbf{65.78}$ \std{2.68}
                                           & $75.66$ \std{10.6}                     & $74.73$ \std{0.36}                     & $72.73$ \std{1.18}          & $\mathbf{75.12}$ \std{1.08} \\
                IGA                        & $51.22$ \std{3.67}                     & $62.43$ \std{3.06}                     & $\mathbf{70.17}$ \std{0.89} & $67.11$ \std{3.40}
                                           & $74.20$ \std{2.45}                     & $73.74$ \std{0.48}                     & $74.72$ \std{3.60}          & $\mathbf{83.46}$ \std{2.17} \\
                Fishr                      & $69.38$ \std{0.39}                     & $67.74$ \std{0.90}                     & $68.75$ \std{1.10}          & $\mathbf{70.56}$ \std{0.97}
                                           & $77.29$ \std{1.61}                     & $82.23$ \std{1.35}                     & $84.19$ \std{0.66}          & $\mathbf{84.26}$ \std{0.93} \\\hline
                \rule{0pt}{8pt}Oracle      & \multicolumn{4}{c}{$71.97$ \std{0.34}} & \multicolumn{4}{c}{$86.55$ \std{0.27}}                                                             \\
                \bottomrule
            \end{tabular}
        }
    \end{center}
    \vskip -0.15in
\end{table}

We conduct extensive experiments on \cmnist~\citep{pair} and \wilds~\citep{wilds} to verify the effectiveness of \ours in learning richer features than ERM and the state-of-the-art algorithm \rfc~\citep{rfc}.

\textbf{Proof-of-concept study on \cmnist.}
We first conduct a proof-of-concept study using \cmnist~\citep{pair} and examine the feature learning performance of \ours under various conditions.
We consider both the original \cmnist with $\envtrain=\{(0.25,0.1),(0.25,0.2)\}$ (denoted as \cmnist-025), where spurious features are better correlated with labels, and the modified \cmnist (denoted as \cmnist-01) with $\envtrain=\{(0.1,0.2),(0.1,0.25)\}$, where invariant features are better correlated with labels.
We compare the OOD performance of the features learned by \ours, with that of \erm and the state-of-the-art rich feature learning algorithm \rfc~\citep{rfc}.
Based on the features learned by ERM, \rfc, and \ours,
we adopt various state-of-the-art OOD objectives including \irml~\citep{irmv1}, \vrex~\citep{vrex}, \irmx~\citep{pair}, IB-IRM~\citep{ib-irm}, CLOvE~\citep{clove}, IGA~\citep{iga} and Fishr~\citep{fishr} for OOD training, in order to evaluate the practical quality of the learned features.
The feature representations are frozen once initialized for the OOD training as fine-tuning the featurizer can distort the pre-trained features~\citep{finetuning_distorts}.
We also compare \ours with the common training approach that uses unfrozen \erm features, denoted as \erm-\textsc{NF}.
For \rfc, we trained $2$ rounds following~\citet{rfc}, while for \ours the automatic termination stopped at round $2$ in \cmnist-025 and round $3$ in \cmnist-01.
For \erm, we pre-trained the model with the same number of overall epochs as \ours in \cmnist-01, while early stopping at the number of epochs of $1$ round in \cmnist-025 to prevent over-fitting.
All methods adopted the same backbone and the same training protocol following previous works~\citep{rfc,pair}. More details are given in Appendix~\ref{sec:cmnist_appdx}.

The results are reported in Table~\ref{tab:sythetic}. It can be found that
ERM will learn insufficiently good features under both stronger spurious correlations and invariant correlations, confirming our discussion in Sec.~\ref{sec:erm_limits}.
Besides, \rfc learns richer features in \cmnist-025 and boosts OOD performance, but \rfc sometimes leads to suboptimal performances in \cmnist-01, which could be caused by the unstable feature learning in \rfc.
In contrast, \ours consistently improves the OOD performance of all OOD objectives for all the \cmnist datasets, demonstrating the advances of direct feature learning control in \ours than \rfc and ERM.

\bgroup
\begin{table}[t]
    \vskip -0.15in
    \centering
    \caption{OOD generalization performances on \wilds benchmark.}
    \vskip -0.05in
    \label{tab:wilds_results}
    \resizebox{\textwidth}{!}{
        \small
        \begin{tabular}{@{}{c}|{c}|*{6}{c}@{}}    \toprule
            \multirow{2.5}{*}{\textsc{Init.}} & \multirow{2.5}{*}{\textsc{Method}}
                                              & {
            \textsc{Camelyon17}}              & {
            \textsc{CivilComments}}           & {
            \textsc{FMoW}}                    & {
            \textsc{iWildCam}}                & {
            \textsc{Amazon}}                  & {
            \textsc{RxRx1}}                                                                                                                                                                                                                                             \\
            \cmidrule(lr){3-3} \cmidrule(lr){4-4} \cmidrule(lr){5-5} \cmidrule(lr){6-6} \cmidrule(lr){7-7}\cmidrule(lr){8-8}
                                              &                                    & Avg. acc. (\%)              & Worst acc. (\%)             & Worst acc. (\%)             & Macro F1                    & 10-th per. acc. (\%)        & Avg. acc. (\%)               \\
            \midrule
            ERM                               & DFR$^\dagger$                                & $95.14$ \std{1.96}          & $\mathbf{77.34}$ \std{0.50} & $41.96$ \std{1.90}          & $23.15$ \std{0.24}          & $48.00$ \std{0.00}          & -                            \\
            ERM                               & DFR-s$^\dagger$                    & -                           & $82.24$ \std{0.13}          & $56.17$ \std{0.62}          & $52.44$ \std{0.34}          & -                           & -                            \\\hdashline[0.5pt/1pt]
            \rule{0pt}{8pt}\rfc              & DFR$^\dagger$                                & $95.17$ \std{0.18}          & $77.07$ \std{0.85}          & $43.26$ \std{0.82}          & $21.36$ \std{0.41}          & $46.67$ \std{0.00}          & -                            \\
            \rfc                              & DFR-s$^\dagger$                    & -                           & $81.26$ \std{1.86}          & $58.58$ \std{1.17}          & $50.85$ \std{0.18}          & -                           & -                            \\\hdashline[0.5pt/1pt]
            \rule{0pt}{8pt}\ours             & DFR$^\dagger$                                & $\mathbf{95.28}$ \std{0.19} & $\mathbf{77.34}$ \std{0.59} & $\mathbf{43.54}$ \std{1.26} & $\mathbf{23.54}$ \std{0.52} & $\mathbf{49.33}$ \std{0.00} & -                            \\
            \ours                             & DFR-s$^\dagger$                    & -                           & $79.56$ \std{0.38}          & $57.69$ \std{0.78}          & $52.31$ \std{0.38}          & -                           & -                            \\
            \midrule\midrule
            ERM                               & ERM                                & $74.30$ \std{5.96}          & $55.53$ \std{1.78}          & $33.58$ \std{1.02}          & $28.22$ \std{0.78}          & $51.11$ \std{0.63}          & $30.21$  \std{0.09}          \\
            ERM                               & GroupDRO                           & $76.09$ \std{6.46}          & $69.50$ \std{0.15}          & $33.03$ \std{0.52}          & $28.51$ \std{0.58}          & $52.00$ \std{0.00}          & $29.99$  \std{0.13}          \\
            ERM                               & IRMv1                              & $75.68$ \std{7.41}          & $68.84$ \std{0.95}          & $33.45$ \std{1.07}          & $28.76$ \std{0.45}          & $52.00$ \std{0.00}          & $30.10$ \std{0.05}           \\
            ERM                               & V-REx                              & $71.60$ \std{7.88}          & $69.03$ \std{1.08}          & $33.06$ \std{0.46}          & $28.82$ \std{0.47}          & $52.44$ \std{0.63}          & $29.88$ \std{0.35}           \\
            ERM                               & \irmx                              & $73.49$ \std{9.33}          & $68.91$ \std{1.19}          & $33.13$ \std{0.86}          & $28.82$ \std{0.47}          & $52.00$ \std{0.00}          & $30.10$  \std{0.05}          \\\hdashline[0.5pt/1pt]
            \rule{0pt}{8pt}\rfc              & ERM                                & $73.98$ \std{5.30}          & $63.34$ \std{3.49}          & $31.91$ \std{0.51}          & $28.27$ \std{1.05}          & $48.58$ \std{0.56}          & $24.22$  \std{0.44}          \\
            \rfc                              & GroupDRO                           & $72.82$ \std{5.37}          & $70.23$ \std{1.33}          & $33.12$ \std{1.20}          & $27.16$ \std{1.18}          & $42.67$ \std{1.09}          & $22.95$  \std{0.46}          \\
            \rfc                              & IRMv1                              & $73.59$ \std{6.16}          & $68.39$ \std{2.01}          & $32.51$ \std{1.23}          & $27.60$ \std{1.57}          & $47.11$ \std{0.63}          & $23.35$ \std{0.43}           \\
            \rfc                              & V-REx                              & $76.39$ \std{5.32}          & $68.67$ \std{1.29}          & $33.17$ \std{1.26}          & $25.81$ \std{0.42}          & $48.00$ \std{0.00}          & $23.34$ \std{0.42}           \\
            \rfc                              & \irmx                              & $64.77$ \std{10.1}          & $69.56$ \std{0.95}          & $32.63$ \std{0.75}          & $27.62$ \std{0.66}          & $46.67$ \std{0.00}          & $23.34$ \std{0.40}           \\\hdashline[0.5pt/1pt]
            \rule{0pt}{8pt}\ours             & ERM                                & $77.80$ \std{2.48}          & $68.11$ \std{2.27}          & $33.13$ \std{0.78}          & $28.47$ \std{0.67}          & $\mathbf{52.89}$ \std{0.63} & $\mathbf{30.66}$  \std{0.42} \\
            \ours                             & GroupDRO                           & $\mathbf{80.41}$ \std{3.30} & $\mathbf{71.29}$ \std{0.46} & $33.55$ \std{1.67}          & $28.38$ \std{1.32}          & $52.58$ \std{0.56}          & $29.99$  \std{0.11}          \\
            \ours                             & IRMv1                              & $77.97$ \std{3.09}          & $70.33$ \std{1.14}          & $\mathbf{34.04}$ \std{0.70} & $\mathbf{29.66}$ \std{1.52} & $\mathbf{52.89}$ \std{0.63} & $29.99$ \std{0.19}           \\
            \ours                             & V-REx                              & $75.12$ \std{6.55}          & $70.97$ \std{1.06}          & $34.00$ \std{0.71}          & $29.48$ \std{1.94}          & $\mathbf{52.89}$ \std{0.63} & $30.57$ \std{0.53}           \\
            \ours                             & \irmx                              & $76.91$ \std{6.76}          & $71.18$ \std{1.10}          & $33.99$ \std{0.73}          & $29.04$ \std{2.96}          & $\mathbf{52.89}$ \std{0.63} & $29.92$  \std{0.16}          \\
            \bottomrule
            \multicolumn{8}{l}{\rule{0pt}{8pt}
                $^\dagger$\text{\normalfont \small DFR/DFR-s use an additional OOD dataset to evaluate invariant and spurious feature learning, respectively.} } 
        \end{tabular}
    }
    \vskip -0.15in
\end{table}
\egroup

\textbf{Experiments on real-world benchmarks.}
We also compare \ours with \erm and \rfc in $6$ real-world OOD generalization datasets curated by~\citet{wilds} that contain complicated features and distribution shifts.
The learned features are evaluated with several representative state-of-the-art OOD objectives in \wilds, including GroupDro~\citep{groupdro}, \irml~\citep{irmv1}, \vrex~\citep{vrex} as well as \irmx~\citep{pair}.
By default, we train \erm, \rfc and \ours the same number of steps, and kept the rounds of \rfc and \ours the same (though \rfc still requires one more round for feature synthesis).
The only exception is in \textsc{RxRx1} where both \rfc and \ours required more steps than ERM to converge.
We use the same evaluation protocol following the practice in the literature~\citep{wilds,fish,rfc,pair} to ensure a fair comparison.
More details are given in Appendix~\ref{sec:wilds_appdx}.

In addition to OOD objectives, we evaluate the learned features with Deep Feature Reweighting (DFR)~\citep{dfr}.
DFR uses an additional OOD validation set where the \textit{spurious correlation does not hold}, to perform logistic regression based on the learned features. Intuitively, DFR can serve as a proper measure for the quality of learned invariant features~\citep{dfrlearn}. When the original dataset does not provide a proper OOD validation set, e.g., \textsc{Camelyon17}, we use an alternative implementation based on a random split of the training and test data to perform the invariant feature quality measure~\citep{dare}.
Similarly, we also report DFR-s by regression with the environment labels (when available) to evaluate the spurious feature learning of different methods.
More details are given in Appendix~\ref{sec:eval_detail_appdx}.

The results are presented in Table~\ref{tab:wilds_results}.
Similarly, when the tasks grow more challenging and neural architectures become more complicated, the ERM learned features can have a lower quality as discussed Sec.~\ref{sec:erm_limits}.
For example, ERM can not sufficiently learn all useful features in FMoW, while ERM can learn more spurious correlations in CivilComments.
Moreover, it can also be observed the instability of \rfc in learning richer features that \rfc even underperforms ERM in rich feature learning and OOD generalization in multiple datasets.
In contrast, \ours consistently achieves the best invariant feature learning performance across various challenging realistic datasets. Meanwhile, compared to \erm and \rfc, \ours also reduces over-fitting to the spurious feature learning led by spurious correlations.
As a result, \ours achieves consistent improvements when the learned features are applied to various OOD objectives.
\begin{wraptable}{r}{0.6\textwidth}
    \vskip -0.05in
    \caption{Performances at different \ours rounds.}
    \label{tab:termin_check}
    \begin{center}
        \begin{scriptsize}
            \begin{sc}
                \begin{tabular}{lcccc}
                    \toprule
                    \cmnist-025    & Round-1         & Round-2         & Round-3         \\
                    \midrule
                    Training Acc.  & 85.08$\pm$ 0.14 & 71.87$\pm$ 0.96 & 84.93$\pm$ 1.26 \\
                    Retention Acc. & -               & 88.11$\pm$ 4.28 & 43.82$\pm$ 0.59 \\
                    OOD Acc.       & 11.08$\pm$ 0.30 & 70.64$\pm$ 0.62 & 10.07$\pm$ 0.26 \\
                    \bottomrule
                \end{tabular}
            \end{sc}
        \end{scriptsize}
    \end{center}
    \vskip -0.25in
\end{wraptable}

\textbf{The termination check in \ours.}
As elaborated in Sec.~\ref{sec:fat_alg}, a key difference between \ours and previous rich feature learning algorithms such as \rfc is that \ours is able to access the intermediate feature representations and thus can perform the automatic termination check and learn the desired features stably.
To verify, we list the \ours performances in various subsets of \cmnist-025 at different rounds in Table~\ref{tab:termin_check}. By inspecting the retention accuracy, after \ours learns sufficiently good features at Round $2$, it is not necessary to proceed with Round $3$ as it will destroy the already learned features and lead to degenerated retention and OOD performance. More details and results are given in Appendix~\ref{sec:cmnist_appdx}.

\textbf{Computational analysis.} We also analyze the computational and memory overhead of different methods, for which the details are given in Appendix~\ref{sec:comput_analysis_appdx}.
Compared to ERM and Bonsai, \oursi achieves the best performance without introducing too much additional overhead.

\textbf{Feature learning analysis.} We visualize the feature learning of ERM and \ours on ColoredMNIST-025.
As shown in Fig.~\ref{fig:gradcam_cmnist}, ERM can learn both invariant and spurious features to predict the label, aligned with our theory.
However, ERM focuses more on spurious features and even forgets certain features with longer training epochs, which could be due to multiple reasons such as the simplicity biases of ERM.
Hence predictions based on ERM learned features fail to generalize to OOD examples.
In contrast, \ours effectively captures the meaningful features for all samples and generalizes to OOD examples well.
More analysis including results on \wilds benchmark can be found in Appendix~\ref{sec:feat_analysis_appdx}.
\begin{figure}[t!]
    \vspace{-0.15in}
    \centering
    \subfigure[ERM 300 epochs]{
        \includegraphics[width=0.45\textwidth]{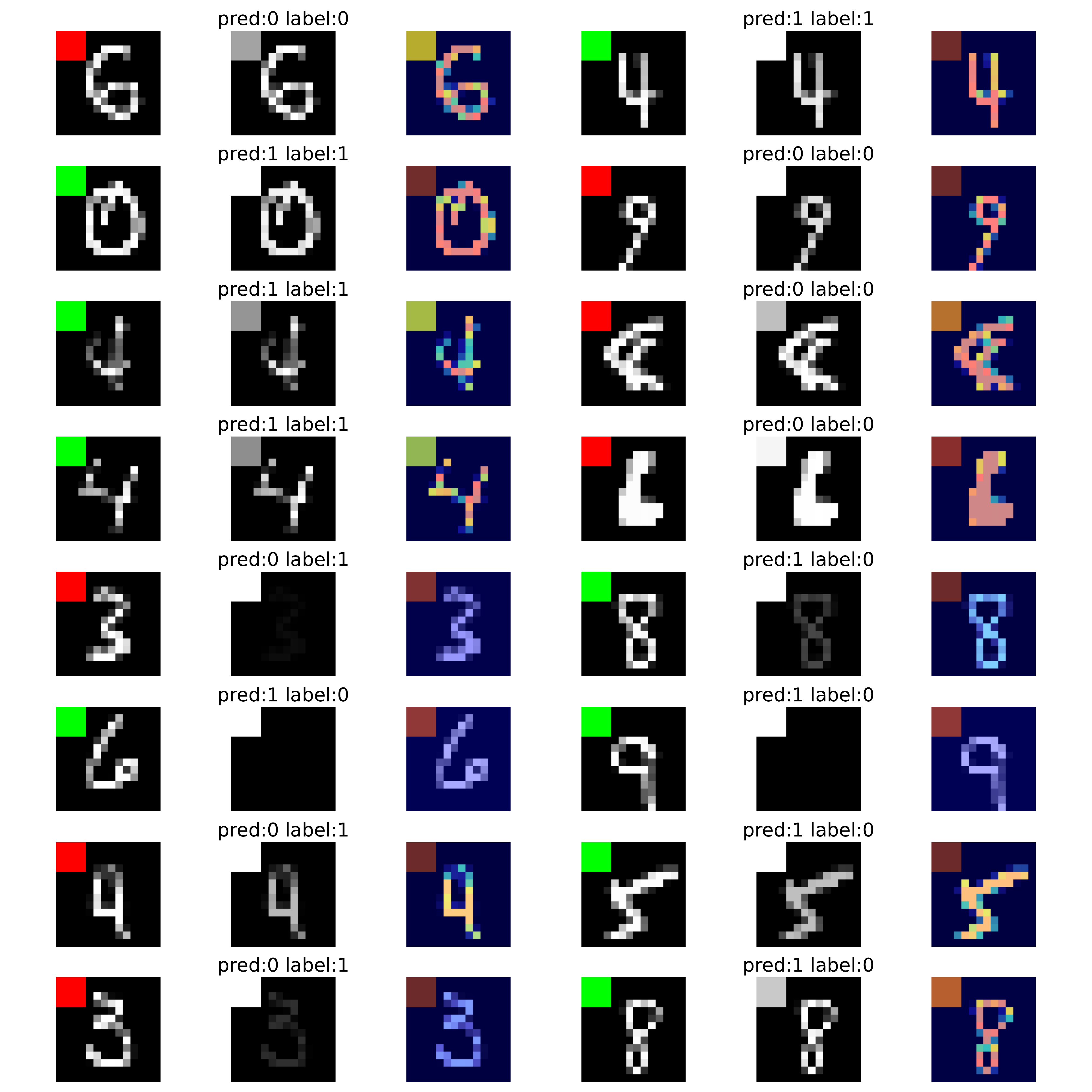}
        \label{fig:s4_erm_ep300_appdx}
    }%
    \subfigure[FeAT 2 rounds]{
        \includegraphics[width=0.45\textwidth]{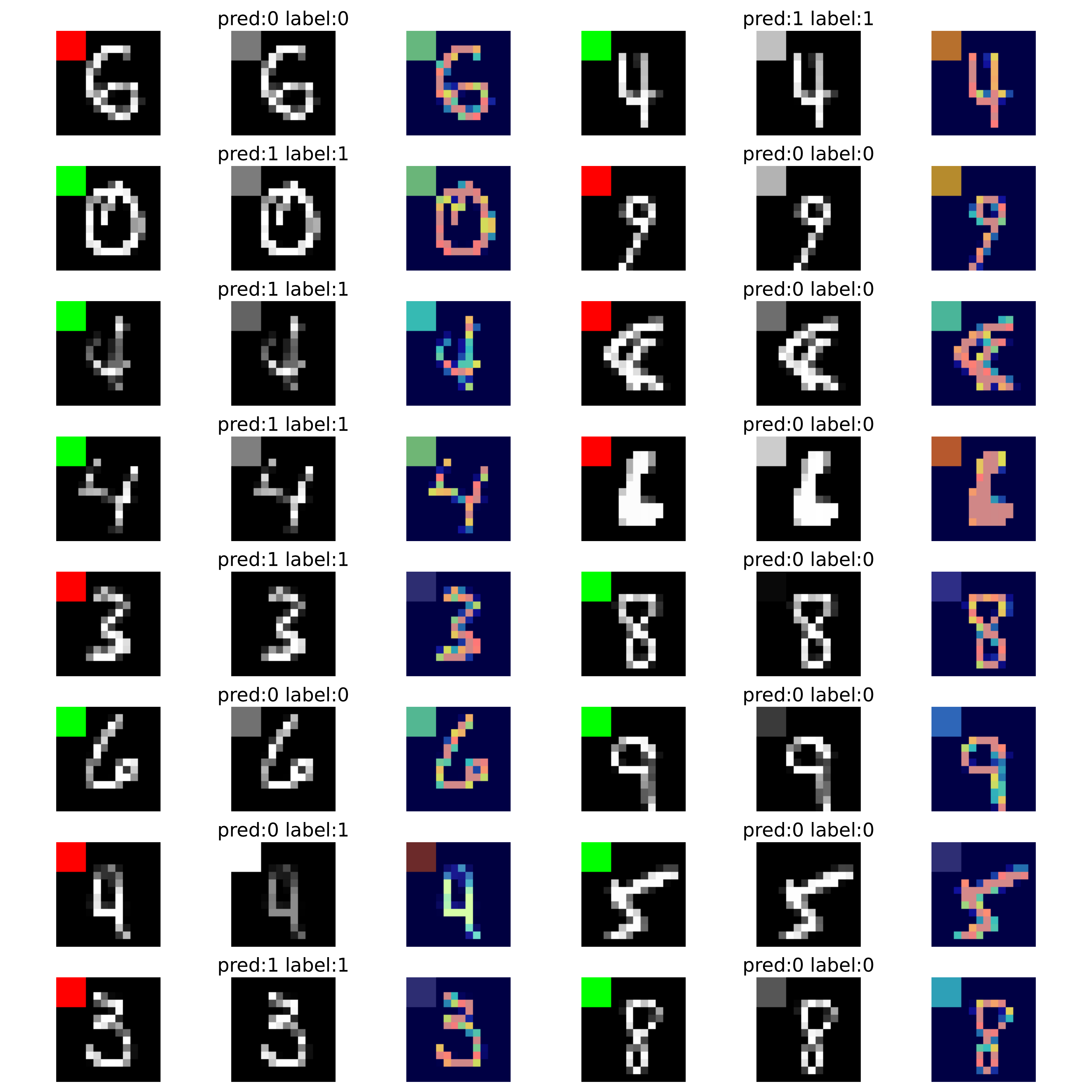}
        \label{fig:s4_ifat_r2_appdx}
    }
    \vspace{-0.1in}
    \caption{GradCAM visualization on \cmnist-025, where the shortcuts are now concentrated to a colored path at the up left. Three visualizations are drawn for each sample: the original figure, the gray-colored gradcam, and the gradcam. It can be found that ERM can not properly capture the desired features while \ours can stably capture the desired features.}
    \label{fig:gradcam_cmnist}
    \vspace{-0.2in}
\end{figure}

\section{Conclusions}
In this paper, we conducted a theoretical investigation of the invariant and spurious feature learning
of ERM and OOD objectives.
We found that ERM learns both invariant and spurious features when OOD objectives rarely learn new features.
Thus, the features learned in the ERM pre-training can greatly influence the final OOD performance.
Having learned the limitations of ERM pre-training, we proposed \ours to learn all potentially useful features. Our extensive experimental results verify that \ours significantly boosts the OOD performance when used for OOD training.

\section*{Acknowledgements}
We thank the reviewers for their valuable comments. This work was supported by the RIKEN Incentive Research Project 100847-202301062011, by CUHK direct grant 4055146. BH was supported by the NSFC Young Scientists Fund No. 62006202, NSFC General Program No. 62376235, Guangdong Basic and Applied Basic Research Foundation No. 2022A1515011652, HKBU Faculty Niche Research Areas No. RC-FNRA-IG/22-23/SCI/04, and Tencent AI Lab Rhino-Bird Gift Fund.

\bibliography{references}
\bibliographystyle{abbrvnat}

\newpage

\appendix
\begin{center}
	\LARGE \bf {Appendix of \ours}
\end{center}

\etocdepthtag.toc{mtappendix}
\etocsettagdepth{mtchapter}{none}
\etocsettagdepth{mtappendix}{subsection}
\tableofcontents
\clearpage

\section{Notations}
\label{sec:notations_appdx}

We use bold-faced letters for vectors and matrices otherwise for scalar. We use $\| \cdot \|_2$ to denote the Euclidean norm of a vector or the spectral norm of a matrix, while denoting $\| \cdot \|_F$ as the Frobenius norm of a matrix. For a neural network, we denote ${\psi}(x)$ as the activation function. Let ${\bf I}_d$ be the identity matrix with a dimension of $\mathbb{R}^{d \times d}$. When comparing two sequences $\{ a_n \}$ and $\{b_n \}$, we employ standard asymptotic notations such as $O(\cdot)$, $o(\cdot)$, $\Omega(\cdot)$, and $\Theta(\cdot)$ to describe their limiting behavior. Lastly, sequences of integers are denoted as $[n] = \{1,2,\ldots,n \}$.

\begin{table}[ht] 
	\caption{Notations for key concepts involved in this paper}
	\centering
    \resizebox{\textwidth}{!}{
	\begin{tabular}{ll}
		\toprule
        \textbf{Symbols} & \textbf{Definitions}\\\midrule
        \(\gX=\R^n\)                          & the input space\\\midrule
		\(\gY=\R\)                          & the label space\\\midrule
		\(\gZ=\R^d\)                          & the latent space\\\midrule
        \(m\)                          & the hidden dimension\\\midrule
        \(F_j(\cdot)\)                          & the $j$-th filter of the CNN model\\\midrule
        \(\rmW_j\)                          & the weights of $j$-th filter of the CNN model, containing $m$ hidden units $\rvw_{j,r}$\\\midrule
        \(\psi(\cdot)\)                          & the activation function of the CNN model\\\midrule
		\(\varphi\)                          & the featurizer $\varphi:\gX\rightarrow\gZ$ learns a latent representation for each input example\\\midrule
		\(w\)                          & the classifier $w:\gZ\rightarrow\gY$\\\midrule
        \(w_j\)                          & the classifier learned at $j$-th round\\\midrule
		\(f\in\gF\)                          & the predictor $f=w\circ\varphi:\gX\rightarrow\gY$ is composed of a featurizer and classifier\\
		\(\)&when $w$ is linear, $f$ can be simply represented via dot product $w\cdot\varphi$\\\midrule
		\(\envall\) & the set of indices for all environments\\\midrule
		\(\envtrain\) & the subset of indices of training environments\\\midrule
		\(e\) & the index set of a specific environment \\\midrule
        \(\mathcal{E}_\alpha\) & the set of environments following the data model as Def.~\ref{def:two_bit}, where each is specified as $(\alpha,\beta_e)$\\\midrule
		\(\dataset^e,\dataset_e\)                & the dataset from environment $e$, containing $n_e$ samples $\{\rvx_i^e,y_i^e\}$ considered as i.i.d. from $\mathbb{P}^e$\\
		\midrule
		\(\dataset\)                & the overall dataset containing $n$ samples from all environments, $\dataset=\{\dataset^e\}_{e\in\envall}$\\
		\midrule
        \(\dataset^a\)                & the augmentation set, we use \(\dataset^a_i\) to denote the augmentation set separated at $i$-th round\\\midrule
        \(\dataset^r\)                & the retention set, we use \(\dataset^r_i\) to denote the retention set separated at $i$-th round\\\midrule
        \(G\) & $G=\{G^r,G^a\}$ with $2k-1$ groups at round $k$, where $G^a=\{\dataset_i^a\}_{i=0}^{k-1}$ is the grouped sets,\\\(\) & for new feature augmentation and $G^r=\{\dataset_i^r\}_{i=1}^{k-1}$ is the grouped sets for already learned feature retention\\
		\midrule
		\(L_e\)                          & the empirical risk calculated based on $\dataset^e$, e.g., square loss or logistic loss\\\midrule
  \(\ell_\ours\) & the \ours objective, including $\ell_{\dataset_i^a}$ the empirical risk at $\dataset^a_i$ and $\ell_{\dataset_i^r}$ at $\dataset^r_i$ \\\midrule
  \(L_{\irml}(\rmW)\)                          & the \irml loss\\\midrule
  \(\ell'^e\)                          & the first order derivative of $L_e$ with respect to the $i$-th sample from environment $e$\\\midrule
  \(C_\irml^e\)                          & $C_\irml^e \triangleq \frac{1}{n_e}\sum_{i=1}^{n_e} {\ell'\big(y^e_i \hat{y}^e_i\big) \cdot y^e_i \hat{y}^e_i}$, a useful quantity to analyze \irml dynamics\\\midrule
  \(\gamma_{j,r}^{inv},\gamma_{j,r,1}\)                          & the invariant feature learning quantity in Eq.~\ref{eq:decompostion}\\\midrule
  \(\gamma_{j,r}^{spu},\gamma_{j,r,2}\)                          & the spurious feature learning quantity in Eq.~\ref{eq:decompostion}\\\midrule
  \(\rho_{j,r,i}(t)\)                          & the noise feature learning quantity in Eq.~\ref{eq:decompostion}\\\midrule
		\bottomrule
	\end{tabular}}
\end{table}
\clearpage
\section{Limitations and Future Directions}
As a pioneering work that studies feature learning of ERM and OOD objectives and their interactions in OOD generalization, our theoretical settings are limited to studying the influence of spurious and invariant correlation strengths on spurious and invariant feature learning, based on a two-layer CNN network. 
In fact, the feature learning of a network can be influenced by several other factors, such as the difficulty of learning a feature and the capacity of features that a model can learn~\citep{what_shape,toy_model}. 
Future works can be built by extending our framework to consider the influence of a broad of factors on feature learning in OOD generalization.

Moreover, as there could exist cases where certain features should not be learned, it is also promising to explore how to prevent the feature learning of undesirable features during the early stages of OOD generalization and to further relieve the optimization dilemma in OOD generalization~\citep{pair}, to improve the robustness against backdoor attacks~\citep{min2023towards}, and its further implications to OOD generalization~\citep{Lin2023SpuriousFD}. Besides, it is also interesting to investigate feature learning for complicated data such as graphs~\citep{huang2023graph}, especially under various graph distribution shifts~\citep{ciga,gala,chen2022hao,ood_kinetics,drugood}.

\section{Related Work}\label{sec:related_work_appdx}
\textbf{On Feature Learning and Generalization.}
Understanding feature learning in deep networks is crucial to understanding their generalization~\citep{mlp,ib,sgd_fl1,sgd_fl2,understand_ensemble,understand_benign}.
Earlier attempts are mostly about empirical probing~\citep{saliency,new_saliency,what_shape,toy_model}.
\citet{what_shape,toy_model,simple_bias} find that the feature learning of a network can be influenced by several other factors, such as the difficulty of learning a feature and the capacity of features that a model can learn. 
Although our data model focuses on the correlation perspective, different correlation strengths in fact can simulate the difficulty or the simplicity of learning a feature.

Beyond the empirical probing, \citet{understand_ensemble} proposed a new theoretical framework that characterizes the feature learning process of deep networks, which has been widely adopted to analyze behaviors of deep networks~\citep{understand_ssl,understand_adam,understand_benign}
However, how the learned features from ID data can be generalized to OOD data remains elusive. The only exceptions are \citep{understand_da} and \citep{feat_distort}. \citet{feat_distort} find fine-tuning can distort the pre-trained features while fine-tuning can be considered as a special case in our framework. \citet{understand_da} focus on how data augmentation helps promote good but hard-to-learn features and improve OOD generalization. 
\citet{pde} studies feature learning when the group-related features are more predictive for inferring group labels.
In contrast, we study the direct effects of ERM and OOD objectives to feature learning and provide a theoretical explanation to the phenomenon that ERM may have already learned good features~\citep{dare,dfrlearn}.
To the best of our knowledge, we are the \textit{first} to analyze the feature learning of ERM and OOD objectives and their interactions in the general OOD generalization setting.

\textbf{On the correlation between ID and OOD performances.}
The debate about feature learning and generalization under distribution shifts also
extends to the ID and OOD performance correlations along with training or fine-tuning neural nets across a variety of OOD generalization tasks.
\citet{ood_evolution,ood_online,assaying_transfer} found that there often exists a linear dependency between ID and OOD performance under a wide range of models and distribution shifts.
While \citet{feat_distort,robust_finetune} found that fine-tuning pre-trained models often leads to an increased in-distribution but decreased OOD performance.
\citet{idood_inverse} observed cases where ID and OOD performance are inversely correlated. \citet{pair,opt_selection} studied the ID and OOD performance trade-offs from the optimization perspective.

Our work provides theoretical explanations for different correlation behaviors of ID and OOD performance, as well as provides a solution for mitigating the trade-offs in optimization.
Theorem~\ref{thm:erm_learn_feat} implies that, in cases where invariant features are more informative than spurious features, the higher ID performance indicates a better fit to invariant features, thus promising a higher OOD performance, aligned with observations in~\citep{ood_evolution,ood_online,assaying_transfer}. While in cases where invariant features are less informative than spurious features, the higher ID performance implies a better fit to spurious features, thus bringing a lower OOD performance~\citep{idood_inverse}. 
Similarly, when fine-tuning a pre-trained model, if the model does not learn the features sufficiently well, ID-OOD performance will be in a positive correlation.
However, when spurious correlations are present as easy-to-learn features, ERM can lead to a better fit for spurious features and distort the previously learned invariant features~\citep{feat_distort,robust_finetune,forgetting}.

\textbf{Rich Feature Learning.}
Recently many OOD objectives have been proposed to regularize  ERM such that the model can focus on learning invariant features~\citep{irmv1,vrex,sd,clove,fishr}.
However, due to the intrinsic conflicts of ERM and OOD objectives, it often requires exhaustive hyperparameter tuning of ERM pre-training epochs and regularization weights~\citep{rfc,pair}. Especially, the final OOD performance has a large dependence on the number of pre-training epochs.
To remedy the issue, \citet{rfc} proposed \rfc to construct rich feature representations with plentiful potentially useful features such as network initialization. Although both \rfc and \ours perform DRO on grouped subsets, \rfc rely on multiple initializations of the whole network to capture diverse features from the subsets, and complicated ensembling of the features, which requires much more training epochs for the convergence. In contrast, \ours relieves the requirements by performing direct augmentation-retention on the grouped subsets, and thus obtains better performance.
More crucially, although \rfc and other rich feature learning algorithms such as weight averaging~\citep{diwa,eoa,rfc2} have gained impressive successes in mitigating the dilemma, explanations about the reliance on ERM pre-training and why rich feature learning mitigates the dilemma remain elusive. Our work provides novel theoretical explanations for the success of rich feature learning algorithms for OOD generalization.
Complementary to the empirical observations made by existing works, our work provides the first theoretical explanation for the feature learning of ERM and OOD objectives for OOD generalization.

Besides, there exists a rich literature on learning diverse representations for better generalization. Similar to weight average~\citep{diwa}, \citet{evade_simplicity} propose to train diverse models to resolve simplicity bias.
\citet{diversify} propose to learn diverse solutions for the underspecified learning problem.
\citet{transformer_diverse} propose to regularize attention heads in transformers to learn diverse features. \citet{pro2} propose to learn diverse classifiers for sample efficient domain adaption.

\section{Proofs for theoretical results}

\subsection{Implementation details of the synthetic CNN experiments}
\label{exp:CNN_synthetic}
For linear activation function $\psi(x) = x$, the logit $\hat{y}^e_i$ (which is a function of $\rmW$) of sample $i$ in the environment $e$ can be explicitly written as \[
	\hat{y}^e_i = f(\rmW, \rvx^e_i) = F_{+1}(\rmW_{+1}, \rvx^e_i) - F_{-1}(\rmW_{-1}, \rvx^e_i) = \sum_{j\in \{\pm 1\}} {\frac{j}{m}\sum_{r=1}^m {\left[ \rvw_{j,r}^\top (\rvx^e_{i,1} + \rvx^e_{i,2}) \right]}},
\]
where $\rmW \triangleq \{\rmW_{+1}, \rmW_{-1}\}$ and $\rmW_j \triangleq
	\begin{bmatrix}
		\rvw^\top_{j,1} \\
		\vdots          \\
		\rvw^\top_{j,m}
	\end{bmatrix}$ for $j\in\{\pm 1\}$. We initialized all the network weights as $\mathcal{N}(0, \sigma_0^2)$ and we set $\sigma_0 = 0.01$.

The test dataset $(\rvx, y)$ is generated through
\[
	\rvx_{i,1} = y_i \cdot \rvv_1 + y_i \cdot \Rad(1-\beta_e) \cdot  \rvv_2,  \ \ \quad
	\rvx_{i,2} = \boldsymbol{\xi},
\]
where half of the dataset uses $\Rad(1-\beta_1)$ and the other half uses $\Rad(1-\beta_2)$. Here $\boldsymbol{\xi} \sim \mathcal{N}(0, \sigma_p^2\cdot (\mathbf{I}_d - \mathbf{v}_1\mathbf{v}_1^\top - \mathbf{v}_2\mathbf{v}_2^\top))$ and we chose $\sigma_p = 0.01$.

From the definition of IRMv1, we take derivative wrt. the scalar $1$ of the logit $1\cdot \hat{y}^e_i$. Thus, for environment $e$, the penalty is
\[
	\left(\frac{1}{n_e}\sum_{i=1}^{n_e} {\nabla_{w|w=1}\ell\big(y^e_i (w\cdot\hat{y}^e_i)\big)}\right)^2 = \left(\frac{1}{n_e}\sum_{i=1}^{n_e} {\ell'\big(y^e_i\hat{y}^e_i\big) \cdot y^e_i\hat{y}^e_i}\right)^2.
\]
Then, the IRMv1 objective is (we set $n_1 = n_2 = 2500$ in the simulation)
\[
	L_\irml(\rmW) = \sum_{e\in \mathcal{E}_{tr}}{\frac{1}{n_e}\sum_{i=1}^{n_e}{\ell\big(y^e_i \hat{y}^e_i\big)}} + \lambda \sum_{e\in \mathcal{E}_{tr}}{\left(\frac{1}{n_e}\sum_{i=1}^{n_e} {\ell'\big(y^e_i \hat{y}^e_i\big) \cdot y^e_i \hat{y}^e_i} \right)^2}.
\]

We used constant stepsize GD to minimize $L_\irml(\rmW)$, and we chose $\lambda = 10^{8}$ (heavy regularization setup). 

Let $C_\irml^e \triangleq \frac{1}{n_e}\sum_{i=1}^{n_e} {\ell'\big(y^e_i \hat{y}^e_i\big) \cdot y^e_i \hat{y}^e_i}$. The gradient of $L_\irml(\rmW)$ with respect to each $\rvw_{j,r}$ can be explicitly written as
\[
	\begin{aligned}
		&\nabla_{\rvw_{j,r}} L_\irml(\rmW)\\={} & \sum_{e\in \mathcal{E}_{tr}}{\frac{1}{n_e}\sum_{i=1}^{n_e}{\ell'\big(y^e_i \hat{y}^e_i\big)\cdot y^e_i\cdot \frac{j}{m}(\rvx^e_{i,1} + \rvx^e_{i,2})}}\\
		& + 2\lambda \sum_{e\in \mathcal{E}_{tr}}{
		\frac{C_\irml^e}{n_e} \sum_{i=1}^{n_e} {\Big(\ell''\big(y^e_i \hat{y}^e_i\big)\cdot \hat{y}^e_i\cdot \frac{j}{m}(\rvx^e_{i,1} + \rvx^e_{i,2}) + \ell'\big(y^e_i \hat{y}^e_i\big) \cdot y^e_i\cdot \frac{j}{m}(\rvx^e_{i,1} + \rvx^e_{i,2})\Big)}} \\
		={}                                  & \sum_{e\in \mathcal{E}_{tr}}{\frac{j}{n_e m}\sum_{i=1}^{n_e}{\ell'\big(y^e_i \hat{y}^e_i\big)\cdot y^e_i\cdot(\rvx^e_{i,1} + \rvx^e_{i,2})}}   \\
        & + 2\lambda \sum_{e\in \mathcal{E}_{tr}}{
		\frac{jC_\irml^e}{n_e m} \sum_{i=1}^{n_e} {\ell''\big(y^e_i \hat{y}^e_i\big)\cdot \hat{y}^e_i\cdot (\rvx^e_{i,1} + \rvx^e_{i,2})}}\\
		& + 2\lambda \sum_{e\in \mathcal{E}_{tr}}{
		\frac{jC_\irml^e}{n_e m} \sum_{i=1}^{n_e} {\ell'\big(y^e_i \hat{y}^e_i\big) \cdot y^e_i\cdot (\rvx^e_{i,1} + \rvx^e_{i,2})}}\\
	={}& \sum_{e\in \mathcal{E}_{tr}}{\frac{j(1 + 2\lambda C_\irml^e)}{n_e m}\sum_{i=1}^{n_e}{\ell'\big(y^e_i \hat{y}^e_i\big)\cdot y^e_i\cdot(\rvx^e_{i,1} + \rvx^e_{i,2})}} \\
	& {} + 2\lambda \sum_{e\in \mathcal{E}_{tr}}{
		\frac{jC_\irml^e}{n_e m} \sum_{i=1}^{n_e} {\ell''\big(y^e_i \hat{y}^e_i\big)\cdot \hat{y}^e_i\cdot (\rvx^e_{i,1} + \rvx^e_{i,2})}}.
	\end{aligned}
\]
Observe that $C_\irml^e$ is in fact the scalar gradient 
$
	C_\irml^e = \nabla_{w|w=1} L^e_\erm(\rmW)
$ that we want to force zero, whose effect can be understood as a dynamic re-weighting of the ERM gradient. Due to its importance in the analysis and interpretation of \irml, we tracked $C_\irml^e$ in our simulations.

 The invariant and spurious feature learning terms that we tracked are the mean of $\langle \rvw_{j,r}, j\rvv_1 \rangle$ and $\langle \rvw_{j,r}, j\rvv_2\rangle$ for $j\in \{\pm 1\}, r\in[m]$, respectively.

\newpage
\subsection{Proof for Theorem~\ref{thm:erm_learn_feat}} \label{sec:proof_erm_feat}

\begin{theorem} [Formal statement of Theorem \ref{thm:erm_learn_feat}]\label{thm:erm_learn_feat_appdx}
For $\rho>0$, denote $\underline{n} \triangleq \min_{e\in\mathcal{E}_{tr}}{n_e}$, $n\triangleq \sum_{e\in\mathcal{E}_{tr}}{n_e}$, $\epsilon_C \triangleq \sqrt{\frac{2\log{(16/\rho)}}{\underline{n}}}$ and $\delta\triangleq \exp\{O(\underline{n}^{-1})\} - 1$. Define the feature learning terms $\Lambda^t_{j,r} \triangleq \langle \rvw_{j,r}^t, j\rvv_1 \rangle$ and $\Gamma^t_{j,r} \triangleq \langle \rvw_{j,r}^t, j\rvv_2 \rangle$ for $j\in\{\pm 1\}, r\in[m]$. Suppose we run $T$ iterations of GD for the ERM objective. With sufficiently large $\underline{n}$ and $\psi(x)=x$,  assuming that
\[
		\begin{aligned}
			 & \alpha, \beta_1, \beta_2 < \frac{1-\epsilon_C-\delta (\frac{1}{4}+\frac{\epsilon_C}{2})}{2} &  & \text{($\alpha, \beta_1, \beta_2$ are sufficiently smaller than $\frac{1}{2}$)}, \\
			 & \alpha > \frac{\beta_1+\beta_2}{2} + \epsilon_C + \frac{\delta (1+\epsilon_C)}{2}           &  & \text{($\alpha$ is sufficiently larger than $\frac{\beta_1+\beta_2}{2}$)},
		\end{aligned}
	\]and choosing 
 \[
    \begin{aligned}
    \sigma_0^2 &= O\left(\underline{n}^{-2} \log^{-1}{(m/\rho)}\right), \\
    \sigma_p^2 &= O\left(\min\left\{d^{-1/2}\log^{-1/2}{(nm/\rho)}, T^{-1}\eta^{-1} m \left(d+ n\sqrt{d\log(n^2/\rho)}\right)^{-1} \right\}\right),
    \end{aligned}
    \]
    there exists a constant $\eta$, such that for any $j\in\{\pm 1\}, r\in[m]$, with probability at least $1-2\rho$, $\Lambda^t_{j,r}$ and $\Gamma^t_{j,r}$ are converging and the increment of the spurious feature $\Gamma^{t+1}_{j,r} - \Gamma^t_{j,r}$ is larger than that of the invariant feature $\Lambda^{t+1}_{j,r} - \Lambda^t_{j,r}$ at any iteration $t\in\{0,\ldots,T-1\}$.
\end{theorem}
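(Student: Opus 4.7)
The plan is to track the evolution of the invariant and spurious feature learning coordinates $\Lambda^t_{j,r}, \Gamma^t_{j,r}$ directly through the signal--noise decomposition (\ref{eq:decompostion}). Since $\psi$ is the identity, the logit $\hat{y}_i^e$ is linear in $\rmW$, and the ERM gradient with respect to $\rvw_{j,r}$ has the closed form
\begin{equation*}
\nabla_{\rvw_{j,r}} L(\rmW) = \sum_{e\in\mathcal{E}_{tr}} \frac{j}{n_e m}\sum_{i=1}^{n_e} \ell'(y_i^e \hat{y}_i^e)\, y_i^e\, \bigl(\rvx_{i,1}^e + \rvx_{i,2}^e\bigr).
\end{equation*}
Taking inner products with $j\rvv_1$ and $j\rvv_2$ and using $y_i^e \langle \rvx_{i,1}^e, \rvv_1\rangle = \Rad(\alpha)_i$ and $y_i^e \langle \rvx_{i,1}^e, \rvv_2\rangle = \Rad(\beta_e)_i$, together with $\langle \rvv_1,\rvx_{i,2}^e\rangle=\langle \rvv_2,\rvx_{i,2}^e\rangle=0$, yields the clean one-step updates
\begin{align*}
\Lambda^{t+1}_{j,r} - \Lambda^t_{j,r} &= -\frac{\eta}{m}\sum_{e\in\mathcal{E}_{tr}} \frac{1}{n_e}\sum_{i=1}^{n_e} \ell'(y_i^e\hat{y}_i^{e,t})\,\Rad(\alpha)_i,\\
\Gamma^{t+1}_{j,r} - \Gamma^t_{j,r} &= -\frac{\eta}{m}\sum_{e\in\mathcal{E}_{tr}} \frac{1}{n_e}\sum_{i=1}^{n_e} \ell'(y_i^e\hat{y}_i^{e,t})\,\Rad(\beta_e)_i.
\end{align*}
Since $\ell'<0$, both increments are positive provided the Rademacher averages are. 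This is the ``novel recursive equation'' the theorem alludes to.

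Next I would apply Hoeffding's inequality to the bounded Rademacher variables to get that, with probability at least $1-\rho$, the empirical means concentrate: $\bigl|\frac{1}{n_e}\sum_i \Rad(\alpha)_i - (1-2\alpha)\bigr|\le \epsilon_C$ and $\bigl|\frac{1}{n_e}\sum_i \Rad(\beta_e)_i - (1-2\beta_e)\bigr|\le \epsilon_C$, uniformly across the (constantly many) environments, for the choice $\epsilon_C=\sqrt{2\log(16/\rho)/\underline{n}}$ in the theorem. Under the assumption $\alpha,\beta_e<\tfrac{1}{2}-O(\epsilon_C)$ each of these concentrated averages is strictly positive, guaranteeing the increments have a definite sign and thus monotone convergence of $\Lambda^t_{j,r}$ and $\Gamma^t_{j,r}$.

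To compare the two increments one must bound the multiplicative distortion introduced by $\ell'(y_i^e\hat{y}_i^{e,t})$, which varies across samples. For this I would carry an induction over $t$ showing that the noise coordinates remain negligible: using the decomposition, the contribution of the noise patches to $\hat{y}_i^e$ is $\tfrac{1}{m}\sum_{j,r}j \rvw_{j,r}^\top \boldsymbol{\xi}_i$, and a standard Gaussian concentration for $\boldsymbol{\xi}_i \sim \mathcal{N}(0,\sigma_p^2 \mathbf{I})$ together with the choices of $\sigma_0$ and $\sigma_p$ in the statement forces $|\langle \rvw_{j,r}^t,\boldsymbol{\xi}_i\rangle|$ and $\rho^t_{j,r,i}$ to stay $o(1)$ throughout the $T$ steps. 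Consequently $|\hat{y}_i^{e,t}|$ is controlled and the factor $\ell'(y_i^e\hat{y}_i^{e,t})$ lies in an interval $[-1,-\tfrac{1}{2}+O(\delta)]$ with the bookkeeping parameter $\delta=\exp\{O(\underline{n}^{-1})\}-1$, which quantifies the maximal sample-to-sample spread of $\ell'$.

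Combining the concentration of Rademacher averages with the sandwich bounds on $\ell'$ and plugging into the two update formulas above yields, after simplification, the quantitative gap
\begin{equation*}
\bigl(\Gamma^{t+1}_{j,r} - \Gamma^t_{j,r}\bigr) - \bigl(\Lambda^{t+1}_{j,r} - \Lambda^t_{j,r}\bigr) \;\ge\; \frac{\eta}{m}\cdot \Bigl[\alpha - \tfrac{\beta_1+\beta_2}{2} - \epsilon_C - \tfrac{\delta(1+\epsilon_C)}{2}\Bigr]\cdot (\text{positive factor}),
\end{equation*}
which is strictly positive exactly under the second hypothesis $\alpha>\tfrac{\beta_1+\beta_2}{2}+\epsilon_C+\tfrac{\delta(1+\epsilon_C)}{2}$. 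The main technical obstacle is the simultaneous induction controlling the noise terms $\rho^t_{j,r,i}$, the Rademacher concentration event, and the sandwich on $\ell'$: each ingredient feeds the others, so one must tune $\sigma_0^2$, $\sigma_p^2$, and the stepsize $\eta$ jointly (as in the theorem's assumptions) so that the induction closes at every $t\in\{0,\ldots,T-1\}$. The extension to $|\mathcal{E}_{tr}|>2$ is immediate since the argument only requires the average of $(1-2\beta_e)$ to exceed $(1-2\alpha)$ with slack.
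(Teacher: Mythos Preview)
Your derivation of the one-step updates for $\Lambda^t_{j,r}$ and $\Gamma^t_{j,r}$ is correct and matches the paper. The gap appears in how you compare the two increments. Your plan is to sandwich $\ell'(y_i^e\hat{y}_i^{e,t})$ in a tiny interval $[-1,-\tfrac{1}{2}+O(\delta)]$ with $\delta=\exp\{O(\underline{n}^{-1})\}-1$, and then multiply by the concentrated marginal Rademacher averages. This sandwich would require $|y_i^e\hat{y}_i^{e,t}|=O(\underline{n}^{-1})$ uniformly in $i$ and $t$. But that only holds at $t=0$: under linear activation the margin is exactly
\[
y_i^e\hat{y}_i^{e,t}=\Rad(\alpha)_i\cdot 2\eta\!\sum_{k<t}\Delta_\Lambda^k+\Rad(\beta_e)_i\cdot 2\eta\!\sum_{k<t}\Delta_\Gamma^k+\mathbb{Q}_i^e,
\]
and the feature-learning sums grow to $\Theta(1)$ as training proceeds (that is the whole point of convergence). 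So $\ell'(y_i^e\hat{y}_i^{e,t})$ ranges over an $O(1)$ interval, not an $O(\underline{n}^{-1})$ one, and its value is \emph{perfectly correlated} with the very sign variables $\Rad(\alpha)_i,\Rad(\beta_e)_i$ you are summing. You therefore cannot factor the increment into ``marginal Rademacher concentration'' times ``a nearly constant $\ell'$ weight''; the cross terms are order one.

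The paper resolves this by a decoupling you did not identify. It analyses the \emph{sum and difference} sequences $\Delta_\Gamma^t\pm\Delta_\Lambda^t$ instead of $\Delta_\Lambda^t$ and $\Delta_\Gamma^t$ separately. In $\Delta_\Gamma^t+\Delta_\Lambda^t$ only samples with $\Rad(\alpha)_i=\Rad(\beta_e)_i$ survive (the others have $\Rad(\alpha)_i+\Rad(\beta_e)_i=0$), and on that subset the margin collapses to $\pm 2\eta\sum_{k<t}(\Delta_\Gamma^k+\Delta_\Lambda^k)+\mathbb{Q}_i^e$; similarly for the difference. This yields closed recursions of the form $\mathcal{Q}^t=\frac{C_1-C_2\exp\{2\eta\sum_{k<t}\mathcal{Q}^k\}}{1+C_3\exp\{2\eta\sum_{k<t}\mathcal{Q}^k\}}$, where the constants are the \emph{joint} sign-pattern counts $\overline{C}_{\pm 1,\pm 1}$ (concentrated around $(1-\alpha)(2-\beta_1-\beta_2)$, etc., by Hoeffding) and the residual $\mathbb{Q}_i^e$ contributes only the multiplicative factor $(1+\delta)^{\pm 1}$. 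The ``novel recursive equation'' the paper refers to is this $\mathcal{Q}^t$-type recursion, not the raw update you wrote. From it the paper shows both sequences stay positive and monotonically decrease for a suitable constant $\eta$, which simultaneously gives convergence of $\Lambda^t,\Gamma^t$ and the strict positivity of $\Delta_\Gamma^t-\Delta_\Lambda^t$ at every step. Your lower bound on the gap, with the bracket $\alpha-\tfrac{\beta_1+\beta_2}{2}-\epsilon_C-\tfrac{\delta(1+\epsilon_C)}{2}$, is morally what one gets from this analysis at $t=0$, but extending it to all $t$ requires the sign-group decomposition rather than a uniform sandwich on $\ell'$.
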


\begin{proof} [Proof of Theorem \ref{thm:erm_learn_feat_appdx}]
We begin with checking the feature learning terms in the ERM stage using constant stepsize GD: $\rmW^{t+1} = \rmW^t - \eta \cdot \nabla_{\rmW} L_\irml(\rmW^t)$. Note that with $\psi(x)=x$ the update rule for each $\rvw_{j,r}, \forall j\in\{+1, -1\}, r\in[m]$ can be written as
\[
\begin{aligned}
    \rvw^{t+1}_{j,r} & = \rvw^{t}_{j,r} - \frac{j\eta}{m} \sum_{e\in \mathcal{E}_{tr}}{\frac{1}{n_e}\sum_{i=1}^{n_e}{\ell'\big(y^e_i \hat{y}^e_i\big)\cdot y^e_i\cdot (\rvx^e_{i,1} + \rvx^e_{i,2}})} \\
    & = \rvw^{t}_{j,r} - \frac{j\eta}{m} \sum_{e\in \mathcal{E}_{tr}}{\frac{1}{n_e}\sum_{i=1}^{n_e}{\ell'\big(y^e_i \hat{y}^e_i\big)\cdot  ( \Rad(\alpha)_i \cdot  \rvv_1 + \Rad(\beta_e)_i \cdot  \rvv_2 + y^e_i\boldsymbol{\xi}^e_i)}}.
\end{aligned}
\]
Define the quantities of interest (the feature learning terms): $\Lambda^t_{j,r} \triangleq \langle \rvw_{j,r}^t, j\rvv_1 \rangle, \Gamma^t_{j,r} \triangleq \langle \rvw_{j,r}^t, j\rvv_2 \rangle, \Xi^{t,e}_{j,r,i} \triangleq \langle \rvw_{j,r}^t, j \boldsymbol{\xi}^e_i\rangle$. From our data generating procedure (Definition \ref{def:two_bit}), we know that the first two coordinates of $\boldsymbol{\xi}^e_i$ are zero. Thus, we can write down the update rule for each feature learning term as follows.
\[
\begin{aligned}
    \Lambda^{t+1}_{j,r}   & = \Lambda^t_{j,r} - \frac{\eta}{m} \sum_{e\in \mathcal{E}_{tr}}{\frac{1}{n_e}\sum_{i=1}^{n_e}{\ell'\big(y^e_i \hat{y}^e_i\big)\cdot  \Rad(\alpha)_i}},   \\
    \Gamma^{t+1}_{j,r}    & = \Gamma^t_{j,r} - \frac{\eta}{m} \sum_{e\in \mathcal{E}_{tr}}{\frac{1}{n_e}\sum_{i=1}^{n_e}{\ell'\big(y^e_i \hat{y}^e_i\big)\cdot  \Rad(\beta_e)_i}}, \\
    \Xi^{t+1,e'}_{j,r,i'} & = \Xi^{t, e'}_{j,r, i'} - \frac{\eta}{m} \sum_{e\in \mathcal{E}_{tr}}{\frac{1}{n_e}\sum_{i=1}^{n_e}{\ell'\big(y^e_i \hat{y}^e_i\big)\cdot y_i^e\cdot \langle \boldsymbol{\xi}^e_i, \boldsymbol{\xi}^{e'}_{i'}\rangle}}.
\end{aligned}
\]
More explicitly, we can write
\begin{align}
    \Lambda^{t+1}_{j,r}   & = \Lambda^t_{j,r} +\frac{\eta}{m} \sum_{e\in \mathcal{E}_{tr}}{\frac{1}{n_e}\sum_{i=1}^{n_e}{\frac{\Rad(\alpha)_i}{1+\exp\{y^e_i \hat{y}^e_i\}}  }}, \label{erm_eq1}           \\
    \Gamma^{t+1}_{j,r}    & = \Gamma^t_{j,r} + \frac{\eta}{m} \sum_{e\in \mathcal{E}_{tr}}{\frac{1}{n_e}\sum_{i=1}^{n_e}{\frac{\Rad(\beta_e)_i}{1+\exp\{y^e_i \hat{y}^e_i\}}  }},\label{erm_eq2}           \\
    \Xi^{t+1,e'}_{j,r,i'} & = \Xi^{t, e'}_{j,r, i'} + \frac{\eta}{m} \sum_{e\in \mathcal{E}_{tr}}{\frac{1}{n_e}\sum_{i=1}^{n_e}{\frac{y_i^e\cdot \langle \boldsymbol{\xi}^e_i, \boldsymbol{\xi}^{e'}_{i'}\rangle}{1+\exp\{y^e_i \hat{y}^e_i\}}}}.\label{erm_eq3}
\end{align}
Notice that the updates (\ref{erm_eq1}), (\ref{erm_eq2}) for $\Lambda_{j,r}, \Gamma_{j,r}$ are independent of $j,r$. Denoting
\[
\begin{aligned}
    \Delta_\Lambda^t & \triangleq \frac{1}{m} \sum_{e\in \mathcal{E}_{tr}}{\frac{1}{n_e}\sum_{i=1}^{n_e}{\frac{\Rad(\alpha)_i}{1+\exp\{y^e_i \hat{y}^e_i\}}}},  \\
    \Delta_\Gamma^t  & \triangleq \frac{1}{m} \sum_{e\in \mathcal{E}_{tr}}{\frac{1}{n_e}\sum_{i=1}^{n_e}{\frac{\Rad(\beta_e)_i}{1+\exp\{y^e_i \hat{y}^e_i\}}}},
\end{aligned}
\]
we can conclude that for any $j\in\{+1,-1\}, r\in[m]$,
\begin{equation}\label{erm_feature_update}
\begin{aligned}
    \Lambda^{t+1}_{j,r} & = \Lambda^t_{j,r} +\eta\cdot\Delta_\Lambda^t= \eta\cdot \sum_{k = 0}^t {\Delta_\Lambda^k} + \Lambda^{0}_{j,r}, \\
    \Gamma^{t+1}_{j,r}  & = \Gamma^t_{j,r} +\eta\cdot\Delta_\Gamma^t= \eta\cdot \sum_{k = 0}^t {\Delta_\Gamma^k} + \Gamma^{0}_{j,r}.
\end{aligned}
\end{equation}
Then, we write the logit $\hat{y}^e_i$ as
\[
    \begin{aligned}
        \hat{y}^e_i ={} & \sum_{j\in \{\pm 1\}} {\frac{j}{m}\sum_{r=1}^m {\left[ \left\langle\rvw^t_{j,r}, y_i^e\cdot\Rad(\alpha)_i \cdot  \rvv_1 + y_i^e\cdot\Rad(\beta_e)_i \cdot  \rvv_2 + \rvx^e_{i,2}\right\rangle\right]}}\\
        ={}& \sum_{j\in \{\pm 1\}} {\frac{j}{m}\sum_{r=1}^m {\left[  jy_i^e\cdot\Rad(\alpha)_i \cdot  \Lambda^t_{j,r} + jy_i^e\cdot\Rad(\beta_e)_i \cdot  \Gamma^t_{j,r} + j\cdot \Xi^{t,e}_{j,r,i}\right]}}\\
        ={}             & \sum_{j\in \{\pm 1\}} {\frac{1}{m}\sum_{r=1}^m {\left[  y_i^e\cdot\Rad(\alpha)_i \cdot  \Lambda^t_{j,r} + y_i^e\cdot\Rad(\beta_e)_i \cdot  \Gamma^t_{j,r} + \Xi^{t,e}_{j,r,i}\right]}}\\
        ={}             & y_i^e\cdot\Rad(\alpha)_i \cdot  \sum_{j\in \{\pm 1\}} {\sum_{r=1}^m {\frac{\Lambda^t_{j,r}}{m}}} + y_i^e\cdot\Rad(\beta_e)_i \cdot  \sum_{j\in \{\pm 1\}} {\sum_{r=1}^m {\frac{\Gamma^t_{j,r}}{m}}} + \sum_{j\in \{\pm 1\}} {\sum_{r=1}^m {\frac{\Xi^{t,e}_{j,r,i}}{m}}} \\
        ={}             & y_i^e\cdot\Rad(\alpha)_i \cdot 2\eta \cdot \sum_{k = 0}^{t-1} {\Delta_\Lambda^k} + y_i^e\cdot\Rad(\beta_e)_i \cdot 2\eta \cdot \sum_{k = 0}^{t-1} {\Delta_\Gamma^k}\\
        & +
        y_i^e\cdot\Rad(\alpha)_i \cdot  \sum_{j\in \{\pm 1\}} {\sum_{r=1}^m {\frac{\Lambda^{0}_{j,r}}{m}}} + y_i^e\cdot\Rad(\beta_e)_i \cdot  \sum_{j\in \{\pm 1\}} {\sum_{r=1}^m {\frac{ \Gamma^{0}_{j,r}}{m}}} + \sum_{j\in \{\pm 1\}} {\sum_{r=1}^m {\frac{\Xi^{t,e}_{j,r,i}}{m}}}.
    \end{aligned}
\]
	Denoting $\mathbb{Q}^e_i \triangleq \Rad(\alpha)_i  \sum_{j\in \{\pm 1\}} {\sum_{r=1}^m {\frac{\Lambda^{0}_{j,r}}{m}}} + \Rad(\beta_e)_i  \sum_{j\in \{\pm 1\}} {\sum_{r=1}^m {\frac{ \Gamma^{0}_{j,r}}{m}}} + y^e_i\cdot \sum_{j\in \{\pm 1\}} {\sum_{r=1}^m {\frac{\Xi^{t,e}_{j,r,i}}{m}}}$, we have
	\[
		\hat{y}_i^e = y_i^e\cdot\left(\Rad(\alpha)_i \cdot 2\eta\cdot \sum_{k = 0}^{t-1} {\Delta_\Lambda^k}  + \Rad(\beta_e)_i \cdot 2\eta\cdot \sum_{k = 0}^{t-1} {\Delta_\Gamma^k} + \mathbb{Q}^e_i\right),
	\]
	We need the following concentration lemma to control the scale of $\mathbb{Q}^e_i$, whose proof is given in Appendix \ref{proof_erm_concent_Q}.
	\begin{lemma}\label{lem:erm_concent_Q} Denote $\underline{n} \triangleq \min_{e\in\mathcal{E}_{tr}}{n_e}, n\triangleq \sum_{e\in\mathcal{E}_{tr}}{n_e}$. For $\rho > 0$, if 
    \[
    \begin{aligned}
    \sigma_0^2 &= O\left(\underline{n}^{-2} \log^{-1}{(m/\rho)}\right), \\
    \sigma_p^2 &= O\left(\min\left\{d^{-1/2}\log^{-1/2}{(nm/\rho)}, T^{-1}\eta^{-1} m \left(d+ n\sqrt{d\log(n^2/\rho)}\right)^{-1} \right\}\right),
    \end{aligned}
    \]
    then with probability at least $1-\rho$, for any $e\in \mathcal{E}_{tr}, i\in [n_e]$, it holds that $\abs{\mathbb{Q}_i^e} = O(\underline{n}^{-1})$.
	\end{lemma}
	Then $\Delta_\Lambda^t$ and $\Delta_\Gamma^t$ can be explicitly written as
	\[
		\begin{aligned}
			&\ \ \ \ \Delta_\Lambda^t  =\\& \sum_{e\in \mathcal{E}_{tr}}{\frac{1}{n_e m}\sum_{i=1}^{n_e}{\frac{\Rad(\alpha)_i}{1+\exp{\left\{\Rad(\alpha)_i \cdot 2\eta\cdot \sum_{k = 0}^{t-1} {\Delta_\Lambda^k}\right\}}\cdot \exp{\left\{\Rad(\beta_e)_i \cdot 2\eta\cdot \sum_{k = 0}^{t-1} {\Delta_\Gamma^k}\right\}} \cdot \exp{\left\{\mathbb{Q}^e_i\right\}}}}},  \\
			&\ \ \ \ \Delta_\Gamma^t  =\\& \sum_{e\in \mathcal{E}_{tr}}{\frac{1}{n_e m}\sum_{i=1}^{n_e}{\frac{\Rad(\beta_e)_i}{1+\exp{\left\{\Rad(\alpha)_i \cdot 2\eta\cdot \sum_{k = 0}^{t-1} {\Delta_\Lambda^k}\right\}}\cdot \exp{\left\{\Rad(\beta_e)_i \cdot 2\eta\cdot \sum_{k = 0}^{t-1} {\Delta_\Gamma^k}\right\}} \cdot \exp{\left\{\mathbb{Q}^e_i\right\}}}}}.
		\end{aligned}
	\]
	We are going to analyze the convergences of two sequences $\{\Delta_\Gamma^t + \Delta_\Lambda^t\}$ and $\{\Delta_\Gamma^t - \Delta_\Lambda^t\}$. Notice that
	\[
		\begin{aligned}
			&\ \ \ \ \Delta_\Gamma^t + \Delta_\Lambda^t =\\& \sum_{e\in \mathcal{E}_{tr}}{\frac{1}{n_e m}\sum_{i=1}^{n_e}{\frac{\Rad(\beta_e)_i + \Rad(\alpha)_i}{1+\exp{\left\{\Rad(\alpha)_i \cdot 2\eta\cdot \sum_{k = 0}^{t-1} {\Delta_\Lambda^k}\right\}}\cdot \exp{\left\{\Rad(\beta_e)_i \cdot 2\eta\cdot \sum_{k = 0}^{t-1} {\Delta_\Gamma^k}\right\}} \cdot \exp{\left\{\mathbb{Q}^e_i\right\}}}}}, \\
			&\ \ \ \ \Delta_\Gamma^t - \Delta_\Lambda^t =\\& \sum_{e\in \mathcal{E}_{tr}}{\frac{1}{n_e m}\sum_{i=1}^{n_e}{\frac{\Rad(\beta_e)_i -\Rad(\alpha)_i}{1+\exp{\left\{\Rad(\alpha)_i \cdot 2\eta\cdot \sum_{k = 0}^{t-1} {\Delta_\Lambda^k}\right\}}\cdot \exp{\left\{\Rad(\beta_e)_i \cdot 2\eta\cdot \sum_{k = 0}^{t-1} {\Delta_\Gamma^k}\right\}} \cdot \exp{\left\{\mathbb{Q}^e_i\right\}}}}}.
		\end{aligned}
	\]
	We can further write these two terms as
	\[
		\begin{aligned}
			\Delta_\Gamma^t + \Delta_\Lambda^t = {} & \sum_{e\in \mathcal{E}_{tr}}{\frac{2}{n_e m}\sum_{\substack{i\in[n_e]   \\\rad(\beta_e)_i = +1\\ \rad(\alpha)_i=+1}} {\frac{1}{1+\exp{\left\{2\eta\cdot \sum_{k = 0}^{t-1} {(\Delta_\Gamma^k + \Delta_\Lambda^k)}\right\}} \cdot \exp{\left\{\mathbb{Q}^e_i\right\}}}}} \\
		& -\sum_{e\in \mathcal{E}_{tr}}{\frac{2}{n_e m}\sum_{\substack{i\in[n_e]  \\\rad(\beta_e)_i =-1\\ \rad(\alpha)_i=-1}}{\frac{1}{1+\exp{\left\{-2\eta\cdot \sum_{k = 0}^{t-1} {(\Delta_\Gamma^k + \Delta_\Lambda^k)}\right\}} \cdot \exp{\left\{\mathbb{Q}^e_i\right\}}}}}, \\
			\Delta_\Gamma^t - \Delta_\Lambda^t ={}  & \sum_{e\in \mathcal{E}_{tr}}{\frac{2}{n_e m}\sum_{\substack{i\in[n_e]   \\\rad(\beta_e)_i = +1\\ \rad(\alpha)_i=-1}}{\frac{1}{1 + \exp{\left\{2\eta\cdot \sum_{k = 0}^{t-1} {(\Delta_\Gamma^k - \Delta_\Lambda^k)}\right\}} \cdot \exp{\left\{\mathbb{Q}^e_i\right\}}}}} \\
	& - \sum_{e\in \mathcal{E}_{tr}}{\frac{2}{n_e m}\sum_{\substack{i\in[n_e] \\\rad(\beta_e)_i =-1\\ \rad(\alpha)_i=+1}}{\frac{1}{1+\exp{\left\{-2\eta\cdot \sum_{k = 0}^{t-1} {(\Delta_\Gamma^k -\Delta_\Lambda^k)}\right\}} \cdot \exp{\left\{\mathbb{Q}^e_i\right\}}}}}.
		\end{aligned}
	\]
	According to Lemma \ref{lem:erm_concent_Q}, for all $e\in\mathcal{E}_{tr}, i\in [n_e], \rho>0$, letting $\delta\triangleq \exp\{O(\underline{n}^{-1})\} - 1$, we have $1+\delta \geq \exp{\{\mathbb{Q}^e_i\}} \geq (1+\delta)^{-1}$ with probability at least $1-\rho$. Let $C^e_{j\ell} \triangleq |\{i\mid \Rad(\alpha)_i = j, \Rad(\beta_e)_i = \ell, i\in \mathcal{E}_e\}|$ for any $j\in\{\pm 1\},\ell\in \{\pm 1\}, e\in \mathcal{E}_{tr}$, and then define $\overline{C}_{j\ell} \triangleq \sum_{e\in \mathcal{E}_{tr}} {\frac{C^e_{j\ell}}{n_e}}$. We can upper bound and formulate $\Delta_\Gamma^t + \Delta_\Lambda^t$ and $\Delta_\Gamma^t - \Delta_\Lambda^t$ as
	\begin{align}
		&\ \ \ \ \Delta_\Gamma^t + \Delta_\Lambda^t \leq{} \nonumber\\ & \frac{2}{m} \left(\frac{\overline{C}_{+1+1}}{1+\exp{\left\{2\eta\cdot \sum_{k = 0}^{t-1} {(\Delta_\Gamma^k + \Delta_\Lambda^k)}\right\}} \cdot (1+\delta)^{-1}} - \frac{\overline{C}_{-1-1}}{1+\exp{\left\{-2\eta\cdot \sum_{k = 0}^{t-1} {(\Delta_\Gamma^k + \Delta_\Lambda^k)}\right\}} \cdot (1+\delta)}\right) \nonumber  \\
		&={} \frac{2}{m}\cdot \frac{\overline{C}_{+1+1} (1+\delta) - \overline{C}_{-1-1}\cdot \exp{\left\{2\eta\cdot \sum_{k = 0}^{t-1} {(\Delta_\Gamma^k + \Delta_\Lambda^k)}\right\}}}{1+\delta+\exp{\left\{2\eta\cdot \sum_{k = 0}^{t-1} {(\Delta_\Gamma^k + \Delta_\Lambda^k)}\right\}}}, \label{erm_eq4}                              \\
		&\ \ \ \ \Delta_\Gamma^t - \Delta_\Lambda^t \leq{} \nonumber \\& \frac{2}{m}\left( \frac{\overline{C}_{-1+1}}{1 + \exp{\left\{2\eta\cdot \sum_{k = 0}^{t-1} {(\Delta_\Gamma^k - \Delta_\Lambda^k)}\right\}} \cdot (1+\delta)^{-1}} - \frac{\overline{C}_{+1-1}}{1+\exp{\left\{-2\eta\cdot \sum_{k = 0}^{t-1} {(\Delta_\Gamma^k -\Delta_\Lambda^k)}\right\}} \cdot (1+\delta)}\right) \nonumber \\
		&={} \frac{2}{m}\cdot\frac{\overline{C}_{-1+1} (1+\delta) - \overline{C}_{+1-1}\cdot \exp{\left\{2\eta\cdot \sum_{k = 0}^{t-1} {(\Delta_\Gamma^k - \Delta_\Lambda^k)}\right\}}}{1+\delta + \exp{\left\{2\eta\cdot \sum_{k = 0}^{t-1} {(\Delta_\Gamma^k - \Delta_\Lambda^k)}\right\}}}.\label{erm_eq5}
	\end{align}
	Based on similar arguments, we can also establish lower bounds for these two terms,
	\begin{align}
		\Delta_\Gamma^t + \Delta_\Lambda^t \geq{} & \frac{2}{m}\cdot \frac{\overline{C}_{+1+1}  - \overline{C}_{-1-1}(1+\delta)\cdot \exp{\left\{2\eta\cdot \sum_{k = 0}^{t-1} {(\Delta_\Gamma^k + \Delta_\Lambda^k)}\right\}}}{1+\exp{\left\{2\eta\cdot \sum_{k = 0}^{t-1} {(\Delta_\Gamma^k + \Delta_\Lambda^k)}\right\}}\cdot(1+\delta)}, \label{erm_eq6} \\
		\Delta_\Gamma^t - \Delta_\Lambda^t \geq{} & \frac{2}{m}\cdot\frac{\overline{C}_{-1+1} - \overline{C}_{+1-1}(1+\delta)\cdot \exp{\left\{2\eta\cdot \sum_{k = 0}^{t-1} {(\Delta_\Gamma^k - \Delta_\Lambda^k)}\right\}}}{1 + \exp{\left\{2\eta\cdot \sum_{k = 0}^{t-1} {(\Delta_\Gamma^k - \Delta_\Lambda^k)}\right\}\cdot(1+\delta)}}.\label{erm_eq7}
	\end{align}
	The upper and lower bounds (\ref{erm_eq4}), (\ref{erm_eq5}), (\ref{erm_eq6}) and (\ref{erm_eq7}) imply that the convergences of $\{\Delta_\Gamma^t + \Delta_\Lambda^t\}$ and $\{\Delta_\Gamma^t - \Delta_\Lambda^t\}$ are determined by recursive equations of the form $\mathcal{Q}^t = \frac{C_1 - C_2\cdot \exp{\{\eta\sum_{k=0}^{t-1}{\mathcal{Q}^k}\}}}{1 + C_3\cdot\exp{\{\eta\sum_{k=0}^{t-1}{\mathcal{Q}^k}\}}}$. We first establish that with suitably chosen $\eta$, the sequences $\{\Delta_\Gamma^t + \Delta_\Lambda^t\}$ and $\{\Delta_\Gamma^t - \Delta_\Lambda^t\}$ are guaranteed to be positive. Observed that for the $\mathcal{Q}^t$-type recursive equation, the sign of $\mathcal{Q}^0$ is independent of $\eta$, and only determined by the constants $C_1, C_2, C_3$. At iteration $0$, (\ref{erm_eq6}) and (\ref{erm_eq7}) give
	\begin{align}
		\Delta_\Gamma^0 + \Delta_\Lambda^0 \geq{} & \frac{2}{m}\cdot \frac{\overline{C}_{+1+1}  - \overline{C}_{-1-1}(1+\delta)}{2+\delta}, \label{erm_eq8} \\
		\Delta_\Gamma^0 - \Delta_\Lambda^0 \geq{} & \frac{2}{m}\cdot\frac{\overline{C}_{-1+1} - \overline{C}_{+1-1}(1+\delta)}{2+\delta}.\label{erm_eq9}
	\end{align}
	To proceed, we need the following concentration lemma to control the deviations of the constants $\overline{C}_{+1+1}$, $\overline{C}_{+1-1}$, $\overline{C}_{-1+1}$ and $\overline{C}_{-1-1}$ from their expectations, whose proof is given in Appendix \ref{proof_erm_concent_C}.
	\begin{lemma}\label{lem:erm_concent_C}
 For $\rho > 0$, considering two environments and denoting $\epsilon_C \triangleq \sqrt{\frac{2\log{(16/\rho)}}{\underline{n}}}$, with probability at least $1-\rho$, we have
\begin{equation} \label{erm_constant_bound}
\begin{aligned}
	\left\lvert\overline{C}_{+1+1} - (1-\alpha)(2-\beta_1-\beta_2)\right\rvert & \leq \epsilon_C, \\
	\left\lvert\overline{C}_{+1-1} - (1-\alpha)(\beta_1+\beta_2)\right\rvert   & \leq \epsilon_C, \\
	\left\lvert\overline{C}_{-1+1} -  \alpha(2-\beta_1-\beta_2)\right\rvert    & \leq \epsilon_C, \\
	\left\lvert\overline{C}_{-1-1} -  \alpha(\beta_1+\beta_2)\right\rvert      & \leq \epsilon_C.
\end{aligned}
\end{equation}
	\end{lemma}
	Using Lemma \ref{lem:erm_concent_C}, with probability at least $1-\rho$, the constants $\overline{C}_{+1+1}$, $\overline{C}_{+1-1}$, $\overline{C}_{-1+1}$ and $\overline{C}_{-1-1}$ are close to their expectations.

	Based on our assumptions that
	\[
		\begin{aligned}
			 & \alpha, \beta_1, \beta_2 < \frac{1-\epsilon_C-\delta (\frac{1}{4}+\frac{\epsilon_C}{2})}{2} &  & \text{($\alpha, \beta_1, \beta_2$ are sufficiently smaller than $\frac{1}{2}$)}, \\
			 & \alpha > \frac{\beta_1+\beta_2}{2} + \epsilon_C + \frac{\delta (1+\epsilon_C)}{2}           &  & \text{($\alpha$ is sufficiently larger than $\frac{\beta_1+\beta_2}{2}$)},
		\end{aligned}
	\]
	it can be verified that with probability at least $1-2\rho$,
	$
		\Delta_\Gamma^0 + \Delta_\Lambda^0 > 0, \Delta_\Gamma^0 - \Delta_\Lambda^0 > 0.
	$

	Then, at iteration $1$, from (\ref{erm_eq6}) and (\ref{erm_eq7}), we see that as long as we require
	\[
		\eta < \min{\left\{\frac{1}{2(\Delta_\Gamma^0 + \Delta_\Lambda^0)} \log{\frac{\overline{C}_{+1+1}}{\overline{C}_{-1-1}(1+\delta)}}, \frac{1}{2(\Delta_\Gamma^0 -\Delta_\Lambda^0)} \log{\frac{\overline{C}_{-1+1}}{\overline{C}_{+1-1}(1+\delta)}}\right\}},
	\]
	it holds that $\Delta_\Gamma^1 + \Delta_\Lambda^1 > 0, \Delta_\Gamma^1 - \Delta_\Lambda^1 > 0$. By recursively applying this argument, we see the requirement for $\eta$ to ensure that $\Delta_\Gamma^t + \Delta_\Lambda^t > 0$ and $\Delta_\Gamma^t - \Delta_\Lambda^t > 0$ for any $t\in\{0, \ldots, T\}$ is
	\begin{equation}\label{erm_eq10}
		\eta < \min{\left\{\frac{1}{2\sum_{k = 0}^{T-1} {(\Delta_\Gamma^k + \Delta_\Lambda^k)}} \log{\frac{\overline{C}_{+1+1}}{\overline{C}_{-1-1}(1+\delta)}}, \frac{1}{2\sum_{k = 0}^{T-1} {(\Delta_\Gamma^k - \Delta_\Lambda^k)}} \log{\frac{\overline{C}_{-1+1}}{\overline{C}_{+1-1}(1+\delta)}}\right\}}.
	\end{equation}
	In other words, for the $\mathcal{Q}^t$-type recursive equation, as long as $\mathcal{Q}^0 \geq 0$, there always exists a sufficiently small $\eta$ to guarantee that the whole sequence $\{\mathcal{Q}^t\}$ is positive. From now on, we will focus on the case where the two sequences $\{\Delta_\Gamma^t + \Delta_\Lambda^t\}$ and $\{\Delta_\Gamma^t - \Delta_\Lambda^t\}$ decrease to an $\epsilon_\Delta>0$ error, i.e., $\min_{t\in \{0, \ldots, T\}} {\{\Delta_\Gamma^t + \Delta_\Lambda^t, \Delta_\Gamma^t - \Delta_\Lambda^t\}} = \epsilon_\Delta$.

	Then, we show that the two sequences $\{\Delta_\Gamma^t + \Delta_\Lambda^t\}$ and $\{\Delta_\Gamma^t - \Delta_\Lambda^t\}$ decrease monotonically, which thus leads to a more refined upper bound for $\eta$ at (\ref{erm_eq10}). Inspect the upper bounds (\ref{erm_eq4}), (\ref{erm_eq5}) at iteration $t+1$, which can be written as
	\[
		\begin{aligned}
			&\ \ \ \ \Delta_\Gamma^{t+1} + \Delta_\Lambda^{t+1} \leq{} \\ & \frac{2}{m}\cdot \frac{\overline{C}_{+1+1}  - \overline{C}_{-1-1}\cdot \exp{\left\{2\eta\cdot \sum_{k = 0}^{t-1} {(\Delta_\Gamma^k + \Delta_\Lambda^k)}\right\}}\cdot \exp{\left\{2\eta\cdot (\Delta_\Gamma^t + \Delta_\Lambda^t)\right\}}(1+\delta)^{-1}}{1+\exp{\left\{2\eta\cdot \sum_{k = 0}^{t-1} {(\Delta_\Gamma^k + \Delta_\Lambda^k)}\right\}}\cdot\exp{\left\{2\eta\cdot (\Delta_\Gamma^t + \Delta_\Lambda^t)\right\}} (1+\delta)^{-1}} \triangleq \spadesuit^{t+1}, \\
			&\ \ \ \ \Delta_\Gamma^{t+1} - \Delta_\Lambda^{t+1} \leq{} \\& \frac{2}{m}\cdot\frac{\overline{C}_{-1+1}  - \overline{C}_{+1-1}\cdot \exp{\left\{2\eta\cdot \sum_{k = 0}^{t-1} {(\Delta_\Gamma^k - \Delta_\Lambda^k)}\right\}}\cdot \exp{\left\{2\eta\cdot (\Delta_\Gamma^t - \Delta_\Lambda^t)\right\}}(1+\delta)^{-1}}{1 + \exp{\left\{2\eta\cdot \sum_{k = 0}^{t-1} {(\Delta_\Gamma^k - \Delta_\Lambda^k)}\right\}}\cdot \exp{\left\{2\eta\cdot (\Delta_\Gamma^t - \Delta_\Lambda^t)\right\}}(1+\delta)^{-1}}\triangleq \clubsuit^{t+1}.
		\end{aligned}
	\]
	Requiring that $\eta>\max{\left\{\frac{1}{\Delta_\Gamma^t + \Delta_\Lambda^t}\log{(1+\delta)}, \frac{1}{\Delta_\Gamma^t - \Delta_\Lambda^t}\log{(1+\delta)}\right\}}, \forall t\in \{0,\ldots, T\} \Rightarrow \eta > \epsilon_\Delta^{-1}\log{(1+\delta)}$, we have
	\[
		\begin{aligned}
			\spadesuit^{t+1} & < \frac{2}{m}\cdot \frac{\overline{C}_{+1+1}  - \overline{C}_{-1-1}\cdot \exp{\left\{2\eta\cdot \sum_{k = 0}^{t-1} {(\Delta_\Gamma^k + \Delta_\Lambda^k)}\right\}}\cdot \exp{\left\{2\eta\cdot (\Delta_\Gamma^t + \Delta_\Lambda^t)\right\}}(1+\delta)^{-1}}{1+\exp{\left\{2\eta\cdot \sum_{k = 0}^{t-1} {(\Delta_\Gamma^k + \Delta_\Lambda^k)}\right\}}\cdot(1+\delta)}   \\
			                 & < \Delta_\Gamma^t + \Delta_\Lambda^t,\\
			\clubsuit^{t+1}  & < \frac{2}{m}\cdot\frac{\overline{C}_{-1+1}  - \overline{C}_{+1-1}\cdot \exp{\left\{2\eta\cdot \sum_{k = 0}^{t-1} {(\Delta_\Gamma^k - \Delta_\Lambda^k)}\right\}}\cdot \exp{\left\{2\eta\cdot (\Delta_\Gamma^t - \Delta_\Lambda^t)\right\}}(1+\delta)^{-1}}{1 + \exp{\left\{2\eta\cdot \sum_{k = 0}^{t-1} {(\Delta_\Gamma^k - \Delta_\Lambda^k)}\right\}}\cdot (1+\delta)} \\
			                 & < \Delta_\Gamma^t - \Delta_\Lambda^t,
		\end{aligned}
	\]
	where the last inequalities use the lower bounds (\ref{erm_eq6}) and (\ref{erm_eq7}).

	Based on the above discussion and (\ref{erm_eq10}), we can now clarify the requirements of $\eta$ for the sequences $\{\Delta_\Gamma^t + \Delta_\Lambda^t\}$ and $\{\Delta_\Gamma^t - \Delta_\Lambda^t\}$ to be positive and monotonically decreasing:
	\begin{equation}\label{erm_eta_require}
		\begin{aligned}
			\epsilon_\Delta^{-1}\log{(1+\delta)}  < \eta < \min\bigg\{ & \frac{m (2+\delta)}{4T (\overline{C}_{+1+1}(1+\delta)  - \overline{C}_{-1-1})} \log{\frac{\overline{C}_{+1+1}}{\overline{C}_{-1-1}(1+\delta)}},      \\
    & \frac{m(2+\delta)}{4T (\overline{C}_{-1+1}(1+\delta) - \overline{C}_{+1-1})} \log{\frac{\overline{C}_{-1+1}}{\overline{C}_{+1-1}(1+\delta)}}\bigg\},
		\end{aligned}
	\end{equation}
	which uses the upper bounds (\ref{erm_eq4}) and (\ref{erm_eq5}) at iteration $0$. The constants $\overline{C}_{+1+1}$, $\overline{C}_{+1-1}$, $\overline{C}_{-1+1}$ and $\overline{C}_{-1-1}$ can be substituted using the concentration bounds at (\ref{erm_constant_bound}) to generate an upper bound for $\eta$ that only involves $\alpha, \beta_1, \beta_2, m, \delta, T, \epsilon_C$. Here we omit the precise upper bound for clarity. Note that the left hand side of (\ref{erm_eta_require}) approaches $0$ if $\delta\rightarrow 0$, which means that there exists a constant choice of $\eta$ in (\ref{erm_eta_require}) if $\underline{n}$ is sufficiently large in Lemma \ref{lem:erm_concent_Q} and \ref{lem:erm_concent_C}.

    To conclude, in view of (\ref{erm_feature_update}), the convergences of the sequences $\{\Delta_\Gamma^t + \Delta_\Lambda^t\}$ and $\{\Delta_\Gamma^t - \Delta_\Lambda^t\}$ imply that $\Lambda^t_{j,r}$ and $\Gamma^t_{j,r}$ are converging, and the positive sequence $\{\Delta_\Gamma^t - \Delta_\Lambda^t\}$ indicates that the increment of the spurious feature $\Gamma^{t+1}_{j,r} - \Gamma^t_{j,r}$ is larger than that of the invariant feature $\Lambda^{t+1}_{j,r} - \Lambda^t_{j,r}$ at any iteration $t\in\{0,\ldots,T-1\}$.
\end{proof}
\subsubsection{Proof of Lemma \ref{lem:erm_concent_Q}}
\label{proof_erm_concent_Q}

First, we recall some concentration inequalities for sub-Gaussian random variables. Since $\boldsymbol{\xi}^e_i \sim \mathcal{N}(0, \sigma_p^2\cdot (\mathbf{I}_d - \mathbf{v}_1\mathbf{v}_1^\top - \mathbf{v}_2\mathbf{v}_2^\top))$, for $(i', e') \neq (i, e)$, using Bernstein's inequality for sub-exponential random variables, we have for sufficiently small $a\geq0$, 
\[
\begin{gathered}
	\textup{Pr}\left\{\lvert\langle \boldsymbol{\xi}^e_i, \boldsymbol{\xi}^{e'}_{i'}\rangle\rvert\geq a\right\} \leq 2 \exp\left\{- \frac{a^2}{4\sigma_p^4(d-2)}\right\}, \\
	\textup{Pr}\left\{\big\lvert\lVert \boldsymbol{\xi}^e_i\rVert_2^2 - \sigma_p^2(d-2)\big\rvert \geq a\right\} \leq 2 \exp\left\{- \frac{a^2}{512\sigma_p^4(d-2)}\right\}.
\end{gathered}
\]
Moreover, for $\xi_r\sim \mathcal{N}(0, \sigma_0^2)$ (indicating the initial weights $\rvw^0_{j,r}$), the standard Gaussian tail gives
\[
	\textup{Pr}\left\{\left\lvert \frac{1}{m}\sum_{r=1}^m {\xi_r}\right\rvert\geq a\right\} \leq 2 \exp\left\{- \frac{ma^2}{2\sigma_0^2}\right\}.
\]

Denote $n\triangleq \sum_{e\in \mathcal{E}_{tr}}{n_e}, \underline{n} \triangleq \min_{e\in \mathcal{E}_{tr}} {n_e}$, by properly choosing $a$ for each tail bound and applying a union bound, we can conclude that for $\rho > 0$, with probability at least $1-\rho$, it holds that $\forall i,e, i', e', r$,
\[
\begin{aligned}
	&\lvert\langle \boldsymbol{\xi}^e_i, \boldsymbol{\xi}^{e'}_{i'}\rangle\rvert \leq 2\sigma_p^2 \sqrt{(d-2)\log{\frac{8n^2}{\rho}}},  &&\lVert\boldsymbol{\xi}^e_i\rVert_2^2  \leq \sigma_p^2(d-2) + 16\sigma_p^2 \sqrt{2(d-2)\log{\frac{8n}{\rho}}}, \\
	&\left\lvert \frac{1}{m}\sum_{r=1}^m {\xi_r}\right\rvert \leq \sigma_0\sqrt{\frac{2}{m}\log{\frac{32m}{\rho}}}, &&\lvert\langle \boldsymbol{\xi}_r, \boldsymbol{\xi}^{e'}_{i'}\rangle\rvert \leq 2\sigma_p\sigma_0 \sqrt{(d-2)\log{\frac{16nm}{\rho}}}.
\end{aligned}
\]

We start with bound the growth of $\Xi^{t,e}_{j,r,i}$. By bounding the update rule (\ref{erm_eq3}), with probability at least $1-\rho$, we have
\[
\begin{aligned}
	\abs{\Xi^{t+1,e'}_{j,r,i'}} &\leq \abs{\Xi^{t, e'}_{j,r, i'}} + \frac{\eta}{m} \sum_{e\in \mathcal{E}_{tr}}{\frac{1}{n_e}\sum_{i=1}^{n_e}{\frac{1}{1+\exp\{y^e_i \hat{y}^e_i\}} \cdot \lvert\langle \boldsymbol{\xi}^e_i, \boldsymbol{\xi}^{e'}_{i'}\rangle\rvert }} \\
	&\leq \abs{\Xi^{t, e'}_{j,r, i'}} + \frac{\eta}{m} \sum_{e\in \mathcal{E}_{tr}}{\frac{1}{n_e}\sum_{i=1}^{n_e}{\lvert\langle \boldsymbol{\xi}^e_i, \boldsymbol{\xi}^{e'}_{i'}\rangle\rvert }} \\
	&= \abs{\Xi^{0, e'}_{j,r, i'}} + (t+1)\cdot \frac{\eta}{m} \sum_{e\in \mathcal{E}_{tr}}{\frac{1}{n_e}\sum_{i=1}^{n_e}{\lvert\langle \boldsymbol{\xi}^e_i, \boldsymbol{\xi}^{e'}_{i'}\rangle\rvert }}\\
	&= \lvert\langle\boldsymbol{\xi}_r, \boldsymbol{\xi}^{e'}_{i'}\rangle\rvert + (t+1)\cdot\left( \frac{\eta}{m n_{e'}} \lVert \boldsymbol{\xi}^{e'}_{i'}\rVert^2_2 + \sum_{(i,e)\neq (i', e')}{\frac{\eta}{mn_e}\lvert\langle \boldsymbol{\xi}^e_i, \boldsymbol{\xi}^{e'}_{i'}\rangle\rvert}\right) \\
	&\leq 2\sigma_p\sigma_0 \sqrt{(d-2)\log{\frac{16nm}{\rho}}} \\&\ \ + \frac{T\eta \sigma_p^2}{m \underline{n}}\left((d-2) + 16 \sqrt{2(d-2)\log{\frac{8n}{\rho}}} + 2 n \sqrt{(d-2)\log{\frac{8n^2}{\rho}}} \right).
\end{aligned}
\]
Then, we can bound $\abs{\mathbb{Q}^e_i}$ as
\[
\begin{aligned}
	\abs{\mathbb{Q}^e_i} &\leq 2\cdot \left\lvert\frac{1}{m}\sum_{r=1}^m {\xi_r}\right\rvert + 2\cdot\left\lvert\frac{1}{m}\sum_{r=1}^m { \xi_r}\right\rvert +  \frac{2}{m}\sum_{r=1}^m {\abs{\Xi^{t,e}_{j,r,i}}} \\
	&\leq 4\sigma_0\sqrt{\frac{2}{m}\log{\frac{32m}{\rho}}} + 4\sigma_p\sigma_0 \sqrt{(d-2)\log{\frac{16nm}{\rho}}} \\&\ \ + \frac{2T\eta \sigma_p^2}{m \underline{n}}\left((d-2) + 16 \sqrt{2(d-2)\log{\frac{8n}{\rho}}} + 2 n \sqrt{(d-2)\log{\frac{8n^2}{\rho}}} \right).
\end{aligned}
\]
Thus, with sufficient small $\sigma_0, \sigma_p$, i.e.,
\[
\begin{aligned}
\sigma_0^2 &= O\left(\underline{n}^{-2} \log^{-1}{(m/\rho)}\right), \\
\sigma_p^2 &= O\left(\min\left\{d^{-1/2}\log^{-1/2}{(nm/\rho)}, T^{-1}\eta^{-1} m \left(d+ n\sqrt{d\log(n^2/\rho)}\right)^{-1} \right\}\right),
\end{aligned}
\]
we ensured that $\abs{\mathbb{Q}^e_i} = O(\underline{n}^{-1})$.

\subsubsection{Proof of Lemma \ref{lem:erm_concent_C}}
\label{proof_erm_concent_C}

For $e\in \mathcal{E}_{tr}$, using Hoeffding's inequality, it holds that
\[
 	\textup{Pr}\left\{\abs{\frac{1}{n_e} \sum_{i=1}^{n_e} {\mathbf{1}_{\{\Rad(\alpha)_i = +1, \Rad(\beta_e)_i = +1\}} - (1-\alpha)(1 - \beta_e)}} \geq a\right\} \leq 2\exp{\{-2a^2 n_e\}}.
\] 
Considering two environments, using a union bound, we can conclude that
\[
	\textup{Pr}\left\{\abs{\overline{C}_{+1+1} - (1-\alpha)(2-\beta_1-\beta_2)} \leq a\right\} \geq 1- 4\exp{\left\{-\frac{a^2 \underline{n}}{2}\right\}}.
\] 
Thus, for $\rho>0$, with probability at least $1-\frac{\rho}{4}$, we can conclude that 
\[
\abs{\overline{C}_{+1+1} - (1-\alpha)(2-\beta_1-\beta_2)} \leq \sqrt{\frac{2\log{(16/\rho)}}{\underline{n}}}.
\]
Using the above arguments for other constants $\overline{C}_{+1-1}$, $\overline{C}_{-1+1}$ and $\overline{C}_{-1-1}$, and applying a union bound, we obtain the desired results.

\subsubsection{ERM Feature Learning with Non-Linear Activation Functions}
\label{sec:erm_non_linear}

It was numerically observed that in the early stage of (stochastic) GD training, the learning dynamics of neural networks can be mimicked by training a simple linear model \citep{kalimeris2019sgd}. \citet{hu2020surprising} rigorously proved this phenomenon for training two-layer neural network with $\ell_2$ loss function in the Neural Tangent Kernel (NTK) region. We briefly summarize their results here: For a two-layer fully-connected neural network (with fixed second layer $\{v_r\}$):
\begin{equation}\label{erm_non_linear_fc}
    f_{FC}(\rmW, \rvx) \triangleq \frac{1}{\sqrt{m}} \sum_{r = 1}^{m} {v_r \psi\left(\rvw_r^\top \rvx / \sqrt{d}\right)},
\end{equation}
considering the $\ell_2$ training loss
$
\ell_2(\hat{y}, y) \triangleq \frac{1}{2} (\hat{y} - y)^2
$ and the ERM objective $L_{\textup{ERM}}(\rmW) = \frac{1}{n} \sum_{i=1}^n{\ell_2\big(f_{FC}(\rmW, \rvx_i), y_i\big)}$, when using GD: $\rmW^{t+1} = \rmW^t - \eta \cdot \nabla L_{\textup{ERM}}(\rmW^t)$ to minimize the ERM objective, the following holds.

\begin{theorem}[Theorem 3.2 of \citep{hu2020surprising}]
    Let $\alpha_{nl} \in (0, \frac{1}{4})$ be a fixed constant, and $\psi(\cdot)$ be a smooth (with bounded first and second derivatives) or piece-wise linear activation function. Suppose that $n$ and $m$ satisfy $n = \Omega(d^{1+\alpha_{nl}})$ and $m = \Omega(d^{1+\alpha_{nl}})$. Suppose that $\eta \ll d$. Then there exists a universal constant $c > 0$ such that with high probability, for all $t = O(\frac{d}{\eta}\log{d})$ simultaneously, the learned neural network $f^t_{FC}$ and the linear model $f^t_{lin}$ (defined below) at iteration $t$ are close on average on the training data:
    \[
    \frac{1}{n} \sum_{i=1}^n {\big(f^t_{FC}(\rvx_i) - f^t_{lin}(\rvx_i)\big)^2} = O(d^{-\Omega(\alpha_{nl})}).
    \]
\end{theorem}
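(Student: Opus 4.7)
The plan is to realize that, at initialization, the neural tangent kernel (NTK) of the network in \eqref{erm_non_linear_fc} decomposes into a rank-plus-linear ``simple'' kernel plus a small residual, and that gradient descent on $\ell_2$ loss is governed by a linear recursion in the kernel, so that the nonlinear network is sandwiched between its NTK linearization and a genuinely linear-in-input predictor $f^t_{lin}$. First I would form the empirical NTK $\Theta^0 \in \R^{n\times n}$ and, via a Hermite expansion of $\psi'$ against the near-Gaussian pre-activations together with matrix concentration in the regime $m = \Omega(d^{1+\alpha_{nl}})$, establish $\Theta^0 = \Theta^{lin} + E$ where $\Theta^{lin} = \zeta_0\,\mathbf{1}\mathbf{1}^\top + (\zeta_1/d)\,\mathbf{X}\mathbf{X}^\top$ for explicit constants $\zeta_0,\zeta_1$ depending only on $\psi$, and $\|E\|_{op} = \tilde O(d^{-\alpha_{nl}/2})$ after using $n = \Omega(d^{1+\alpha_{nl}})$ and the near-Gaussianity of the rows of $\mathbf X$ (a Hanson--Wright-type bound controls the orthogonality of the higher Hermite components to the targets).

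The linear model $f^t_{lin}$ is then defined as the predictor produced by kernel GD on $\ell_2$ loss with kernel $\Theta^{lin}$; since $\Theta^{lin}$ has rank $\le d+1$, this predictor is automatically affine in $\rvx$, i.e.\ $f^t_{lin}(\rvx) = b^t + \langle\boldsymbol\beta^t,\rvx\rangle/\sqrt d$, matching the statement of the theorem. Next I would couple $f^t_{FC}$ to its NTK linearization $f^t_{NTK}$ (obtained by freezing features at initialization). In the regime $t = O((d/\eta)\log d)$ and $\eta \ll d$, a Gronwall argument shows that the weights stay in a small Frobenius ball, the NTK along the trajectory stays within $\tilde O(m^{-1/2} + d^{-\alpha_{nl}/2})$ of $\Theta^0$, and $n^{-1}\|f^t_{FC}(\mathbf X) - f^t_{NTK}(\mathbf X)\|_2^2 = d^{-\Omega(\alpha_{nl})}$. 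For piecewise-linear $\psi$ the usual second-derivative Taylor step is replaced by a union bound on the pre-activations that cross zero during training.

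Finally I would compare the two kernel GD flows driven by $\Theta^0$ and by $\Theta^{lin}$. Writing $u^t, v^t \in \R^n$ for the training outputs of $f^t_{NTK}$ and $f^t_{lin}$, both satisfy a recursion of the form $z^{t+1} = (I - \tfrac{\eta}{n} K) z^t + \tfrac{\eta}{n} K y$ with $K = \Theta^0$ and $K = \Theta^{lin}$ respectively, so a Duhamel expansion yields $\|u^t - v^t\|_2 \lesssim \eta t\,\|E\|_{op}\,\|y\|_2$; plugging $\eta t = O(d\log d)$, $\|E\|_{op} = \tilde O(d^{-\alpha_{nl}/2})$, and $\|y\|_2^2 = O(n)$ and normalizing by $n$ gives $d^{-\Omega(\alpha_{nl})}$, which the triangle inequality combines with the NN-to-NTK coupling to close the proof.

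The hard part will be the tight spectral bound on $E = \Theta^0 - \Theta^{lin}$. The nonlinear part of the population NTK is $\Theta(1)$ in operator norm in general, so the bound cannot come from concentration alone; one must show that its action on the directions relevant to the early-time flow (spanned by $y$ and the leading eigenvectors of $\Theta^{lin}$) is small, which forces use of the near-Gaussianity of the data via Hanson--Wright and decoupling estimates to prove that higher-order Hermite components are nearly orthogonal to these directions. Aligning this orthogonality with the Duhamel growth factor $\eta t = O(d\log d)$ is precisely what pins down the parameter ranges $n,m = \Omega(d^{1+\alpha_{nl}})$ and $\eta \ll d$; any weaker version of the spectral bound would push the final error above $o(1)$.
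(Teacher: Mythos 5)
The paper itself does not prove this result; it imports Theorem~3.2 from \citep{hu2020surprising} as an external black box, so the only reference argument is the one in that paper. Your high-level strategy---decompose the empirical NTK as $\Theta^0 = \Theta^{lin} + E$, couple the network output to its NTK linearization, and compare the two kernel-GD recursions by a Duhamel expansion---does match the structure of that proof. But your key spectral lemma is false, and the patch you propose for it would not close the gap.

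The claim $\|E\|_{op} = \tilde O(d^{-\alpha_{nl}/2})$ cannot hold. Look at the diagonal: $\Theta^0_{ii} \approx \mathbb{E}_w[\psi'(w^\top\rvx_i/\sqrt{d})^2] = \sum_{k\ge 0}a_k^2$ (the Hermite squared norm of $\psi'$), while $\Theta^{lin}_{ii} \approx \zeta^2 = a_0^2$. So $E_{ii} \approx \sum_{k\ge 1}a_k^2 = \Theta(1)$ for any genuinely nonlinear $\psi$, hence $\|E\|_{op} \ge \Theta(1)$. What is actually true is that $E \approx \gamma I + E'$ with $\gamma = \Theta(1)$ and $\|E'\|_{op}$ small. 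You flag this tension yourself in the last paragraph, but the repair you suggest---bounding $E$'s action on the directions spanned by $y$ and the top eigenvectors of $\Theta^{lin}$---cannot succeed either: the $\gamma I$ piece acts with magnitude $\gamma$ on \emph{every} direction, including on $y$.

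The observation that actually carries the proof is that $\|E\|_{op}=O(1)$ is already sufficient, so no refined spectral bound is needed at this step; your Duhamel formula is also missing a $1/n$. Unrolling
\begin{equation*}
u^{t+1} - v^{t+1} = \Bigl(I - \tfrac{\eta}{n}\Theta^0\Bigr)(u^t - v^t) - \tfrac{\eta}{n}\,E\,(v^t - y),\qquad u^0 = v^0,
\end{equation*}
together with the contraction of $I - \tfrac{\eta}{n}\Theta^0$ (valid for $\eta \ll d$ since $\|\Theta^0\|_{op}=O(n/d)$) and $\|v^s - y\|_2 \le \|y\|_2 = O(\sqrt{n})$, gives
\begin{equation*}
\|u^t - v^t\|_2 \;\le\; t\cdot\frac{\eta\,\|E\|_{op}}{n}\cdot O(\sqrt{n}) \;=\; O\!\left(\frac{\eta t}{\sqrt{n}}\right).
\end{equation*}
With $\eta t = O(d\log d)$ and $n = \Omega(d^{1+\alpha_{nl}})$ this yields $\frac{1}{n}\|u^t - v^t\|_2^2 = O\bigl((\eta t/n)^2\bigr) = O(d^{-2\alpha_{nl}}\log^2 d) = d^{-\Omega(\alpha_{nl})}$. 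In short, it is the combination of $n=\Omega(d^{1+\alpha_{nl}})$ with the bounded horizon $\eta t=O(d\log d)$, not an $o(1)$ operator-norm bound on $E$, that controls the accumulated discrepancy between the two kernel flows. Once this lemma is corrected, the rest of your outline (the NTK-to-NN coupling via small weight movement, and the Gronwall/Taylor vs.\ activation-pattern arguments for smooth vs.\ piecewise-linear $\psi$) is consistent with the actual proof.
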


The linear model $f_{lin}(\boldsymbol{\beta}, \rvx) = \boldsymbol{\beta}^\top \boldsymbol{R}(\rvx) $ is a linear function of the transformed data $\boldsymbol{R}(\rvx) = \frac{1}{\sqrt{d}}\begin{bmatrix}
    \zeta \rvx \\ \nu
\end{bmatrix}$, where $\zeta$ and $\nu$ are constants related to $\psi'$ and the dataset distribution (see (5) in \citep{hu2020surprising} for formal definitions). 

We show that we can relate our data model to the dataset setup in \citep{hu2020surprising}
, and thus by analyzing the feature learning terms for the  linear model $f_{lin}(\boldsymbol{\beta}, \rvx)$ similar to the analysis\footnote{Note that when $\psi(x)=x$, our CNN model can be viewed as a linear model with re-parameterized weight matrices. Thus, the discussion in Appendix \ref{sec:proof_erm_feat} can be viewed as studying the feature learning terms for a linear model with logistic loss function.} in Appendix \ref{sec:proof_erm_feat}, we obtain similar results as in Theorem \ref{thm:erm_learn_feat_appdx} in the early stage of GD training, but with an error of $O(d^{-\Omega(\alpha_{nl})})$. 

Recall that our CNN model is $f (\rmW, \rvx)\!=\!F_{+1}(\rmW_{+1}, \rvx) - F_{-1}(\rmW_{-1}, \rvx)$, where $F_{+1}(\rmW_{+1}, \rvx)$ and $F_{-1}(\rmW_{-1}, \rvx)$ are defined as follows:
\[
\begin{aligned}
    F_j(\rmW_j, \rvx ) = \frac{1}{m}\sum_{r=1}^m \left[  {\psi}(\rvw_{j,r}^\top \rvx_1) +  {\psi}(\rvw_{j,r}^\top \rvx_2) \right], j\in\{\pm 1\}.
\end{aligned}
\]
We can cast this CNN model into an instance of the two-layer fully connected neural network defined at (\ref{erm_non_linear_fc}) by specifying the values of $\{v_r = \pm \frac{1}{\sqrt{m}}\}$ and transforming the dataset as $\left\{\sqrt{d}\begin{bmatrix}
    y \cdot \Rad(\alpha) \cdot  \rvv_1 + y \cdot \Rad(\beta) \cdot  \rvv_2   \\0
\end{bmatrix},
    \sqrt{d}\boldsymbol{\xi}\right\}$. Then by tuning the norms of $\rvv_1, \rvv_2$ and $\boldsymbol{\xi}$, we obtain a dataset that satisfies the input assumptions in \citep{hu2020surprising}. Note that this cast drops the shared variable of our CNN model and thus might lead to a slightly different training dynamic. To fix such gap, we can leverage Proposition~6.4.1 in \citep{hu2021understanding} for the early stage behavior of training a CNN model.

Based on the above ideas, to formalize the convergence results of the feature learning terms in the non-linear case, it remains to re-derive the analysis in Appendix \ref{sec:proof_erm_feat}
 based on $\ell_2$ loss function, which follows a similar line of proofs and has a simpler dynamic.

\newpage
\subsection{Proof for Theorem~\ref{thm:irmv1_not_learn}} \label{sec:proof_irmv1_not_learn}

\begin{theorem} [Restatement of Theorem \ref{thm:irmv1_not_learn}]
	\label{thm:irmv1_appdx}
	Consider training a CNN model with the same data as in Theorem~\ref{thm:erm_learn_feat}, define 
 \begin{align*}
     \rvc(t) \triangleq \left[ C^1_\irml(\rmW,t),  C^2_\irml(\rmW,t), \cdots,  C^{|\envtrain|}_\irml(\rmW,t) \right],
 \end{align*} and $\lambda_0 =  \lambda_{\min}(\mathbf{H}^\infty)$, where we define  
 \begin{align*}
 \mathbf{H}^\infty_{e,e'} \triangleq \frac{  1}{2m n_e n_{e'}} \sum_{i=1}^{n_e} \psi'(\langle \rvw_{j,r}(0) ,  \mathbf{x}^{e}_{1,i} \rangle ) \mathbf{x}^{e\top}_{1,i} \sum_{i'=1}^{n_{e'}} \psi'(\langle \rvw_{j,r}(0) ,  \rvx_{1,i'}^{e'} \rangle )   \rvx_{1,i'}^{e'}.
  \end{align*}
 Suppose that {activation function is smooth, $\psi'(0) \le \beta$, $|\psi'(x)-\psi'(x')|<\beta|x - x'|$ and Lipschitz $|\psi(0)| < L $, $|\psi(x)-\psi(x')|< L |x - x'|$}. Assume that dimension $d 
   =  \Omega(\log(m/\delta))$, network width $m = \Omega(1/\delta)$, regularization factor $\lambda \ge 1/(\sigma_0 \sqrt{|\mathcal{E}_{tr}|}^3 )$, noise variance $\sigma_p = O(d^{-2})$, weight initial scale $\sigma_0 = O( \frac{ |\mathcal{E}_{tr} |^{7/2} \beta^3 L }{ d^{1/2}m^2\lambda^2_0 \log(1/\epsilon)} )$, then with probability at least $1-\delta$, after training time $T = \Omega \left( \frac{ \log(1/\epsilon)}{ \eta  \lambda \lambda_0 } \right) $, we have:
  \begin{align*}
		  \| \rvc(T )\|_2 \le  \epsilon, \quad 
		 { \gamma^{inv}_{j,r}(T) = o(1), \quad \gamma^{spu}_{j,r}(T) = o(1)}.
   \end{align*}
\end{theorem}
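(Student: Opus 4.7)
The plan is to run an NTK-style linearization analysis centered at initialization, but applied to the vector $\rvc(t)$ of per-environment IRMv1 scalars rather than to the model outputs themselves. First I would differentiate $L_{\irml}$ and observe that, because $\lambda$ is taken large, the dominant contribution to $\nabla_{\rvw_{j,r}} L_{\irml}$ is the penalty part $2\lambda \sum_e (C^e_\irml/n_e) \sum_i [\ell''_i \hat y^e_i + \ell'_i y^e_i] \cdot (j/m)\psi'(\langle \rvw_{j,r}, \rvx^e_{i,1}\rangle)(\rvx^e_{i,1}+\rvx^e_{i,2})$, while the ERM part contributes $O(1)$. Plugging this into the chain-rule expansion of $\dot C^e_\irml$ yields, to leading order, a linear dynamics
\begin{equation*}
\dot{\rvc}(t) \;\approx\; -\,\eta\,\lambda\,\mathbf{H}(t)\,\rvc(t),
\end{equation*}
where $\mathbf{H}(t)$ is the time-dependent Gram matrix obtained by replacing $\rvw_{j,r}(0)$ with $\rvw_{j,r}(t)$ in the definition of $\mathbf{H}^\infty$. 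Summing over the filter index absorbs the sign $j$ and the $1/m$, and the noise-patch term $\rvx^e_{i,2}$ is suppressed by the requirement $\sigma_p = O(d^{-2})$.

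Next I would carry out the standard two concentrations and a perturbation bound. A Gaussian concentration argument over the initialization $\rvw_{j,r}(0)\sim\mathcal{N}(0,\sigma_0^2 \mathbf{I}_d)$ combined with $m = \Omega(1/\delta)$ gives $\|\mathbf{H}(0)-\mathbf{H}^\infty\|_2 \le \lambda_0/4$ and hence $\lambda_{\min}(\mathbf{H}(0))\ge 3\lambda_0/4$ with probability $\ge 1-\delta$. The $\beta$-Lipschitzness of $\psi'$ then yields $\|\mathbf{H}(t)-\mathbf{H}(0)\|_2 \lesssim (\beta/m)\sum_{j,r}\|\rvw_{j,r}(t)-\rvw_{j,r}(0)\|_2$, so as long as the per-neuron displacement stays $o(\lambda_0/\beta)$ we have $\lambda_{\min}(\mathbf{H}(t))\ge \lambda_0/2$ throughout training. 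Integrating the linear ODE then gives the exponential contraction $\|\rvc(t)\|_2 \le \|\rvc(0)\|_2 \exp(-\eta\lambda\lambda_0 t/2)$, and since $\hat y^e_i(0)=O(\sigma_0 L \sqrt{|\envtrain|})$ implies $\|\rvc(0)\|_2 = O(\sigma_0 L \sqrt{|\envtrain|})$, the stopping time $T=\Omega(\log(1/\epsilon)/(\eta\lambda\lambda_0))$ suffices for $\|\rvc(T)\|_2\le\epsilon$.

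The last step is the bootstrap that closes the argument and simultaneously certifies that no features are learned. The weight update at time $t$ has norm at most $\eta\lambda\|\rvc(t)\|_2 \cdot M$ where $M$ collects the $(\psi',\psi'',\hat y, y)$ Lipschitz factors and the signal-patch norms; telescoping gives
\begin{equation*}
\|\rvw_{j,r}(t)-\rvw_{j,r}(0)\|_2 \;\le\; \eta\lambda M \int_0^t \|\rvc(s)\|_2 \, ds \;\le\; \tfrac{2M}{\lambda_0}\,\|\rvc(0)\|_2 \;=\; O\!\Big(\tfrac{\sigma_0 M \sqrt{|\envtrain|}}{\lambda_0}\Big).
\end{equation*}
The chosen $\sigma_0 = O\big(|\envtrain|^{7/2}\beta^3 L/(d^{1/2}m^2 \lambda_0^2 \log(1/\epsilon))\big)$ is exactly calibrated so that this quantity is $o(\lambda_0/\beta)$, which closes the perturbation bound, and is also $o(1)$ outright. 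Projecting onto $\rvv_1,\rvv_2$ in the signal-noise decomposition then gives $|\gamma^{inv}_{j,r}(T)|,|\gamma^{spu}_{j,r}(T)|\le \|\rvw_{j,r}(T)-\rvw_{j,r}(0)\|_2 = o(1)$, as claimed. The requirement $\lambda\ge 1/(\sigma_0|\envtrain|^{3/2})$ ensures the suppressed ERM contribution to the weight drift is dominated by the penalty contribution and can be folded into the same bound.

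The main obstacle will be making the linearization $\dot{\rvc}\approx -\eta\lambda\mathbf{H}(t)\rvc$ rigorous: the true dynamics contains (i) the neglected ERM gradient, (ii) higher-order $\ell''',\psi''$ terms from differentiating $C^e_\irml$ along a finite-step GD trajectory, and (iii) cross-terms from the noise patches $\boldsymbol{\xi}$. Controlling (i)--(iii) requires a careful induction on $t$: at each step one invokes the bootstrap hypothesis that $\|\rvw_{j,r}(t)-\rvw_{j,r}(0)\|_2$ is small to linearize $\psi,\psi'$ and to show that noise-direction contributions remain $O(\sigma_p\sqrt{d}) = O(d^{-3/2})$, then uses the resulting linear ODE bound to re-verify the hypothesis at step $t+1$. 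The scaling of $\sigma_0,\sigma_p,d,m,\lambda$ in the statement is chosen so that every error term in this induction is absorbed.
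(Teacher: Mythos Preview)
Your proposal is correct and follows essentially the same route as the paper: derive the dynamics $\dot{\rvc}(t) = 2\lambda\,\mathbf{H}(t)\,\rvc(t) + \mathbf{g}(t)$ by differentiating $C^e_\irml$ through the gradient flow, show $\mathbf{H}(t)$ stays close to $\mathbf{H}^\infty$ so that $\|\rvc(t)\|_2$ contracts exponentially, and then bound the feature-learning coefficients by the accumulated $\|\rvc\|$-mass. The paper's execution differs only in bookkeeping: it decomposes $\mathbf{H}_{e,e'}(t)$ into eight explicit terms (four from the signal patch, four from the noise patch) and bounds each via the smoothness/Lipschitz constants and $R \triangleq \|\rvw_{j,r}(t)\|_2$, treats the ERM gradient as the additive perturbation $\mathbf{g}(t)$ with $|\mathbf{g}_e(t)| = O(\beta^2 L R^2/m)$, and bounds $\gamma^{inv},\gamma^{spu}$ by the cruder $T\cdot\|\rvc(0)\|_2$ rather than your integrated $\int_0^T\|\rvc(s)\|\,ds$. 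One small clarification: since $\mathbf{H}^\infty$ in the statement is already defined at the \emph{realized} random initialization $\rvw_{j,r}(0)$, the step you describe as ``Gaussian concentration for $\|\mathbf{H}(0)-\mathbf{H}^\infty\|$'' is essentially vacuous here---the paper bounds $|\mathbf{H}_{e,e'}(t)-\mathbf{H}^\infty_{e,e'}|$ directly, with the discrepancy at $t=0$ coming only from $|\ell'_i(0)+\tfrac12|=\Theta(LR)$ rather than from averaging over initializations.
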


Before proving Theorem \ref{thm:irmv1_appdx}, we first provide some useful lemmas as follows:

\begin{lemma} [\citep{understand_benign}] \label{lemma:data_innerproducts}
Suppose that $\delta > 0$ and $d = \Omega( \log(4n / \delta) ) $. Then with probability at least $1 - \delta$, 
\begin{align*}
    &\sigma_p^2 d / 2\leq \| \boldsymbol{\xi}_i \|_2^2 \leq 3\sigma_p^2 d / 2 
\end{align*}
for all $i,i'\in [n]$.
\end{lemma}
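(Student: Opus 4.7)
The plan is to recognize the statement as a standard two-sided Gaussian-norm concentration bound and reduce it to chi-squared tail inequalities plus a union bound. By Definition~\ref{def:two_bit}, each $\boldsymbol{\xi}_i$ is drawn from $\mathcal{N}(\mathbf{0},\,\sigma_p^2(\mathbf{I}_d - \rvv_1\rvv_1^\top - \rvv_2\rvv_2^\top))$. The covariance is $\sigma_p^2$ times the orthogonal projector onto the $(d-2)$-dimensional subspace perpendicular to $\rvv_1,\rvv_2$, so after rotating into an orthonormal basis aligned with this subspace, $\|\boldsymbol{\xi}_i\|_2^2$ has the same distribution as $\sigma_p^2 \sum_{k=1}^{d-2} Z_k^2$ with $Z_k \stackrel{\text{iid}}{\sim} \mathcal{N}(0,1)$, i.e., $\sigma_p^2$ times a chi-squared random variable with $d-2$ degrees of freedom.

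The key step is to apply standard chi-squared tail bounds (Laurent--Massart): for $X \sim \chi^2_{d-2}$ and any $t>0$,
\begin{equation*}
\Pr\bigl(X \geq (d-2) + 2\sqrt{(d-2)t} + 2t\bigr) \leq e^{-t},\qquad \Pr\bigl(X \leq (d-2) - 2\sqrt{(d-2)t}\bigr) \leq e^{-t}.
\end{equation*}
Choosing $t = c\,(d-2)$ for a sufficiently small constant $c>0$, both the upper deviation $2\sqrt{(d-2)t}+2t$ and the lower deviation $2\sqrt{(d-2)t}$ are bounded above by, say, $(d-2)/3$ (which also absorbs the gap between $d-2$ and $d$ for $d$ not too small). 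This yields $\sigma_p^2 d/2 \leq \|\boldsymbol{\xi}_i\|_2^2 \leq 3\sigma_p^2 d/2$ for a single index $i$ with probability at least $1 - 2e^{-c(d-2)}$.

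To finish, I would take a union bound over the $n$ samples: the failure probability is at most $2n\,e^{-c(d-2)}$, and requiring this to be at most $\delta$ gives $d \geq c^{-1}\log(2n/\delta) + 2$, which is subsumed by the hypothesis $d = \Omega(\log(4n/\delta))$ after absorbing constants. I do not expect any genuine obstacle here; the only care needed is to handle the rank-deficient covariance cleanly (the $-\rvv_1\rvv_1^\top-\rvv_2\rvv_2^\top$ term), which just shifts the effective dimension from $d$ to $d-2$ and contributes an $O(\sigma_p^2)$ correction negligible compared to the $\sigma_p^2 d/2$ slack. The statement also mentions a dummy index $i'\in[n]$, but since the claim only involves $\|\boldsymbol{\xi}_i\|_2^2$, this is immaterial and the union bound over $i$ alone suffices.
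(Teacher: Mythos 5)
Your proof is correct and follows essentially the same route the paper takes (compare Lemma~\ref{lem:weight_initial} and the bounds at the start of the proof of Lemma~\ref{lem:erm_concent_Q}): reduce $\|\boldsymbol{\xi}_i\|_2^2$ to a Gaussian-norm (chi-squared) concentration bound, then union bound over $i\in[n]$. The only cosmetic difference is that you invoke the Laurent--Massart chi-squared tail inequality where the paper applies Bernstein's inequality for sub-exponential random variables; these are interchangeable here and yield the same $d=\Omega(\log(n/\delta))$ requirement. You also correctly note the two details the paper glosses over: the effective dimension is $d-2$ because the covariance projects out $\mathrm{span}\{\rvv_1,\rvv_2\}$ (absorbed by the factor-of-$2$ slack once $d$ is large enough), and the stray index $i'$ in the lemma statement is vestigial and plays no role.
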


\begin{lemma} [\citep{understand_benign}] 
 \label{lemma:initialization_norms} Suppose that $d \geq \Omega(\log(mn/\delta))$, $ m = \Omega(\log(1 / \delta))$. Then with probability at least $1 - \delta$, 
\begin{align*}
    &|\langle \rvw_{j,r}^{(0)}, \mathbf{v}_1 \rangle | \leq \sqrt{2 \log(8m/\delta)} \cdot \sigma_0 \| \mathbf{v}_1 \|_2,\\
    &|\langle \rvw_{j,r}^{(0)}, \mathbf{v}_2 \rangle | \leq \sqrt{2 \log(8m/\delta)} \cdot \sigma_0 \| \mathbf{v}_2 \|_2,\\
    &| \langle \rvw_{j,r}^{(0)}, \boldsymbol{\xi}_i \rangle | \leq 2\sqrt{ \log(8mn/\delta)}\cdot \sigma_0 \sigma_p \sqrt{d} 
\end{align*}
for all $r\in [m]$,  $j\in \{\pm 1\}$ and $i\in [n]$.  
\end{lemma}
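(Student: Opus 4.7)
The plan is to bound each of the three inner products as a mean-zero Gaussian via the standard tail inequality $\mathbb{P}(|Z|\geq t)\leq 2\exp(-t^2/(2\sigma^2))$ and then union-bound over the indices $(j,r)$ or $(j,r,i)$. The only nontrivial wrinkle occurs in the third bound, where the noise vector $\boldsymbol{\xi}_i$ is itself random; there I will first condition on $\boldsymbol{\xi}_i$ and invoke Lemma~\ref{lemma:data_innerproducts} to control the conditional variance, then integrate out the conditioning with a small budget of failure probability. I will allocate the total budget $\delta$ across three sub-events: $\delta/4$ each for the two feature-direction claims, $\delta/4$ for invoking Lemma~\ref{lemma:data_innerproducts}, and $\delta/4$ for the Gaussian tail in the noise case.

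For the first two bounds, the weights $\rvw_{j,r}^{(0)}$ are drawn i.i.d.\ from $\mathcal{N}(\mathbf{0},\sigma_0^2\mathbf{I}_d)$ independently of the deterministic feature directions $\rvv_1,\rvv_2$, so $\langle\rvw_{j,r}^{(0)},\rvv_1\rangle$ is exactly $\mathcal{N}(0,\sigma_0^2\|\rvv_1\|_2^2)$. Choosing $t=\sigma_0\|\rvv_1\|_2\sqrt{2\log(8m/\delta)}$ gives $t^2/(2\sigma^2)=\log(8m/\delta)$, so the per-index failure probability is at most $\delta/(4m)$; union-bounding over the $2m$ pairs $(j,r)\in\{\pm 1\}\times[m]$ controls the first inequality on an event of probability at least $1-\delta/4$. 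The argument for $\rvv_2$ is identical modulo replacing $\rvv_1$ by $\rvv_2$, and costs another $\delta/4$ from the budget.

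For the third bound, I condition on the $\sigma$-algebra generated by $\{\boldsymbol{\xi}_i\}_{i=1}^n$. Conditionally, $\langle\rvw_{j,r}^{(0)},\boldsymbol{\xi}_i\rangle$ is Gaussian with mean $0$ and variance $\sigma_0^2\|\boldsymbol{\xi}_i\|_2^2$. Applying Lemma~\ref{lemma:data_innerproducts} with failure budget $\delta/4$ gives $\|\boldsymbol{\xi}_i\|_2^2\leq 2\sigma_p^2 d$ simultaneously for every $i\in[n]$ (using $3/2<2$ to absorb the constant), under the hypothesis $d=\Omega(\log(mn/\delta))$. Plugging this into the Gaussian tail inequality with $t=2\sigma_0\sigma_p\sqrt{d\log(8mn/\delta)}$ gives $t^2/(2\sigma^2)\geq\log(8mn/\delta)$, hence a conditional per-triple failure probability of at most $\delta/(4mn)$; a union bound over the $2mn$ triples $(j,r,i)$ and then marginalizing out the conditioning yields the third inequality on an event of probability at least $1-\delta/4$ on top of the norm-control event. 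Combining all four sub-events through one final union bound delivers the three inequalities simultaneously with probability at least $1-\delta$.

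No serious obstacle arises: the whole proof is an exercise in Gaussian concentration. The only points that demand care are (i) allocating the failure budget coherently across the three families of events and the norm-control event of Lemma~\ref{lemma:data_innerproducts}, and (ii) noting that the lemma's hypotheses $d=\Omega(\log(mn/\delta))$ and $m=\Omega(\log(1/\delta))$ are precisely what is needed to ensure the logarithmic factors $\log(8m/\delta)$ and $\log(8mn/\delta)$ are positive and that Lemma~\ref{lemma:data_innerproducts} applies with the required failure probability.
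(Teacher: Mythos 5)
Your approach is the standard one for this lemma (which the paper cites from the Cao--Chen--Gu work without re-proving): each of the first two inner products is exactly a mean-zero Gaussian of variance $\sigma_0^2\|\rvv\|_2^2$, so the Gaussian tail plus a union bound over $(j,r)$ handles them; for the noise term you correctly condition on $\boldsymbol{\xi}_i$, use Lemma~\ref{lemma:data_innerproducts} to bound the conditional variance by $\sigma_0^2 \cdot 2\sigma_p^2 d$, apply the Gaussian tail again, and union-bound over $(j,r,i)$. This is precisely how the cited lemma is established.

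One small bookkeeping slip: with $t=\sigma_0\|\rvv_1\|_2\sqrt{2\log(8m/\delta)}$ the per-index failure probability is $2\exp\{-\log(8m/\delta)\}=\delta/(4m)$, and a union over the $2m$ pairs $(j,r)$ gives a failure probability of $\delta/2$, not $\delta/4$ as you state; the same factor-of-two slip appears for the noise term (union over $2mn$ triples gives $\delta/2$). As written, your four sub-budgets therefore sum to more than $\delta$. This is harmless --- you can absorb it by enlarging the constant inside the logarithms (e.g.\ using $\log(16m/\delta)$ and $\log(16mn/\delta)$, which only changes implicit constants and still satisfies the stated bounds up to the $\Omega(\cdot)$ slack in the hypotheses), or by allocating $\delta/6$ to each family --- but the arithmetic should be repaired so the union bound actually closes at $1-\delta$. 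You might also note that the hypothesis $m=\Omega(\log(1/\delta))$ is not used in this argument; it is inherited from the version of the lemma in \citep{understand_benign} that additionally bounds $\|\rvw_{j,r}^{(0)}\|_2^2$ (here split off into Lemma~\ref{lem:weight_initial}).
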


\begin{lemma}\label{lem:weight_initial}
Suppose that $\delta > 0$ and $d = \Omega( \log(4m / \delta) ) $. Then with probability at least $1 - \delta$, for all $r \in [m]$ and $j \in \{-1, 1\}$, we have
\begin{align*}
    &\sigma_0^2 d / 2\leq \| \mathbf{w}_{j,r}(0) \|_2^2 \leq 3\sigma_0^2 d / 2.
\end{align*}
\end{lemma}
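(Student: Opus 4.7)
The plan is to recognize this as a routine chi-square concentration statement followed by a union bound. Since each entry of the initial weight matrices is drawn i.i.d.\ from $\mathcal{N}(0,\sigma_0^2)$, for every $j \in \{-1,+1\}$ and $r \in [m]$ the random vector $\mathbf{w}_{j,r}(0) \in \mathbb{R}^d$ is distributed as $\mathcal{N}(\mathbf{0},\sigma_0^2 \mathbf{I}_d)$. Consequently, $Y_{j,r} \triangleq \|\mathbf{w}_{j,r}(0)\|_2^2 / \sigma_0^2$ is a chi-square random variable with $d$ degrees of freedom, and the target event $\sigma_0^2 d/2 \leq \|\mathbf{w}_{j,r}(0)\|_2^2 \leq 3\sigma_0^2 d/2$ is equivalent to $|Y_{j,r} - d| \leq d/2$.

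First I would apply a standard chi-square tail bound to a single $Y_{j,r}$. Using the Laurent--Massart inequality, for any $x > 0$,
\begin{equation*}
\mathbb{P}(Y_{j,r} - d \geq 2\sqrt{dx} + 2x) \leq e^{-x}, \qquad \mathbb{P}(d - Y_{j,r} \geq 2\sqrt{dx}) \leq e^{-x}.
\end{equation*}
Choosing $x = c_0 d$ for a sufficiently small absolute constant $c_0 > 0$ (so that $2\sqrt{c_0} + 2c_0 \leq 1/2$), both tails reduce to the single inequality $\mathbb{P}(|Y_{j,r} - d| \geq d/2) \leq 2 e^{-c_0 d}$. Equivalently, one could invoke the sub-exponential Bernstein bound for the sum of $d$ independent squared standard Gaussians to obtain the same conclusion.

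Next I would union-bound over the $2m$ weight vectors $(j,r) \in \{-1,+1\} \times [m]$, yielding
\begin{equation*}
\mathbb{P}\!\left(\exists j,r : |Y_{j,r} - d| > d/2\right) \leq 4m \, e^{-c_0 d}.
\end{equation*}
Requiring this failure probability to be at most $\delta$ gives $d \geq c_0^{-1} \log(4m/\delta)$, which is exactly the hypothesis $d = \Omega(\log(4m/\delta))$ (with the implicit constant in $\Omega$ chosen as $c_0^{-1}$). On the complementary event, $d/2 \leq Y_{j,r} \leq 3d/2$ simultaneously for all $j,r$, and multiplying through by $\sigma_0^2$ yields the stated bound.

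The proof is essentially mechanical; the only minor obstacle is bookkeeping the absolute constants so that the Laurent--Massart (or Bernstein) deviation level lines up with the desired factors $1/2$ and $3/2$, which is why one takes $c_0$ small enough that $2\sqrt{c_0}+2c_0 \leq 1/2$. All such constants can be absorbed into the $\Omega(\cdot)$ hiding in the assumption on $d$, so no delicate estimates are required.
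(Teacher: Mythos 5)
Your proof is correct and follows essentially the same route as the paper's: a Bernstein/chi-square sub-exponential tail bound for $\|\mathbf{w}_{j,r}(0)\|_2^2$ (a scaled $\chi^2_d$ variable), followed by a union bound over the $2m$ filters, with the hypothesis $d = \Omega(\log(4m/\delta))$ absorbing the constants. The only cosmetic difference is that you fix the deviation level at $d/2$ and solve for the admissible failure probability, whereas the paper fixes the per-filter failure probability at $\delta/(2m)$ and bounds the resulting deviation by $O(\sigma_0^2\sqrt{d\log(4m/\delta)})$; the two orderings are equivalent once $d = \Omega(\log(4m/\delta))$ is imposed.
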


\begin{proof} [Proof of Lemma~\ref{lem:weight_initial}] By Bernstein's inequality, with probability at least $1 - \delta / (2m)$ we have
\begin{align*}
    \big| \| \mathbf{w}_{j,r}(0) \|_2^2 - \sigma_0^2 d \big| = O(\sigma_0^2 \cdot \sqrt{d \log(4m / \delta)}).
\end{align*}
Therefore, as long as $d = \Omega( \log(4m / \delta) )$, we have 
\begin{align*}
     \sigma_0^2 d /2  \leq \| \mathbf{w}_{j,r}(0) \|_2^2 \leq 3\sigma_0^2 d / 2.
\end{align*}
\end{proof}

\begin{proof} [Proof of Theorem \ref{thm:irmv1_appdx}] The proof is by induction method.
	First we show the gradient flow of weights by \irml objective function (\ref{eq:irml_cnn}): 
	\begin{align*}
		\frac{d \rvw_{j,r}(t)}{dt} & =   - \eta \cdot \nabla_{\rvw_{j,r}} L_{\mathrm{IRMv1}}(\rmW(t))    \\
	 & =   - \frac{\eta}{nm}  
		\sum_{e \in \mathcal{E}_{\mathrm{tr}}} \sum_{i=1}^{n_e} \ell'_i(t)  {\psi}'(\langle \rvw_{j,r}(t) , y_i^e \rvv_i^e \rangle) \cdot j \rvv_i^e  - \frac{\eta}{nm}   \sum_{e \in \mathcal{E}_{\mathrm{tr}}} \sum_{i=1}^{n_e} \ell'_i(t)  {\psi}'(\langle \rvw_{j,r}(t) ,  \boldsymbol{\xi}_i \rangle) \cdot j y_i^e \boldsymbol{\xi}_i    \\
	 & \quad - \frac{2 \eta \lambda}{nm}    \sum_{e \in \mathcal{E}_{\mathrm{tr}}}  C_\irml^e \sum_{i=1}^{n_e} \ell''_{i}  \hat{y}_i^e {\psi}'(\langle \rvw_{j,r}(t) , y_i^e \rvv_i^e \rangle)   j y_i^e  \rv^e_i  - \frac{2 \eta \lambda}{nm}  \sum_{e \in \mathcal{E}_{\mathrm{tr}}}  C_\irml^e \sum_{i=1}^{n_e} \ell''_{i}  \hat{y}_i^e   {\psi}'(\langle \rvw_{j,r}(t) ,  \boldsymbol{\xi}_i\rangle )   j  \boldsymbol{\xi}_i \\
 & \quad - \frac{2 \eta \lambda}{nm}  \sum_{e \in \mathcal{E}_{\mathrm{tr}}}  C_\irml^e \sum_{i=1}^{n_e} \ell'_i(t)  {\psi}'(\langle \rvw_{j,r}(t) , y_i^e \rvv_i^e \rangle) j \rvv_i^e   - \frac{2 \eta \lambda}{nm}   \sum_{e \in \mathcal{E}_{\mathrm{tr}}}  C_\irml^e \sum_{i=1}^{n_e} \ell'_i(t)  {\psi}'(\langle \rvw_{j,r}(t) ,  \boldsymbol{\xi}_i \rangle) j y_i^e \boldsymbol{\xi}_i                             \\
	  & = - \frac{\eta}{nm}  
		\sum_{e \in \mathcal{E}_{\mathrm{tr}}}   (1+ 2\lambda C_\irml^e(t)) \sum_{i=1}^{n_e} \ell'_i(t)  {\psi}'(\langle \rvw_{j,r}(t) , y_i^e \rvv_i^e \rangle) \cdot j \rvv_i^e    \\
	 & \quad   - \frac{\eta}{nm}   \sum_{e \in \mathcal{E}_{\mathrm{tr}}}  (1+ 2\lambda C_\irml^e(t)) \sum_{i=1}^{n_e} \ell'_i(t)  {\psi}'(\langle \rvw_{j,r}(t) ,  \boldsymbol{\xi}_i\rangle ) \cdot j y_i^e \boldsymbol{\xi}_i   \\
	 & \quad - \frac{2 \eta \lambda}{nm}   \sum_{e \in \mathcal{E}_{\mathrm{tr}}} C_\irml^e \sum_{i=1}^{n_e} \ell''_{i}  \hat{y}_i^e {\psi}'(\langle \rvw_{j,r}(t) , y_i^e \rvv_i^e \rangle)   j y_i^e  \rv^e_i   - \frac{ 2 \eta \lambda}{nm}   \sum_{e \in \mathcal{E}_{\mathrm{tr}}}  C_\irml^e \sum_{i=1}^{n_e} \ell''_{i}  \hat{y}_i^e  {\psi}'(\langle \rvw_{j,r}(t) ,  \boldsymbol{\xi}_i\rangle )  j  \boldsymbol{\xi}_i,
	\end{align*}
	where $C_\irml^e = \frac{1}{n_e} \sum_{i=1}^{n_e} \ell_i'^e \hat{y}_i^e y_i^e$ and $\rvv_i^e = \textrm{Rad}(\alpha)_i \cdot \rvv_1 + \textrm{Rad}(\beta_e)_i \cdot \rvv_2$. Note that $\ell''$ has the opposite sign to $\ell'$.

	Then we look at the dynamics of $C^e_\irml(t)$ according to the gradient flow update rule:
	\begin{align*}
		\frac{d C^e_\irml(\rmW,t)}{d t} & = \sum_{j = \pm 1} \sum_{r=1}^m \left  \langle \frac{\partial C^e_\irml(\rmW,t)}{ \partial \rvw_{j,r}(t) }, \frac{d \rvw_{j,r} (t)}{d t } \right \rangle    \\
		 & = \sum_{e'} 2 \lambda C^{e'}_\irml(\rmW,t) \sum_{j} \sum_{r=1}^m \left  \langle \frac{\partial C^e_\irml(\rmW,t)}{ \partial \rvw_{j,r}(t) },\frac{\partial C^{e'}_\irml(\rmW,t)}{ \partial \rvw_{j,r}(t) } \right \rangle \\
          & \quad + \sum_{j = \pm 1} \sum_{r=1}^m \left  \langle \frac{\partial C^e_\irml(\rmW,t)}{ \partial \rvw_{j,r}(t) },\frac{\partial L(\rmW,t)}{ \partial \rvw_{j,r}(t) }\right \rangle  \\
	 & = 2 \lambda \sum_{e'}  C^{e'}_\irml(\rmW,t) \cdot \rmH_{e,e'} (t) + \mathbf{g}_e(t),
	\end{align*}
	where we define $\rmH_{e,e'} (t) = \sum_{j} \sum_{r=1}^m \left  \langle \frac{\partial C^e_\irml(\rmW,t)}{ \partial \rvw_{j,r}(t) },\frac{\partial C^{e'}_\irml(\rmW,t)}{ \partial \rvw_{j,r}(t) } \right \rangle $ and $\rvg_e(t) = \sum_{j = \pm 1} \sum_{r=1}^m \left  \langle \frac{\partial C^e_\irml(\rmW,t)}{ \partial \rvw_{j,r}(t) },\frac{\partial L(\rmW,t)}{ \partial \rvw_{j,r}(t) }\right \rangle $. Thus $\rmH(t) $ is an $ |\envtrain|\times |\envtrain| $ matrix. We can write the dynamics of $\rvc(t) = \left[ C^1_\irml(\rmW,t),  C^2_\irml(\rmW,t), \cdots,  C^{|\envtrain|}_\irml(\rmW,t) \right]  $ in a compact way:
	\begin{align} \label{eq:c_dynamimcs}
		\frac{d \rvc (t)}{ d t} = 2\lambda \cdot \rmH(t) \rvc(t) + \rvg(t).
	\end{align}
	Our next step is to show $\rmH(t)$ is stable during training. To proceed with the analysis, we write down the expression for $ \frac{\partial C^{e}_\irml(\rmW,t)}{ \partial \rvw_{j,r}(t) }  \in \mathbb{R}^d$:
	\begin{align*}
		\frac{\partial C^{e}_\irml(\rmW(t))}{ \partial \rvw_{j,r}(t) } & = \frac{1  }{n_e m}  
		\sum_{i=1}^{n_e} \ell'_i(t)  {\psi}'(\langle \rvw_{j,r}(t) , y_i^e \rvv_i^e \rangle) \cdot j \rvv_i^e   + \frac{1  }{n_e m}  \sum_{i=1}^{n_e} \ell'_i(t)  {\psi}'(\langle \rvw_{j,r}(t) ,  \boldsymbol{\xi}_i \rangle ) \cdot j y_i^e \boldsymbol{\xi}_i        \\
		                & \quad + \frac{ 1  }{n_e m}     \sum_{i=1}^{n_e} \ell''_{i} \hat{y}_i^e  {\psi}'(\langle \rvw_{j,r}(t) , y_i^e \rvv_i^e \rangle) \cdot j y_i^e  \rvv_i^e   + \frac{1 }{n_e m}   \sum_{i=1}^{n_e} \ell''_{i} \hat{y}_i^e  {\psi}'(\langle \rvw_{j,r}(t) ,  \boldsymbol{\xi}_i \rangle ) \cdot j  \boldsymbol{\xi}_i.
	\end{align*}
	When we consider {non}-linear activation function ${\psi}(x)$, the entry of matrix $\rmH (t) $ can be computed as follows:
	\begin{align*}
		& \rmH_{e,e'} (t) =   \sum_{j} \sum_{r=1}^m \left  \langle \frac{\partial C^e_\irml(\rmW,t)}{ \partial \rvw_{j,r}(t) },\frac{\partial C^{e'}_\irml(\rmW,t)}{ \partial \rvw_{j,r}(t) } \right \rangle    \\
		                & =   \sum_{j} \sum_{r=1}^m  \left(\frac{ 1  }{n_{e} m} \right)  \left(\frac{1  }{n_{e'} m} \right) \bigg[ \sum_{i=1}^{n_e} \ell'_i(t) {\psi'} j \rvv^{e\top}_i \sum_{i'=1}^{n_{e'}} \ell'_{i'}(t){\psi'}   j \rvv_{i'}^{e'}     +  \sum_{i=1}^{n_e}  {\psi'} \ell''_i(t) \hat{y}_i^e(t)   j y_i^e \rvv^{e\top}_i \sum_{i'=1}^{n_{e'}}  {\psi'} \ell''_{i'}(t) \hat{y}_{i'}^{e'}(t)   j y_{i'}^{e'} \rvv_{i'}^{e'}  \bigg] \\
		                & + \sum_{j} \sum_{r=1}^m  \left(\frac{ 1  }{n_{e} m} \right)  \left(\frac{ 1  }{n_{e'} m} \right) \bigg[ \sum_{i=1}^{n_e} {\psi'} \ell''_i(t) \hat y_i^e(t)   j y_i^e \rvv^{e\top}_i \sum_{i'=1}^{n_{e'}}  {\psi'}  \ell'_{i'}(t)  j \rvv_{i'}^{e'}   +  \sum_{i=1}^{n_e}  {\psi'} \ell'_i(t)  j  \rvv^{e\top}_i \sum_{i'=1}^{n_{e'}}   {\psi'}  \ell''_{i'}(t)  \hat{y}_{i'}^{e'}(t)   j \rvv_{i'}^{e'}  \bigg]  \\
                  & + \sum_{j} \sum_{r=1}^m  \left(\frac{ 1 }{n_{e} m} \right)  \left(\frac{ 1  }{n_{e'} m} \right) \bigg[ \sum_{i=1}^{n_e}  {\psi'}  \ell'_i(t)   j y^e_i \boldsymbol{\xi}^{e\top}_i \sum_{i'=1}^{n_{e'}}  {\psi'}  \ell'_{i'}(t)   j y^{e'}_{i'} \boldsymbol{\xi}_{i'}^{e'}   +  \sum_{i=1}^{n_e}  {\psi'}  \ell''_i(t)  \hat{y}_i^e(t) j  \boldsymbol{\xi}^{e\top}_i \sum_{i'=1}^{n_{e'}}  {\psi'}  \ell''_{i'}(t)  \hat{y}_{i'}^{e'}(t)   j  \boldsymbol{\xi}_{i'}^{e'}  \bigg] \\
		                & + \sum_{j} \sum_{r=1}^m  \left(\frac{ 1  }{n_{e} m} \right)  \left(\frac{ 1  }{n_{e'} m} \right) \bigg[ \sum_{i=1}^{n_e}  {\psi'}  \ell''_{i}(t) \hat{y}^{e}_{i}   j  \boldsymbol{\xi}^{e\top}_i \sum_{i'=1}^{n_{e'}}  {\psi'}  \ell'_{i'}(t)   j y^{e'}_{i'} \boldsymbol{\xi}_{i'}^{e'}  + \sum_{i=1}^{n_e}  {\psi'}  y^e_i \ell'_i(t)   j \boldsymbol{\xi}^{e\top}_i \sum_{i'=1}^{n_{e'}}   {\psi'}  \ell''_{i'}(t)     j \boldsymbol{\xi}_{i'}^e \hat{y}^{e'}_{i'}(t)  \bigg]  \\
                  & \triangleq \rmH^1_{e,e'} (t) + \rmH^2_{e,e'} (t) + \rmH^3_{e,e'} (t) + \rmH^4_{e,e'} (t) + \rmH^5_{e,e'} (t) + \rmH^6_{e,e'} (t) + \rmH^7_{e,e'} (t) + \rmH^8_{e,e'} (t).
	\end{align*}
	The matrix $\mathbf{H}$ is composed of eight elements. In addition, we define
	\begin{align*}
		\rmH^{1,\infty}_{e,e'} & = \sum_{j} \sum_{r=1}^m  \left(\frac{ 1 }{n_{e} m} \right)  \left(\frac{ 1  }{n_{e'} m} \right) \left[ \sum_{i=1}^{n_e}  -\frac{1}{2} {\psi}'(\langle \rvw_{j,r}(0) ,  \mathbf{v}^e_i \rangle ) 
    j \rvv^{e\top}_i \sum_{i'=1}^{n_{e'}}  -\frac{1}{2} {\psi}'(\langle \rvw_{j,r}(0) , \mathbf{v}^{e'}_{i'} \rangle )  j \rvv_{i'}^{e'}  \right] \\
		   & =  \frac{ 1}{2 m n_e n_{e'}} \sum_{i=1}^{n_e} {\psi}'(\langle \rvw_{j,r}(0) ,  \mathbf{v}^e_i \rangle ) \rvv^{e\top}_i \sum_{i'=1}^{n_{e'}} {\psi}'(\langle \rvw_{j,r}(0) ,  \mathbf{v}^{e'}_{i'} \rangle )  \rvv_{i'}^{e'}.
	\end{align*}
	Then we can show that:
	\begin{align*}
	&	\left |  \rmH^1_{e,e'}(t) -  \rmH^{1,\infty}_{e,e'} \right | \\
  =   &  \frac{2}{ m n_e n_{e'}} \left| \sum_{i=1}^{n_e} {\psi}'(t) \ell'_i(t) \rvv^{e\top}_i \sum_{i'=1}^{n_{e'}}  {\psi}'(t)  \ell'_{i'}(t)  \rvv_{i'}^{e'} -  \sum_{i=1}^{n_e} \frac{1}{2}   {\psi}'(0)\rvv^{e\top}_i \sum_{i'=1}^{n_{e'}}  \frac{1}{2} {\psi}'(0)  \rvv_{i'}^{e'} \right|  \\
       =   &   \frac{2}{ m n_e n_{e'}} \bigg| \sum_{i=1}^{n_e}  {\psi}'(t) \ell'_i(t) \rvv^{e\top}_i \sum_{i'=1}^{n_{e'}}  {\psi}'(t)  \ell'_{i'}(t)  \rvv_{i'}^{e'} - \sum_{i=1}^{n_e}  {\psi}'(0) \ell'_i(t) \rvv^{e\top}_i \sum_{i'=1}^{n_{e'}}  {\psi}'(0)  \ell'_{i'}(t)  \rvv_{i'}^{e'}  \\
         & \quad +  \sum_{i=1}^{n_e}  {\psi}'(0) \ell'_i(t) \rvv^{e\top}_i \sum_{i'=1}^{n_{e'}}  {\psi}'(0)  \ell'_{i'}(t)  \rvv_{i'}^{e'} - \sum_{i=1}^{n_e} \frac{1}{2}   {\psi}'(0)\rvv^{e\top}_i \sum_{i'=1}^{n_{e'}}  \frac{1}{2}  {\psi}'(0)  \rvv_{i'}^{e'} \bigg|  \\
           \le &    \frac{2 }{ m n_e n_{e'}} \left| \sum_{i=1}^{n_e} {\psi}'(t) \ell'_i(t) \rvv^{e\top}_i \sum_{i'=1}^{n_{e'}}  {\psi}'(t) \ell'_{i'}(t)  \rvv_{i'}^{e'} -  \sum_{i=1}^{n_e}   {\psi}'(0) \ell'_{i}(t) \rvv^{e\top}_i \sum_{i'=1}^{n_{e'}}  {\psi}'(t) \ell'_{i'}(t) \rvv_{i'}^{e'} \right|      \\
        & \quad  +   \frac{2 }{ m n_e n_{e'}} \left| \sum_{i=1}^{n_e} {\psi}'(0) \ell'_i(t) \rvv^{e\top}_i \sum_{i'=1}^{n_{e'}}  {\psi}'(t) \ell'_{i'}(t)  \rvv_{i'}^{e'} -  \sum_{i=1}^{n_e}   {\psi}'(0) \ell'_{i}(t) \rvv^{e\top}_i \sum_{i'=1}^{n_{e'}}  {\psi}'(0) \ell'_{i'}(t) \rvv_{i'}^{e'} \right|      \\
       & \quad +     \frac{2 }{ m n_e n_{e'}} \left| \sum_{i=1}^{n_e} {\psi}'(0) \ell'_i(t) \rvv^{e\top}_i \sum_{i'=1}^{n_{e'}} {\psi}'(0) \ell'_{i'}(t)  \rvv_{i'}^{e'} -  \sum_{i=1}^{n_e}   {\psi}'(0) \ell'_{i} \rvv^{e\top}_i \sum_{i'=1}^{n_{e'}}   {\psi}'(0) \frac{1}{2}  \rvv_{i'}^{e'} \right|      \\
	 & \quad + \frac{2  }{ m n_e n_{e'}} \left| \sum_{i=1}^{n_e}  {\psi}'(0) \ell'_i(t) \rvv^{e\top}_i \sum_{i'=1}^{n_{e'}}  {\psi}'(0) \frac{1}{2} \rvv_{i'}^{e'} -  \sum_{i=1}^{n_e}  {\psi}'(0) \frac{1}{2} \rvv^{e\top}_i \sum_{i'=1}^{n_{e'}}   {\psi}'(0) \frac{1}{2}  \rvv_{i'}^{e'} \right|       \\
	 \le   &   \frac{2}{ m n_e n_{e'}} \left| \sum_{i=1}^{n_e} ( {\psi}'(t)- {\psi}'(0)) \ell'_i(t) \rvv^{e\top}_i \sum_{i'=1}^{n_{e'}}  \ell'_{i'}(t)   {\psi}'(t)\rvv_{i'}^{e'} \right|  + \frac{2 }{ m n_e n_{e'}} \left| \sum_{i=1}^{n_e}  {\psi}'(t) \ell'_i(t)   \rvv^{e\top}_i \sum_{i'=1}^{n_{e'}} ( {\psi}'(t)- {\psi}'(0)) \frac{1}{2} \rvv_{i'}^{e'}  \right|\\
  & + \frac{2}{ m n_e n_{e'}} \left| \sum_{i=1}^{n_e}    {\psi}'(0) \ell'_i(t) \rvv^{e\top}_i \sum_{i'=1}^{n_{e'}}  {\psi}'(0) \left( \ell'_{i'}(t)  + \frac{1}{2} \right)\rvv_{i'}^{e'}  \right|   + \frac{2  }{ m n_e n_{e'}} \left| \sum_{i=1}^{n_e} {\psi}'(0) \left( \ell'_i(t) + \frac{1}{2} \right) \rvv^{e\top}_i \sum_{i'=1}^{n_{e'}} {\psi}'(0) \frac{1}{2} \rvv_{i'}^{e'}  \right| \\
   \triangleq  & I_1 + I_2 + I_3 + I_4
   \end{align*}
where we calculate each item as follows:
\begin{align*}
    I_1 & = \frac{2}{ m n_e n_{e'}} \left| \sum_{i=1}^{n_e} ( {\psi}'(t)- {\psi}'(0)) \ell'_i(t) \rvv^{e\top}_i \sum_{i'=1}^{n_{e'}}  \ell'_{i'}(t)   {\psi}'(t)\rvv_{i'}^{e'} \right| \\
      & \overset{(a)} \le  \frac{2}{ m n_e n_{e'}} \left| \sum_{i=1}^{n_e}   \beta \| \mathbf{w}_{j,r}(t) - \mathbf{w}_{j,r}(0) \|_2  \| \rvv^{e}_i \|_2 \ell'_i(t)  \rvv^{e\top}_i \sum_{i'=1}^{n_{e'}}  \ell'_{i'}(t)   {\psi}'(t)\rvv_{i'}^{e'}   \right| \\
      & \overset{(b)}  \le  \frac{2 \beta^2}{ m n_e n_{e'}}   \sum_{i=1}^{n_e}   \| \mathbf{w}_{j,r}(t) - \mathbf{w}_{j,r}(0) \|_2  \| \rvv^{e}_i \|^2_2 \sum_{i'=1}^{n_{e'}}  \| \mathbf{w}_{j,r}(t) \|_2  \| \rvv_{i'}^{e'} \|^2_2   \\
      & \overset{(c)}  \le \frac{32 \beta^2 R(R+ \frac{3}{2} \sigma_0 d )}{ m}, 
\end{align*}
where we have used $R \triangleq \| \mathbf{w}_{j,r}(t)  \|_2 $. Besides, inequality (a) results from applying the smoothness property of the activation function and Cauchy–Schwarz inequality; inequality (b) is by smoothness property of the activation function and Cauchy–Schwarz inequality. Besides, we have used $|\ell^e_i| \le 1$ for all $i \in n_e$ and $ e\in\envall$; inequality (c) is by the fact that $\| \mathbf{v}^e_i \|_2 \le 2 $ for all $i \in n_e$ and $ e\in\envall$ and Lemma \ref{lem:weight_initial}.

Similarly, we calculate the upper bound for $I_2$ as follows:
\begin{align*}
    I_2 & = \frac{2 }{ m n_e n_{e'}} \left| \sum_{i=1}^{n_e}  {\psi}'(t) \ell'_i(t)   \rvv^{e\top}_i \sum_{i'=1}^{n_{e'}} ( {\psi}'(t)-  {\psi}'(0)) \frac{1}{2} \rvv_{i'}^{e'}  \right| \\
    & \le \frac{32 \beta^2 R(R+ \frac{3}{2} \sigma_0 d )}{ m}.
\end{align*}
Next, we give the upper bound of $I_3$:
\begin{align*}
    I_3 & = \frac{2}{ m n_e n_{e'}} \left| \sum_{i=1}^{n_e}    {\psi}'(0) \ell'_i(t) \rvv^{e\top}_i \sum_{i'=1}^{n_{e'}}  {\psi}'(0) \left( \ell'_{i'}(t)  + \frac{1}{2} \right)\rvv_{i'}^{e'}  \right|   \\
    & \le  \frac{2}{ m n_e n_{e'}}  \left| \sum_{i=1}^{n_e}  \beta \| \mathbf{w}_{j,r}(0) \|_2 \| \rvv^{e}_i \|_2 \ell'_i(t) \rvv^{e\top}_i \sum_{i'=1}^{n_{e'}}\beta \| \mathbf{w}_{j,r}(0) \|_2  \|\rvv_{i'}^{e'} \|_2 \left( \ell'_{i'}(t)  + \frac{1}{2} \right)\rvv_{i'}^{e'}  \right|       \\
    & \le \frac{64 \beta^2 L R (\frac{3}{2} \sigma_0 d )^2}{ m},
\end{align*}
where we have used $\gamma$ which is defined as follows:
	\begin{align*}
		|\hat y^e_i(t)| & =  \left|  \frac{1}{m} \sum_j \sum_{r=1}^m \left[  {\psi}(\rvw_{j,r}^\top(t) \rvx_1) +  {\psi}(\rvw_{j,r}^\top(t) \rvx_2) \right] \right |  \\
         & \overset{(a)}\le  2 L R,
	\end{align*}
where inequality (a) is by the Lipschitz property of non-linear activation function and we have used the bound for $\ell'_i(t) + \frac{1}{2}$:
	\begin{align*}
		\left| \ell'_i(t) + \frac{1}{2} \right| & = \left|  -  \frac{ \exp(-y_i^e \cdot f(\rmW, \rvx_i,t))}{1 + \exp(-y_i^e \cdot f(\rmW, \rvx_i,t))} + \frac{1}{2} \right|                              \\
		            & =  \left|  \frac{1}{2} -  \frac{ 1}{1 + \exp(y_i^e \cdot f(\rmW, \rvx_i,t))}  \right|                           \\
		                  & \le  \max \left \{ \left|  \frac{1}{2} -  \frac{ 1}{1 + \exp(2LR)}  \right|, \left|  \frac{1}{2} -  \frac{ 1}{1 + \exp(-2LR)}  \right|  \right\} \\          & \le \max \left \{ \left|  \frac{1}{2} -  \frac{ 1}{2 + \frac{7}{4}2LR}  \right|, \left|  \frac{1}{2} -  \frac{ 1}{2 -2LR}  \right|  \right\} = \Theta(LR).
	\end{align*} 
 and we provide the bound of $\ell''_i(t) - \frac{1}{4}$:
\begin{align*}
		\left| \ell''_i(t) - \frac{1}{4} \right| & = \left|   \frac{ \exp(-y_i^e \cdot f(\rmW, \rvx_i,t))}{ (1 + \exp(-y_i^e \cdot f(\rmW, \rvx_i,t)))^2} - \frac{1}{4} \right|        \\
       	& =  \left|   \frac{1}{ \exp(y_i^e \cdot f(\rmW, \rvx_i,t)) +  2 + \exp(-y_i^e \cdot f(\rmW, \rvx_i,t))} - \frac{1}{4} \right|    \\
		                  & \le   \left|  \frac{1}{4} -  \frac{ 1}{2 + 2 \exp((2LR)^2/2) }  \right|  = \Theta((LR)^2).
	\end{align*} 
Similarly, we give the upper bound of $I_4$:
\begin{align*}
    I_4 & = \frac{2  }{ m n_e n_{e'}} \left| \sum_{i=1}^{n_e} {\psi}'(0) \left( \ell'_i(t) + \frac{1}{2} \right) \rvv^{e\top}_i \sum_{i'=1}^{n_{e'}} {\psi}'(0) \frac{1}{2} \rvv_{i'}^{e'}  \right|   \\
    & \le  \frac{2}{ m n_e n_{e'}}  \left| \sum_{i=1}^{n_e}  \beta \| \mathbf{w}_{j,r}(0) \|_2 \| \rvv^{e}_i \|_2 (\ell'_i(t)+\frac{1}{2}) \rvv^{e\top}_i \sum_{i'=1}^{n_{e'}}\beta \| \mathbf{w}_{j,r}(0) \|_2  \|\rvv_{i'}^{e'} \|_2   \frac{1}{2}  \rvv_{i'}^{e'}  \right|       \\
    & \le \frac{64 \beta^2 LR \gamma(\frac{3}{2} \sigma_0 d )^2}{ m}.
\end{align*}
Together, we obtain the upper bound for $\left |  \rmH^1_{e,e'}(t) -  \rmH^{1,\infty}_{e,e'} \right | $:
\begin{align*}
   \left |  \rmH^1_{e,e'}(t) -  \rmH^{1,\infty}_{e,e'} \right |  \le \frac{64 \beta^2 R(R+ \frac{3}{2} \sigma_0 d )}{ m} + \frac{128 \beta^2 LR (\frac{3}{2} \sigma_0 d )^2}{ m}.
\end{align*}

Then we calculate the upper bound for the residual terms:
	\begin{align*}
		\left |  \rmH^2_{e,e'}(t)   \right | & = \left | \sum_{j} \sum_{r=1}^m  \left(\frac{ 1  }{n_{e} m} \right)  \left(\frac{1  }{n_{e'} m} \right)  \sum_{i=1}^{n_e}  {\psi'} \ell''_i(t) \hat{y}_i^e(t)   j y_i^e \rvv^{e\top}_i \sum_{i'=1}^{n_{e'}}  {\psi'} \ell''_{i'}(t) \hat{y}_{i'}^{e'}(t)   j y_{i'}^{e'} \rvv_{i'}^{e'} \right | \\
          & = \frac{2  }{ m n_e n_{e'}} \left| \sum_{i=1}^{n_e} \psi'(t) \ell''_i(t)  \hat y_i^e(t)  j y_i^e \rvv^{e\top}_i \sum_{i'=1}^{n_{e'}} \psi'(t) \ell''_{i'}(t)  \hat y_{i'}^{e'}(t)  j y_{i'}^{e'} \rvv_{i'}^{e'} \right| \\  
          & \overset{(a)} \le \frac{2\beta^2 }{ m n_e n_{e'}} \left| \sum_{i=1}^{n_e} \| \mathbf{w}_{j,r}(t) \|_2 \|\rvv^{e}_i \|_2 \ell''_i(t)  \hat y_i^e(t) \rvv^{e\top}_i \sum_{i'=1}^{n_{e'}} \| \mathbf{w}_{j,r}(t) \|_2 \|\rvv^{e'}_{i'} \|_2  \ell''_{i'}(t)  \hat y_{i'}^e(t)  \rvv_{i'}^{e'} \right| \\
           & \overset{(b)} \le \frac{128 \beta^2  L^2 R^4}{ m },
\end{align*}
where inequality (a) is by the smoothness property of the activation function and Cauchy–Schwarz inequality, and inequality (b) is by triangle inequality and the fact that $\| \mathbf{v}^e_i \|_2 \le 2 $ for all $i \in n_e$ and $ e\in\envall$, and $|\ell''_i| \le 1$ for all $i \in [n]$ and Lemma \ref{lem:weight_initial}. Similarly, we further provide the upper bound of residual terms:
\begin{align*}
		\left |  \rmH^3_{e,e'}(t)   \right | & = \left| \sum_{j} \sum_{r=1}^m  \left(\frac{ 1  }{n_{e} m} \right)  \left(\frac{ 1  }{n_{e'} m} \right) \sum_{i=1}^{n_e} {\psi'} \ell''_i(t) \hat y_i^e(t)   j y_i^e \rvv^{e\top}_i \sum_{i'=1}^{n_{e'}}  {\psi'}  \ell'_{i'}(t)  j \rvv_{i'}^{e'}    \right |  \\
          & = \frac{2  }{ m n_e n_{e'}} \left| \sum_{i=1}^{n_e} \psi'(t) \ell''_i(t)  \hat y_i^e(t)  j y_i^e \rvv^{e\top}_i \sum_{i'=1}^{n_{e'}} \psi'(t) \ell'_{i'}(t)     j \rvv_{i'}^{e'} \right| \\  
          &   \le \frac{2\beta^2 }{ m n_e n_{e'}} \left| \sum_{i=1}^{n_e} \| \mathbf{w}_{j,r}(t) \|_2 \|\rvv^{e}_i \|_2 \ell''_i(t)  \hat y_i^e(t) \rvv^{e\top}_i \sum_{i'=1}^{n_{e'}} \| \mathbf{w}_{j,r}(t) \|_2 \|\rvv^{e'}_{i'} \|_2  \ell'_{i'}(t)   \rvv_{i'}^{e'} \right| \\
           &  \le \frac{64\beta^2  LR^3}{ m }.
\end{align*}
Similarly, we further have that:
\begin{align*}
		\left |  \rmH^4_{e,e'}(t)   \right | & = \left| \sum_{j} \sum_{r=1}^m  \left(\frac{ 1  }{n_{e} m} \right)  \left(\frac{ 1  }{n_{e'} m} \right) \sum_{i=1}^{n_e}  {\psi'} \ell'_i(t)  j  \rvv^{e\top}_i \sum_{i'=1}^{n_{e'}}   {\psi'}  \ell''_{i'}(t)  \hat{y}_{i'}^{e'}(t)   j \rvv_{i'}^{e'}   \right |  \\
          & = \frac{2}{ m n_e n_{e'}} \left| \sum_{i=1}^{n_e} \psi'(t) \ell'_i(t)   j \rvv^{e\top}_i \sum_{i'=1}^{n_{e'}} \psi'(t) \ell''_{i'}(t)   \hat y_{i'}^{e'}(t)   j \rvv_{i'}^{e'} \right| \\  
          &   \le \frac{2\beta^2 }{ m n_e n_{e'}} \left| \sum_{i=1}^{n_e} \| \mathbf{w}_{j,r}(t) \|_2 \|\rvv^{e}_i \|_2 \ell'_i(t)    \rvv^{e\top}_i \sum_{i'=1}^{n_{e'}} \| \mathbf{w}_{j,r}(t) \|_2 \|\rvv^{e'}_{i'} \|_2  \ell''_{i'}(t) \hat y_{i'}^{e'}(t)   j  \rvv_{i'}^{e'} \right| \\
           & \le \frac{64 \beta^2   LR^3}{ m }.
\end{align*}
Keep going on, we provide the computation results further:
\begin{align*}
		\left |  \rmH^5_{e,e'}(t)   \right | & = \left| \sum_{j} \sum_{r=1}^m  \left(\frac{ 1  }{n_{e} m} \right)  \left(\frac{ 1  }{n_{e'} m} \right) \sum_{i=1}^{n_e}  {\psi'}  \ell'_i(t)   j y^e_i \boldsymbol{\xi}^{e\top}_i \sum_{i'=1}^{n_{e'}}  {\psi'}  \ell'_{i'}(t)   j y^{e'}_{i'} \boldsymbol{\xi}_{i'}^{e'}   \right |  \\
          &  \overset{(a)}  \le \frac{2\beta^2 }{ m n_e n_{e'}} \left| \sum_{i=1}^{n_e} \| \mathbf{w}_{j,r}(t) \|_2 \|\boldsymbol{\xi}_i \|_2 \ell'_i(t)    \boldsymbol{\xi}^{e\top}_i \sum_{i'=1}^{n_{e'}} \| \mathbf{w}_{j,r}(t) \|_2 \|\boldsymbol{\xi}^{e'}_{i'} \|_2  \ell'_{i'}(t)    \boldsymbol{\xi}_{i'}^{e'} \right| \\
           &  \overset{(b)} \le \frac{2\beta^2 R^2 \sigma^2_p d}{ m },
\end{align*}
where inequality (a) is by smoothness property of non-linear activation function and Cauchy inequality, inequality (b) is by Lemma \ref{lemma:data_innerproducts} and Lemma \ref{lem:weight_initial}. Next, we calculate the 
\begin{align*}
		\left |  \rmH^6_{e,e'}(t)   \right | & = \left| \sum_{j} \sum_{r=1}^m  \left(\frac{ 1  }{n_{e} m} \right)  \left(\frac{ 1  }{n_{e'} m} \right) \sum_{i=1}^{n_e}  {\psi'}  \ell''_i(t)  \hat{y}_i^e(t) j  \boldsymbol{\xi}^{e\top}_i \sum_{i'=1}^{n_{e'}}  {\psi'}  \ell''_{i'}(t)  \hat{y}_{i'}^{e'}(t)   j  \boldsymbol{\xi}_{i'}^{e'}   \right |  \\
          &   \le \frac{2\beta^2 }{ m n_e n_{e'}} \left| \sum_{i=1}^{n_e} \| \mathbf{w}_{j,r}(t) \|_2 \|\boldsymbol{\xi}_i \|_2 \ell''_i(t) \hat{y}^e_i    \boldsymbol{\xi}^{e\top}_i \sum_{i'=1}^{n_{e'}} \| \mathbf{w}_{j,r}(t) \|_2 \|\boldsymbol{\xi}^{e'}_{i'} \|_2  \ell''_{i'}(t) \hat{y}_{i'}^{e'}(t)   \boldsymbol{\xi}_{i'}^{e'} \right| \\
           & \le \frac{8\beta^2 d L^4 R^2}{ m }.
\end{align*}
Similarly, the next $\mathbf{H}$ term can be calculated as follows:
\begin{align*}
		\left |  \rmH^7_{e,e'}(t)   \right | & = \left| \sum_{j} \sum_{r=1}^m  \left(\frac{ 1  }{n_{e} m} \right)  \left(\frac{ 1  }{n_{e'} m} \right) \sum_{i=1}^{n_e}  {\psi'}  \ell''_{i}(t) \hat{y}^{e}_{i}   j  \boldsymbol{\xi}^{e\top}_i \sum_{i'=1}^{n_{e'}}  {\psi'}  \ell'_{i'}(t)   j y^{e'}_{i'} \boldsymbol{\xi}_{i'}^{e'}  \right |  \\
          &   \le \frac{2\beta^2 }{ m n_e n_{e'}} \left| \sum_{i=1}^{n_e} \| \mathbf{w}_{j,r}(t) \|_2 \|\boldsymbol{\xi}_i \|_2 \ell''_i(t) \hat{y}^e_i(t)  \boldsymbol{\xi}^{e\top}_i \sum_{i'=1}^{n_{e'}} \| \mathbf{w}_{j,r}(t) \|_2 \|\boldsymbol{\xi}^{e'}_{i'} \|_2  \ell'_{i'}(t)    \boldsymbol{\xi}_{i'}^{e'} \right| \\
           & \le \frac{4\beta^2  \sigma^2_p d LR^3}{ m }.
\end{align*}
Finally, we have the upper for the last term:
\begin{align*}
		\left |  \rmH^8_{e,e'}(t)   \right | & = \left| \sum_{j} \sum_{r=1}^m  \left(\frac{ 1  }{n_{e} m} \right)  \left(\frac{ 1  }{n_{e'} m} \right) \sum_{i=1}^{n_e}  {\psi'}  y^e_i \ell'_i(t)   j \boldsymbol{\xi}^{e\top}_i \sum_{i'=1}^{n_{e'}}   {\psi'}  \ell''_{i'}(t)     j \boldsymbol{\xi}_{i'}^e \hat{y}^{e'}_{i'}(t)   \right |  \\
          &   \le \frac{2\beta^2 }{ m n_e n_{e'}} \left| \sum_{i=1}^{n_e} \| \mathbf{w}_{j,r}(t) \|_2 \|\boldsymbol{\xi}_i \|_2 \ell'_i(t)    \boldsymbol{\xi}^{e\top}_i \sum_{i'=1}^{n_{e'}} \| \mathbf{w}_{j,r}(t) \|_2 \|\boldsymbol{\xi}^{e'}_{i'} \|_2  \ell''_{i'}(t)   \hat{y}^{e'}_{i'}(t)  \boldsymbol{\xi}_{i'}^{e'}  \right| \\
           & \le \frac{4\beta^2   \sigma^2_p d LR^3}{ m }.
\end{align*}

To summarize, we have that,
\begin{align*}
    	\left |  \rmH_{e,e'}(t) -  \rmH^\infty_{e,e'} \right | & \le \frac{32 \beta^2 R(R+ \frac{3}{2} \sigma_0 d )}{ m} + \frac{128 \beta^2 (R+\frac{3}{2}\sigma^2_0 d)^2 L^2 R^2}{ m }+\frac{128\beta^2   LR^3}{ m } \\ 
       &  \quad + \frac{2\beta^2 R^2 \sigma^2_p d}{ m } + \frac{8\beta^2 \sigma^2_p d L^2 R^4}{ m } + \frac{4\beta^2   \sigma^2_p d LR^3}{ m } \\
        & \le  O\left( \frac{\beta^2 L R }{m} \right).
\end{align*}
where we have used $\sigma_p = O(d^{-2})$, $ R = o(1)$, and $\sigma_0 = O( \sqrt{R/d})$. Furthermore, we show that the perturbation term in Equation (\ref{eq:c_dynamimcs}) is bounded during training. In particular, we show the complete expression:
\begin{align*}
    \rvg_{e}(t)     & =  \sum_{j = \pm 1} \sum_{r=1}^m \left  \langle \frac{\partial C^e_\irml(\rmW,t)}{ \partial \rvw_{j,r}(t) },\frac{\partial L(\rmW,t)}{ \partial \rvw_{j,r}(t) }\right \rangle  \\
    & =  \sum_{j = \pm 1} \sum_{r=1}^m \bigg[ \frac{1  }{n_e m}  
		\sum_{i=1}^{n_e} \ell'_i(t)  {\psi}'(\langle \rvw_{j,r}(t) , y_i^e \rvv_i^e \rangle) \cdot j \rvv_i^e  \frac{1  }{n_e m}  
		\sum_{i=1}^{n_e} \ell'_i(t)  {\psi}'(\langle \rvw_{j,r}(t) , y_i^e \rvv_i^e \rangle) \cdot j \rvv_i^e  \\
       & \qquad + \frac{ 1 }{n_e m}     \sum_{i=1}^{n_e} \ell''_{i} \hat{y}_i^e  {\psi}'(\langle \rvw_{j,r}(t) , y_i^e \rvv_i^e \rangle) \cdot j y_i^e  \rvv_i^e 
       \frac{1  }{n_e m}  
		\sum_{i=1}^{n_e} \ell'_i(t)  {\psi}'(\langle \rvw_{j,r}(t) , y_i^e \rvv_i^e \rangle) \cdot j \rvv_i^e    \\
     & \qquad + \frac{\eta  }{n_e m}  \sum_{i=1}^{n_e} \ell'_i(t)  {\psi}'(\langle \rvw_{j,r}(t) ,  \boldsymbol{\xi}_i \rangle ) \cdot j y_i^e \boldsymbol{\xi}_i \frac{\eta  }{n_e m}  \sum_{i=1}^{n_e} \ell'_i(t)  {\psi}'(\langle \rvw_{j,r}(t) ,  \boldsymbol{\xi}_i \rangle ) \cdot j y_i^e \boldsymbol{\xi}_i \\
     & \qquad +      \frac{\eta  }{n_e m}  \sum_{i=1}^{n_e} \ell'_i(t)  {\psi}'(\langle \rvw_{j,r}(t) ,  \boldsymbol{\xi}_i \rangle ) \cdot j y_i^e \boldsymbol{\xi}_i \sum_{i=1}^{n_e} \ell''_{i} \hat{y}_i^e  {\psi}'(\langle \rvw_{j,r}(t) ,  \boldsymbol{\xi}_i \rangle ) \cdot j  \boldsymbol{\xi}_i \bigg ] \\
      & \triangleq I_1 + I_2 + I_3 + I_4.
\end{align*}
Similar to the computation process for matrix $\mathbf{H}$, we adopt a divide and conquer manner:
\begin{align*}
    |I_1|    
          &   \le \frac{2\beta^2 }{ m n_e n_{e}} \left| \sum_{i=1}^{n_e} \| \mathbf{w}_{j,r}(t) \|_2 \|\rvv^{e}_i \|_2 \ell'_i(t)   \rvv^{e\top}_i \sum_{i=1}^{n_{e}} \| \mathbf{w}_{j,r}(t) \|_2 \|\rvv^{e}_{i} \|_2  \ell'_{i'}(t)   \rvv_{i}^{e} \right| \\
           &  \le \frac{32 \beta^2 (R+\frac{3}{2}\sigma^2_0 d)^2}{ m }.
\end{align*}
The techniques used are the same when deriving upper bound for matrix $\mathbf{H}$. Next, we have
\begin{align*}
    |I_2|    
          &   \le \frac{2\beta^2 }{ m n_e n_{e}} \left| \sum_{i=1}^{n_e} \| \mathbf{w}_{j,r}(t) \|_2 \|\rvv^{e}_i \|_2 \ell''_i(t) \hat{y}^e_i(t)  \rvv^{e\top}_i \sum_{i=1}^{n_{e}} \| \mathbf{w}_{j,r}(t) \|_2 \|\rvv^{e}_{i} \|_2  \ell'_{i'}(t)   \rvv_{i}^{e} \right| \\
           &  \le \frac{64 \beta^2  R^2 LR}{ m }.
\end{align*}
The last second term can be calculated as follows:
\begin{align*}
    |I_3|    
          &     \le \frac{2\beta^2 }{ m n_e n_{e}} \left| \sum_{i=1}^{n_e} \| \mathbf{w}_{j,r}(t) \|_2 \|\boldsymbol{\xi}_i \|_2 \ell'_i(t)    \boldsymbol{\xi}^{e\top}_i \sum_{i=1}^{n_{e}} \| \mathbf{w}_{j,r}(t) \|_2 \|\boldsymbol{\xi}^{e}_{i} \|_2  \ell'_{i}(t)    \boldsymbol{\xi}_{i}^{e} \right| \\
           &   \le \frac{2\beta^2 R^2 \sigma^2_p d}{ m },
\end{align*}
Finally, we show the upper bound of last term:
\begin{align*}
    |I_4|    
          &      \le \frac{2\beta^2 }{ m n_e n_{e}} \left| \sum_{i=1}^{n_e} \| \mathbf{w}_{j,r}(t) \|_2 \|\boldsymbol{\xi}_i \|_2 \ell''_i(t) \hat{y}^e_i(t)  \boldsymbol{\xi}^{e\top}_i \sum_{i=1}^{n_{e}} \| \mathbf{w}_{j,r}(t) \|_2 \|\boldsymbol{\xi}^{e}_{i} \|_2  \ell'_{i'}(t)    \boldsymbol{\xi}_{i}^{e} \right| \\
           & \le \frac{4\beta^2   \sigma^2_p d LR^3}{ m }.
\end{align*}
In a summary, we have the following inequality:
\begin{align*}
    	\left |  \mathbf{g}_{e}(t)  \right | & \le \frac{32 \beta^2 R^2}{ m} +  \frac{64\beta^2   LR^3}{ m }  + \frac{2\beta^2 R^2 \sigma^2_p d}{ m } +   \frac{4\beta^2 R^3 \sigma^2_p d L }{ m } \\
       & \le  O\left( \frac{\beta^2 L R^2 }{m} \right),
\end{align*}
where we have used $\sigma_p = O(d^{-2})$ and $ R = o(1)$. With all the bounds at hand, we are ready to have the dynamics for $\| \rvc(t)\|^2_2$
\begin{align}
\frac{d \| \rvc(t) \|^2_2}{d t} &  = - 2 \lambda \mathbf{c}^\top (t) \rmH(t) \mathbf{c}(t) -  \mathbf{c}(t) \mathbf{g}(t)   \le -\lambda_0 \lambda \| \mathbf{c}(t)  \|^2_2, \label{eq:c_ex_dynamics}
\end{align}
which requires that $\|  \rmH(t) -  \rmH^\infty \|_2 \le \lambda_0 $. This leads to the following inequality:
\begin{align*}
  \|  \rmH(t) -  \rmH^\infty \|_2 & \le   \|  \rmH(t) -  \rmH^\infty \|_F \le \sum_{i,j} |\rmH_{ij}(t) -  \rmH_{ij}^\infty | \\
   & \le \frac{ |\mathcal{E}_{tr}|^2 \beta^2 L R}{m} \le  \lambda_0. 
\end{align*}
which leads to the conclusion for $R$ as follows:
\begin{align}
    R \le \frac{\lambda_0 m}{ |\mathcal{E}_{tr}|^2 \beta^2 L}. \label{eq:R}
\end{align}
Besides, we have the inequality that
\begin{align}
    \|\mathbf{g} \|_2 \le \frac{\sqrt{|\mathcal{E}_{tr} |} \beta^2 L R}{m} \le \lambda \lambda_0  \| \mathbf{c}(0) \|_2.
\end{align}
Combined with Equation (\ref{eq:R}), we obtain the condition for $\lambda$ as follows:
\begin{align} \label{eq:lambda}
    \lambda \ge 1/(\sigma_0 \sqrt{|\mathcal{E}_{tr}|}^3 ).
\end{align}
By inequality (\ref{eq:c_ex_dynamics}), taking the convergence time $T = \Omega \left( \frac{ \log(\sigma_0/\epsilon)}{ \eta  \lambda \lambda_0 } \right) $ we have that:
\begin{align*}
    \| \mathbf{c}(T) \|_2 \le \epsilon.
\end{align*}

According to the gradient descent for IRMV1 objective function, the evolution of coefficients can be expressed as:
	\begin{align*}
		\gamma_{j,r}^{inv}(t+1) & =  \gamma^{inv}_{j,r}(t) - \frac{\eta}{m} \cdot   \sum_{e \in \mathcal{E}_{\mathrm{tr}}}   (1+ 2\lambda C_\irml^e(t)) \frac{1}{n_e}\sum_{i=1}^{n_e}   \ell'_i (t) \psi_i'(t)   \textrm{Rad}(\alpha)_i         
        \\&\ \ 
        -\frac{\eta \lambda}{m}  \cdot  \sum_{e \in \mathcal{E}_{\mathrm{tr}}} 2 C_\irml^e  \frac{1}{n_e}\sum_{i=1}^{n_e} \ell''_{i} \psi_i'(t)   \hat{y}_i^e   \cdot   y_i^e   \textrm{Rad} (\alpha)_i, \\
		\gamma_{j,r}^{spu}(t+1) & =  \gamma^{spu}_{j,r}(t) - \frac{\eta}{m} \cdot   \sum_{e \in \mathcal{E}_{\mathrm{tr}}}   (1+ 2\lambda C_\irml^e(t)) \frac{1}{n_e}\sum_{i=1}^{n_e} \ell'_i(t)  \psi_i'(t)    \textrm{Rad}(\beta_e)   
        \\&\ \ 
        -\frac{\eta \lambda}{m}  \cdot  \sum_{e \in \mathcal{E}_{\mathrm{tr}}} 2 C_\irml^e \frac{1}{n_e} \sum_{i=1}^{n_e}  \psi_i'(t)  \ell''_{i} \hat{y}_i^e   \cdot y_i^e  \textrm{Rad}(\beta_e)_i.
	\end{align*}

Then we have, 
\begin{align*}
 |\gamma_{j,r}^{inv}(t+1)| & \le  |\gamma_{j,r}^{inv}(t)| + \left|\frac{\eta}{m} \cdot   \sum_{e \in \mathcal{E}_{\mathrm{tr}}}   (1+ 2\lambda C_\irml^e(t)) \frac{1}{n_e}\sum_{i=1}^{n_e}  \psi_i'(t)   \ell'_i (t)   \textrm{Rad}(\alpha)_i  \right|    \\
  & + \left| \frac{\eta \lambda}{m}  \cdot  \sum_{e \in \mathcal{E}_{\mathrm{tr}}} 2 C_\irml^e  \frac{1}{n_e}\sum_{i=1}^{n_e} \ell''_{i}  \psi_i'(t)   \hat{y}_i^e   \cdot   y_i^e   \textrm{Rad} (\alpha)_i \right| \\
   & \le  |\gamma^{inv}_{j,r}(t)| + C \frac{\eta \sqrt{|\mathcal{E}_{tr}|} \lambda \beta R^2 L }{m} \| \rvc(t)\|_2.  
\end{align*}
Similarly, we have,
\begin{align*}
 |\gamma_{j,r}^{spu}(t+1)| & \le  |\gamma^{spu}_{j,r,2}(t)| +  C \frac{\eta \sqrt{|\mathcal{E}_{tr}|} \lambda \beta R^2 L }{m} \| \rvc(t)\|_2. 
\end{align*}

At the time step $T$, the feature learning satisfies that:
\begin{align*}
    \gamma^{inv}_{j,r}(T) \le C \frac{\eta \sqrt{|\mathcal{E}_{tr}|} \lambda \beta R^2 L T}{m} \|\mathbf{c}(0) \|_2;   \quad \gamma^{spu}_{j,r}(T) \le C \frac{\eta \sqrt{|\mathcal{E}_{tr}|} \lambda \beta R^2 L T}{m} \|\mathbf{c}(0) \|_2.
\end{align*}

To make sure that $\gamma^{inv}_{j,r}(T) = o(1)$ and $\gamma^{spu}_{j,r}(T) = o(1)$, we need the following condition:
\begin{align}
 C \frac{\eta \sqrt{|\mathcal{E}_{tr}|} \lambda \beta R^2 L T}{m} \|\mathbf{c}(0) \|_2 \le d^{-\frac{1}{2}},
\end{align}
combined with inequality (\ref{eq:R}) and inequality (\ref{eq:lambda}), we have:
\begin{align*}
    \sigma_0 \le \frac{ |\mathcal{E}_{tr} |^{7/2} \beta^3 L }{ d^{1/2}m^2\lambda^2_0 \log(1/\epsilon)}.
\end{align*}
 
\end{proof}

\subsection{Proof for Proposition~\ref{pro:irmv1_with_erm_feat}}\label{sec:proof_irmv1_with_erm_feat}
\begin{proposition} [Restatement of Proposition \ref{pro:irmv1_with_erm_feat}]\label{thm:irmv1_with_erm_feat_appdx}
	Consider training the CNN model with the same data as Theorem~\ref{thm:erm_learn_feat},
	suppose that $\psi(x) = x$, $ \gamma_{j,r,1}(t_1 ) =  \gamma_{j,r,1}(t_1-1)$, and $ \gamma_{j,r,2}(t_1 ) =  \gamma_{j,r,2}(t_1-1)$ at the end of ERM pre-train $t_1$ and $\mathcal{E}_{tr} = \{(0.25, 0.1), (0.25, 0.2)\}$. Suppose that $\delta>0$, and $ n > C \log(1/\delta)$, with $C$ being a positive constant, then with a high probability at least $1-\delta$, we have
	\begin{itemize}
		\item $ \sum_e C^e_\irml(t_1) =0 $.
		\item  $\gamma_{j,r,1}(t_1 + 1) >  \gamma_{j,r,1}(t_1) $.
		\item  $\gamma_{j,r,2}(t_1 + 1) <  \gamma_{j,r,2}(t_1) $.
	\end{itemize}
\end{proposition}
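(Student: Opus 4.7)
The plan is to translate the ERM stationarity hypothesis into scalar conditions on the dynamical variables introduced in the proof of Theorem~\ref{thm:erm_learn_feat_appdx}, use those to kill the ERM piece of the IRMv1 gradient at time $t_1$, and then analyze the surviving penalty piece via an inter-environment cancellation supplied by item~(1). Because $\psi(x)=x$, the logit decomposes as $y_i^e\hat y_i^e = A\,\Rad(\alpha)_i + B\,\Rad(\beta_e)_i + \mathbb{Q}_i^e$ with $A=2\eta\sum_k \Delta_\Lambda^k$ and $B=2\eta\sum_k \Delta_\Gamma^k$ (notation as in Appendix~\ref{sec:proof_erm_feat}). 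The assumption that $\gamma^{inv}_{j,r}$ and $\gamma^{spu}_{j,r}$ are unchanged between iterations $t_1-1$ and $t_1$ is exactly $\Delta_\Lambda^{t_1-1}=0$ and $\Delta_\Gamma^{t_1-1}=0$, i.e.,
\[
\sum_{e\in\mathcal{E}_{tr}}\tfrac{1}{n_e}\sum_{i=1}^{n_e}\ell'(y_i^e\hat y_i^e)\,\Rad(\alpha)_i = 0,\qquad
\sum_{e\in\mathcal{E}_{tr}}\tfrac{1}{n_e}\sum_{i=1}^{n_e}\ell'(y_i^e\hat y_i^e)\,\Rad(\beta_e)_i = 0.
\]
These two algebraic identities will be exploited repeatedly.

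\textbf{Item~(1).} Substituting the logit decomposition into $C^e_\irml=\tfrac{1}{n_e}\sum_i \ell'(z_i^e)z_i^e$ and summing over environments,
\[
\sum_{e} C^e_\irml(t_1) = A\sum_e\tfrac{1}{n_e}\sum_i \ell'_i\,\Rad(\alpha)_i + B\sum_e\tfrac{1}{n_e}\sum_i \ell'_i\,\Rad(\beta_e)_i + \sum_e\tfrac{1}{n_e}\sum_i \ell'_i\,\mathbb{Q}_i^e.
\]
The first two sums vanish identically by stationarity. The residual is controlled with probability at least $1-\delta$ by combining Lemma~\ref{lem:erm_concent_Q} (which yields $|\mathbb{Q}_i^e|=O(\underline n^{-1})$) with a Hoeffding bound for the centered average $\tfrac{1}{n_e}\sum_i \ell'_i\,\mathbb{Q}_i^e$ after conditioning on $\rmW(t_1)$; the chosen $\sigma_0^2,\sigma_p^2$ and the assumption $n>C\log(1/\delta)$ make this $o(1)$, yielding the stated equality $\sum_e C^e_\irml(t_1)=0$ at the approximation level used throughout the paper.

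\textbf{Items~(2) and~(3).} Projecting the IRMv1 gradient derived in the proof of Theorem~\ref{thm:irmv1_appdx} onto $j\rvv_1$, and using $\psi(x)=x$, the update at $t_1$ collapses to
\[
\gamma^{inv}_{j,r}(t_1+1)-\gamma^{inv}_{j,r}(t_1) = -\tfrac{2\eta\lambda}{m}\sum_{e} C^e_\irml(t_1)\cdot b^\alpha_e,\qquad b^\alpha_e \triangleq \tfrac{1}{n_e}\sum_i\bigl(\ell'_i + \ell''_i\,\hat y_i^e y_i^e\bigr)\,\Rad(\alpha)_i,
\]
because the pure-ERM piece $\eta\Delta_\Lambda^{t_1-1}$ is zero, with an analogous formula replacing $\Rad(\alpha)$ by $\Rad(\beta_e)$ for $\gamma^{spu}$. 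Using item~(1) in the two-environment case we get $C^{e_1}_\irml=-C^{e_2}_\irml$, so the sum equals $C^{e_1}_\irml\bigl(b^\alpha_{e_1}-b^\alpha_{e_2}\bigr)$. Because the environments share $\alpha$ and differ only through $\beta_e$, a Taylor expansion of $\ell'(\cdot)$ and $\ell''(\cdot)$ around the shared invariant-driven point, paired with the concentration of $\Rad(\beta_e)$ counts provided by Lemma~\ref{lem:erm_concent_C}, shows that $b^\alpha_{e_1}-b^\alpha_{e_2}$ is proportional to $(\beta_2-\beta_1)\cdot B\cdot\mathbb{E}[\ell''(z)]$ at leading order, while the analogous $\beta$-bracket acquires an extra $B\cdot\Rad(\beta_e)^2=B$ contribution of the opposite sign. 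Checking the sign of $C^{e_1}_\irml$ inherited from the dynamics of Theorem~\ref{thm:erm_learn_feat_appdx} then gives $\gamma^{inv}_{j,r}(t_1+1)>\gamma^{inv}_{j,r}(t_1)$ and $\gamma^{spu}_{j,r}(t_1+1)<\gamma^{spu}_{j,r}(t_1)$.

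\textbf{Main obstacle.} The delicate part is the sign analysis in items~(2) and~(3). After ERM has equalized all first-order moments, the only remaining driving signal is a second-order effect proportional to the inter-environment gap $\beta_2-\beta_1$, so one must Taylor-expand $\ell'$ and $\ell''$ around the common invariant-logit and pair every remainder against the concentration bounds from Lemmas~\ref{lem:erm_concent_Q}--\ref{lem:erm_concent_C} with enough care that the structural $(\beta_2-\beta_1)$-term dominates the error terms. Pinning down the actual sign of $C^{e_1}_\irml(t_1)$ (not merely its magnitude) also requires a little extra bookkeeping of the ERM dynamics beyond what appears in Theorem~\ref{thm:erm_learn_feat_appdx}; this is where I expect the bulk of the technical work to lie.
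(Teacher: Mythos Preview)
Your argument for item~(1) matches the paper's: decompose $C^e_\irml=\gamma_1 A^e_1+\gamma_2 B^e_1$ (plus the $\mathbb{Q}$ residual), sum over $e$, and invoke stationarity $\sum_e A^e_1=\sum_e B^e_1=0$.

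For items~(2) and~(3), however, you are making the problem harder than it needs to be. The statement fixes $\mathcal{E}_{tr}=\{(0.25,0.1),(0.25,0.2)\}$, and the paper exploits this directly: in the infinite-sample limit the two stationarity equations $\sum_e A^e_1=0$ and $\sum_e B^e_1=0$ become explicit algebraic equations in $e^{\gamma_1\pm\gamma_2}$, which are solved in closed form for $\gamma_1^\infty,\gamma_2^\infty$ (the paper writes $\gamma_1^\infty=\tfrac12\log(G_m G_b)$, $\gamma_2^\infty=\tfrac12\log(G_m/G_b)$ with $G_m,G_b$ explicit in $\alpha,\beta_1,\beta_2$). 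One then plugs these values, together with $\alpha=0.25,\beta_1=0.1,\beta_2=0.2$, into the reduced update $\gamma_{j,r,1}(t_1+1)-\gamma_{j,r,1}(t_1)=-\tfrac{2\eta\lambda}{m}\sum_e C^e_\irml(A^e_1+A^e_2)$ and verifies the sign by direct numerical evaluation; the same for $\gamma_{j,r,2}$. Hoeffding then transfers the conclusion from the $n\to\infty$ limit to finite $n>C\log(1/\delta)$.

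Your proposed Taylor expansion in $(\beta_2-\beta_1)$ is not wrong in spirit, but it trades a one-line numerical check for a delicate perturbative sign argument that you yourself flag as the main obstacle. In particular, the claim that $b^\alpha_{e_1}-b^\alpha_{e_2}$ is ``proportional to $(\beta_2-\beta_1)\cdot B\cdot\mathbb{E}[\ell''(z)]$ at leading order'' still leaves the sign of the proportionality constant and the sign of $C^{e_1}_\irml$ undetermined without a computation equivalent to the paper's. Since the parameters are given, just solve the stationarity system for $\gamma_1,\gamma_2$ and evaluate.
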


\begin{proof} [Proof of Proposition \ref{thm:irmv1_with_erm_feat_appdx}] According to the gradient descent for IRMV1 objective function, the evolution of coefficients can be expressed as:
	\begin{align*}
		\gamma_{j,r,1}(t+1) & =  \gamma_{j,r,1}(t) - \frac{\eta}{m} \cdot   \sum_{e \in \mathcal{E}_{\mathrm{tr}}}   (1+ 2\lambda C_\irml^e(t)) \frac{1}{n_e}\sum_{i=1}^{n_e}   \ell'_i (t)   \textrm{Rad}(\alpha)_i          
        \\&\ \ 
        -\frac{\eta \lambda}{m}  \cdot  \sum_{e \in \mathcal{E}_{\mathrm{tr}}} 2 C_\irml^e  \frac{1}{n_e}\sum_{i=1}^{n_e} \ell''_{i}  \hat{y}_i^e   \cdot   y_i^e   \textrm{Rad} (\alpha)_i, \\
		\gamma_{j,r,2}(t+1) & =  \gamma_{j,r,2}(t) - \frac{\eta}{m} \cdot   \sum_{e \in \mathcal{E}_{\mathrm{tr}}}   (1+ 2\lambda C_\irml^e(t)) \frac{1}{n_e}\sum_{i=1}^{n_e} \ell'_i(t)    \textrm{Rad}(\beta_e)   
        \\&\ \ 
        -\frac{\eta \lambda}{m}  \cdot  \sum_{e \in \mathcal{E}_{\mathrm{tr}}} 2 C_\irml^e \frac{1}{n_e} \sum_{i=1}^{n_e} \ell''_{i} \hat{y}_i^e   \cdot y_i^e  \textrm{Rad}(\beta_e)_i,
	\end{align*}
	where $\ell''( y_i^e \cdot f(\rmW, \rvx^e_i)) = \frac{ \exp(-y_i^e \cdot f(\rmW, \rvx_i))}{(1 + \exp(-y_i^e \cdot f(\rmW, \rvx_i)))^2} $. 
 
 To simplify the notation, we further define \[A_1^e = \frac{1}{n_e} \sum_{i=1}^{n_e} \ell'_i \mathrm{Rad}(\alpha)_i \] and \[A_2^e = \frac{1}{n_e} \sum_{i=1}^{n_e} \ell''_i \hat{y}_i^e y_i^e \mathrm{Rad}(\alpha)_i \]. Similarly, we define \[B_1^e = \frac{1}{n_e} \sum_{i=1}^{n_e} \ell'_i \mathrm{Rad}(\beta_e)_i \] and \[B_2^e = \frac{1}{n_e} \sum_{i=1}^{n_e} \ell''_i \hat{y}_i^e y_i^e \mathrm{Rad}(\beta_e)_i. \]
 
 In the limit of $n \rightarrow \infty$, we have:
	\begin{align*}
		\lim_{n \rightarrow \infty}	A^1_1(t_1) & = -1/(1+e^{(\gamma_1 + \gamma_2)})(1- \alpha)(1-\beta_1) - 1/(1+e^{\gamma_1 - \gamma_2})(1- \alpha)\beta_1    \\
	 & \quad + 1/(1+e^{\gamma_2 - \gamma_1}) \alpha(1-\beta_1) + 1/(1+e^{-\gamma_1 - \gamma_2}) \alpha \beta_1,          \\
	\lim_{n \rightarrow \infty}	A^2_1(t_1)  & =  -1/(1+ e^{\gamma_1 + \gamma_2})(1- \alpha)(1-\beta_2) -  1/(1+ e^{\gamma_1 - \gamma_2})(1- \alpha)\beta_2 \\
	 & \quad + 1/(1+e^{-\gamma_1 + \gamma_2} )\alpha(1-\beta_2) +  1/(1+e^{-\gamma_1 - \gamma_2}) \alpha \beta_2,         \\
	\lim_{n \rightarrow \infty}	B^1_1(t_1)  & = -1/(1+ e^{\gamma_1 + \gamma_2})(1- \alpha)(1-\beta_1) + 1/(1+e^{\gamma_1 - \gamma_2})(1- \alpha)\beta_1    \\
	   & \quad - 1/(1+e^{-\gamma_1 + \gamma_2}) \alpha(1-\beta_1) + 1/(1+ e^{-\gamma_1 - \gamma_2}) \alpha \beta_1,          \\
	\lim_{n \rightarrow \infty}	B^2_1(t_1)    & = -1/(1+e^{\gamma_1 + \gamma_2})(1- \alpha)(1-\beta_2) + 1/(1+e^{\gamma_1 - \gamma_2})(1- \alpha)\beta_2     \\
	 & \quad - 1/(1+e^{-\gamma_1 + \gamma_2}) \alpha(1-\beta_2) + 1/(1+e^{-\gamma_1 - \gamma_2}) \alpha \beta_2.
	\end{align*}
and,
	\begin{align*}
	\lim_{n \rightarrow \infty}	A^1_2(t_1) & = e^{\gamma_1 + \gamma_2}/(1+e^{\gamma_1 + \gamma_2})^2(1- \alpha)(1-\beta_1)(\gamma_1+\gamma_2) + e^{\gamma_1 - \gamma_2}/(1+e^{\gamma_1 - \gamma_2})^2(1- \alpha)\beta_1(\gamma_1-\gamma_2)  \\
		           & \quad + e^{-\gamma_1 + \gamma_2}/(1+e^{-\gamma_1 + \gamma_2})^2 \alpha(1-\beta_1)(\gamma_1 - \gamma_2) + e^{-\gamma_1 - \gamma_2}/(1+e^{-\gamma_1 - \gamma_2})^2 \alpha \beta_1(\gamma_1 + \gamma_2),  \\
	\lim_{n \rightarrow \infty}	A^2_2(t_1) & = e^{\gamma_1 + \gamma_2}/(1+e^{\gamma_1 + \gamma_2})^2(1- \alpha)(1-\beta_2)(\gamma_1+\gamma_2) + e^{\gamma_1 - \gamma_2}/(1+e^{\gamma_1 - \gamma_2})^2(1- \alpha)\beta_2(\gamma_1-\gamma_2)  \\
		           & \quad + e^{-\gamma_1 + \gamma_2}/(1+e^{-\gamma_1 + \gamma_2})^2 \alpha(1-\beta_2)(\gamma_1 - \gamma_2) + e^{-\gamma_1 - \gamma_2}/(1+e^{-\gamma_1 - \gamma_2})^2 \alpha \beta_2(\gamma_1 + \gamma_2),  \\
	\lim_{n \rightarrow \infty}	B^1_2(t_1) & = e^{\gamma_1 + \gamma_2}/(1+e^{\gamma_1 + \gamma_2})^2(1- \alpha)(1-\beta_1)(\gamma_1+\gamma_2) + e^{\gamma_1 - \gamma_2}/(1+e^{\gamma_1 - \gamma_2})^2(1- \alpha)\beta_1(-\gamma_1+\gamma_2) \\
		           & \quad +  e^{-\gamma_1 + \gamma_2}/(1+e^{-\gamma_1 + \gamma_2})^2 \alpha(1-\beta_1)(-\gamma_1 + \gamma_2) + e^{-\gamma_1 - \gamma_2}/(1+e^{-\gamma_1 - \gamma_2})^2\alpha \beta_1(\gamma_1 + \gamma_2), \\
	\lim_{n \rightarrow \infty}	B^2_2(t_1) & =e^{\gamma_1 + \gamma_2}/(1+e^{\gamma_1 + \gamma_2})^2(1- \alpha)(1-\beta_2)(\gamma_1+\gamma_2) +  e^{\gamma_1 - \gamma_2}/(1+e^{\gamma_1 - \gamma_2})^2(1- \alpha)\beta_2(-\gamma_1+\gamma_2)  \\
		           & \quad +  e^{-\gamma_1 + \gamma_2}/(1+e^{-\gamma_1 + \gamma_2})^2 \alpha(1-\beta_2)(-\gamma_1 + \gamma_2) + e^{-\gamma_1 - \gamma_2}/(1+e^{-\gamma_1 - \gamma_2})^2 \alpha \beta_2(\gamma_1 + \gamma_2).
	\end{align*}

Because $\mathrm{Rad}(\alpha)_i$ and $\mathrm{Rad}(\beta)_i$ are random variables, applying Hoeffding's inequality, we have with probability at least $1 - \delta$,
\begin{align*}
   \left| A^1_1(t_1) - \lim_{n \rightarrow \infty} A^1_1(t_1) \right| \le \sqrt{\frac{4 \log(1/\delta)}{n}}. 
\end{align*}
Similarly, we can apply the concentration bound to other quantities and obtain the same bound.
 
	By the assumption that $ \gamma_{j,r,1}(t_1 ) =  \gamma_{j,r,1}(t_1-1)$ and $ \gamma_{j,r,2}(t_1 ) =  \gamma_{j,r,2}(t_1-1)$, we have that $ \sum_e A^e_1 (t_1) = \sum_e B^e_1 (t_1) = 0  $:
	\begin{align*}
		\lim_{n \rightarrow \infty} (A_1^1(t_1) + A_1^2(t_1)) & =  -1/(1+e^{\gamma_1 + \gamma_2})(1- \alpha)(2-\beta_1-\beta_2)  - 1/(1+e^{\gamma_1 - \gamma_2})(1- \alpha)(\beta_1+\beta_2) \\
		              & \quad + 1/(1+e^{-\gamma_1 + \gamma_2})\alpha(2-\beta_1-\beta_2) + 1/(1+e^{-\gamma_1 - \gamma_2})\alpha(\beta_1+\beta_2) = 0        \\
			\lim_{n \rightarrow \infty} (B_1^1(t_1) + B_1^2(t_1))& =   -1/(1+e^{\gamma_1 + \gamma_2})(1- \alpha)(2-\beta_1-\beta_2)  + 1/(1+e^{\gamma_1 - \gamma_2})(1- \alpha)(\beta_1+\beta_2) \\
		              & \quad + 1/(1+e^{-\gamma_1 + \gamma_2})\alpha(2-\beta_1-\beta_2) +1/ (1+e^{-\gamma_1 - \gamma_2})\alpha(\beta_1+\beta_2) = 0
	\end{align*}

	Solving the above equations, we have,
	\begin{align*}
		\gamma^\infty_1(t_1) = \frac{1}{2} \log(G_m  G_b ) \quad \gamma^\infty_2(t_1) = \frac{1}{2} \log(G_m /G_b )
	\end{align*}
 where we denote $\gamma^\infty_1(t_1) \triangleq  
 \lim_{n \rightarrow \infty} \gamma_1(t_1) $  and $\gamma^\infty_2(t_1) \triangleq  
 \lim_{n \rightarrow \infty} \gamma_1(t_2) $, $G_m  = ((1-A) + \sqrt{(A-1)^2+4A)}/(2A)$ and $G_b  = ((1-B) + \sqrt{(B-1)^2+4B)}/(2B)$, with $A = \alpha(\beta_1+\beta_2)/((1-\alpha)(2-\beta_1-\beta_2))$ and
$B = \alpha(2- \beta_1-\beta_2)/((1-\alpha)*(\beta_1+ \beta_2))$.

By the convexity of function $f(x) = e^x$, with a constant $C$, we have:
\begin{align*}
   \left| \gamma_1 - \gamma^\infty_1  \right| <  \left| e^{\gamma_1} - e^{\gamma^\infty_1}  \right| \le  C \left| 1/(1+e^{\gamma_1}) - 1/(1+e^{\gamma^\infty_1})  \right| \le \sqrt{\frac{4 \log(1/\delta)}{n}}, \\
    \left| \gamma_2 - \gamma^\infty_2  \right| <  \left| e^{\gamma_2} - e^{\gamma^\infty_2}  \right| \le  C \left| 1/(1+e^{\gamma_2}) - 1/(1+e^{\gamma^\infty_2})  \right| \le \sqrt{\frac{4 \log(1/\delta)}{n}}.
\end{align*}

Then we know that,
	\begin{align*}
		C_\irml^1 = \frac{1}{n_1} \sum_{i=1}^{n_1} \ell'_i \hat{y}_i^1 y_i^1 = \gamma_1 A_1^1 + \gamma_2 B^1_1 \\
		C_\irml^2 = \frac{1}{n_2} \sum_{i=1}^{n_2} \ell'_i \hat{y}_i^2 y_i^2 = \gamma_1 A_1^2 + \gamma_2 B^2_1
	\end{align*}
	Therefore, we have that:
	\begin{align*}
		C_\irml^1 + C_\irml^2 =  0
	\end{align*}
Then the evolution of coefficients reduces to
	\begin{align*}
		\gamma_{j,r,1}(t+1) & =  \gamma_{j,r,1}(t) - \frac{\eta}{m} \cdot   \sum_{e \in \mathcal{E}_{\mathrm{tr}}}   (1+ 2\lambda C_\irml^e(t)) A^e_1(t)    -\frac{\eta \lambda}{m}  \cdot  \sum_{e \in \mathcal{E}_{\mathrm{tr}}} 2 C_\irml^e  A^e_2(t) \\
		\gamma_{j,r,2}(t+1) & =  \gamma_{j,r,2}(t) - \frac{\eta}{m} \cdot   \sum_{e \in \mathcal{E}_{\mathrm{tr}}}   (1+ 2\lambda C_\irml^e(t)) B^e_1(t)   -\frac{\eta \lambda}{m}  \cdot  \sum_{e \in \mathcal{E}_{\mathrm{tr}}} 2 C_\irml^e B^e_2(t)
	\end{align*}
Taking the solution of $\gamma_{j,r,1}(t_1)$, $\gamma_{j,r,2}(t_1)$ and value of $\alpha, \beta_1, \beta_2$, we arrive at the conclusion that with a high a probability at least $1- \delta$ and $n > C_1 \log(1/\delta)$ with $C_1$ being a positive constant, we have:
\begin{align*}
    \gamma_{j,r,1}(t_1 + 1) >  \gamma_{j,r,1}(t_1), \\ \gamma_{j,r,2}(t_1 + 1) <  \gamma_{j,r,2}(t_1). 
\end{align*}

\end{proof}

\subsection{Proof for Corollary~\ref{cor:irmv1_with_bad_feat}}\label{sec:proof_irmv1_with_bad_feat}

\begin{corollary} [Restatement of Corollary~\ref{cor:irmv1_with_bad_feat}]
	\label{cor:irmv1_with_bad_feat_appdix}
	Consider training the CNN model with the data generated from Def.~\ref{def:two_bit},
	suppose that $\psi(x) = x$, $ \gamma_{j,r,1}(t_1 ) =  o(1) $, and $ \gamma_{j,r,2}(t_1 ) =  \Theta(1)$ at the end of ERM pre-train $t_1$ and $\mathcal{E}_{tr} = \{(0.25, 0.1), (0.25, 0.2)\}$. Suppose that $\delta>0$, and $ n > C \log(1/\delta)$, with $C$ being a positive constant, then with a high probability at least $1-\delta$, we have
	\begin{align*}
\gamma_{j,r,1}(t_1 + 1) <  \gamma_{j,r,1}(t_1).
		\end{align*}
\end{corollary}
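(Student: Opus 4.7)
The plan is to leverage the one-step \irml update identity derived in the proof of Proposition~\ref{pro:irmv1_with_erm_feat} and re-evaluate its sign under the initialization specified by the corollary. That identity reads
\begin{equation*}
\gamma_{j,r,1}(t_1+1)-\gamma_{j,r,1}(t_1) = -\frac{\eta}{m}\Bigl[\sum_{e\in\envtrain} A_1^e(t_1) + 2\lambda\sum_{e\in\envtrain} C_\irml^e(t_1)\bigl(A_1^e(t_1)+A_2^e(t_1)\bigr)\Bigr],
\end{equation*}
with $C_\irml^e=\gamma_{j,r,1}A_1^e+\gamma_{j,r,2}B_1^e$, and $A_1^e,A_2^e,B_1^e$ admitting closed-form population limits in $(\alpha,\beta_e,\gamma_{j,r,1},\gamma_{j,r,2})$ already derived in that proof. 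Crucially, this identity does not rely on being at an \erm fixed point, so it applies verbatim in the present setting.

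I would first reuse the Hoeffding concentration step from Proposition~\ref{pro:irmv1_with_erm_feat} to replace every finite-sample quantity by its $n\to\infty$ limit with an $O(\sqrt{\log(1/\delta)/n})$ slack, absorbed by the hypothesis $n>C\log(1/\delta)$ on a $1-\delta$ event. Substituting $\gamma_{j,r,1}(t_1)=o(1)$, $\gamma_{j,r,2}(t_1)=\Theta(1)$, and $(\alpha,\beta_1,\beta_2)=(0.25,0.1,0.2)$, and first-order-expanding around $\gamma_{j,r,1}=0$, collapses every population quantity to an explicit function of $\gamma_{j,r,2}(t_1)$ alone, up to an $o(1)$ perturbation. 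Since $\lambda\ge 1/(\sigma_0|\envtrain|^{3/2})\gg 1$ by Theorem~\ref{thm:irmv1_not_learn}, the $\lambda$-weighted term dominates the bracket and the sign of the update is controlled by
\begin{equation*}
\Sigma(t_1)\;\triangleq\;\sum_{e\in\envtrain} C_\irml^e(t_1)\bigl(A_1^e(t_1)+A_2^e(t_1)\bigr).
\end{equation*}

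The remaining task is to show $\Sigma(t_1)>0$, which forces the bracket to be strictly positive and the update to be negative. At $\gamma_{j,r,1}(t_1)=0$ one has $C_\irml^e=\gamma_{j,r,2}(t_1)B_1^e$, and both $B_1^e$ and $A_1^e+A_2^e$ become explicit combinations of $\sigma(\pm\gamma_{j,r,2}(t_1))$. For $(\alpha,\beta_1,\beta_2)=(0.25,0.1,0.2)$ and for $\gamma_{j,r,2}(t_1)$ in the regime inherited from the early phase of \erm pre-training, where the spurious feature has been partially picked up but has not yet saturated, a direct evaluation shows that $B_1^e<0$ and $A_1^e+A_2^e<0$ in each environment, making each summand of $\Sigma(t_1)$ a product of two negatives scaled by the positive $\gamma_{j,r,2}(t_1)$, so $\Sigma(t_1)=\Omega(\gamma_{j,r,2}(t_1))>0$.

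The main obstacle is precisely this sign check. Proposition~\ref{pro:irmv1_with_erm_feat} enjoyed the \erm-stationarity identities $\sum_e A_1^e=\sum_e B_1^e=0$, which produced clean cancellations; here $\gamma_{j,r,1}(t_1)=0$ is \emph{not} an \erm fixed point, no such identities are available, and the two environments must be combined explicitly using the numerical values of $(\alpha,\beta_1,\beta_2)$. One must further confirm that $\gamma_{j,r,2}(t_1)$ stays below the threshold $\log((1-\beta_e)/\beta_e)$ at which $B_1^e$ flips sign; past this threshold, $\Sigma(t_1)$ would reverse sign and the dynamics would recover the enhancement behavior of Proposition~\ref{pro:irmv1_with_erm_feat} rather than the suppression claimed here, so the corollary implicitly restricts to the ``poorly pre-trained'' regime.
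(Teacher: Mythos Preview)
Your proposal is correct and follows essentially the same route as the paper: both start from the one-step \irml update identity for $\gamma_{j,r,1}$ inherited from Proposition~\ref{pro:irmv1_with_erm_feat}, pass to population limits via Hoeffding, specialize to $\gamma_{j,r,1}(t_1)=0$, and then plug in $(\alpha,\beta_1,\beta_2)=(0.25,0.1,0.2)$ to read off the sign. The paper's own proof is terser---it simply lists the closed-form expressions for $A_1^e,A_2^e,B_1^e,B_2^e$ at $\gamma_1=0$ and declares the conclusion ``by taking the values''---whereas you make explicit the mechanism (dominance of the $\lambda$-weighted term, the factorization $C_\irml^e=\gamma_{j,r,2}B_1^e$, and the per-environment sign pattern $B_1^e<0$, $A_1^e+A_2^e<0$). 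Your observation that the argument only goes through while $\gamma_{j,r,2}(t_1)<\log((1-\beta_e)/\beta_e)$, and that beyond this threshold $B_1^e$ flips sign and the dynamics revert to enhancement, is a genuine refinement: the paper's proof does not isolate or state this constraint, so your version is in fact more careful about the scope of the claim than the original.
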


\begin{proof} [Proof of Corollary \ref{cor:irmv1_with_bad_feat_appdix}]
    Recall that the feature learning update rule:
    \begin{align*}
		\gamma_{j,r,1}(t+1) & =  \gamma_{j,r,1}(t) - \frac{\eta}{m} \cdot   \sum_{e \in \mathcal{E}_{\mathrm{tr}}}   (1+ 2\lambda C_\irml^e(t)) \frac{1}{n_e}\sum_{i=1}^{n_e}   \ell'_i (t)   \textrm{Rad}(\alpha)_i          
        \\&\ \ 
        -\frac{\eta \lambda}{m}  \cdot  \sum_{e \in \mathcal{E}_{\mathrm{tr}}} 2 C_\irml^e  \frac{1}{n_e}\sum_{i=1}^{n_e} \ell''_{i}  \hat{y}_i^e   \cdot   y_i^e   \textrm{Rad} (\alpha)_i, \\
		\gamma_{j,r,2}(t+1) & =  \gamma_{j,r,2}(t) - \frac{\eta}{m} \cdot   \sum_{e \in \mathcal{E}_{\mathrm{tr}}}   (1+ 2\lambda C_\irml^e(t)) \frac{1}{n_e}\sum_{i=1}^{n_e} \ell'_i(t)    \textrm{Rad}(\beta_e)   \\&\ \ 
        -\frac{\eta \lambda}{m}  \cdot  \sum_{e \in \mathcal{E}_{\mathrm{tr}}} 2 C_\irml^e \frac{1}{n_e} \sum_{i=1}^{n_e} \ell''_{i} \hat{y}_i^e   \cdot y_i^e  \textrm{Rad}(\beta_e)_i,
	\end{align*}
 Taking the value of $\gamma_{j,r,1}(t_1)$, $\gamma_{j,r,2}(t_1)$ and, we can conclude that:
 \begin{align*}
  \lim_{n \rightarrow \infty}	A^1_1(t_1) & = -1/(1+e^{ \gamma_2 })(1- \alpha)(1-\beta_1) - 1/(1+e^{ - \gamma_2})(1- \alpha)\beta_1  + 
  \\&\ \ 
  1/(1+e^{\gamma_2}) \alpha(1-\beta_1) + 1/(1+e^{- \gamma_2}) \alpha \beta_1 \\
   & = 1/(1+e^{ \gamma_2})(2\alpha - 1)(1-\beta_1) + 1/(1+e^{ -\gamma_2})(2\alpha - 1)(\beta_1) \\
   & = (2\alpha-1) [ 1/(1+e^{ \gamma_2})(1-\beta_2) +1/(1+e^{ -\gamma_2})\beta_1 ) ] \\
    \lim_{n \rightarrow \infty}	A^2_1(t_1) 
   & = 1/(1+e^{ \gamma_2})(2\alpha - 1)(1-\beta_2) + 1/(1+e^{ -\gamma_2})(2\alpha - 1)(\beta_2) \\
   & = (2\alpha-1) [ 1/(1+e^{ \gamma_2})(1-\beta_2) +1/(1+e^{ -\gamma_2})\beta_2 ) ] \\
   \lim_{n \rightarrow \infty}	B^1_1(t_1) & = -1/(1+ e^{  \gamma_2})(1- \alpha)(1-\beta_1) + 1/(1+e^{  - \gamma_2})(1- \alpha)\beta_1     - 
   \\&\ \ 
   1/(1+e^{  \gamma_2}) \alpha(1-\beta_1) + 1/(1+ e^{- \gamma_2}) \alpha \beta_1  \\
    & = -1/(1+ e^{  \gamma_2})(1-\beta_1) + 1/(1+ e^{ - \gamma_2})\beta_1 \\
     \lim_{n \rightarrow \infty}	B^2_1(t_1) & = -1/(1+ e^{  \gamma_2})(1- \alpha)(1-\beta_2) + 1/(1+e^{  - \gamma_2})(1- \alpha)\beta_2     - 
     \\&\ \ 
     1/(1+e^{  \gamma_2}) \alpha(1-\beta_2) + 1/(1+ e^{- \gamma_2}) \alpha \beta_2  \\
    & = -1/(1+ e^{  \gamma_2})(1-\beta_2) + 1/(1+ e^{ - \gamma_2})\beta_2
 \end{align*}
On the other hand,
\begin{align*}
	\lim_{n \rightarrow \infty}	A^1_2(t_1) & = e^{  \gamma_2}/(1+e^{  \gamma_2})^2(1- \alpha)(1-\beta_1)( \gamma_2) + e^{  - \gamma_2}/(1+e^{  - \gamma_2})^2(1- \alpha)\beta_1( -\gamma_2)  \\
		           & \quad + e^{  + \gamma_2}/(1+e^{  \gamma_2})^2 \alpha(1-\beta_1)(  - \gamma_2) + e^{  - \gamma_2}/(1+e^{ - \gamma_2})^2 \alpha \beta_1(  \gamma_2)  \\
            & =  e^{  \gamma_2}/(1+e^{  \gamma_2})^2 (1-2\alpha)(1-\beta_1) + e^{ - \gamma_2}/(1+e^{ - \gamma_2})^2 (2\alpha-1) \beta_1   \gamma_2   \\ 
	\lim_{n \rightarrow \infty}	A^2_2(t_1) & = e^{ \gamma_2}/(1+e^{ \gamma_2})^2(1- \alpha)(1-\beta_2)( \gamma_2) + e^{  - \gamma_2}/(1+e^{  - \gamma_2})^2(1- \alpha)\beta_2( -\gamma_2)  \\
		           & \quad + e^{  \gamma_2}/(1+e^{    \gamma_2})^2 \alpha(1-\beta_2)(  - \gamma_2) + e^{  - \gamma_2}/(1+e^{  - \gamma_2})^2 \alpha \beta_2(  \gamma_2)  \\
              & =  e^{  \gamma_2}/(1+e^{  \gamma_2})^2 (1-2\alpha)(1-\beta_2) + e^{ - \gamma_2}/(1+e^{ - \gamma_2})^2 (2\alpha-1) \beta_2   \gamma_2   \\ 
	\lim_{n \rightarrow \infty}	B^1_2(t_1) & = e^{ \gamma_2}/(1+e^{  \gamma_2})^2(1- \alpha)(1-\beta_1)( \gamma_2) + e^{\  - \gamma_2}/(1+e^{  - \gamma_2})^2(1- \alpha)\beta_1( \gamma_2) \\
		           & \quad +  e^{  \gamma_2}/(1+e^{ \gamma_2})^2 \alpha(1-\beta_1)( \gamma_2) + e^{  - \gamma_2}/(1+e^{  - \gamma_2})^2\alpha \beta_1(  \gamma_2), \\
	\lim_{n \rightarrow \infty}	B^2_2(t_1) & =e^{  \gamma_2}/(1+e^{  \gamma_2})^2(1- \alpha)(1-\beta_2)( \gamma_2) +  e^{\  - \gamma_2}/(1+e^{  - \gamma_2})^2(1- \alpha)\beta_2(  \gamma_2)  \\
		           & \quad +  e^{  \gamma_2}/(1+e^{  \gamma_2})^2 \alpha(1-\beta_2)(  \gamma_2) + e^{  - \gamma_2}/(1+e^{  - \gamma_2})^2 \alpha \beta_2(  \gamma_2).
	\end{align*}

Finally, taking the value of environment of $(\alpha, \beta_1, \beta_2) = (0.25, 0.1, 0.2) $, we conclude that with a high a probability at least $1- \delta$ and $n > C_1 \log(1/\delta)$ with $C_1$ being a positive constant, we have:
 \begin{align*}
\gamma_{j,r,1}(t_1 + 1) <  \gamma_{j,r,1}(t_1).
\end{align*}
\end{proof}

\clearpage
\section{More Details about i\ours}
\label{sec:ifat_appdx}
As mentioned in Sec.~\ref{sec:fat_alg} that, when the featurizer is implemented as a deep net that have a massive amount of parameters,
backpropagating through Algorithm~\ref{alg:fat} can allocate too much memory for propagating with $2K-1$ batches of data.
It is common for many realistic benchmarks such as Camelyon17 and FMoW in wilds benchmark~\citep{wilds} that adopts a DenseNet~\citep{densenet} with $121$ layers as the featurizer.
To relieve the exceeding computational and memory overhead, we propose a lightweight version of \ours, denoted as \ours.
Instead of storing all of historical subsets and classifiers, \oursi iteratively use the augmentation and retention sets and historical classifier from only the last round.
In contrast, previous rich feature learning algorithm~\citep{rfc,diwa} incurs a high computational and memory overhead as the round grows.
For example, in RxRx1, we have to reduce the batch size of \rfc to allow the proceeding of rounds $\geq 3$.

We elaborate the detailed algorithmic description of \oursi in Algorithm~\ref{alg:ifat_alg}.
\begin{algorithm}[ht]
    \caption{\ours: \oursfull }
    \label{alg:ifat_alg}
    \begin{algorithmic}[1]
        \STATE \textbf{Input:} Training data $\train$; the maximum augmentation rounds $K$; predictor $f:=w\circ\varphi$; length of inner training epochs $e$; termination threshold $p$;
        \STATE Initialize groups $G^a\leftarrow {\train}, G^r\leftarrow \{\}$;
        \FOR{$k \in [1,\ldots, K]$}
        \STATE Randomly initialize $w_k$;
        \FOR{$j \in [1,\ldots, e]$}
        \STATE Obtain $\ell_\ours$ with $G$ via Eq.~\ref{eq:fat_upd};
        \STATE Update $w_k, \varphi$ with $\ell_\ours$;
        \ENDFOR
        \STATE \texttt{// Early Stop if $f_k=w_k\circ\varphi$ fails to find new features.}
        \IF{Training accuracy of $f_k$ is smaller than $p$}
        \STATE Set $K=k-1$ and terminate the loop;
        \ENDIF
        \IF{$k>1$}
        \STATE \texttt{// Hence it doesnot need to maintain all historical classifiers.}
        \STATE Update $w_k\leftarrow (w_{k-1},w_k)$;
        \ENDIF
        \STATE Split $\train$ into groups $\dataset_k^a,\dataset_k^r$ according to $f_k$;
        \STATE \texttt{// Hence it doesnot need to maintain all historical subsets.}
        \STATE Update groups $G^a \leftarrow \{\dataset_k^a\}, G^r\leftarrow \{\dataset_k^r\}$;
        \ENDFOR
        \STATE \textbf{return} $f=w\circ\varphi$;
    \end{algorithmic}
\end{algorithm}

\section{More Details about the Experiments}
\label{sec:exp_appdx}
In this section, we provide more details and the implementation, evaluation and hyperparameter
setups in complementary to the experiments in Sec.~\ref{sec:exp}.

\subsection{More details about \cmnist experiments}
\label{sec:cmnist_appdx}
\paragraph{Datasets.} In the controlled experiments with \cmnist,
we follow the evaluation settings as previous works~\citep{irmv1,rfc,pair}.
In addition to the original \cmnist with $\envtrain=\{(0.25,0.1),(0.25,0.2)\}$ (denoted as \cmnist-025) where spurious features are better correlated with labels,
we also incorporate the modified one (denoted as \cmnist-01) with $\envtrain=\{(0.1,0.2),(0.1,0.25)\}$ where invariant features are better correlated with labels,
since both cases can happen at real world.

\paragraph{Architecture and optimization.} To ensure a fair comparison, we use $4$-Layer MLP with a hidden dimension of $256$ as the backbone model for all methods,
where we take the first $3$ layers as the featurizer and the last layer as the classifier,
following the common practice~\citep{domainbed,wilds}.
For the optimization of the models,
we use the Adam~\cite{adam} optimizer with a learning rate of $1e-3$
and a weight decay of $1e-3$.
We report the mean and standard deviation of the performances of
different methods with each configuration of hyperparameters $10$ times
with the random seeds from $1$ to $10$.

\paragraph{Implementation of ERM-NF and OOD objectives.} For the common pre-training protocol with ERM, our implementation follows the previous works~\citep{rfc}.
Specifically, we first train the model
with $\{0,50,100,150,200,250\}$ epochs and then apply
the OOD regularization of various objectives with a penalty weight of $\{1e1,1e2,1e3,1e4,1e5\}$.
We adopt the implementations from~\citet{rfc} for various OOD objectives,
including \irml~\citep{irmv1},\vrex~\citep{vrex},IB-IRM~\citep{ib-irm},CLOvE~\citep{clove},IGA~\citep{iga} and Fishr~\citep{fishr}
Besides, we also incorporate the state-of-the-art OOD objective proposed by~\citet{pair} that
is able to resolve both \cmnist-025 and \cmnist-01.

\paragraph{Evaluation of feature learning methods.}
For the sake of fairness in comparison, by default, we train all feature learning methods
by the same number of epochs and rounds (if applicable).
For the implementation \rfc,
we strictly follow the recommended setups provided by~\citet{rfc},~\footnote{\url{https://github.com/TjuJianyu/RFC}}
where we train the model with \rfc by $2$ rounds with $50$ epochs for round $1$, $500$ epochs for round $2$, and $500$ epochs for the synthesize round in \cmnist-025.
While in \cmnist-01, round $1$ contains $150$ epochs, round $2$ contains $400$ epochs and the synthesize round contains $500$ epochs.
For the implementation of \ours, we train the model with $2$ rounds of \ours in \cmnist-025,
and $3$ rounds of \ours in \cmnist-01, where each round contains $150$ epochs.
While for the retain penalty, we find using a fixed number of $0.01$ already achieved
sufficiently good performance.
ERM only contains $1$ round, for which we train the model with $150$ epochs in \cmnist-025
as we empirically find more epochs will incur severe performance degeneration in \cmnist-025.
While in \cmnist-01, we train the model with ERM by $500$ epochs to match up the overall
training epochs of \ours and \rfc.
We provide a detailed distribution of the number of epochs in each round in Table~\ref{tab:cmnist_epoch_appdx}.
\begin{table}[ht]
    \caption{Number of epochs in each round of various feature learning algorithms.}
    \label{tab:cmnist_epoch_appdx}
    \begin{center}
        \begin{small}
            \begin{sc}
                \begin{tabular}{lcccc}
                    \toprule
                    CMNIST-025 & Round-1 & Round-2 & Round-3 & Syn. Round \\\midrule
                    ERM        & 150     & -       & -       & -          \\
                    \rfc       & 50      & 150     & -       & 500        \\
                    \ours      & 150     & 150     & -       & -          \\\midrule
                    CMNIST-01  & Round-1 & Round-2 & Round-3 & Syn. Round \\
                    \midrule
                    ERM        & 500     & -       & -       & -          \\
                    \rfc       & 150     & 400     & -       & 500        \\
                    \ours      & 150     & 150     & 150     & -          \\
                    \bottomrule
                \end{tabular}
            \end{sc}
        \end{small}
    \end{center}
\end{table}
It can be found that, although \rfc costs $2-3$ times of training epochs
more than ERM and \ours, \rfc does not necessarily find better feature representations for OOD training, as demonstrated in Table.~\ref{tab:sythetic}.
In contrast, \ours significantly and consistently learns richer features given both \cmnist-025 and \cmnist-01 than ERM,
which shows the superiority of \ours.

\paragraph{The termination check in \ours.}
A key difference between \ours and previous rich feature learning algorithms is that \ours is able to perform the automatic termination check and learn the desired features stably.
As elaborated in Sec.~\ref{sec:fat_alg}, \ours can terminate automatically by inspecting the retention accuracy.
To verify, we list the \ours performances in various subsets of \cmnist-025 and \cmnist-01 at different rounds.
We use a termination accuracy of $130\%$, which trades off the exploration (i.e., training accuracy as $80\%$) and the retention (i.e., retention accuracy as $50\%$) properly.
As shown in Table~\ref{tab:termin_check_appdx}, in \cmnist-025 (\cmnist-01), after \ours learns sufficiently good features at Round $2$ ($3$), respectively, it is not necessary to proceed with Round $3$ ($4$) as it will destroy the already learned features and lead to degenerated retention performance (i.e., the sum of training and retention accuracies is worse than $130\%$.

\begin{table}[ht]
    \caption{Performances in various sets at different \ours rounds.}
    \label{tab:termin_check_appdx}
    \begin{center}
        \begin{scriptsize}
            \begin{sc}
                \begin{tabular}{lcccc}
                    \toprule
                    \cmnist-025    & Round-1         & Round-2         & Round-3                           \\
                    \midrule
                    Training Acc.  & 85.08$\pm$ 0.14 & 71.87$\pm$ 0.96 & 84.93$\pm$ 1.26                   \\
                    Retention Acc. & -               & 88.11$\pm$ 4.28 & 43.82$\pm$ 0.59                   \\
                    OOD Acc.       & 11.08$\pm$ 0.30 & 70.64$\pm$ 0.62 & 10.07$\pm$ 0.26                   \\\midrule
                    \cmnist-01     & Round-1         & Round-2         & Round-3         & Round-4         \\\midrule
                    Training Acc.  & 88.63$\pm$ 0.15 & 74.25$\pm$ 1.23 & 86.07$\pm$ 0.36 & 77.29$\pm$ 0.24 \\
                    Retention Acc. & -               & 85.91$\pm$ 1.78 & 48.05$\pm$ 1.39 & 29.09$\pm$ 1.15 \\
                    OOD Acc.       & 73.50$\pm$ 0.41 & 17.32$\pm$ 2.69 & 85.40$\pm$ 0.54 & 12.48$\pm$ 2.85 \\
                    \bottomrule
                \end{tabular}
            \end{sc}
        \end{scriptsize}
    \end{center}
\end{table}

\subsection{More details about \wilds experiments}
\label{sec:wilds_appdx}

In this section, we provide more details about the \wilds datasets used in the experiments
as well as the evaluation setups.

\subsubsection{Dataset description.}
To evaluate the feature learning performance given data from realistic scenarios,
we select $6$ challenging datasets from \wilds~\citep{wilds} benchmark.
The datasets contain various realistic distribution shifts,
ranging from domain distribution shifts, subpopulation shifts and the their mixed.
A summary of the basic information and statistics of the selected \textsc{Wilds} datasets can be found in Table.~\ref{tab:wilds_summary_appdx}, Table.~\ref{tab:wilds_stat_appdx}, respectively.
In the following, we will give a brief introduction to each of the datasets. More details can be found in the \wilds paper~\citep{wilds}.
\begin{table}[ht!]
    \centering
    \caption{A summary of datasets information from \wilds.}\label{tab:wilds_summary_appdx}
    \scalebox{0.7}{
        \begin{tabular}{lllllc}
            \toprule
            \textbf{Dataset}       & \textbf{Data ($x$)} & \textbf{Class information}         & \textbf{Domains}        & \textbf{Metric}        & \textbf{Architecture} \\
            \midrule
            \textsc{Amazon}        & Product reviews     & Star ratings (5 classes)           & 7,676 reviewers         & 10-eth percentile acc. & DistillBERT           \\
            \textsc{Camelyon17}    & Tissue slides       & Tumor (2 classes)                  & 5 hospitals             & Avg. acc.              & DenseNet-121          \\
            \textsc{CivilComments} & Online comments     & Toxicity (2 classes)               & 8 demographic groups    & Wr. group acc.         & DistillBERT           \\
            \textsc{FMoW}          & Satellite images    & Land use (62 classes)              & 16 years x 5 regions    & Wr. group acc.         & DenseNet-121          \\
            \textsc{iWildCam}      & Photos              & Animal species (186 classes)       & 324 locations           & Macro F1               & ResNet-50             \\
            \textsc{RxRx1}         & Cell images         & Genetic treatments (1,139 classes) & 51 experimental batches & Avg. acc               & ResNet-50             \\
            \bottomrule
        \end{tabular}
    }
\end{table}

\begin{table}[ht!]
    \centering
    \caption{A summary of datasets statistics from \wilds.}\label{tab:wilds_stat_appdx}
    \scalebox{0.8}{
        \begin{tabular}{lcccccccc}
            \toprule
            \multirow{2}{*}{Dataset} & \multicolumn{3}{c}{$\#$ Examples} &         & \multicolumn{3}{c}{$\#$ Domains}                            \\
            \cmidrule{2-4}\cmidrule{6-8}
                                     & train                             & val     & test                             &  & train & val   & test  \\
            \midrule
            \textsc{Amazon}          & 1,000,124                         & 100,050 & 100,050                          &  & 5,008 & 1,334 & 1,334 \\
            \textsc{Camelyon17}      & 302,436                           & 34,904  & 85,054                           &  & 3     & 1     & 1     \\
            \textsc{CivilComments}   & 269,038                           & 45,180  & 133,782                          &  & -     & -     & -     \\
            \textsc{FMoW}            & 76,863                            & 19,915  & 22,108                           &  & 11    & 3     & 2     \\
            \textsc{iWildCam}        & 129,809                           & 14,961  & 42,791                           &  & 243   & 32    & 48    \\
            \textsc{RxRx1}           & 40,612                            & 9,854   & 34,432                           &  & 33    & 4     & 14    \\
            \bottomrule
        \end{tabular}}
\end{table}

\textbf{Amazon.}
We follow the \wilds splits and data processing pipeline for the Amazon dataset~\citep{amazon}.
It provides $1.4$ million comments collected from $7,676$ Amazon customers.
The task is to predict the score (1-5 stars) for each review.
The domains $d$ are defined according to the reviewer/customer who wrote the product reviews.
The evaluation metric used for the task is $10$th percentile of per-user accuracies
in the OOD test sets, and the backbone model is a DistilBert~\citep{distillbert}, following the \wilds protocol~\citep{wilds}.

\textbf{Camelyon17.}
We follow the \wilds splits and data processing pipeline for the Camelyon17 dataset~\citep{camelyon}. It provides $450,000$ lymph-node scans from $5$ hospitals. The task in Camelyon17 is to take the input of $96\times96$ medical images to predict whether there exists a tumor tissue in the image. The domains $d$ refers to the index of the hospital where the image was taken. The training data are sampled from the first $3$ hospitals where the OOD validation and test data are sampled from the $4$-th and $5$-th hospital, respectively.
We will use the average accuracy as the evaluation metric and a DenseNet-121~\citep{densenet} as the backbone for the featurizer.

\textbf{CivilComments.}
We follow the \wilds splits and data processing pipeline for the CivilComments dataset~\citep{civil}. It provides $450,000$ comments collected from online articles. The task is to classify whether an online comment text is toxic or non-toxic. The domains $d$ are defined according to the demographic features, including male, female, LGBTQ, Christian, Muslim, other religions, Black, and White. CivilComments is used to study the subpopulation shifts, here we will use the worst group/domain accuracy as the evaluation metric. As for the backbone of the featurizer, we will use a DistillBert~\citep{distillbert} following \wilds~\citep{wilds}.

\textbf{FMoW.}
We follow the \wilds splits and data processing pipeline for the FMoW dataset~\citep{fmow}. It provides satellite images from $16$ years and $5$ regions. The task in FMoW is to classify the images into $62$ classes of building or land use categories. The domain is split according to the year that the satellite image was collected, as well as the regions in the image which could be Africa, America, Asia, Europe or Oceania. Distribution shifts could happen across different years and regions.
The training data contains data collected before $2013$, while the validation data contains images collected within $2013$ to $2015$, and the test data contains images collected after $2015$. The evaluation metric for FMoW is the worst region accuracy and the backbone model for the featurizer is a DenseNet-121~\citep{densenet}.

\textbf{iWildCam.}
We follow the \wilds splits and data processing pipeline for the iWildCam dataset~\citep{iwildcam}. It is consist of  $203,029$ heat or motion-activated photos of animal specifies from 323 different camera traps across different countries around the world. The task of iWildCam is to classify the corresponding animal specifies in the photos. The domains is split according to the locations of the camera traps which could introduce the distribution shifts. We will use the Macro F1 as the evaluation metric and a ResNet-50~\citep{resnet} as the backbone for the featurizer.

\textbf{RxRx1.}
We follow the \wilds splits and data processing pipeline for the RxRx1 dataset~\citep{rxrx1}.
The input is an image of cells taken by fluorescent microscopy. The cells can be genetically perturbed by siRNA and the task of RxRx1 is to predict the class of the corresponding siRNA that have treated the cells.
There exists $1,139$ genetic treatments and the domain shifts are introduced by the experimental batches. We will use the average accuracy of the OOD experimental batches as the evaluation metric and a ResNet-50~\citep{resnet} as the backbone for the featurizer.

\subsubsection{Training and evaluation details.}
\label{sec:eval_detail_appdx}
We follow previous works to implement and evaluate different methods used in our experiments~\citep{wilds}.
The information of the referred paper and code is listed as in Table.~\ref{tab:referred_code_appdx}.

\begin{table}[ht]
    \centering
    \small
    \caption{The information of the referred paper and code.}
    \label{tab:referred_code_appdx}
    \resizebox{\textwidth}{!}{
        \begin{tabular}{l|cc}
            \toprule
            Paper                    & Commit                                            & Code                                                             \\
            \midrule
            \wilds\citep{wilds}      & v2.0.0                                            & \url{https://wilds.stanford.edu/}                                \\
            Fish~\citep{fish}        & \texttt{333efa24572d99da0a4107ab9cc4af93a915d2a9} & \url{https://github.com/YugeTen/fish}                            \\
            \rfc~\citep{rfc}         & \texttt{33b9ecad0ce8b3462793a2da7a9348d053c06ce0} & \url{https://github.com/TjuJianyu/RFC}                           \\
            DFR~\citep{dfr,dfrlearn} & \texttt{6d098440c697a1175de6a24d7a46ddf91786804c} & \url{https://github.com/izmailovpavel/spurious_feature_learning} \\
            \bottomrule
        \end{tabular}}
\end{table}

The general hyperparemter setting inherit from the referred codes and papers, and are as listed in Table~\ref{tab:hyper_wilds_appdx}.
We use the same backbone models to implement the featurizer~\citep{resnet,densenet,distillbert}.
By default, we repeat the experiments by $3$ runs with the random seeds of $0,1,2$. While for Camelyon17, we follow the official guide to repeat $10$ times with the random seeds from $0$ to $9$.

\begin{table}[ht]
    \centering
    \small
    \caption{General hyperparameter settings for the experiments on \wilds.}
    \label{tab:hyper_wilds_appdx}
    \resizebox{\textwidth}{!}{
        \begin{tabular}{l|cccccc}
            \toprule
            Dataset              & \sc Amazon & \sc Camelyon17 & \sc  CivilComments            & \sc FMoW               & \sc iWildCam   & \sc RxRx1            \\\midrule
            Num. of seeds        & 3          & 10             & 3                             & 3                      & 3              & 3                    \\
            Learning rate        & 2e-6       & 1e-4           & 1e-5                          & 1e-4                   & 1e-4           & 1e-3                 \\
            Weight decay         & 0          & 0              & 0.01                          & 0                      & 0              & 1e-5                 \\
            Scheduler            & n/a        & n/a            & n/a                           & n/a                    & n/a            & Cosine Warmup        \\
            Batch size           & 64         & 32             & 16                            & 32                     & 16             & 72                   \\
            Architecture         & DistilBert & DenseNet121    & DistilBert                    & DenseNet121            & ResNet50       & ResNet50             \\
            Optimizer            & Adam       & SGD            & Adam                          & Adam                   & Adam           & Adam                 \\
            Domains in minibatch & 5          & 3              & 5                             & 5                      & 10             & 10                   \\
            Group by             & Countries  & Hospitals      & Demographics$\times$ toxicity & Times $\times$ regions & Trap locations & Experimental batches \\
            Training epochs      & 200        & 10             & 5                             & 12                     & 9              & 90                   \\
            \bottomrule
        \end{tabular}}
\end{table}

\paragraph{OOD objective implementations.}
We choose $4$ representative OOD objectives to evaluate the quality of learned features,
including GroupDRO~\citep{groupdro}, \irml~\citep{irmv1}, \vrex~\citep{vrex} and \irmx~\citep{pair}.
We implement the OOD objectives based on the code provided by~\citet{fish}.
For each OOD objective, by default, we follow the \wilds practice to sweep the penalty weights
from the range of $\{1e-2,1e-1,1,1e1,1e2\}$, and perform the model and hyperparameter
selection via the performance in the provided OOD validation set of each dataset.
Due to the overwhelming computational overhead required by large datasets and resource constraints,
we tune the penalty weight in iWildCam according to the performance with seed $0$,
which we empirically find yields similar results as full seed tunning. Besides in Amazon, we adopt the penalty weights tuned from CivilComments since the two datasets share a relatively high similarity, which we empirically find yields similar results as full seed tunning, too. On the other hand, it raises more challenges for feature learning algorithms in iWildCam and Amazon.

\paragraph{Deep Feature Reweighting (DFR) implementations.}
For the implementation of DFR~\citep{dfr,dfrlearn}, we use the code provided in~\citet{dfrlearn}.
By default, DFR considers the OOD validation as an unbiased dataset
and adopts the OOD validation set to learn a new classifier based on the frozen features from the pre-trained featurizer.
We follow the same implementation and evaluation protocol when evaluating feature learning quality in FMoW and CivilComments.
However, since Camelyon17 does not have the desired OOD validation set,
we follow the ``cheating'' protocol as in~\citet{dare} to perform the logistic regression
based the train and test sets.
Note that when ``cheating'', the model is not able to access the whole test sets.
Instead, the logistic regression is conducted on a random split of the concatenated train and test data.
Moreover, for Amazon and iWildCam, we find the original implementation fails to
converge possibly due to the complexity of the task, and the relatively poor feature learning quality.
Hence we implement a new logistic regression based on PyTorch~\citep{pytorch}
optimized with SGD, and perform DFR using ``cheating'' protocol based on the OOD validation set and test set.
Besides, we find neither the two aforementioned implementations or dataset choices
can lead to DFR convergence in RxRx, which we will leave for future investigations.

\paragraph{Feature learning algorithm implementations.}
We implement all the feature learning methods based on the Fish code framework.
For the fairness of comparison, we set all the methods to train the same number of steps
or rounds (if applicable) in \wilds datasets.
The only exception is in RxRx1, where both \rfc and \ours require more steps to converge,
since the initialized featurizer has a relatively large distance from the desired featurizer
in the task.
We did not train the model for much too long epochs as \citet{dfrlearn} find that
it only requires $2-5$ epochs for deep nets to learn high-quality invariant features.
The final model is selected based on the OOD validation accuracy during the training.
Besides, we tune the retain penalty in \ours by searching over $\{1e-2,1e-1,0.5,1,2,10\}$,
and finalize the retain penalty according to the OOD validation performance.
We list the detailed training steps and rounds setups, as well as the used retain penalty in \ours in Table~\ref{tab:wilds_step_appdx}.

\begin{table}[ht]
    \centering
    \caption{Hyperparameter setups of feature learning algorithms for the experiments on \wilds.}
    \label{tab:wilds_step_appdx}
    \resizebox{0.9\textwidth}{!}{
        \begin{tabular}{l|cccccc}
            \toprule
            Dataset              & \sc Amazon & \sc Camelyon17 & \sc  CivilComments & \sc FMoW & \sc iWildCam & \sc RxRx1 \\\midrule
            Overall steps        & 31,000     & 10,000         & 50,445             & 9,600    & 48,000       & 20,000    \\
            Approx. epochs       & 4          & 10             & 3                  & 4        & 10           & 10        \\
            Num. of rounds       & 3          & 2              & 3                  & 2        & 2            & 10        \\
            Steps per round      & 10,334     & 5,000          & 16,815             & 4,800    & 10           & 10        \\
            \ours Retain penalty & 2.0        & 1e-2           & 1e-2               & 1.0      & 0.5          & 10        \\
            \bottomrule
        \end{tabular}}
\end{table}

For ERM, we train the model simply by the overall number of steps, except for RxRx1 where we train the
model by $15,000$ steps following previous setups~\citep{fish}.
\rfc and \ours directly adopt the setting listed in the Table~\ref{tab:wilds_step_appdx}.
Besides, \rfc will adopt one additional round for synthesizing the pre-trained models from different rounds.
Although \citet{rfc} requires \rfc to train the two rounds for synthesizing the learned features,
we empirically find additional training steps in synthesizing will incur overfitting and worse performance.
Moreover, as \rfc requires propagating $2K-1$ batches of the data that may exceed the memory limits,
we use a smaller batch size when training \rfc in iWildCam ($8$) and RxRx1 ($56$).

\subsection{Software and hardware}
\label{sec:exp_software_appdx}
We implement our methods with PyTorch~\citep{pytorch}.
For the software and hardware configurations, we ensure the consistent environments for each datasets.
We run all the experiments on Linux servers with NVIDIA V100 graphics cards with CUDA 10.2.

\subsection{Computational analysis}
\label{sec:comput_analysis_appdx}
Compared to ERM, the additional computational and memory overhead introduced in \ours mainly lie in the \ours training and partitioning. At each training step, \ours needs $(k-1)$ additional forward and backward propagation, the same as Bonsai, while \ours only needs $\min(1,k-1)$ additional propagation. Besides, Bonsai additionally requires another round of training with $(K-1)$ additional propagation, given $K$ total rounds.

We calculated the computational overhead:
\begin{table}[ht]\centering\small
    \caption{Training and memory overhead of different algorithms.}
    \begin{tabular}{lllll}\toprule
               & Camelyon17            &                & CivilComments      &                \\
               & Training time         & Memory (\%)    & Training time      & Memory (\%)    \\  \midrule
        ERM    & 56.21$\pm$8.29 mins   & 22.56$\pm$0.00 & 24.22$\pm$0.33 hrs & 36.46$\pm$0.00 \\
        Bonsai & 214.55$\pm$1.13 mins  & 51.75$\pm$0.01 & 58.47$\pm$0.91 hrs & 64.43$\pm$0.31 \\
        \ours  & 101.14$\pm$12.79 mins & 51.92$\pm$0.04 & 28.19$\pm$1.15 hrs & 56.21$\pm$0.48 \\ \bottomrule
    \end{tabular}
\end{table}
The results aligned with our discussion. Bonsai requires much more time for the additional synthetic round and much more memory when there are 3 or more rounds. In contrast, \ours achieves the best performance without introducing  too much additional computational overhead.

\subsection{Feature learning analysis}
\label{sec:feat_analysis_appdx}
We first visualize the feature learning of ERM and \ours on ColoredMNIST-025, as shown in Fig.~\ref{fig:gradcam_cmnist_appdx} It can be found that ERM can learn both invariant and spurious features to predict the label, aligned with our theory.

However, ERM focuses more on spurious features and even forgets certain features with longer training epochs, which could be due to multiple reasons such as the simplicity biases of ERM. Hence predictions based on ERM learned features fail to generalize to OOD examples. In contrast, \ours effectively captures the meaningful features for all samples and generalizes to OOD examples well.

\begin{table}[ht]\centering
    \caption{Labels and predictions for the visualized samples.}
    \label{tab:viz_label_appdx}
    \resizebox{\textwidth}{!}{
        \begin{tabular}{@{}lcccclcccc@{}}\toprule
                                                    & \multicolumn{1}{l}{Label} & \multicolumn{1}{l}{ERM} & \multicolumn{1}{l}{Bonsai} & \multicolumn{1}{l}{\ours} &                                    & \multicolumn{1}{l}{Label} & \multicolumn{1}{l}{ERM} & \multicolumn{1}{l}{Bonsai} & \multicolumn{1}{l}{\ours} \\\midrule
            \multirow{4}{*}{\textsc{Camelyon17}}    & 1                         & 1                       & 0                          & 1                         & \multirow{4}{*}{\textsc{iWildCam}} & 113                       & 68                      & 0                          & 113                       \\
                                                    & 1                         & 1                       & 0                          & 1                         &                                    & 113                       & 0                       & 0                          & 113                       \\
                                                    & 1                         & 1                       & 0                          & 1                         &                                    & 36                        & 36                      & 36                         & 36                        \\
                                                    & 1                         & 0                       & 0                          & 0                         &                                    & 36                        & 36                      & 36                         & 36                        \\\midrule
            \multirow{4}{*}{\textsc{FMoW}}          & 40                        & 40                      & 40                         & 40                        & \multirow{4}{*}{\textsc{RxRx1}}    & 1138                      & 812                     & 812                        & 812                       \\
                                                    & 40                        & 40                      & 40                         & 40                        &                                    & 1138                      & 1133                    & 1125                       & 1133                      \\
                                                    & 40                        & 2                       & 29                         & 29                        &                                    & 35                        & 43                      & 1119                       & 143                       \\
                                                    & 40                        & 40                      & 40                         & 40                        &                                    & 35                        & 35                      & 1054                       & 35                        \\\midrule
            \multirow{4}{*}{\textsc{CivilComments}} & toxic                     & toxic                   & toxic                      & toxic                     & \multirow{4}{*}{\textsc{Amazon}}   & 2                         & 3                       & 3                          & 2                         \\
                                                    & toxic                     & toxic                   & toxic                      & toxic                     &                                    & 5                         & 5                       & 5                          & 5                         \\
                                                    & toxic                     & toxic                   & toxic                      & toxic                     &                                    & 3                         & 4                       & 4                          & 4                         \\
                                                    & nontoxic                  & nontoxic                & nontoxic                   & nontoxic                  &                                    & 5                         & 5                       & 5                          & 5                         \\\bottomrule
        \end{tabular}}
\end{table}

We also visualize the saliency maps of ERM, Bonsai, and \ours on all real-world datasets used in our work with \url{https://github.com/pytorch/captum}.
The visualizations are shown as in Fig.~\ref{fig:gradcam_wilds_nlp_appdx} to Fig.~\ref{fig:gradcam_rxrx_appdx},
for which the labels and the predictions of different algorithms are given in Table.~\ref{tab:viz_label_appdx}.
It can be found that, across various tasks and data modalities, \ours effectively learns more meaningful and diverse features than ERM and Bonsai, which serve as strong evidence for the consistent superiority of \ours in OOD generalization.

\begin{figure}[t!]
    \vspace{-0.15in}
    \centering
    \subfigure[ERM 150 epochs]{
        \includegraphics[width=0.5\textwidth]{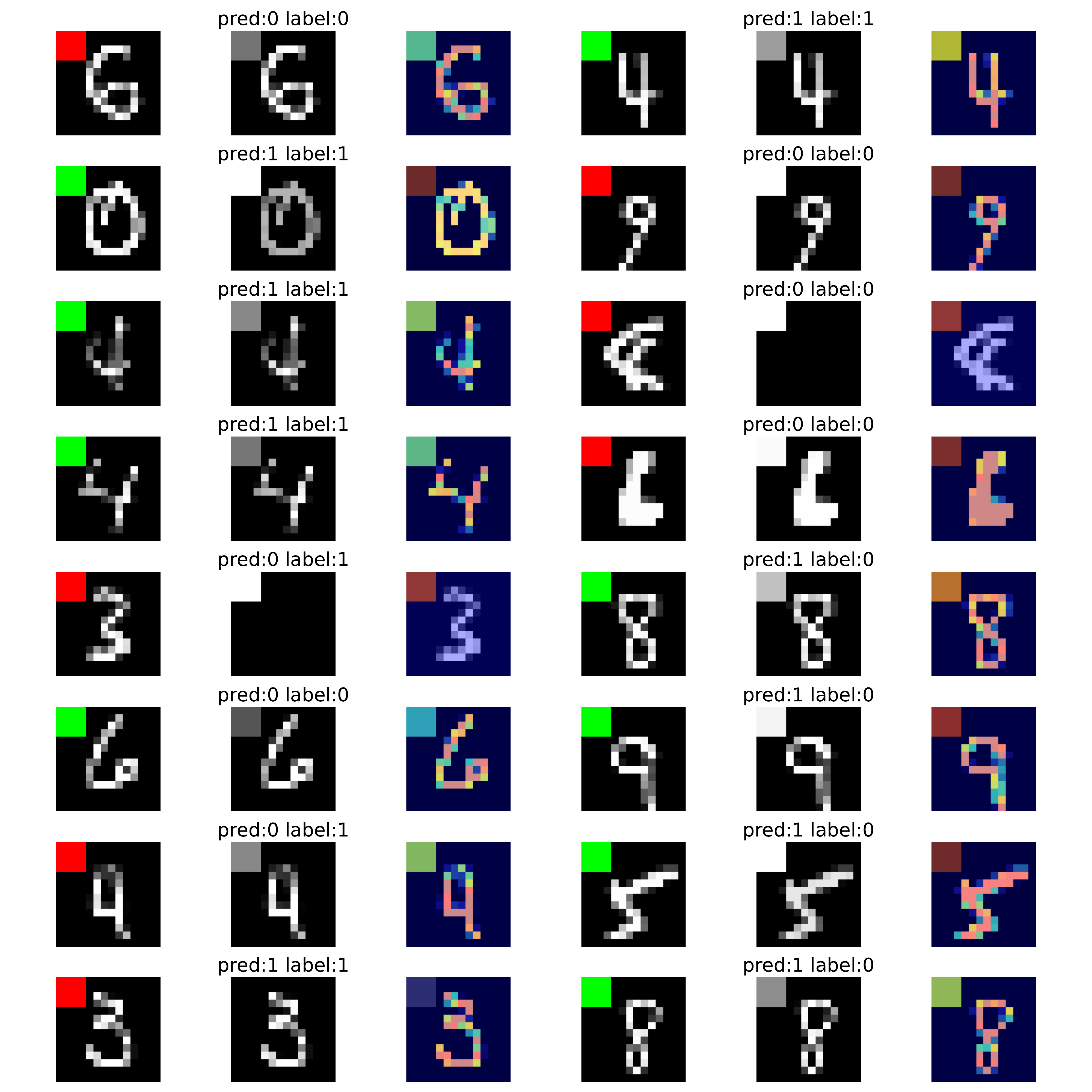}
    }%
    \subfigure[ERM 300 epochs]{
        \includegraphics[width=0.5\textwidth]{figures/visualization/s4_erm_ep300.pdf}
    }%
    \\
    \subfigure[ERM 450 epochs]{
        \includegraphics[width=0.5\textwidth]{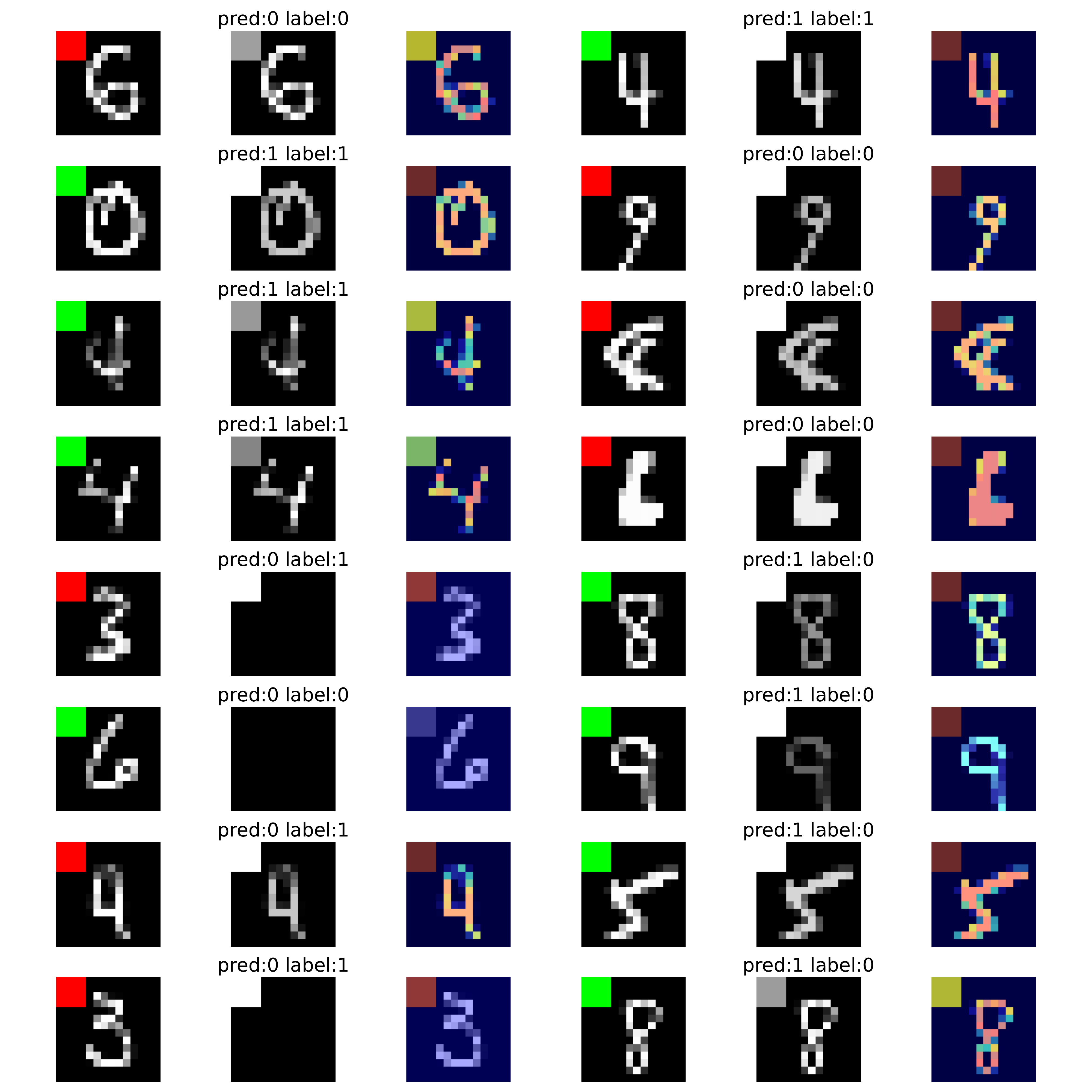}
    }%
    \subfigure[FeAT 2 rounds]{
        \includegraphics[width=0.5\textwidth]{figures/visualization/s4_ifat_r2.pdf}
    }
    \caption{GradCAM visualization on \cmnist-025, where the shortcuts are now concentrated to a colored path at the up left. Three visualizations are drawn for each sample: the original figure, the gray-colored gradcam, and the gradcam. It can be found that ERM can not properly capture the desired features or even forget certain features with longer training epochs. \ours can stably capture the desired features.}
    \label{fig:gradcam_cmnist_appdx}
\end{figure}

\begin{figure}[t!]\centering
    \subfigure[\textsc{CivilComments}]{
        \includegraphics[width=\textwidth]{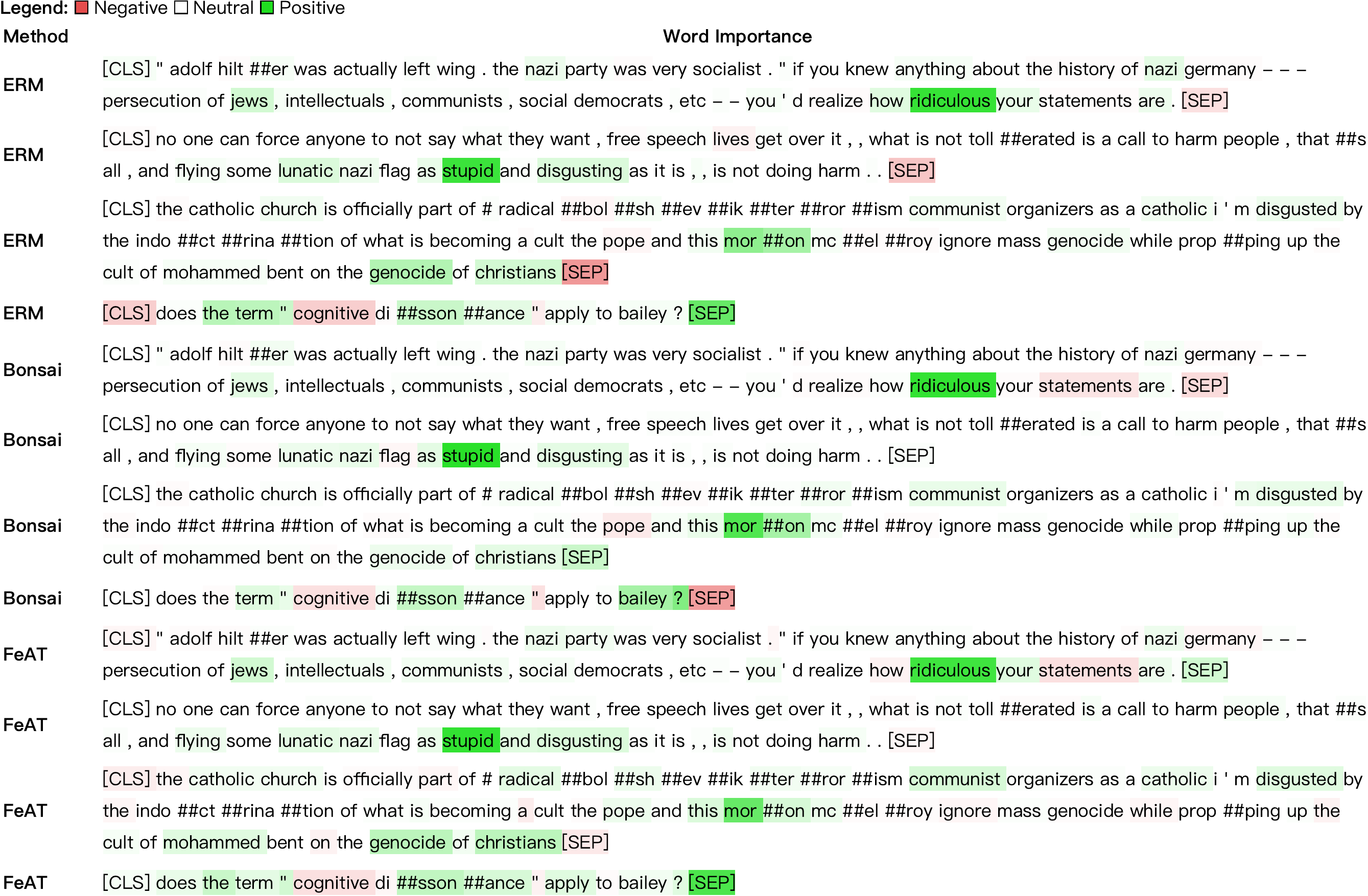}
    }
    \caption{Saliency map of feature learning on \wilds \textsc{CivilComments} benchmark. The green-colored tokens are the learned features that contributed most to the target class, while the red-colored tokens contributed to the other classes. It can be found that \ours is able to learn more meaningful and diverse features than ERM and Bonsai.}
    \label{fig:gradcam_wilds_nlp_appdx}
    \vspace{-2cm}
\end{figure}

\begin{figure}[t!]
    \centering
    \subfigure[\textsc{Amazon}-ERM]{
        \includegraphics[width=\textwidth]{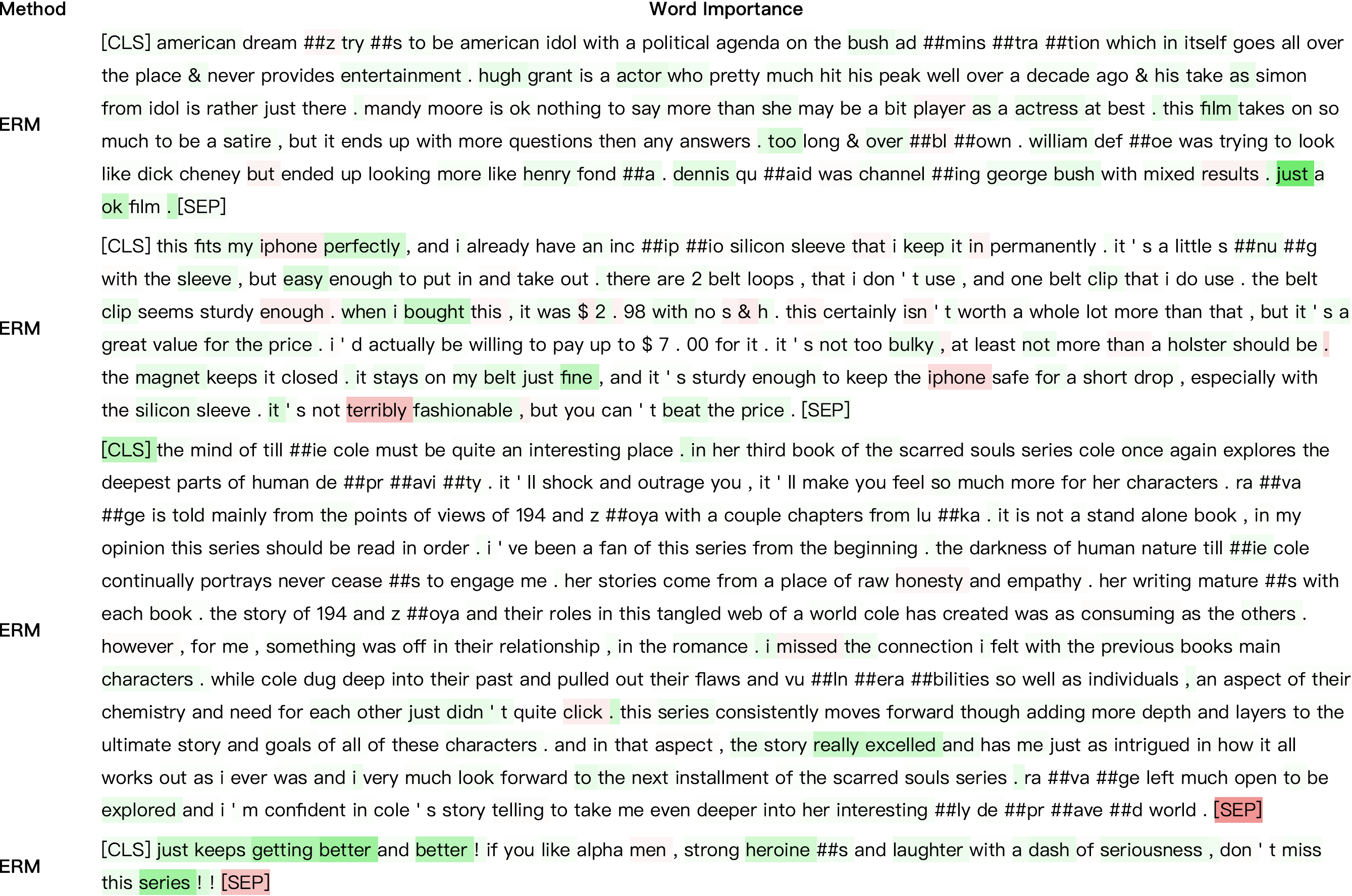}
    }
    \subfigure[\textsc{Amazon}-Bonsai]{
        \includegraphics[width=\textwidth]{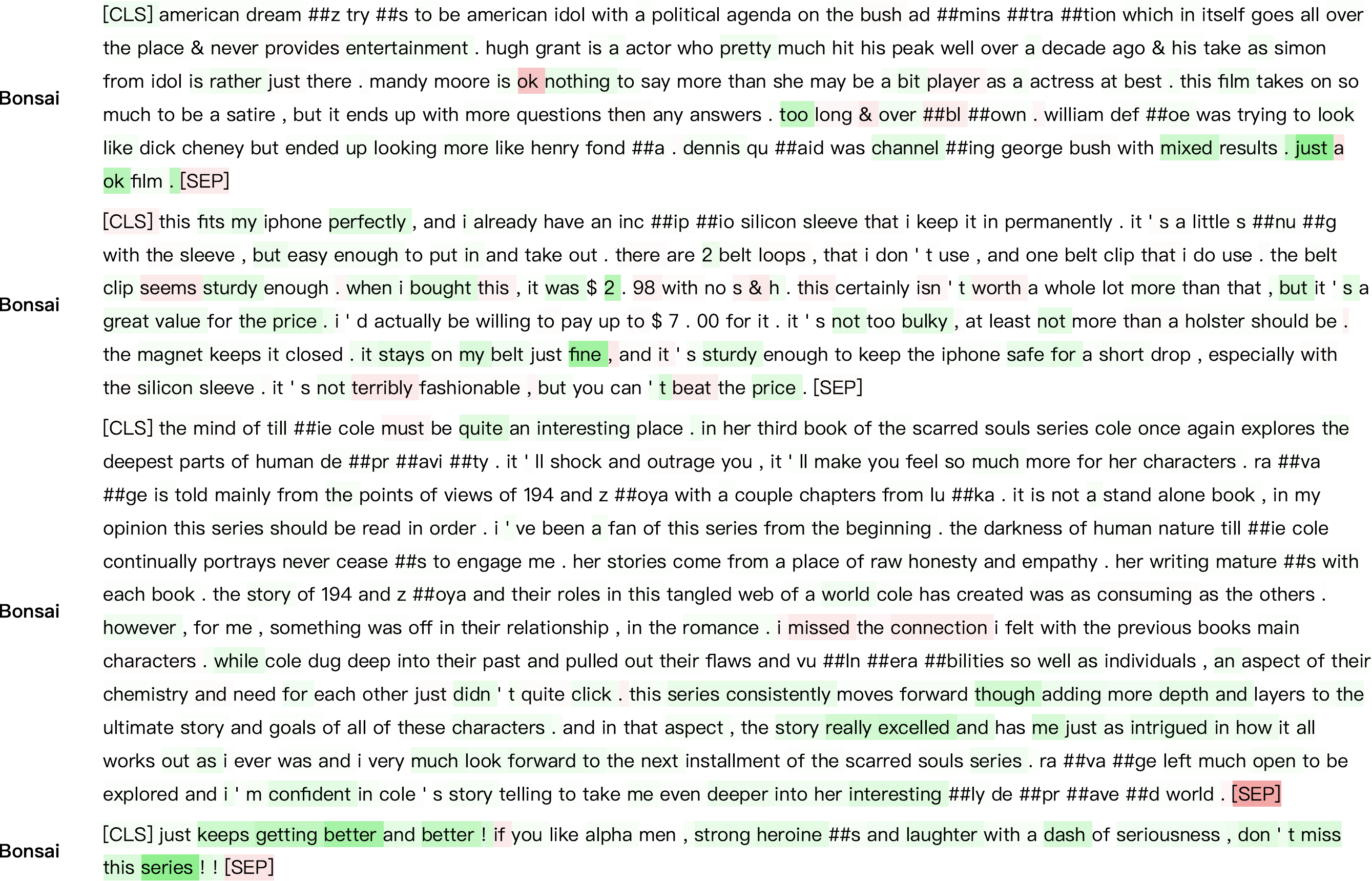}
    }
    \caption{Saliency map of feature learning on \wilds \textsc{Amazon} benchmark. The green-colored tokens are the learned features that contributed most to the target class, while the red-colored tokens contributed to the other classes. It can be found that \ours is able to learn more meaningful and diverse features than ERM and Bonsai.}
    \label{fig:gradcam_wilds_nlp2_appdx}
    \vspace{-2cm}
\end{figure}

\begin{figure}[t!]
    \vspace{-2cm}
    \centering
    \subfigure[\textsc{Amazon}-\ours]{
        \includegraphics[width=\textwidth]{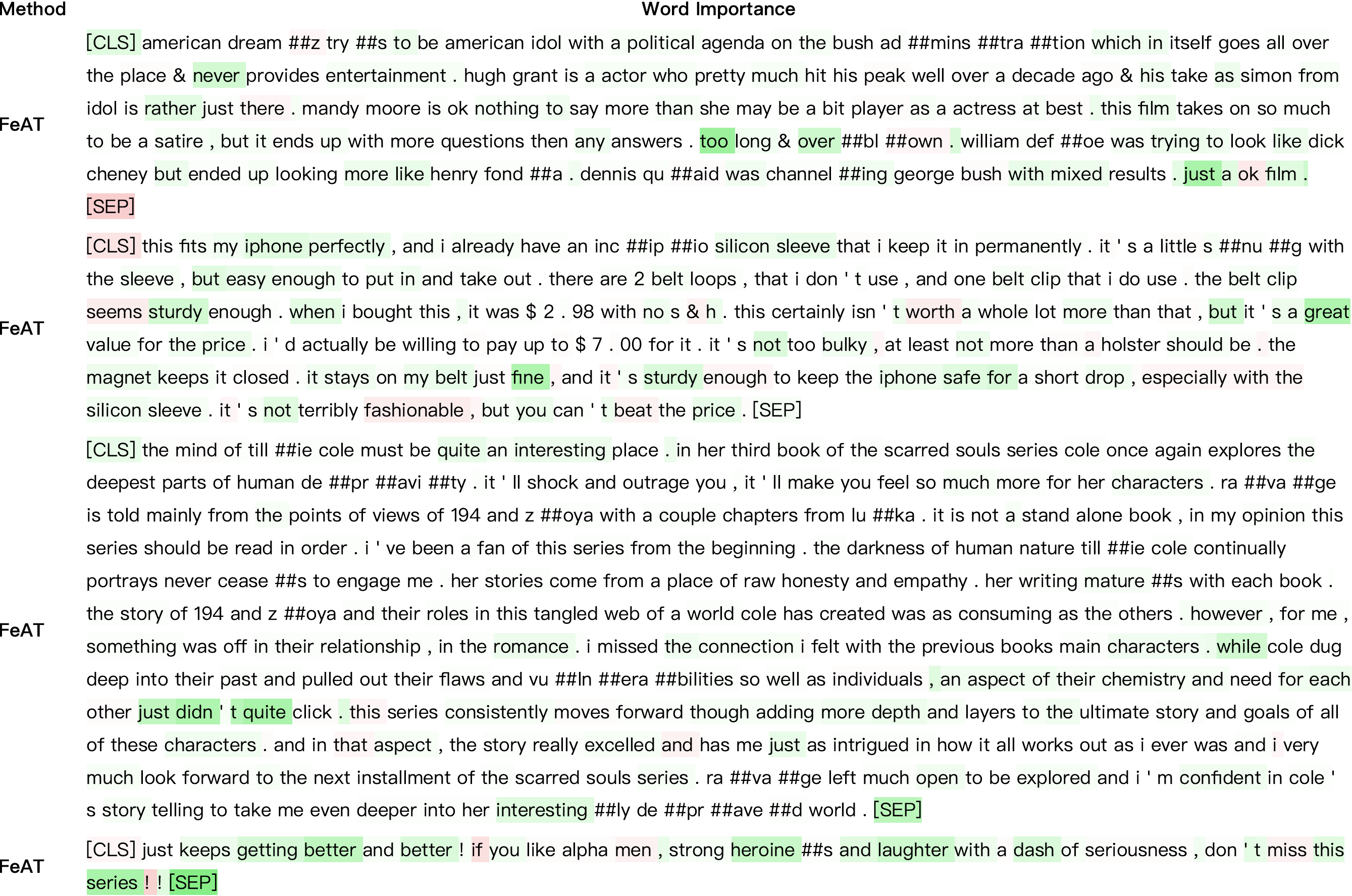}
    }
    \caption{Saliency map of feature learning on \wilds \textsc{Amazon} benchmark (part 2). The green-colored tokens are the learned features that contributed most to the target class, while the red-colored tokens contributed to the other classes. It can be found that \ours is able to learn more meaningful and diverse features than ERM and Bonsai.}
    \label{fig:gradcam_wilds_nlp3_appdx}
    \vspace{-2cm}
\end{figure}

\begin{figure}[t!]
    \vspace{-0.15in}
    \centering
    \subfigure[\textsc{Camelyon17}-ERM]{
        \includegraphics[width=0.33\textwidth]{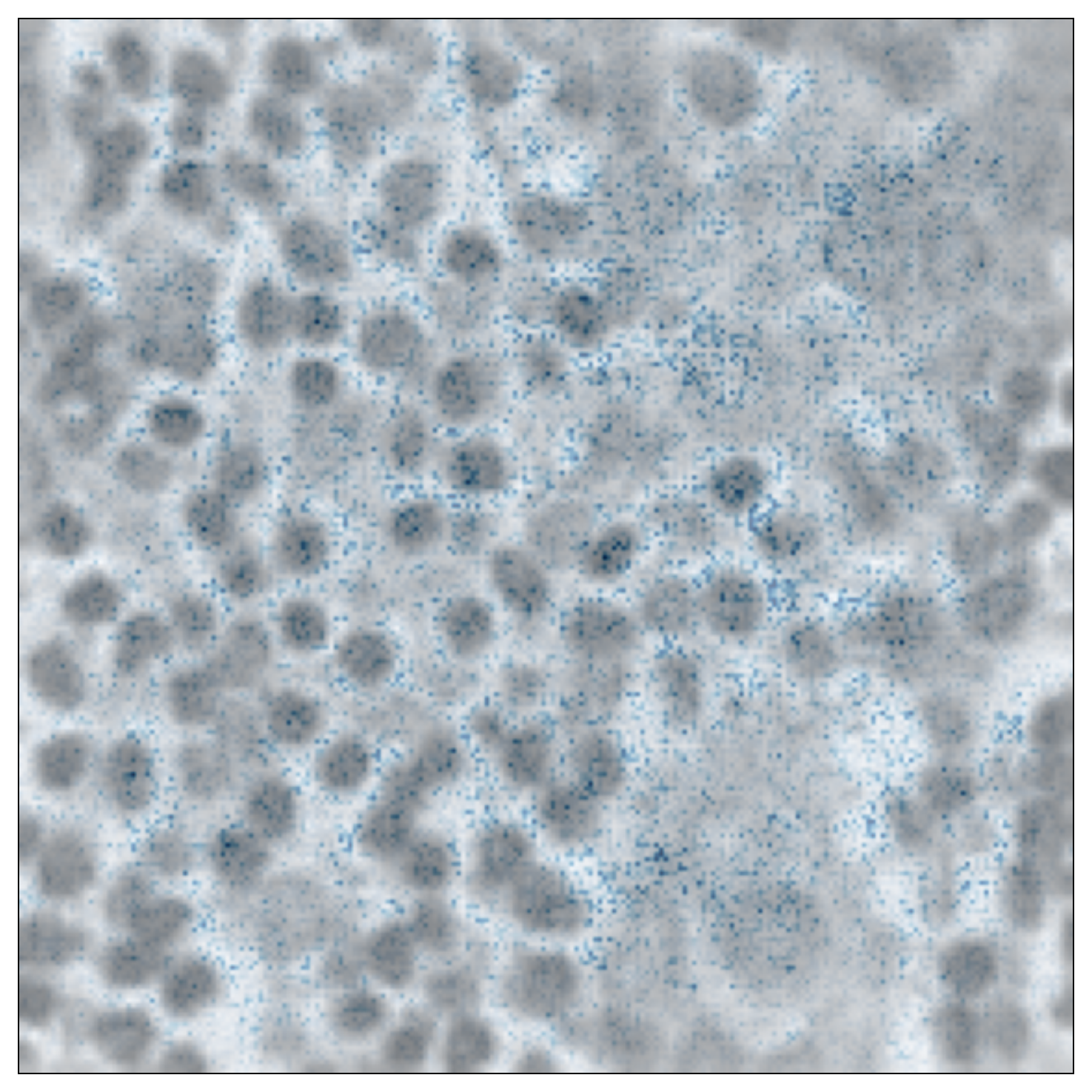}
    }%
    \subfigure[\textsc{Camelyon17}-Bonsai]{
        \includegraphics[width=0.33\textwidth]{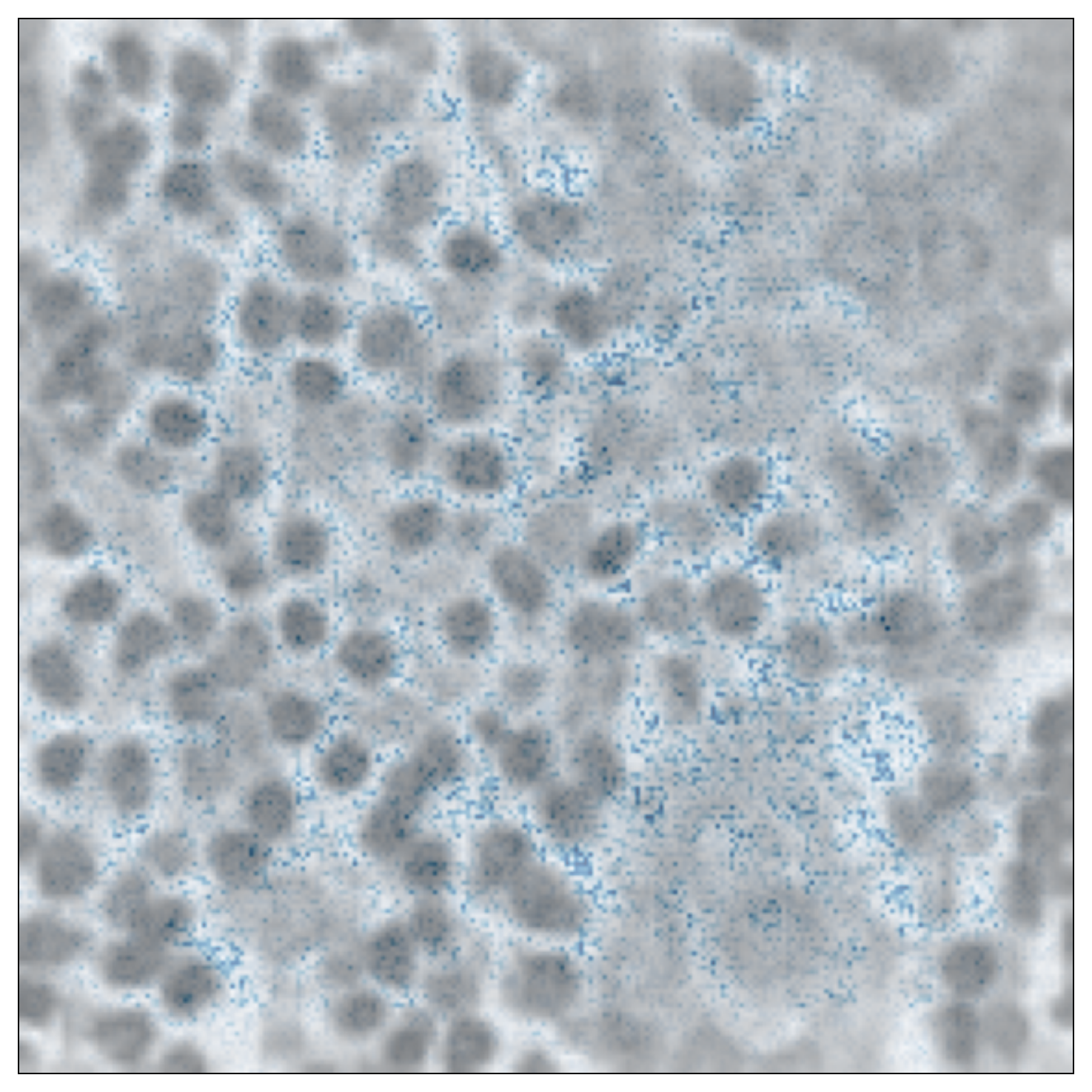}
    }%
    \subfigure[\textsc{Camelyon17}-\ours]{
        \includegraphics[width=0.33\textwidth]{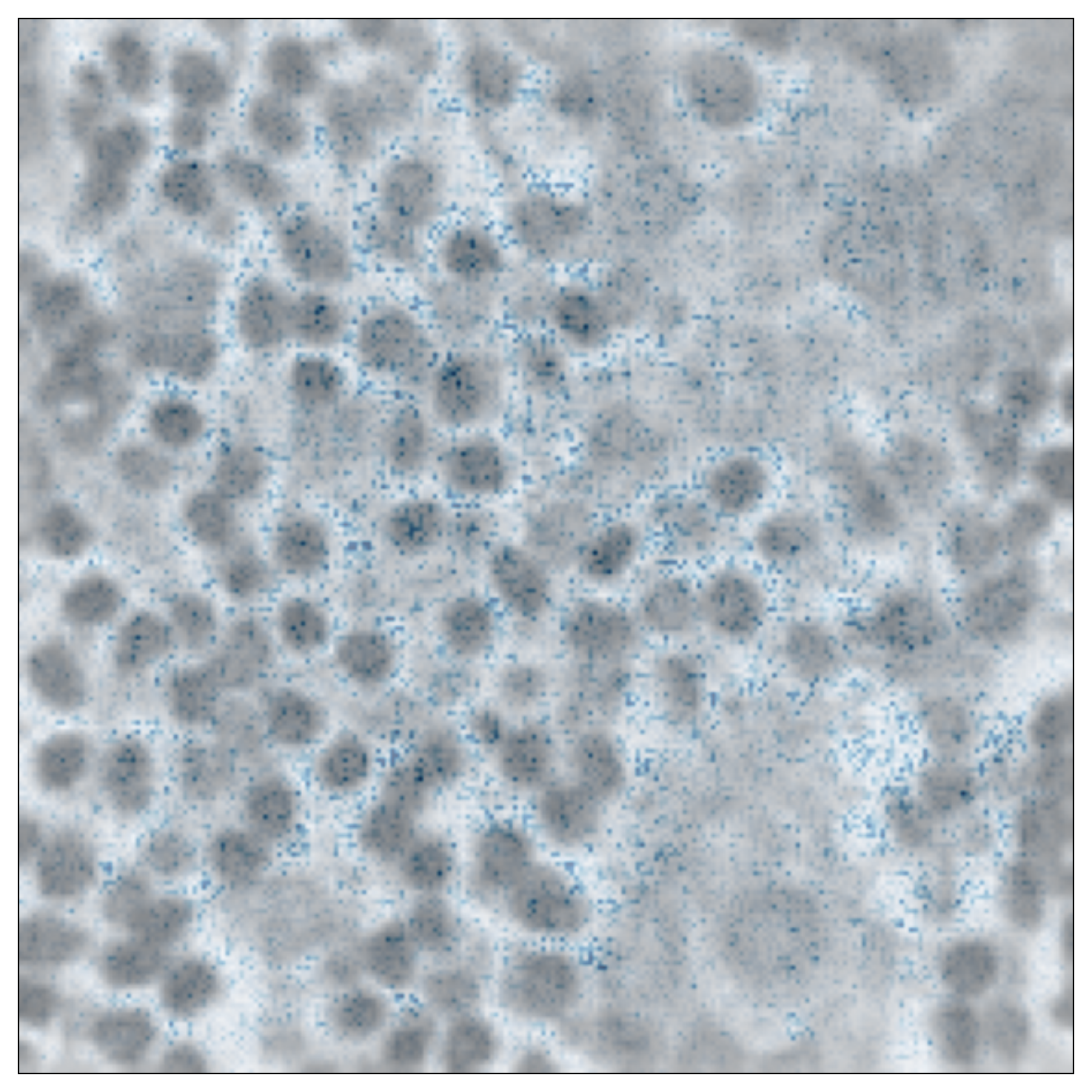}
    }%
    \\
    \subfigure[\textsc{Camelyon17}-ERM]{
        \includegraphics[width=0.33\textwidth]{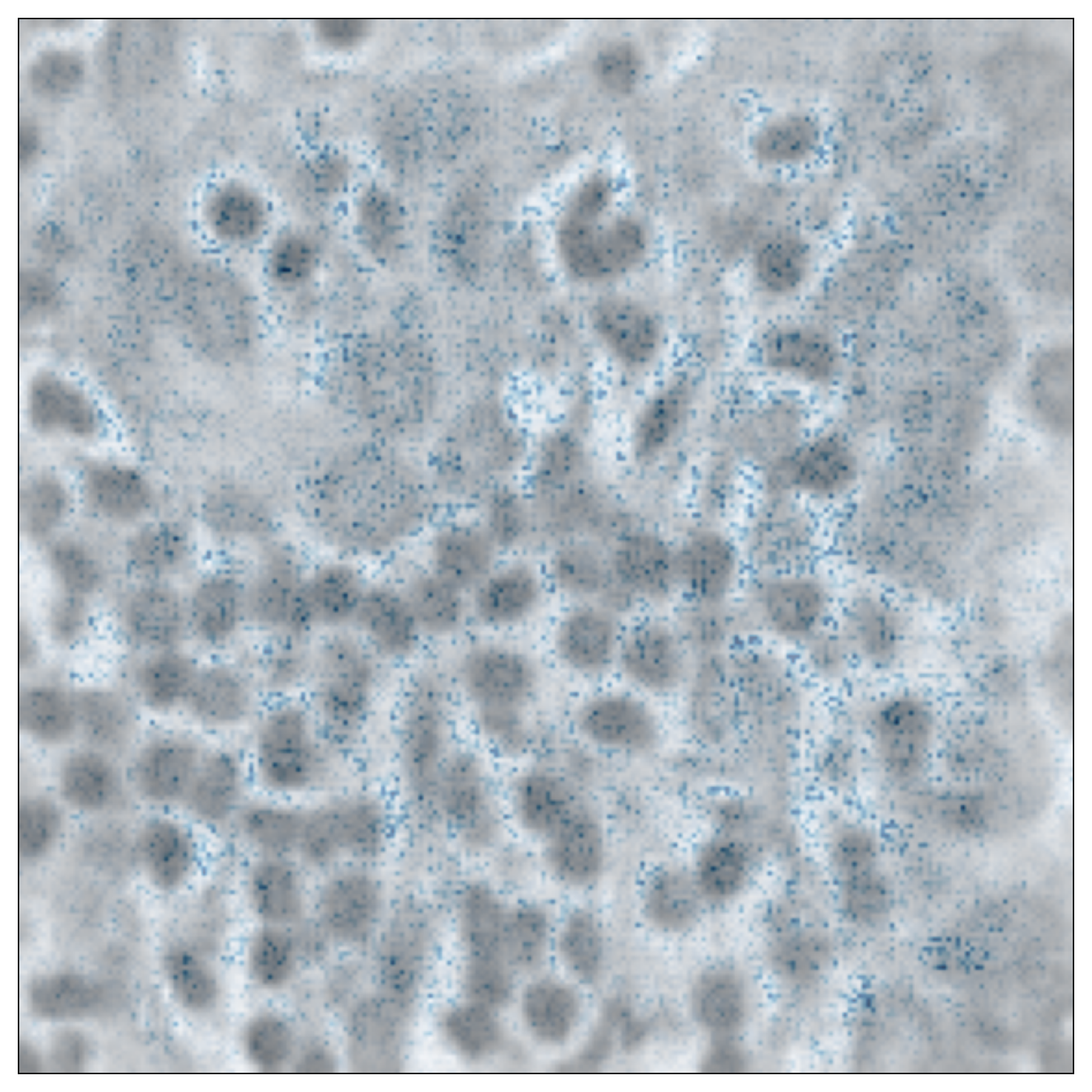}
    }%
    \subfigure[\textsc{Camelyon17}-Bonsai]{
        \includegraphics[width=0.33\textwidth]{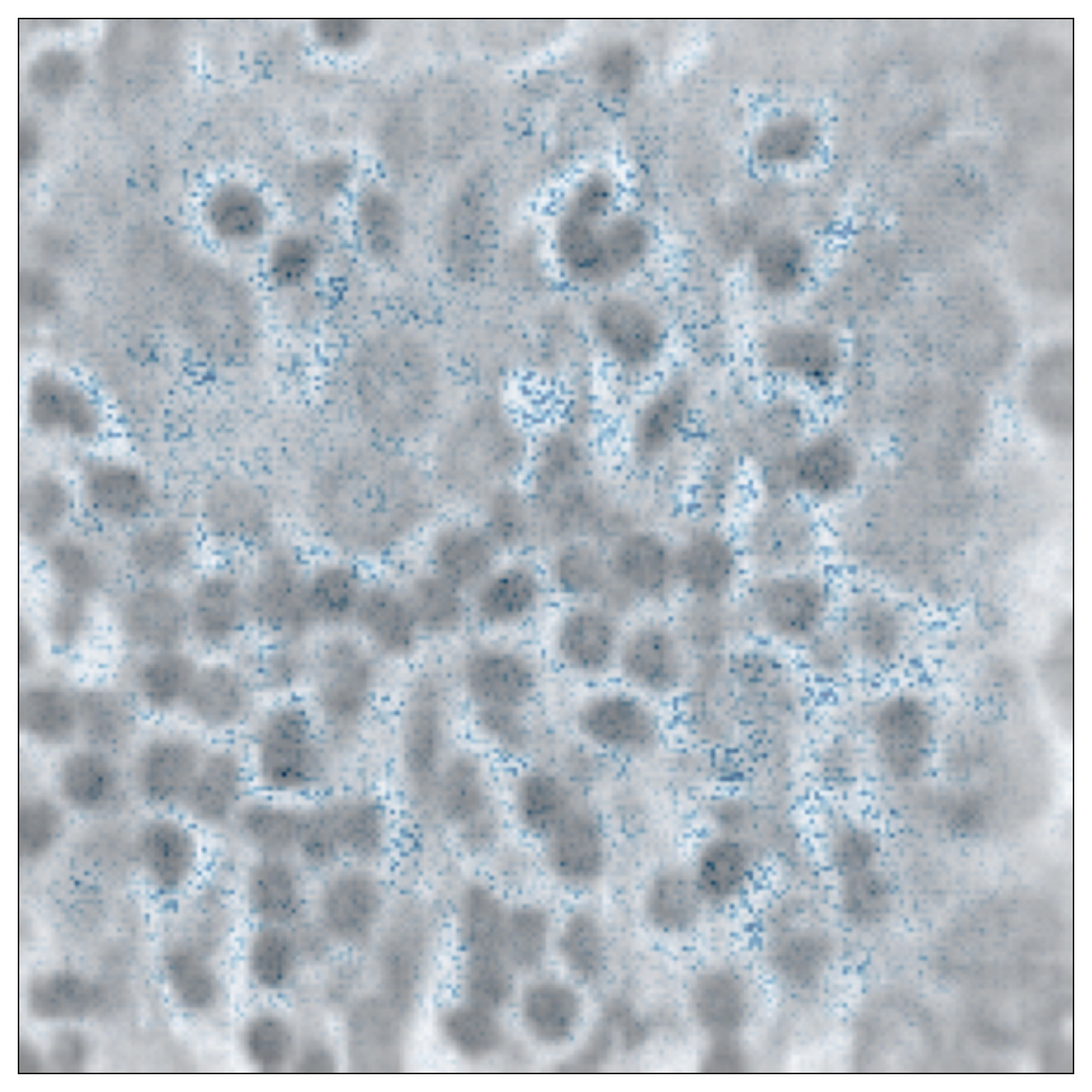}
    }%
    \subfigure[\textsc{Camelyon17}-\ours]{
        \includegraphics[width=0.33\textwidth]{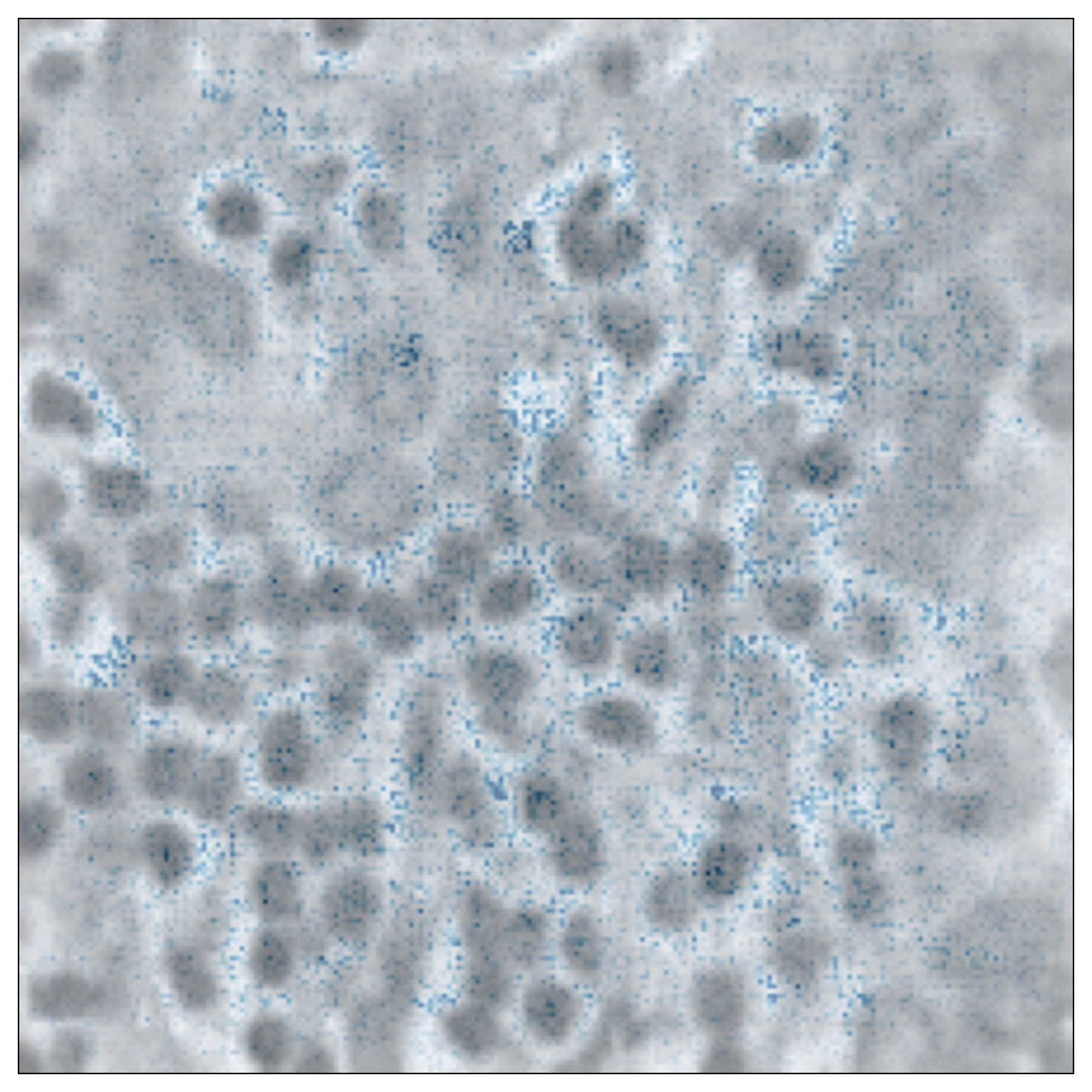}
    }%
    \\
    \subfigure[\textsc{Camelyon17}-ERM]{
        \includegraphics[width=0.33\textwidth]{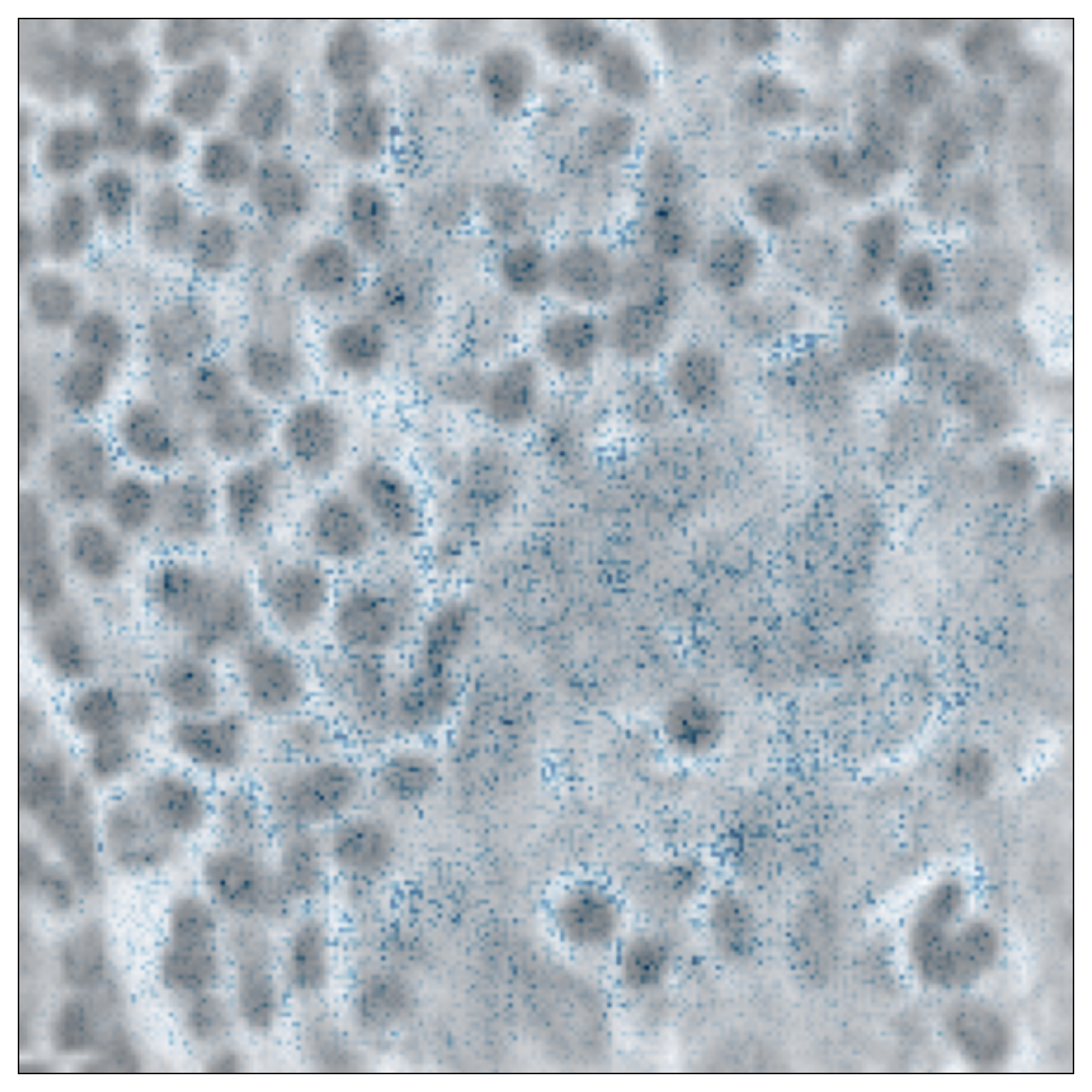}
    }%
    \subfigure[\textsc{Camelyon17}-Bonsai]{
        \includegraphics[width=0.33\textwidth]{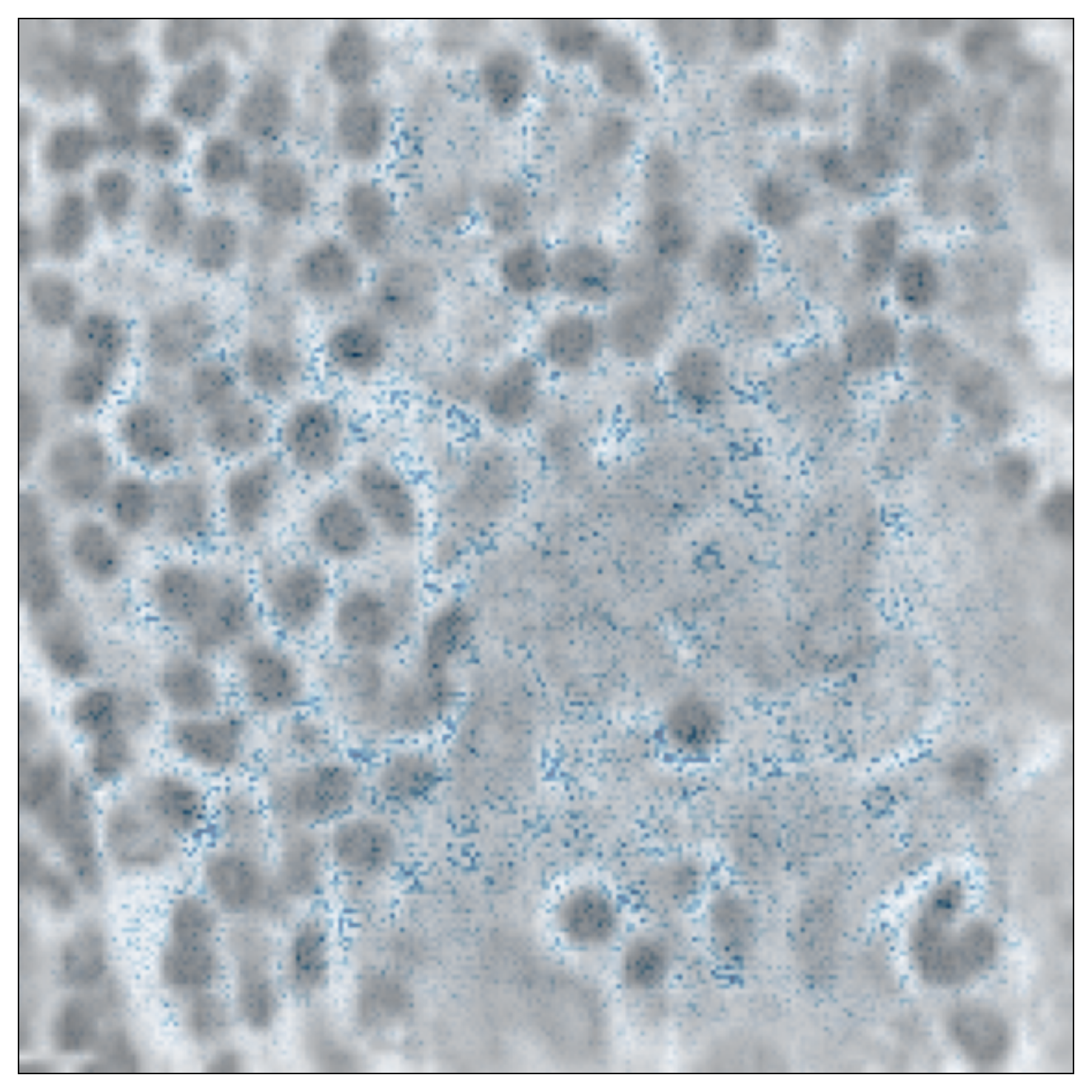}
    }%
    \subfigure[\textsc{Camelyon17}-\ours]{
        \includegraphics[width=0.33\textwidth]{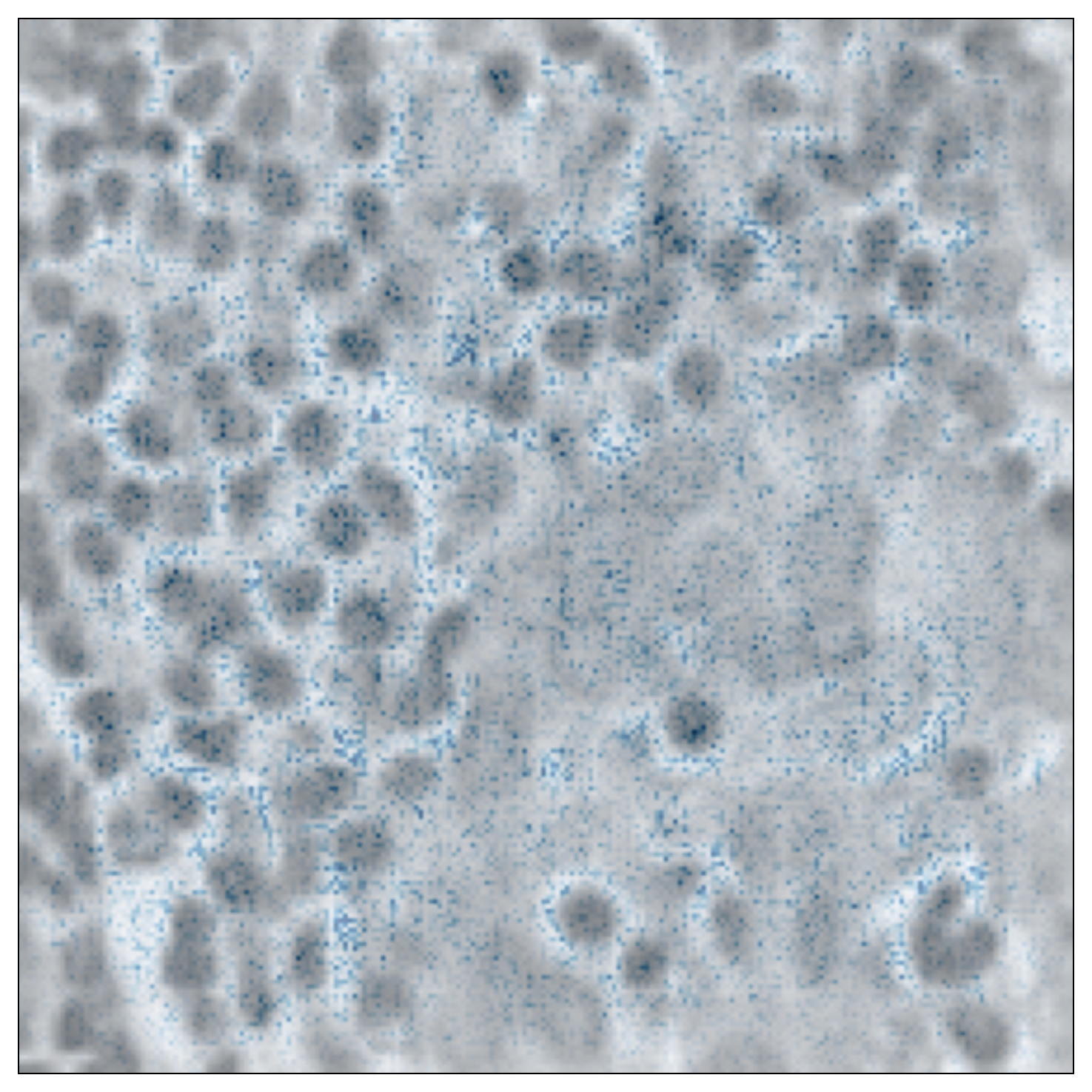}
    }%
    \\
    \subfigure[\textsc{Camelyon17}-ERM]{
        \includegraphics[width=0.33\textwidth]{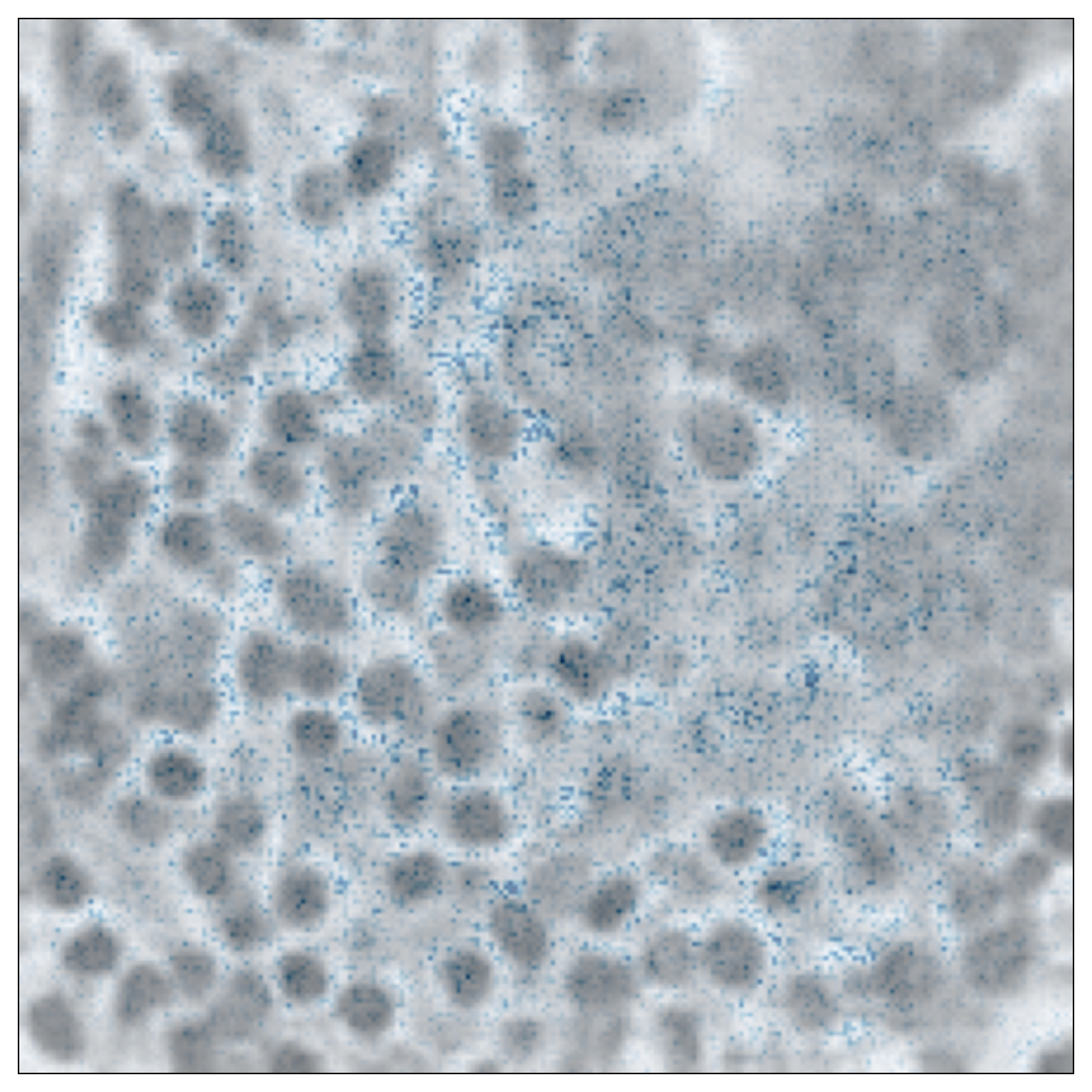}
    }%
    \subfigure[\textsc{Camelyon17}-Bonsai]{
        \includegraphics[width=0.33\textwidth]{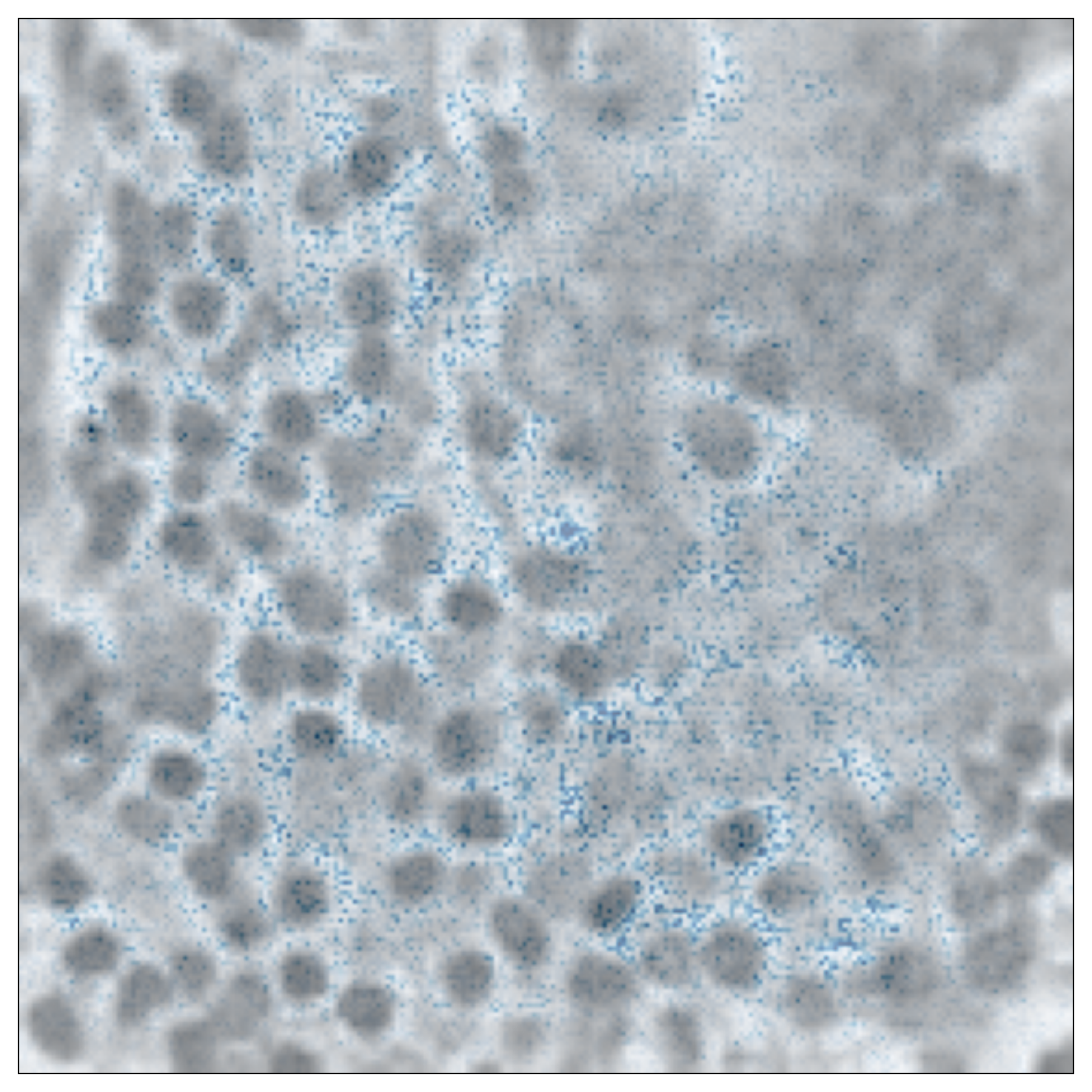}
    }%
    \subfigure[\textsc{Camelyon17}-\ours]{
        \includegraphics[width=0.33\textwidth]{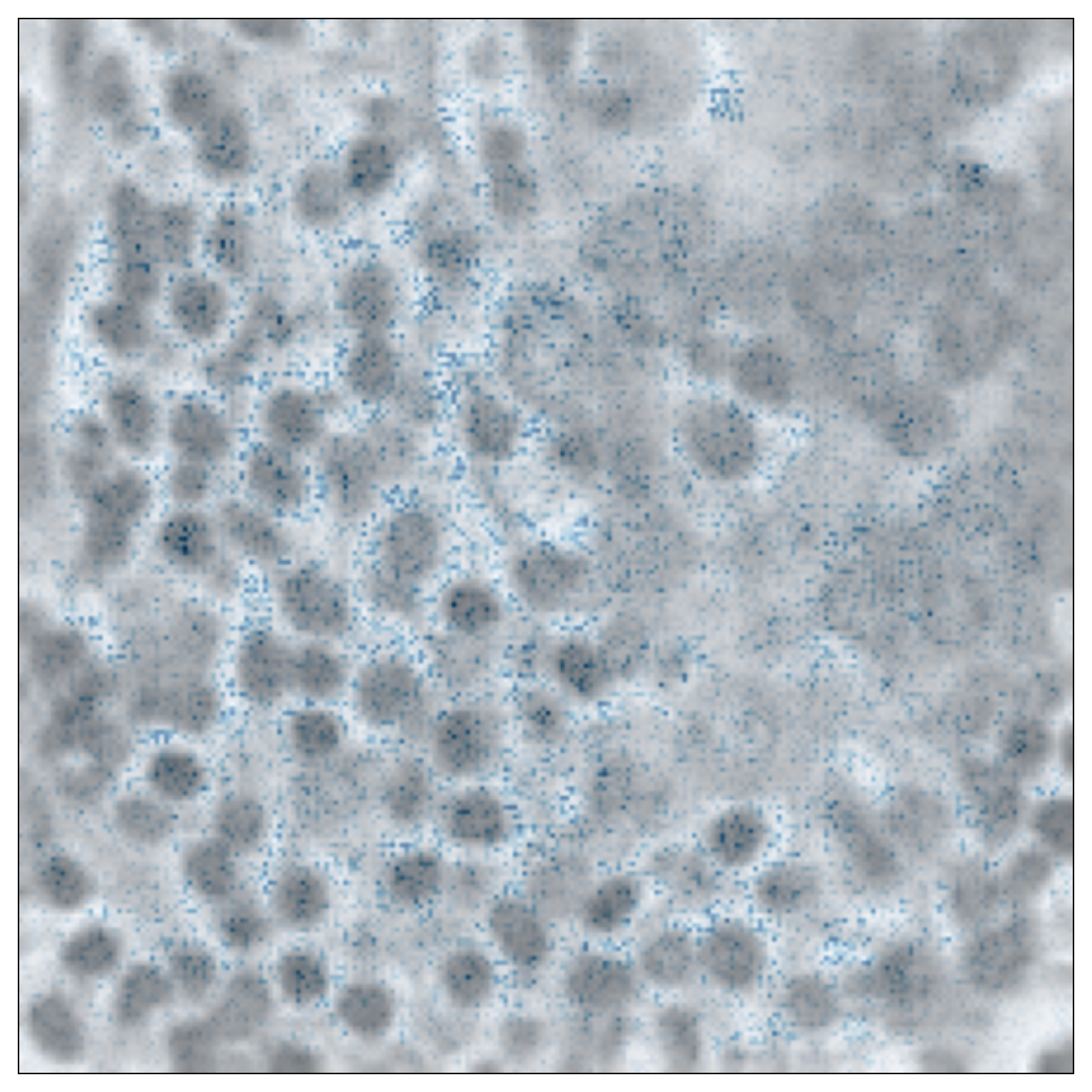}
    }%
    \\
    \vspace{-0.15in}
    \caption{Saliency map of feature learning on \textsc{Camelyon17} benchmark. The blue dots are the salient features.
        A deeper blue color denotes more salient features. It can be found that \ours is able to learn more meaningful and diverse features than ERM and Bonsai.}
    \label{fig:gradcam_camelyon17_appdx}
    \vspace{-2cm}
\end{figure}

\begin{figure}[t!]
    \vspace{-0.15in}
    \centering
    \subfigure[\textsc{FMoW}-ERM]{
        \includegraphics[width=0.33\textwidth]{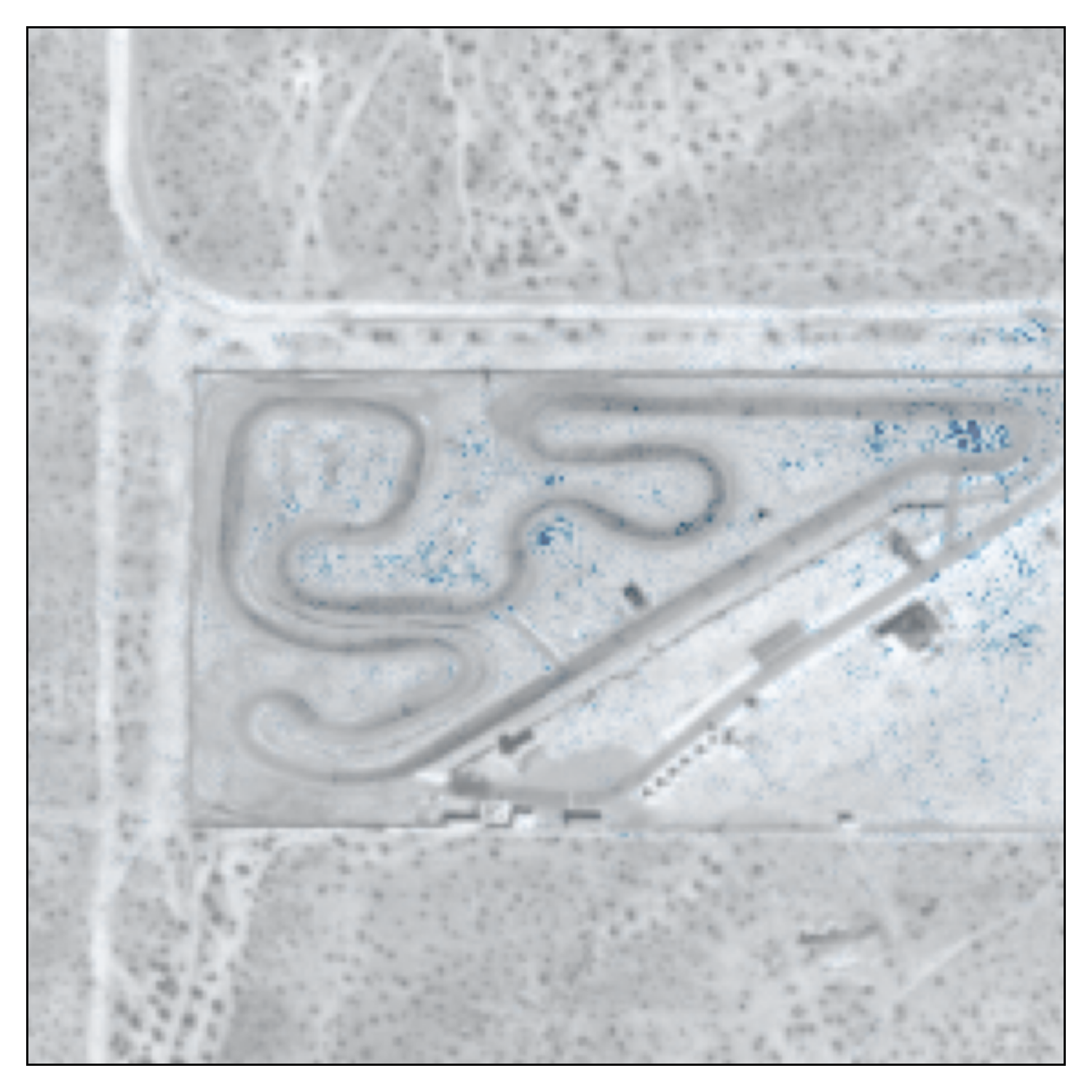}
    }%
    \subfigure[\textsc{FMoW}-Bonsai]{
        \includegraphics[width=0.33\textwidth]{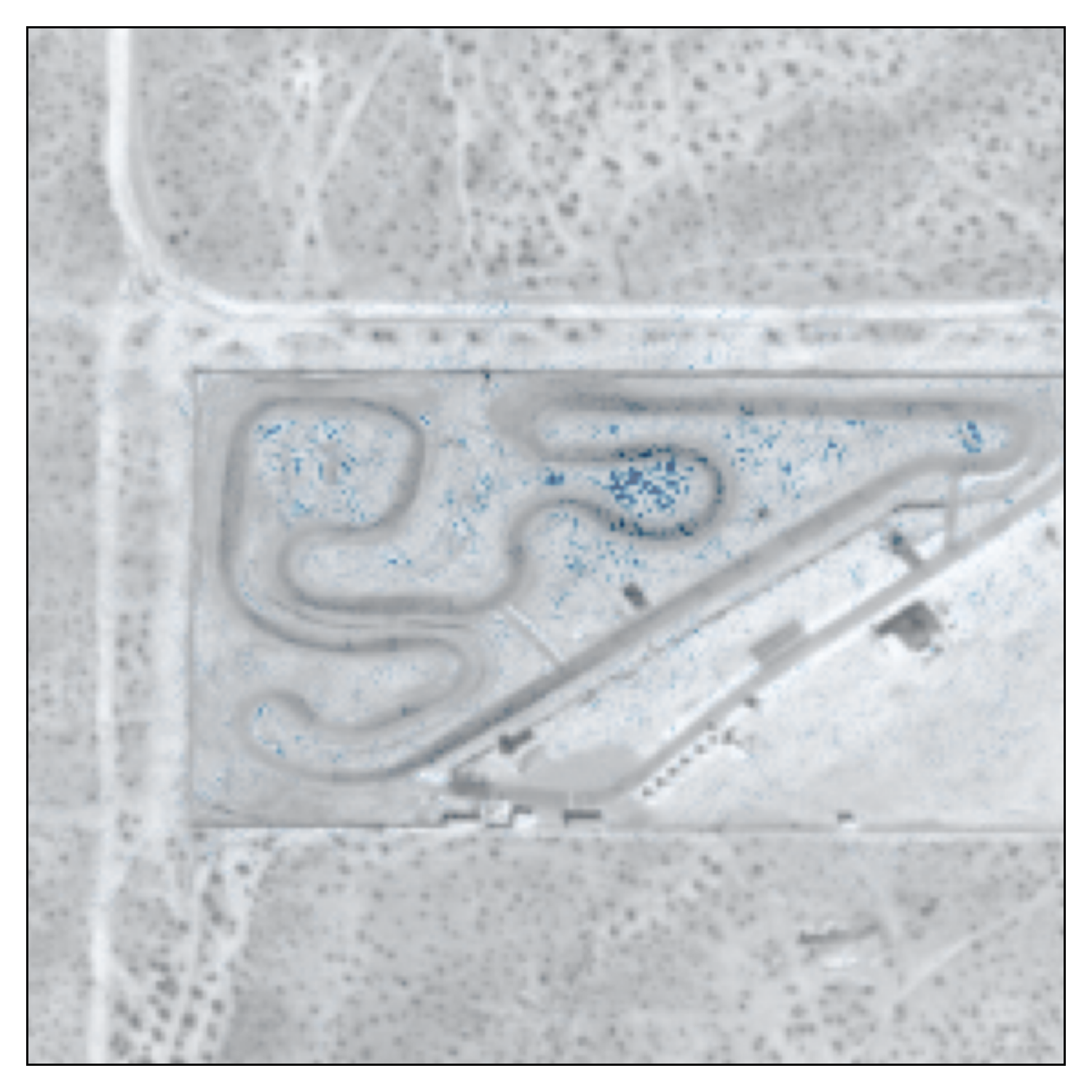}
    }%
    \subfigure[\textsc{FMoW}-\ours]{
        \includegraphics[width=0.33\textwidth]{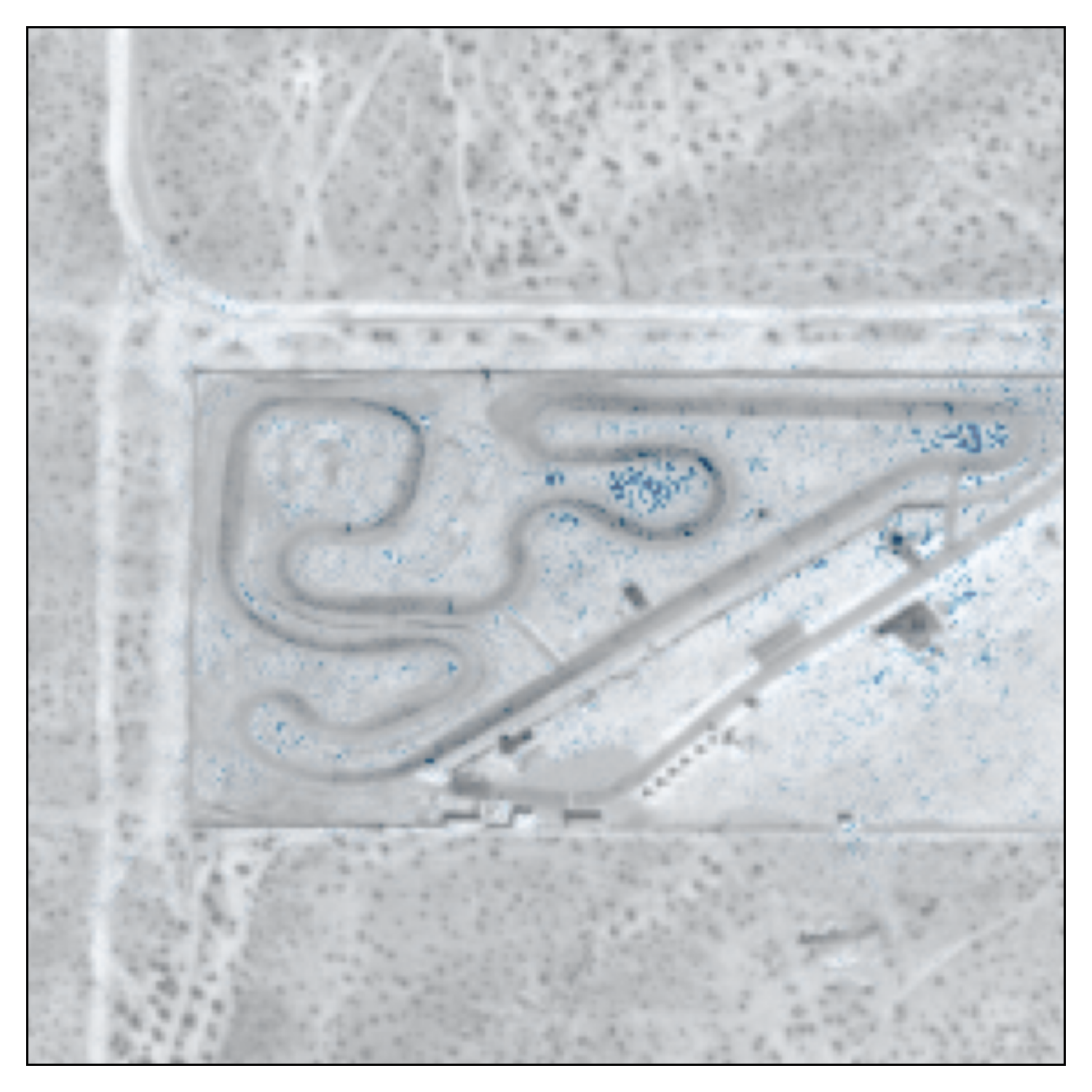}
    }%
    \\
    \subfigure[\textsc{FMoW}-ERM]{
        \includegraphics[width=0.33\textwidth]{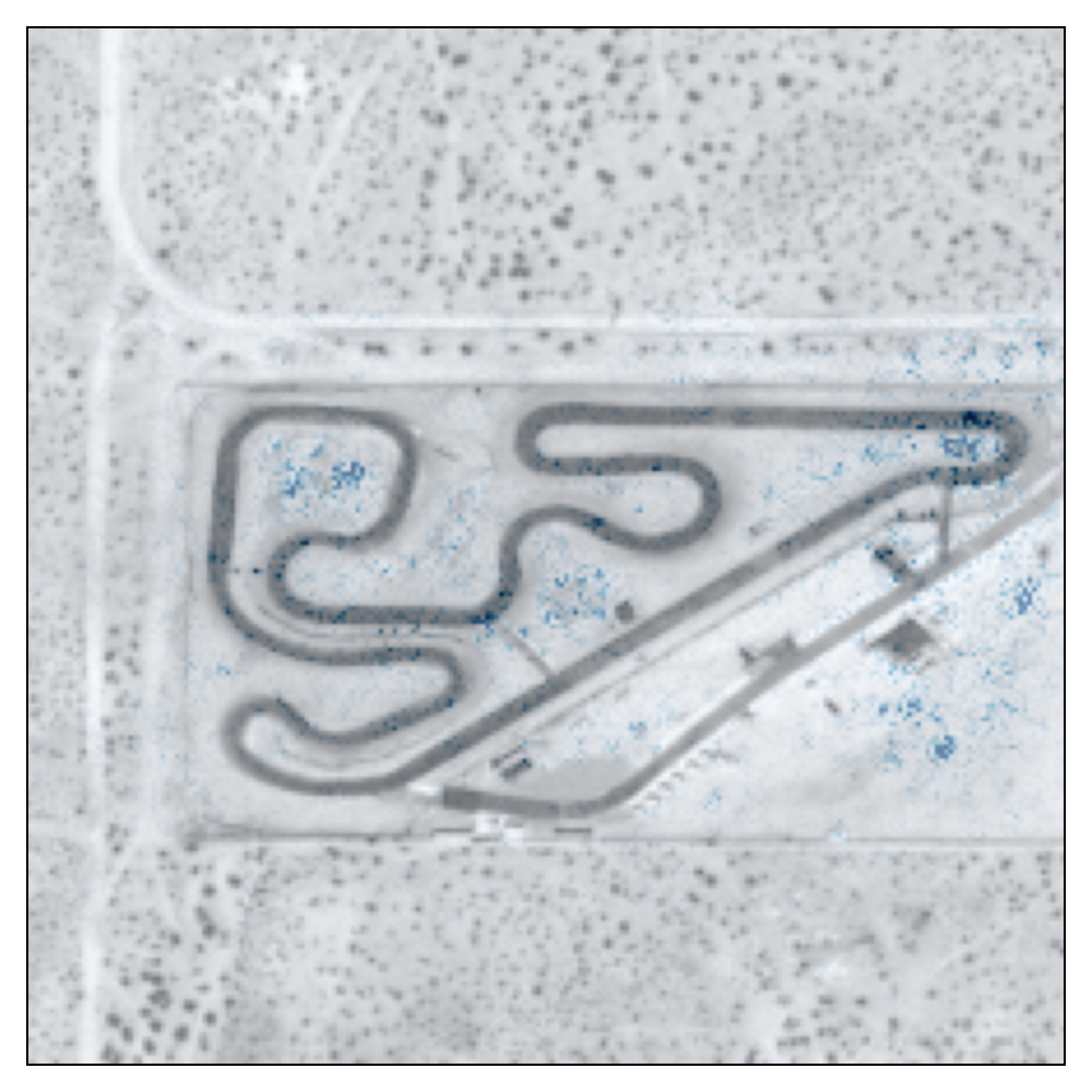}
    }%
    \subfigure[\textsc{FMoW}-Bonsai]{
        \includegraphics[width=0.33\textwidth]{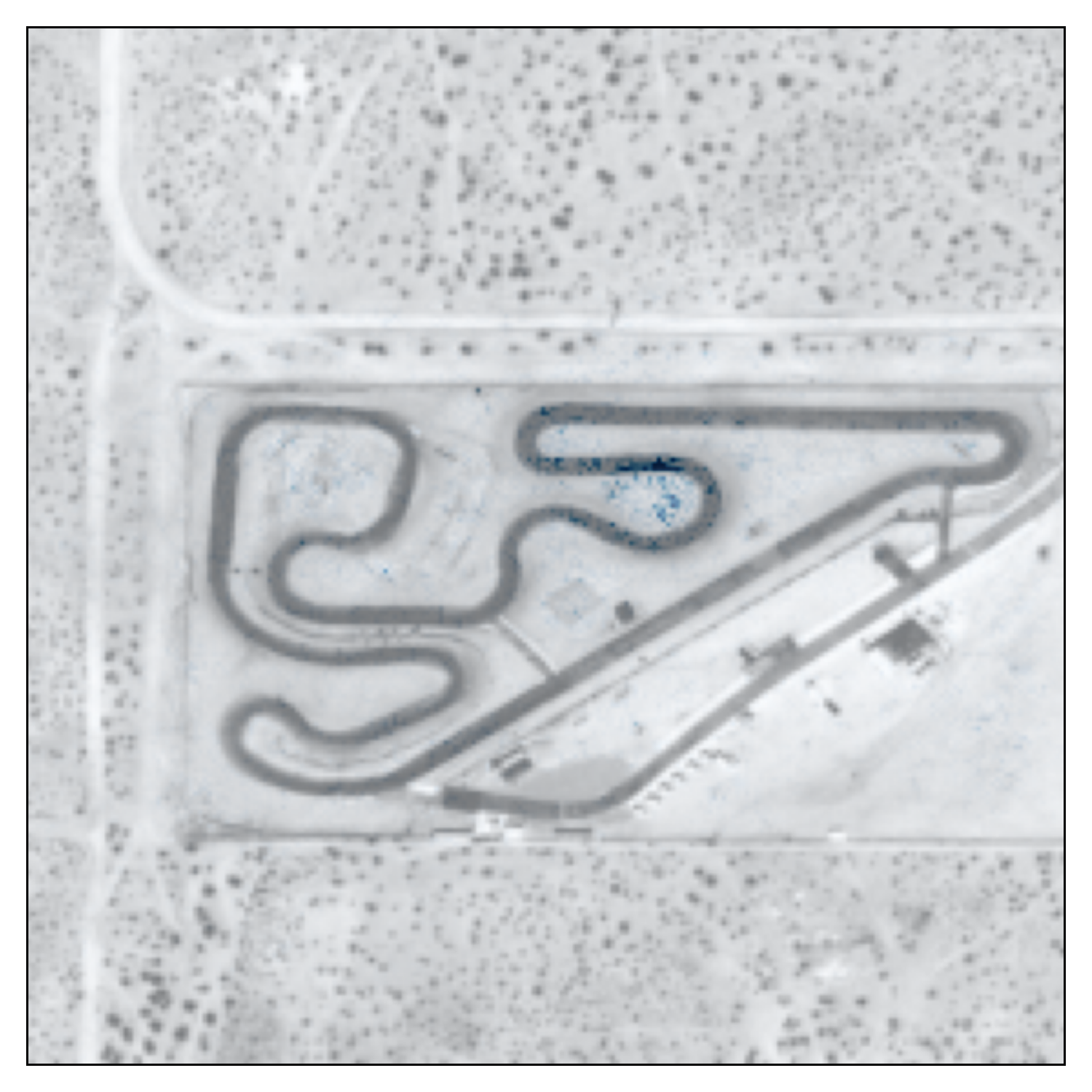}
    }%
    \subfigure[\textsc{FMoW}-\ours]{
        \includegraphics[width=0.33\textwidth]{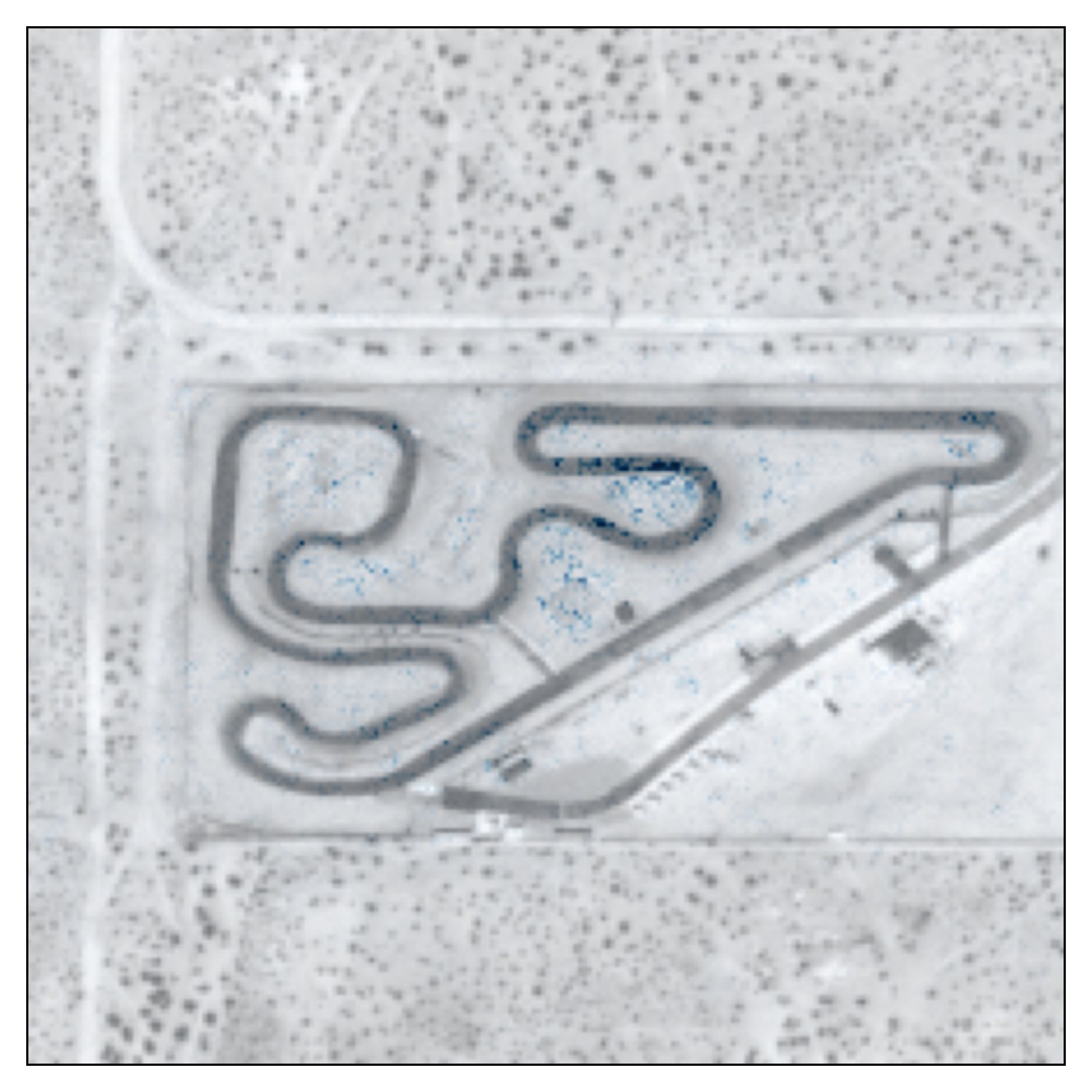}
    }%
    \\
    \subfigure[\textsc{FMoW}-ERM]{
        \includegraphics[width=0.33\textwidth]{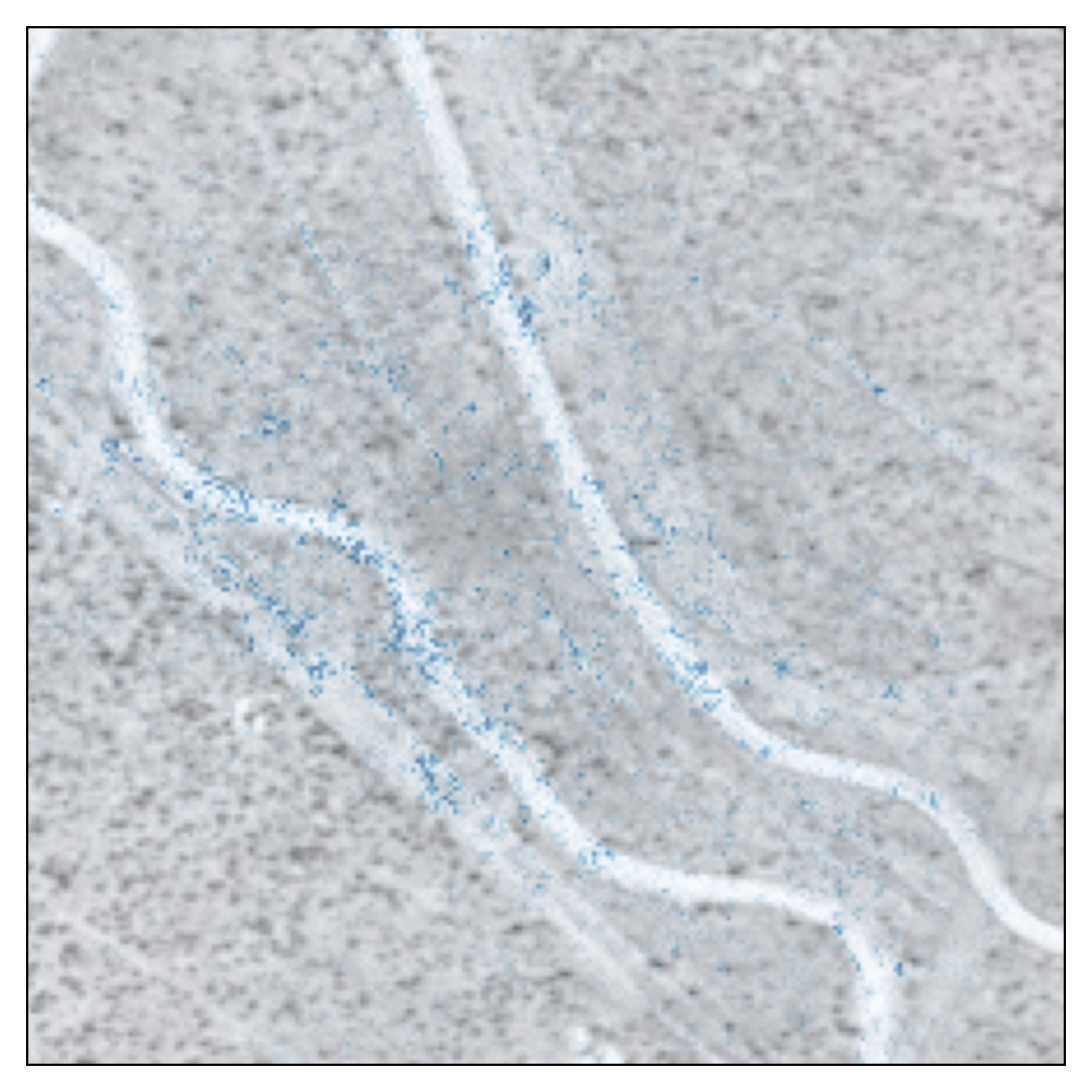}
    }%
    \subfigure[\textsc{FMoW}-Bonsai]{
        \includegraphics[width=0.33\textwidth]{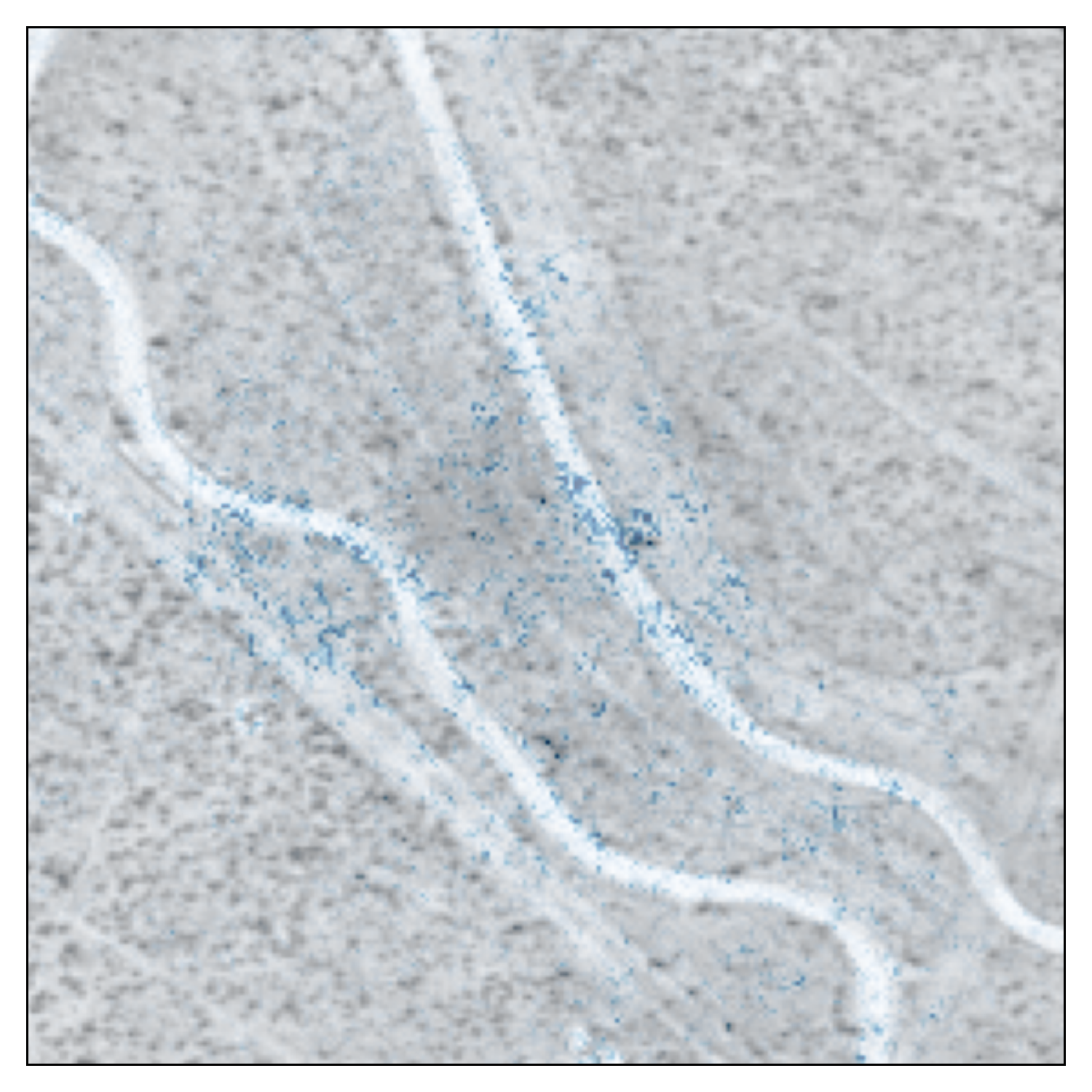}
    }%
    \subfigure[\textsc{FMoW}-\ours]{
        \includegraphics[width=0.33\textwidth]{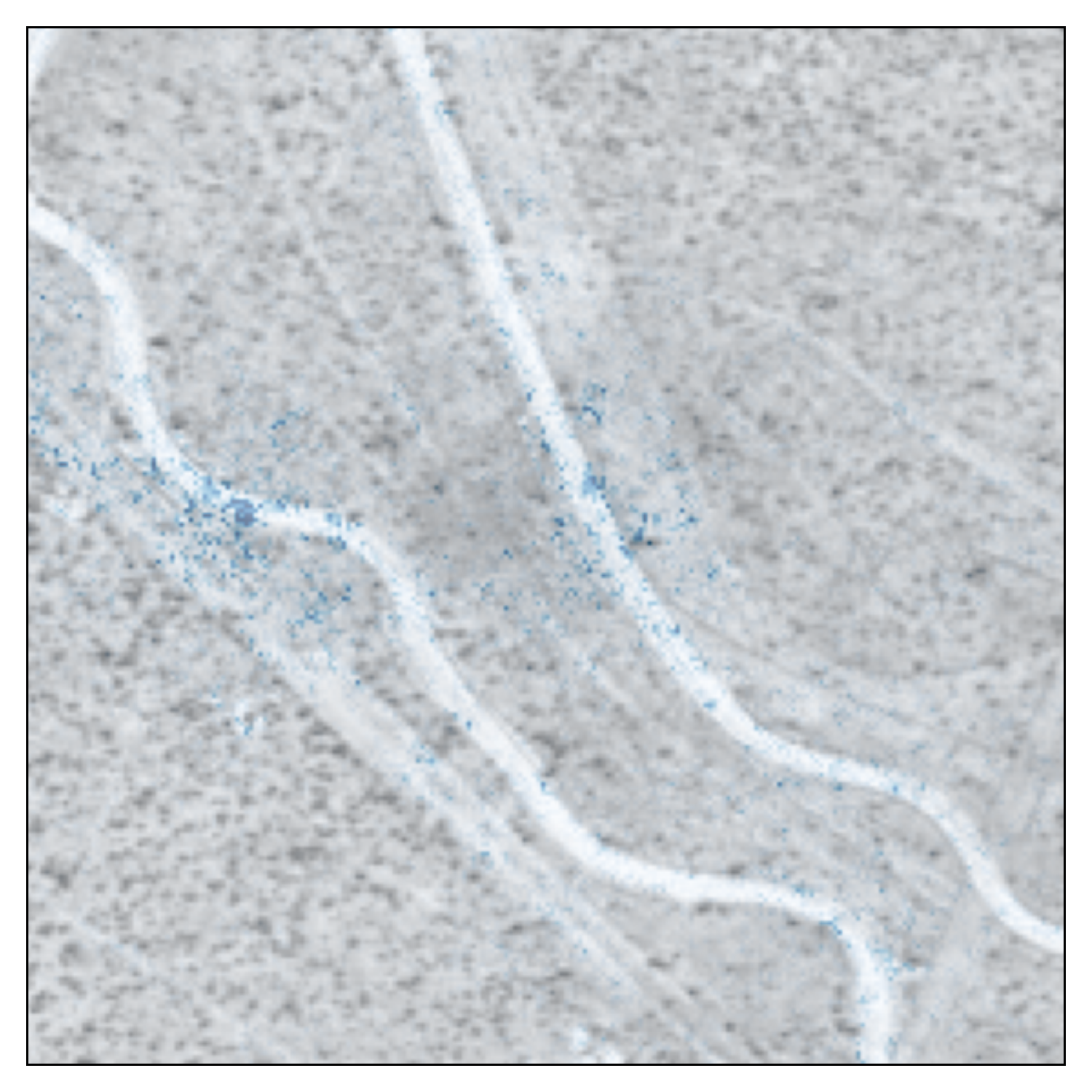}
    }%
    \\
    \subfigure[\textsc{FMoW}-ERM]{
        \includegraphics[width=0.33\textwidth]{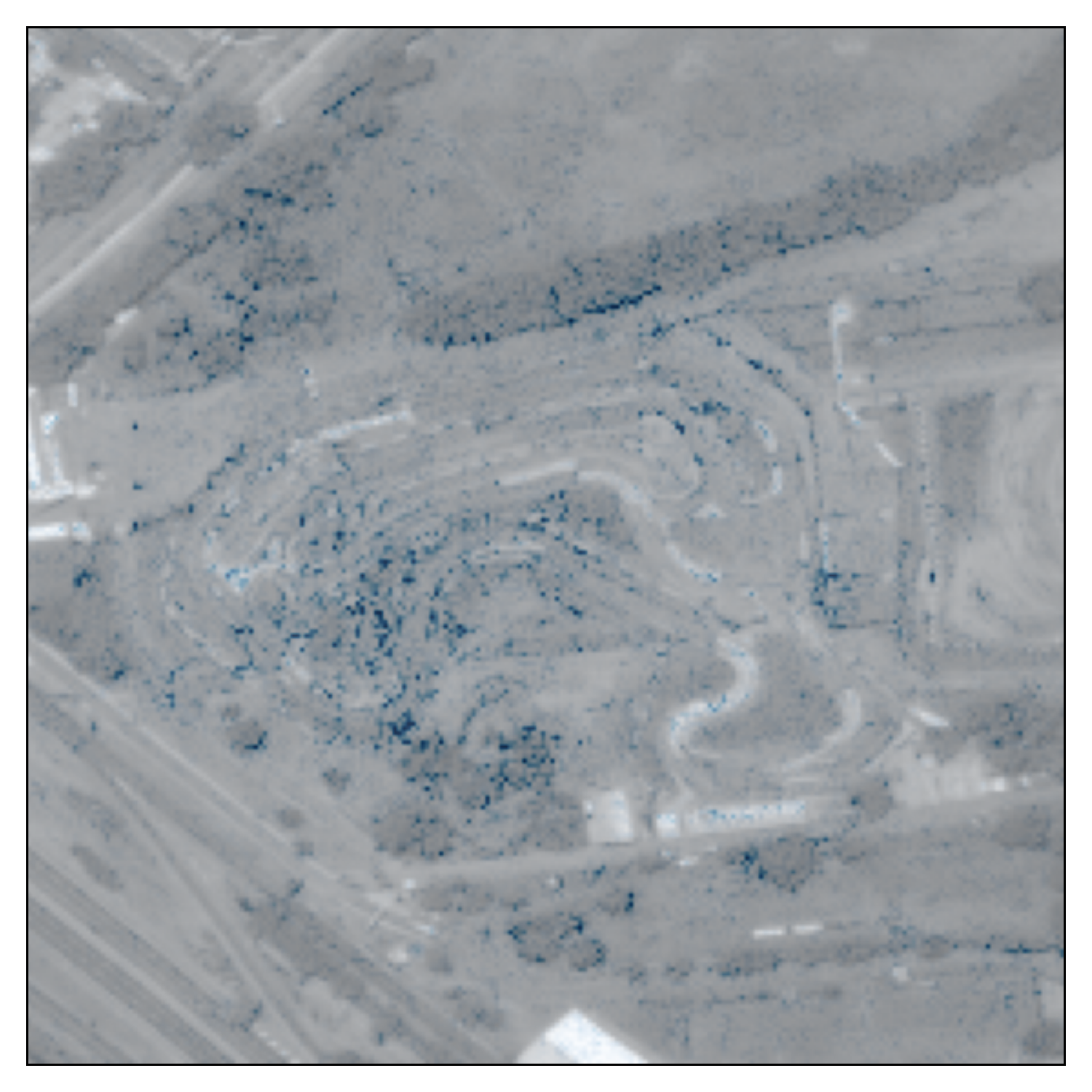}
    }%
    \subfigure[\textsc{FMoW}-Bonsai]{
        \includegraphics[width=0.33\textwidth]{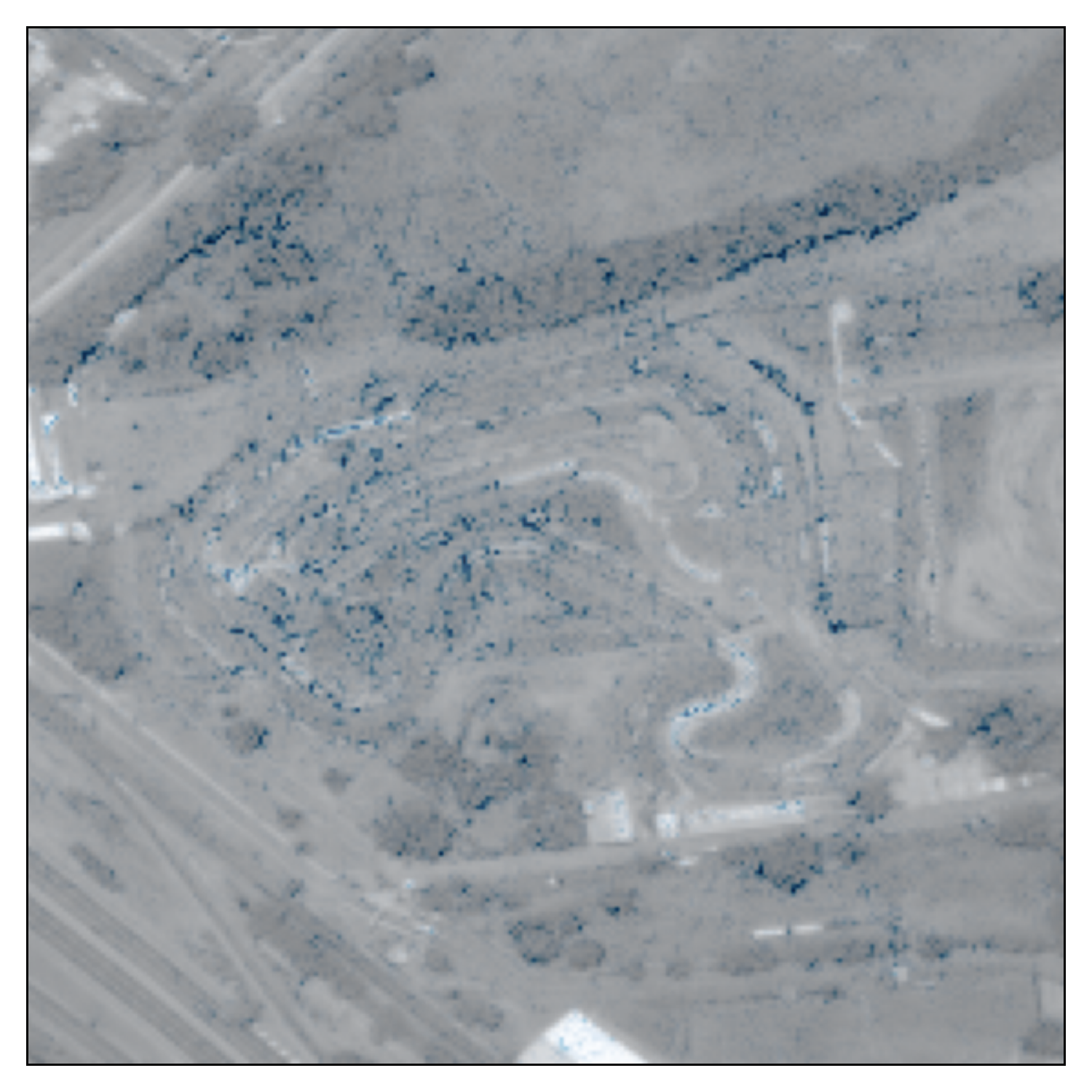}
    }%
    \subfigure[\textsc{FMoW}-\ours]{
        \includegraphics[width=0.33\textwidth]{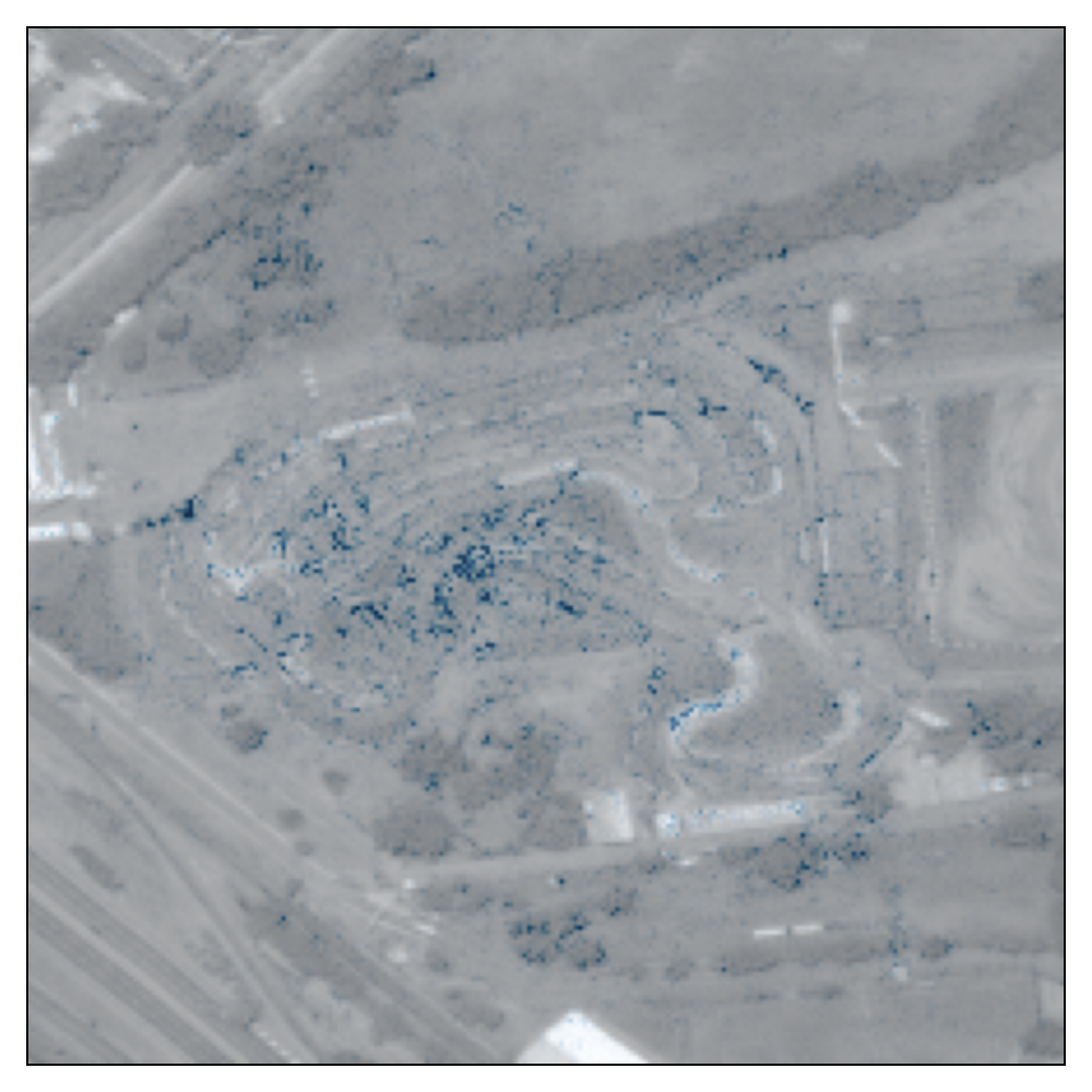}
    }%
    \\
    \vspace{-0.15in}
    \caption{Saliency map of feature learning on \textsc{FMoW} benchmark. The blue dots are the salient features.
        A deeper blue color denotes more salient features. It can be found that \ours is able to learn more meaningful and diverse features than ERM and Bonsai.}
    \label{fig:gradcam_fmow_appdx}
    \vspace{-2cm}
\end{figure}

\begin{figure}[t!]
    \vspace{-0.15in}
    \centering
    \subfigure[\textsc{iWildCam}-ERM]{
        \includegraphics[width=0.33\textwidth]{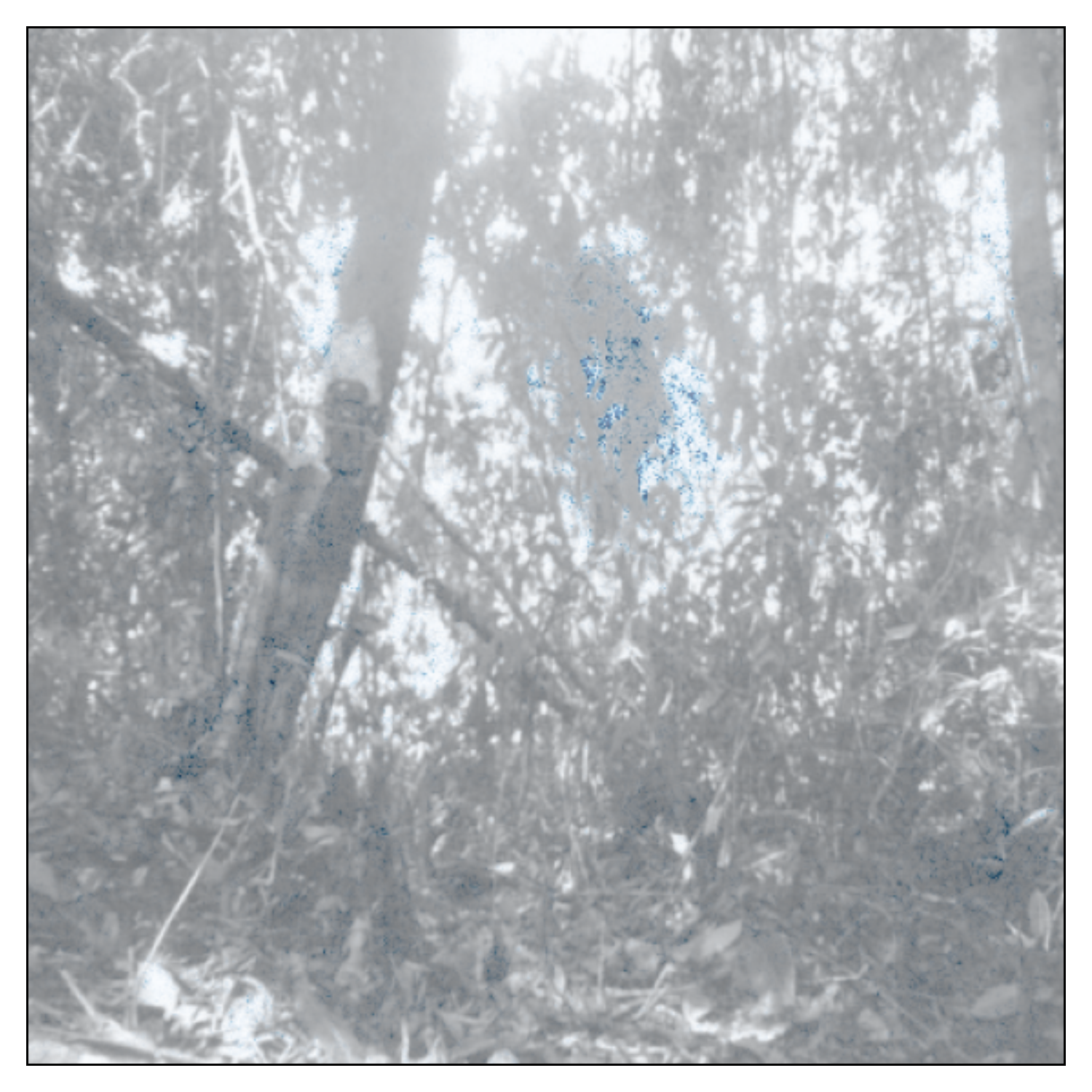}
    }%
    \subfigure[\textsc{iWildCam}-Bonsai]{
        \includegraphics[width=0.33\textwidth]{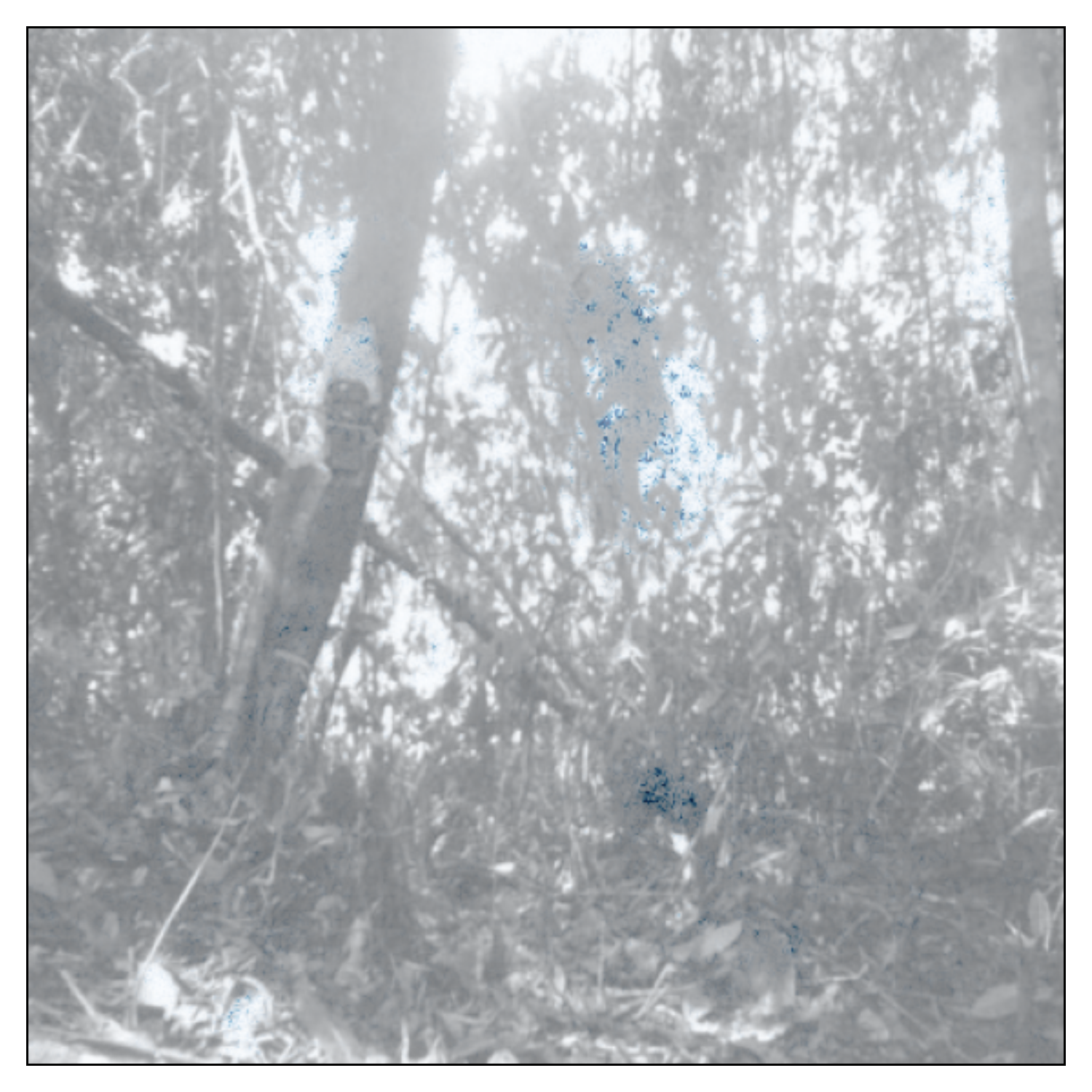}
    }%
    \subfigure[\textsc{iWildCam}-\ours]{
        \includegraphics[width=0.33\textwidth]{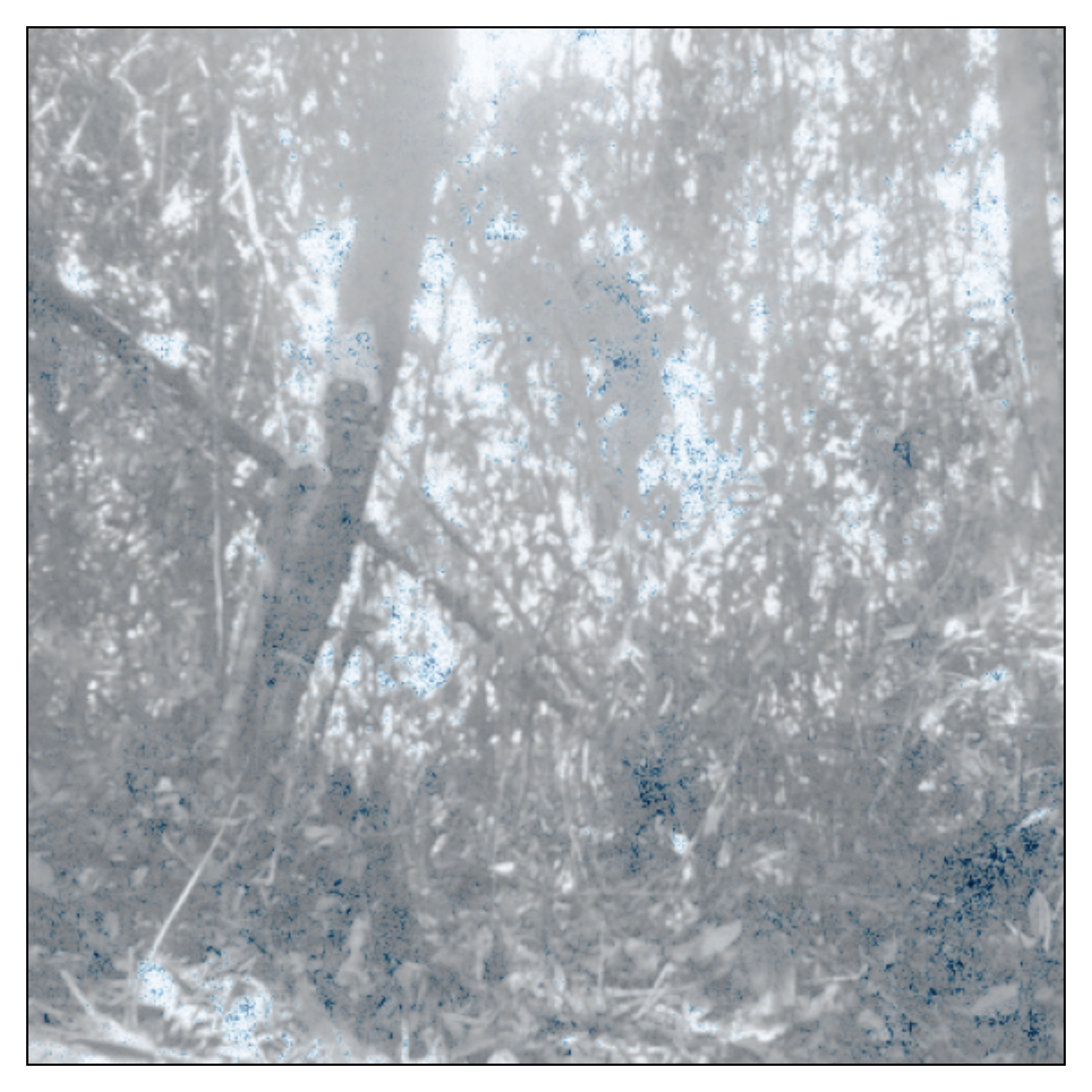}
    }%
    \\
    \subfigure[\textsc{iWildCam}-ERM]{
        \includegraphics[width=0.33\textwidth]{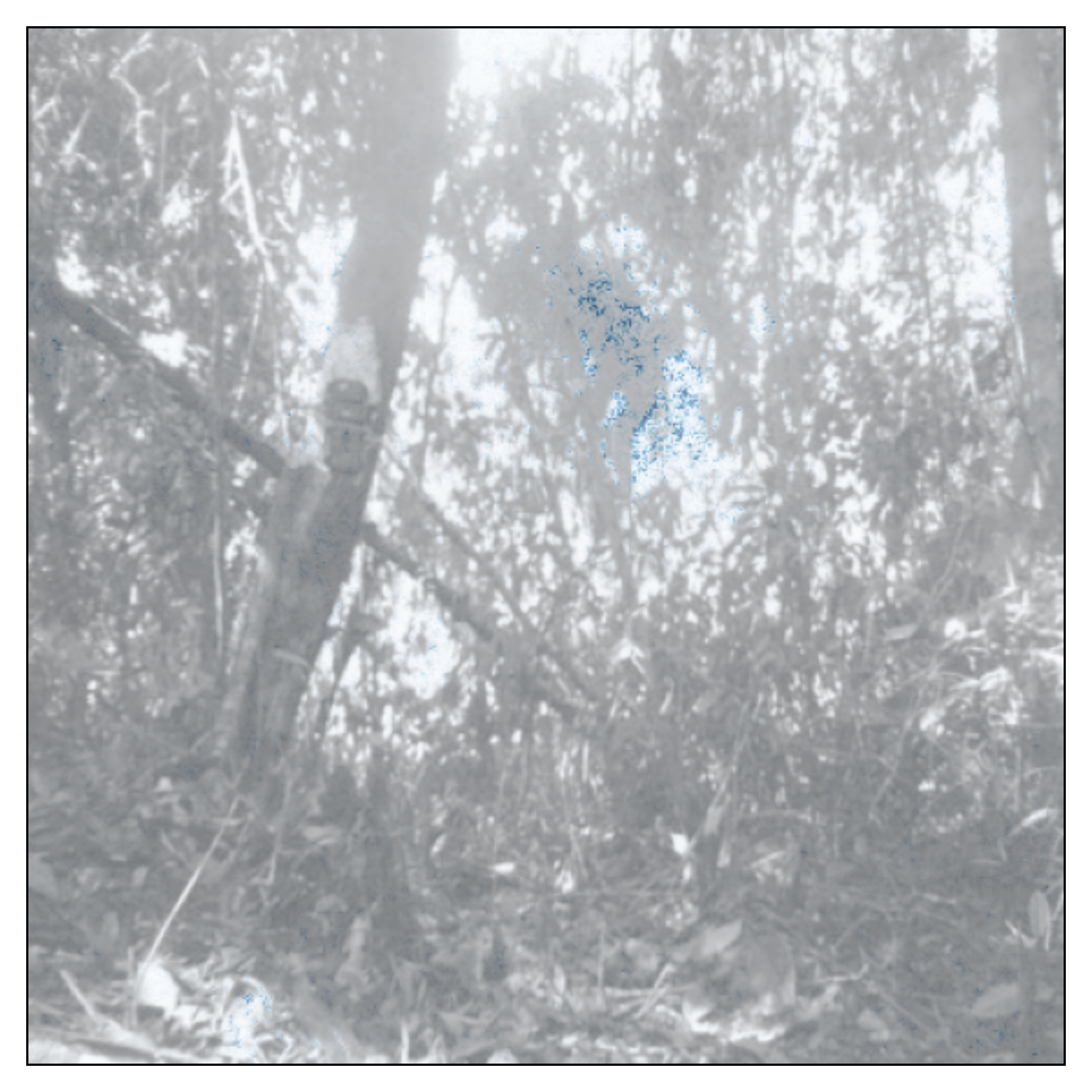}
    }%
    \subfigure[\textsc{iWildCam}-Bonsai]{
        \includegraphics[width=0.33\textwidth]{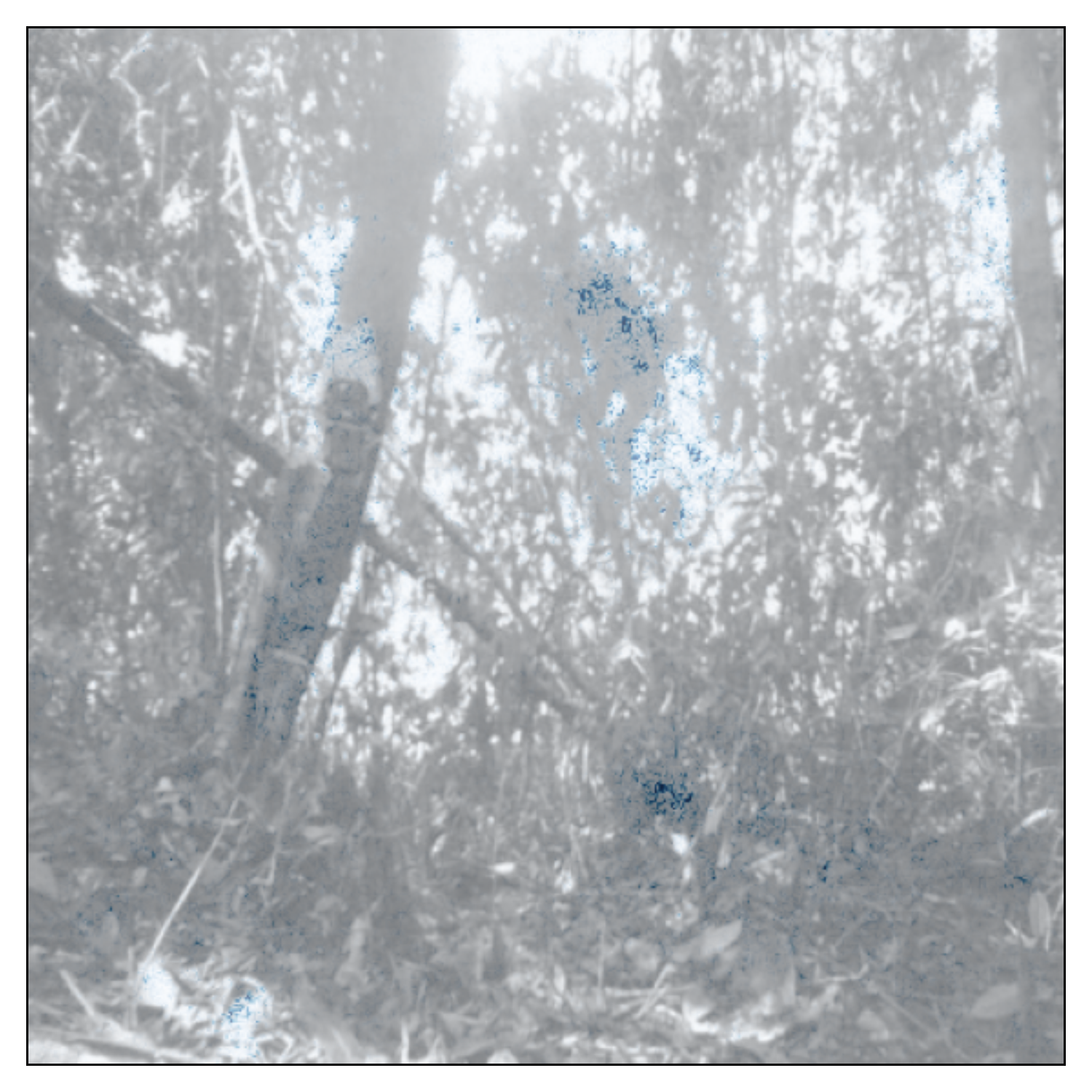}
    }%
    \subfigure[\textsc{iWildCam}-\ours]{
        \includegraphics[width=0.33\textwidth]{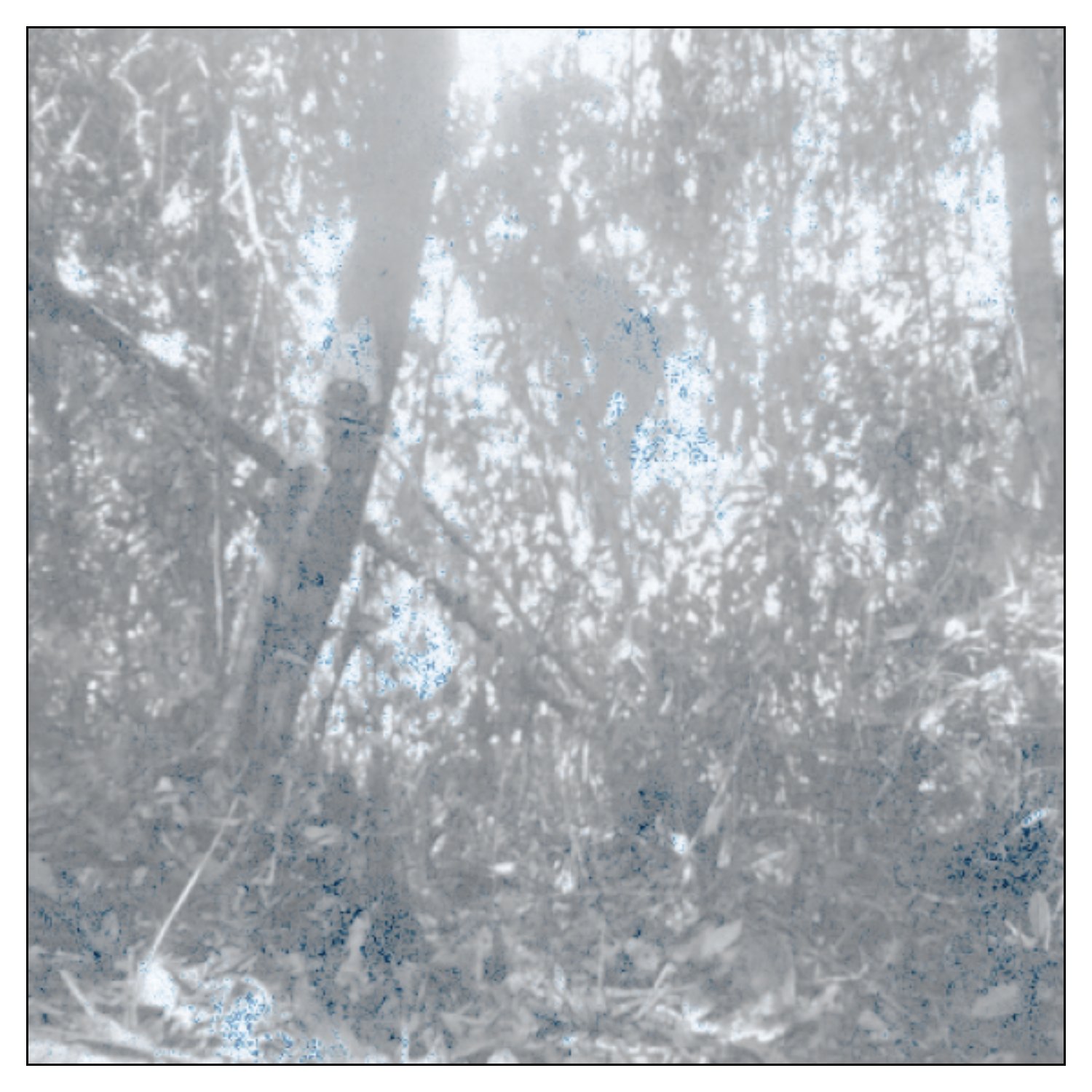}
    }%
    \\
    \subfigure[\textsc{iWildCam}-ERM]{
        \includegraphics[width=0.33\textwidth]{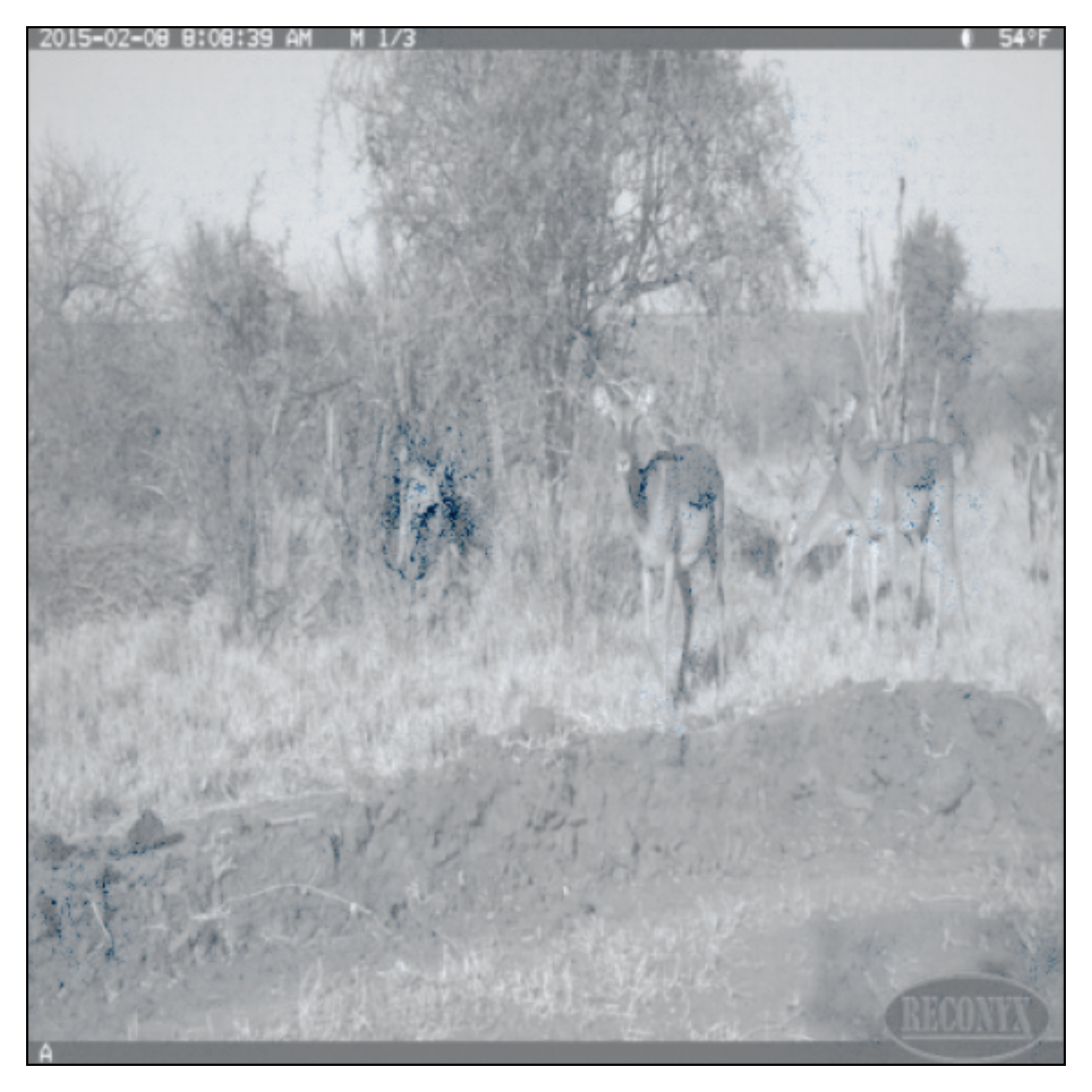}
    }%
    \subfigure[\textsc{iWildCam}-Bonsai]{
        \includegraphics[width=0.33\textwidth]{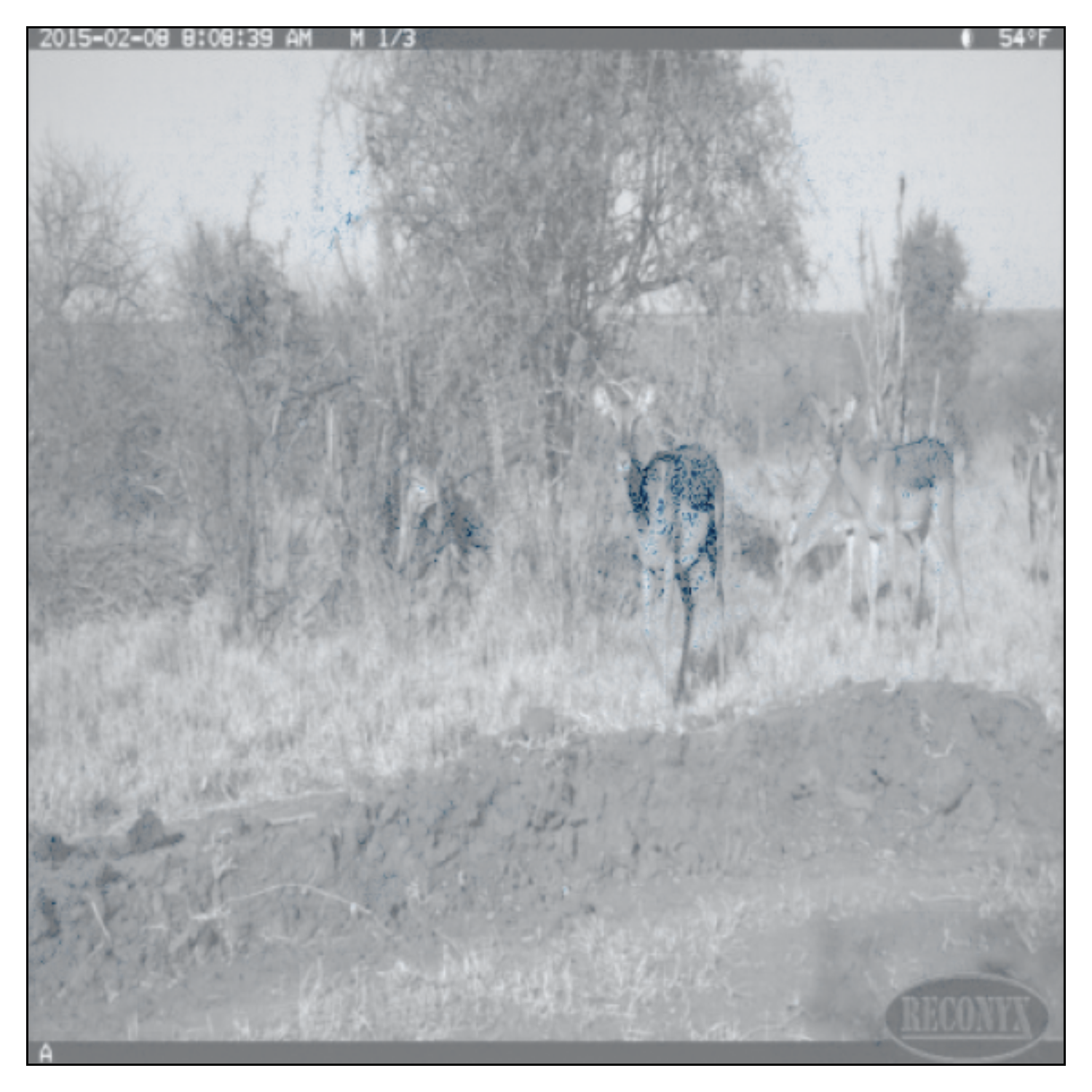}
    }%
    \subfigure[\textsc{iWildCam}-\ours]{
        \includegraphics[width=0.33\textwidth]{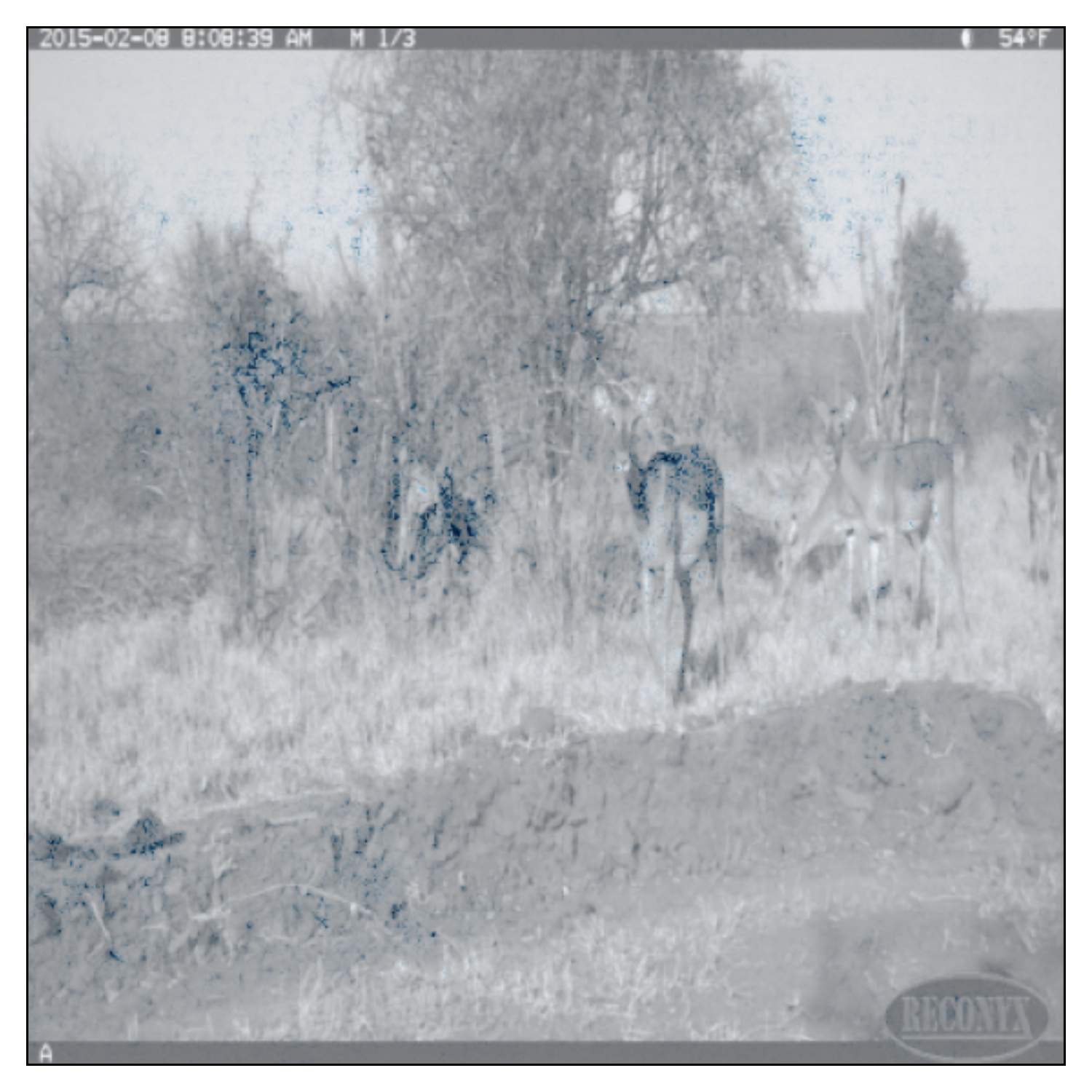}
    }%
    \\
    \subfigure[\textsc{iWildCam}-ERM]{
        \includegraphics[width=0.33\textwidth]{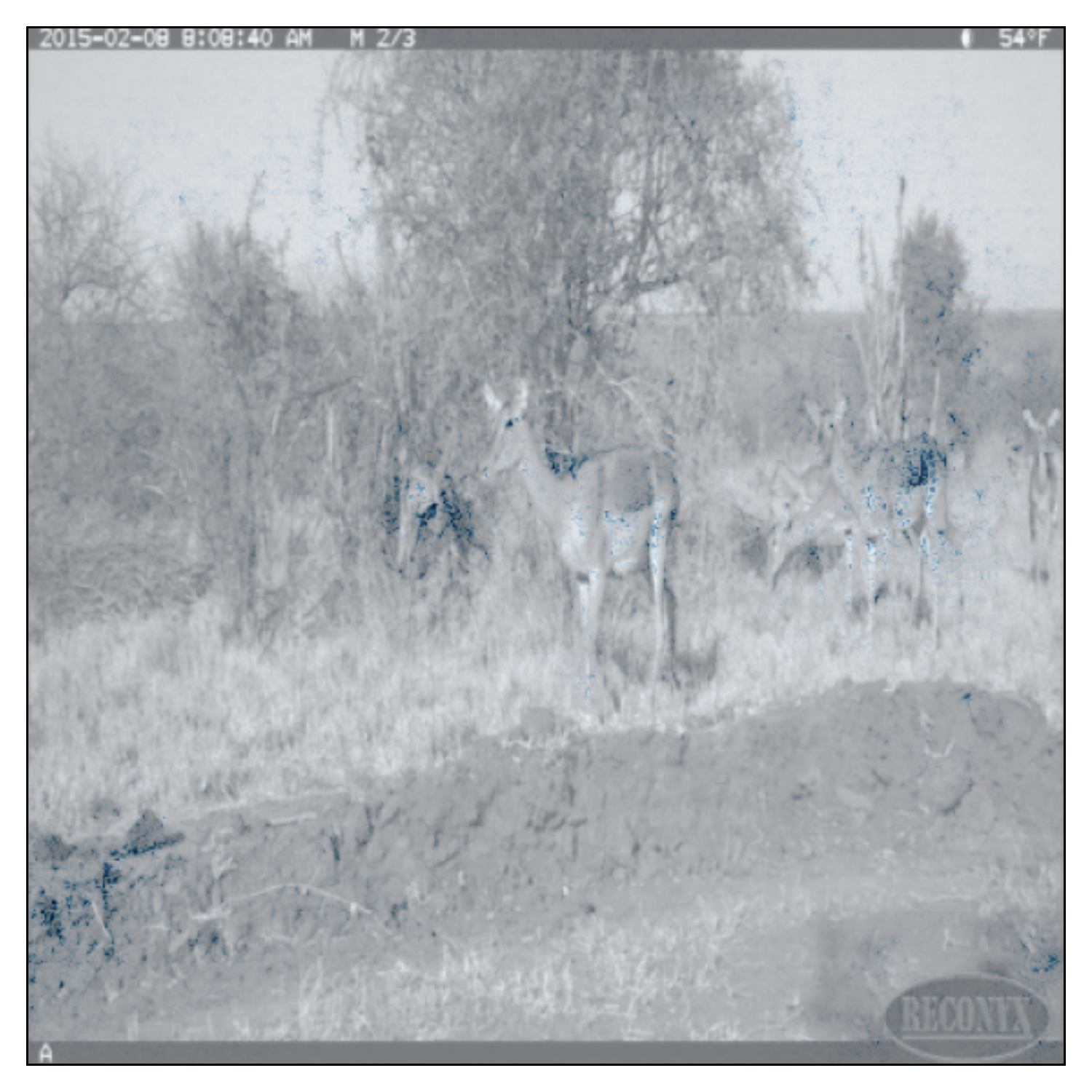}
    }%
    \subfigure[\textsc{iWildCam}-Bonsai]{
        \includegraphics[width=0.33\textwidth]{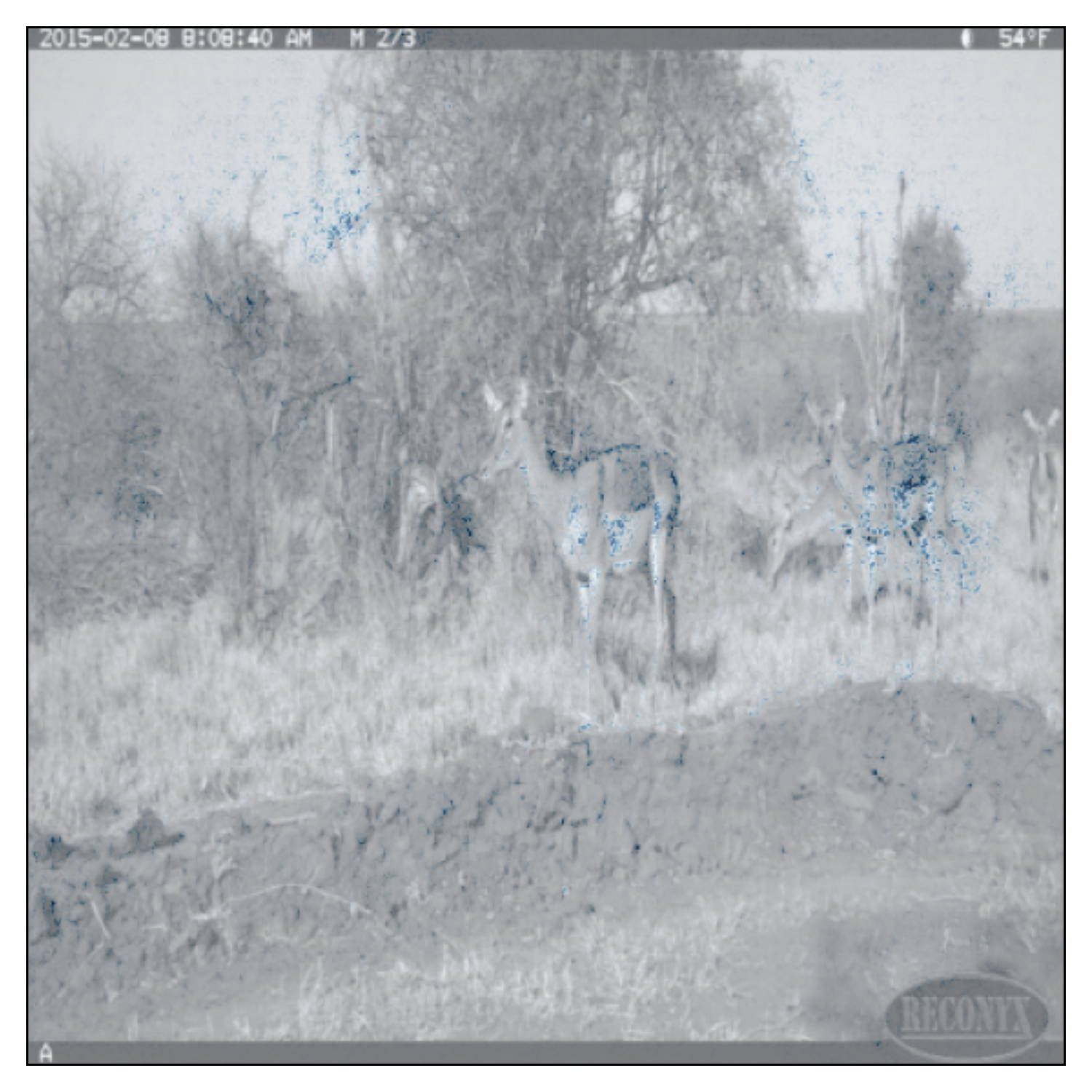}
    }%
    \subfigure[\textsc{iWildCam}-\ours]{
        \includegraphics[width=0.33\textwidth]{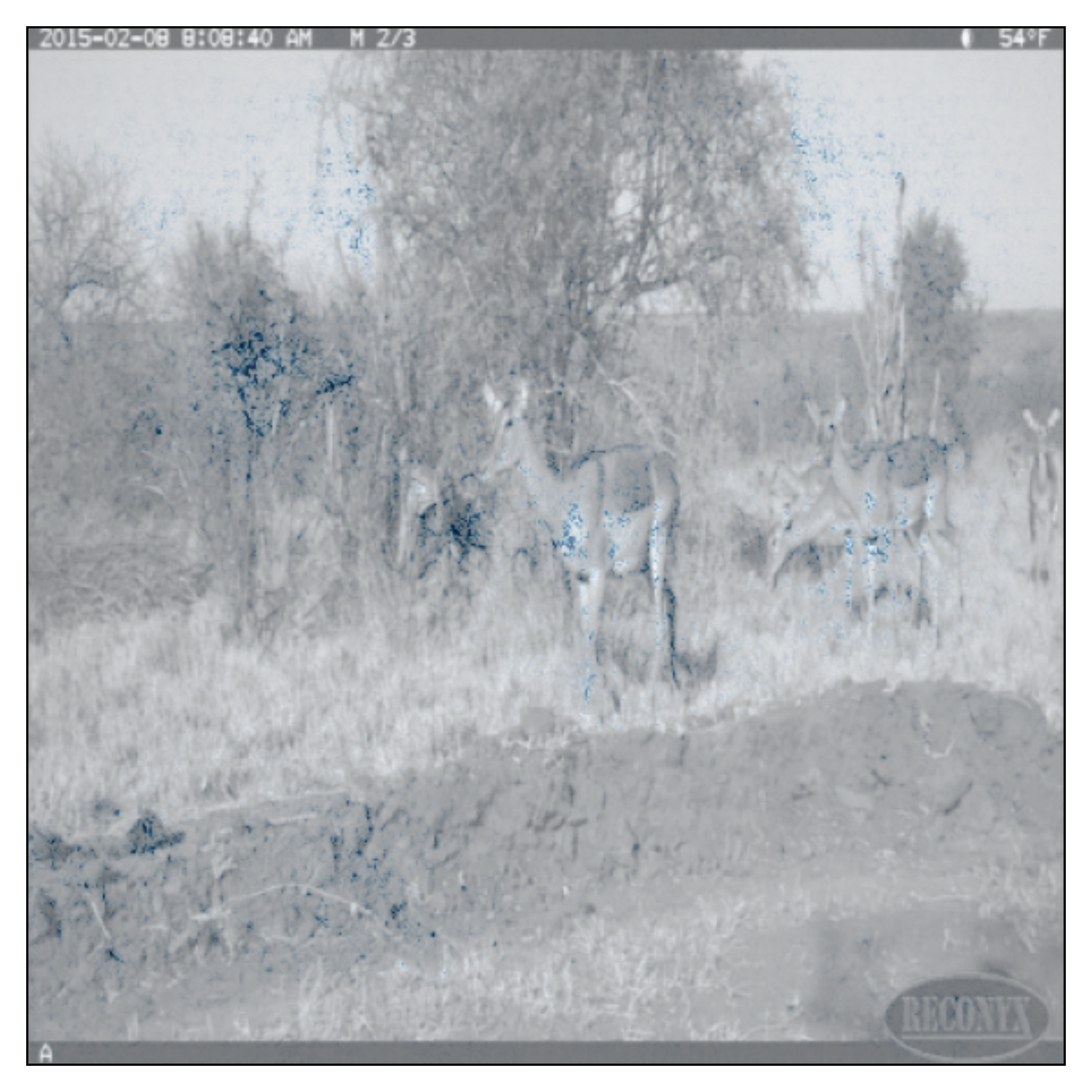}
    }%
    \\
    \vspace{-0.15in}
    \caption{Saliency map of feature learning on \textsc{iWildCam} benchmark. The blue dots are the salient features.
        A deeper blue color denotes more salient features. It can be found that \ours is able to learn more meaningful and diverse features than ERM and Bonsai.}
    \label{fig:gradcam_iwildcam_appdx}
    \vspace{-2cm}
\end{figure}

\begin{figure}[t!]
    \vspace{-0.15in}
    \centering
    \subfigure[\textsc{RxRx1}-ERM]{
        \includegraphics[width=0.33\textwidth]{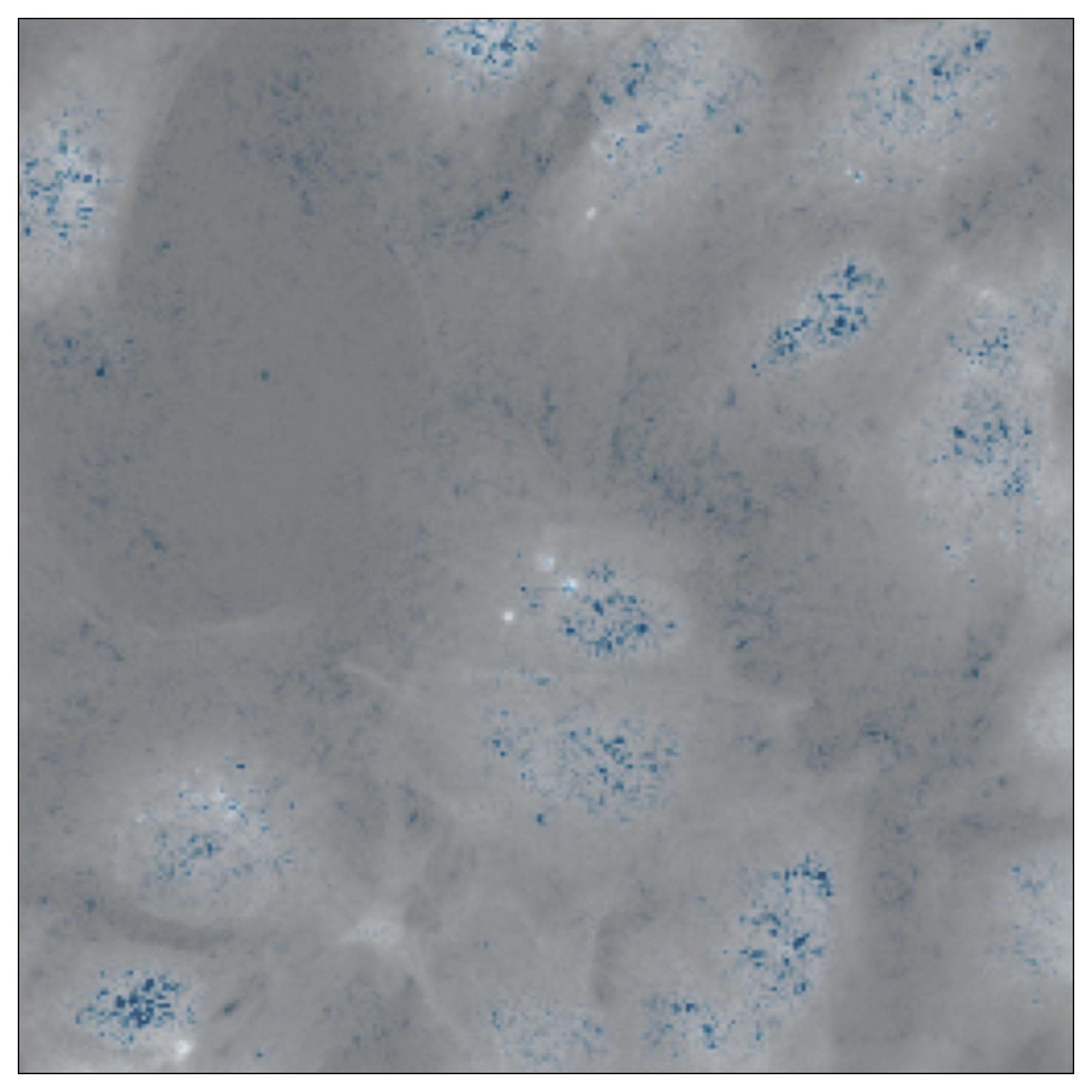}
    }%
    \subfigure[\textsc{RxRx1}-Bonsai]{
        \includegraphics[width=0.33\textwidth]{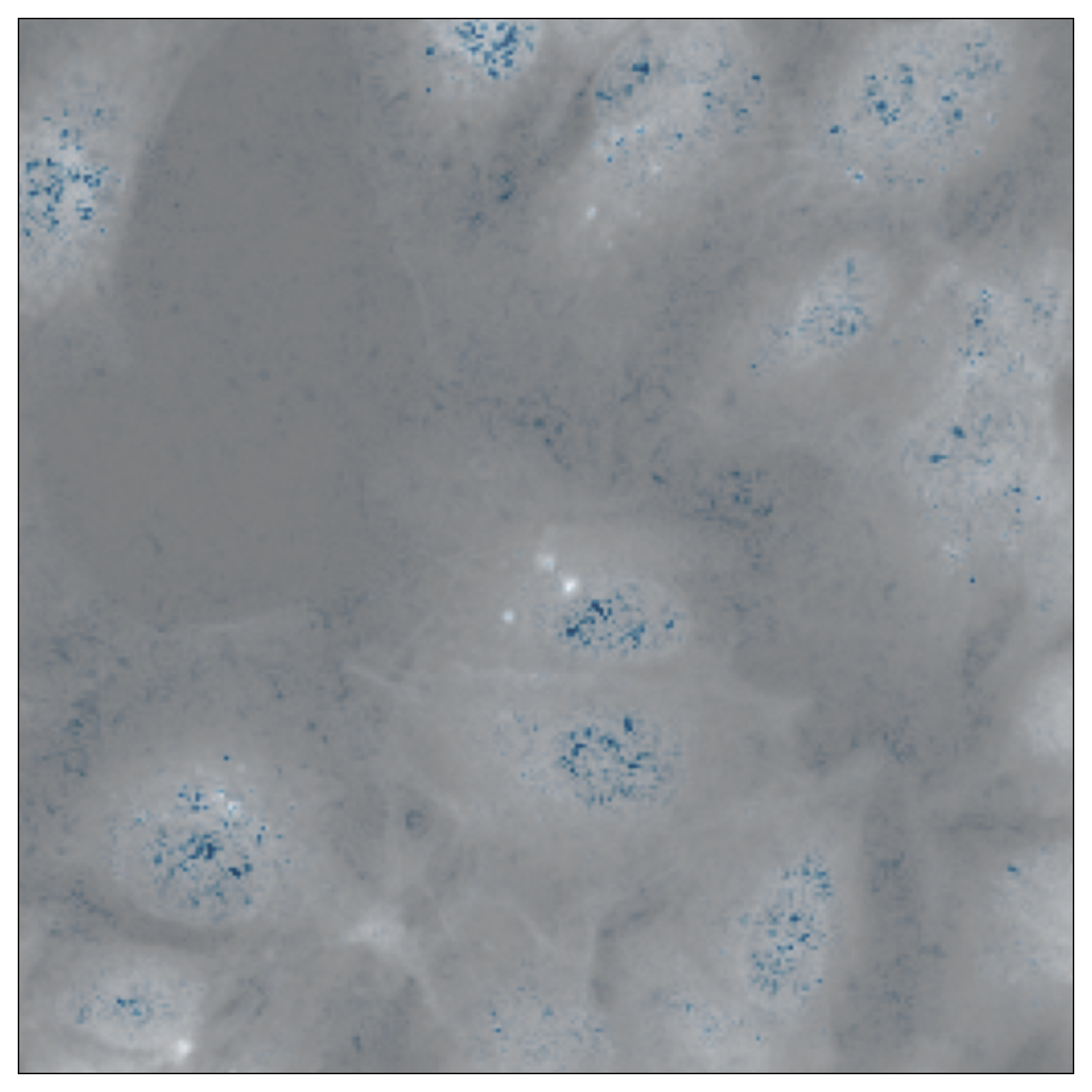}
    }%
    \subfigure[\textsc{RxRx1}-\ours]{
        \includegraphics[width=0.33\textwidth]{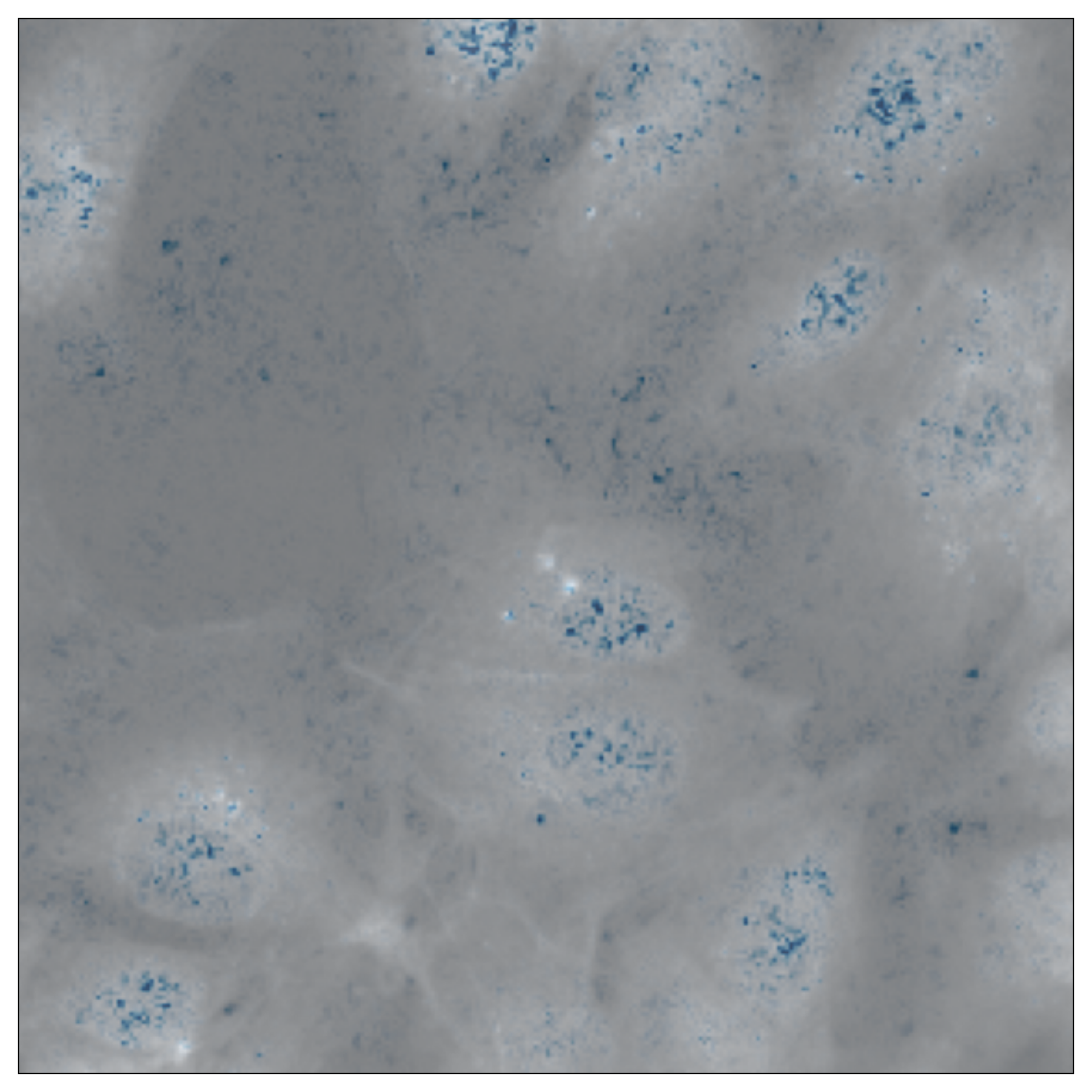}
    }%
    \\
    \subfigure[\textsc{RxRx1}-ERM]{
        \includegraphics[width=0.33\textwidth]{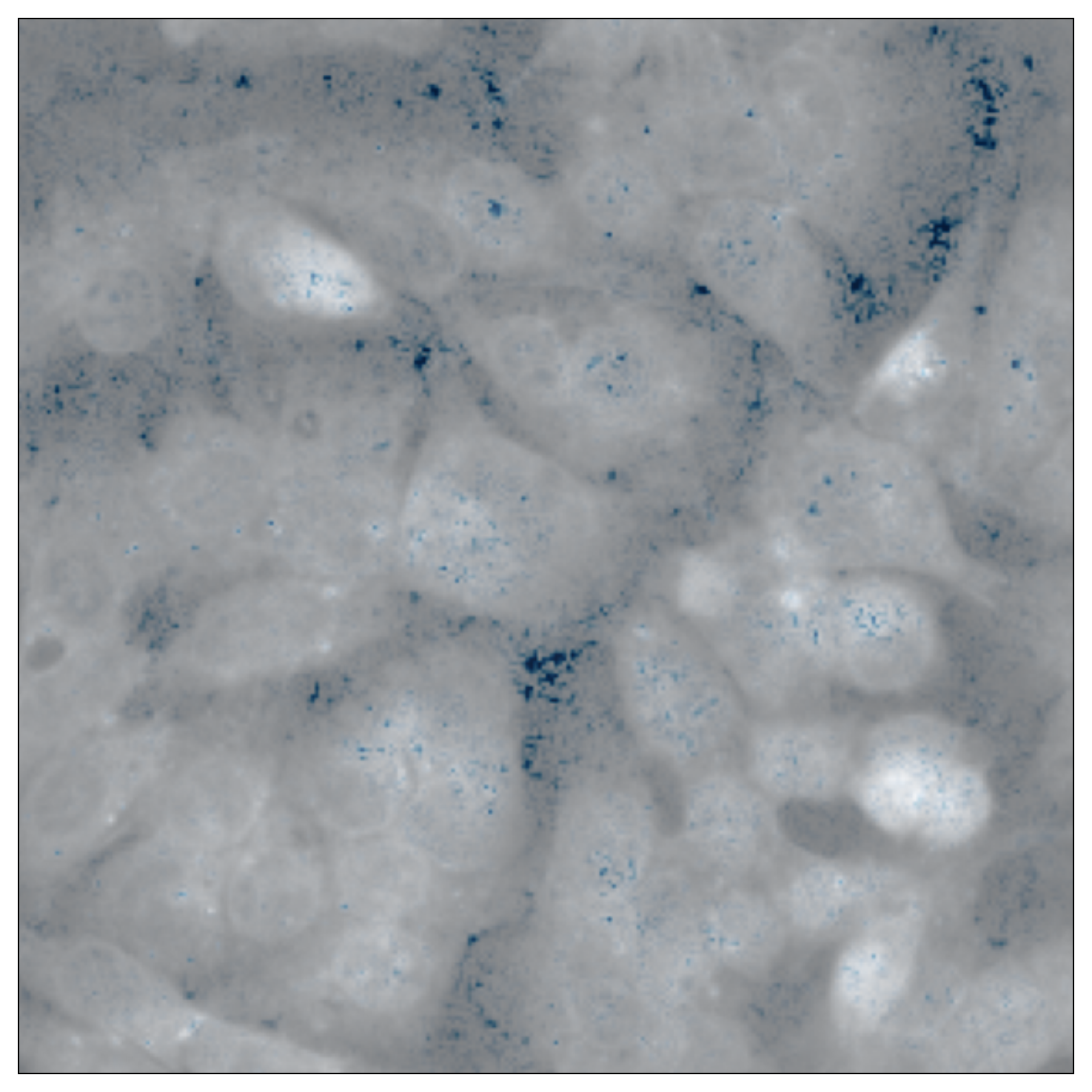}
    }%
    \subfigure[\textsc{RxRx1}-Bonsai]{
        \includegraphics[width=0.33\textwidth]{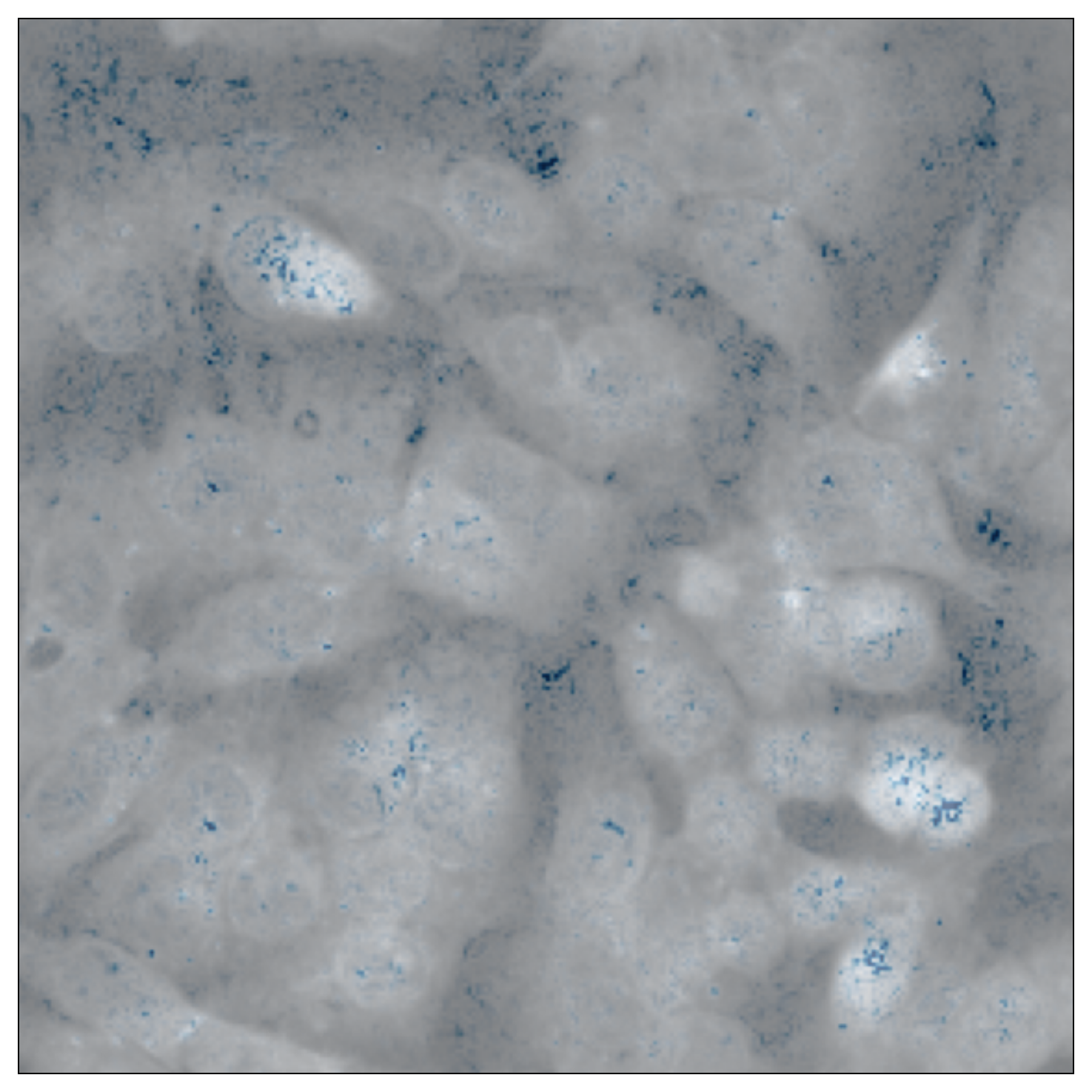}
    }%
    \subfigure[\textsc{RxRx1}-\ours]{
        \includegraphics[width=0.33\textwidth]{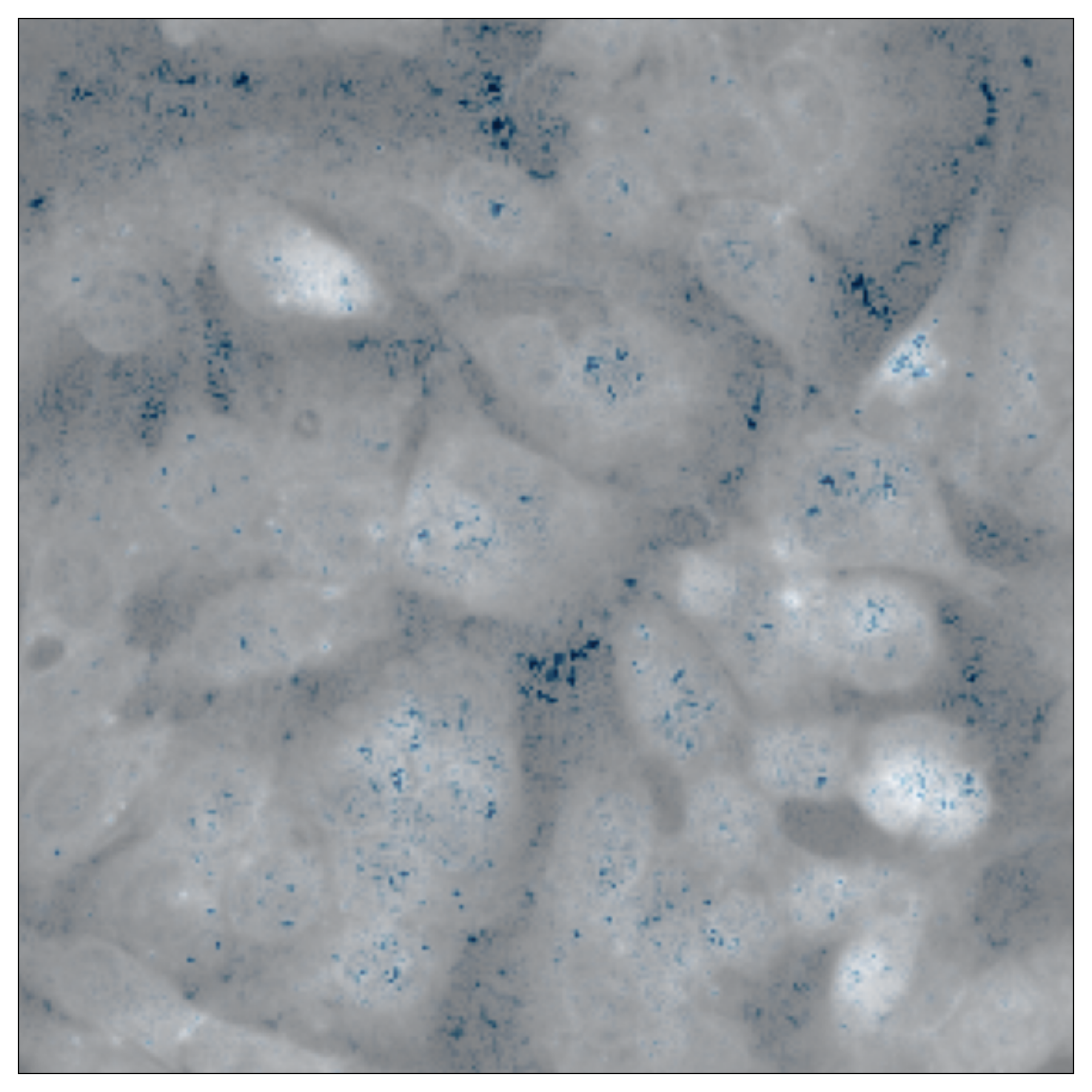}
    }%
    \\
    \subfigure[\textsc{RxRx1}-ERM]{
        \includegraphics[width=0.33\textwidth]{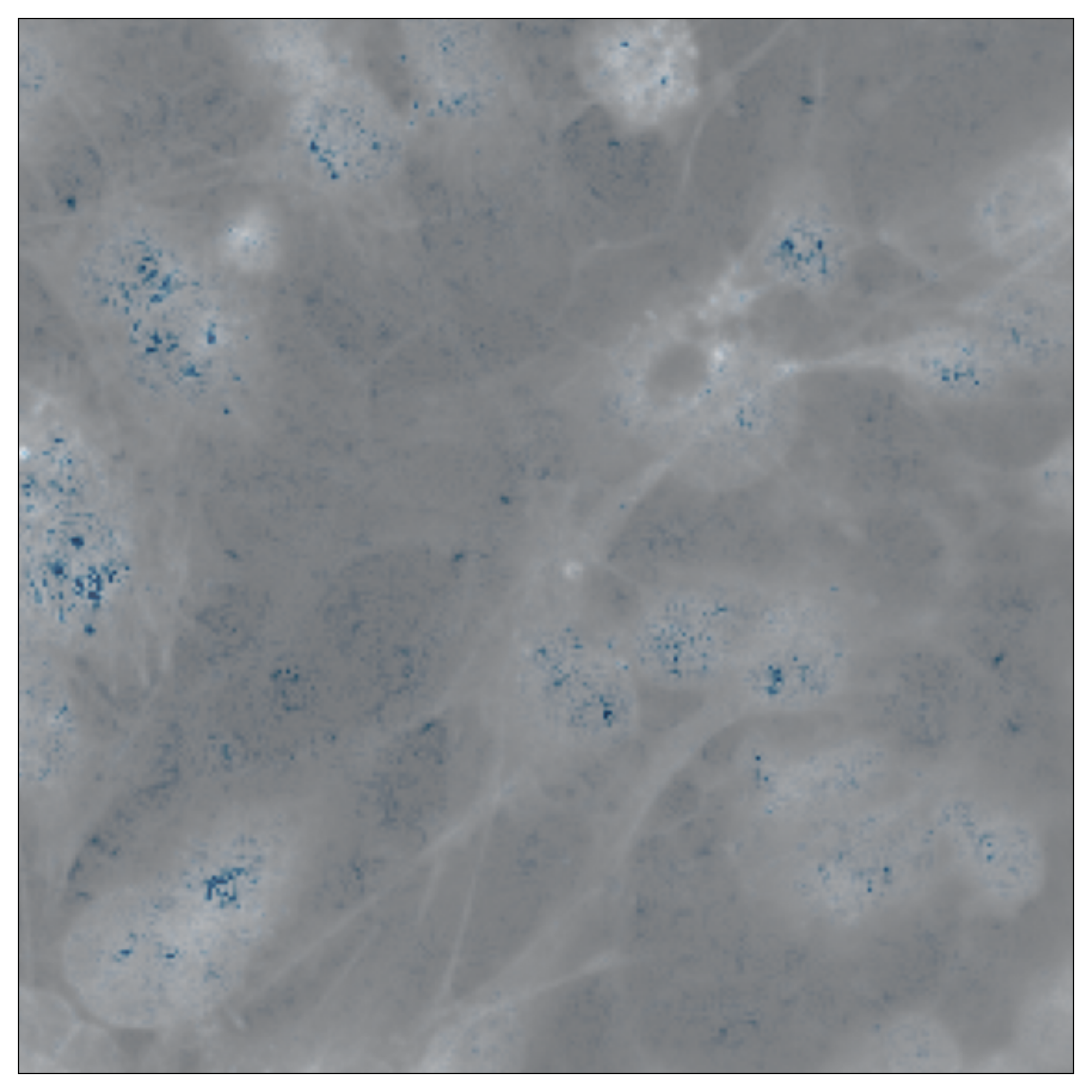}
    }%
    \subfigure[\textsc{RxRx1}-Bonsai]{
        \includegraphics[width=0.33\textwidth]{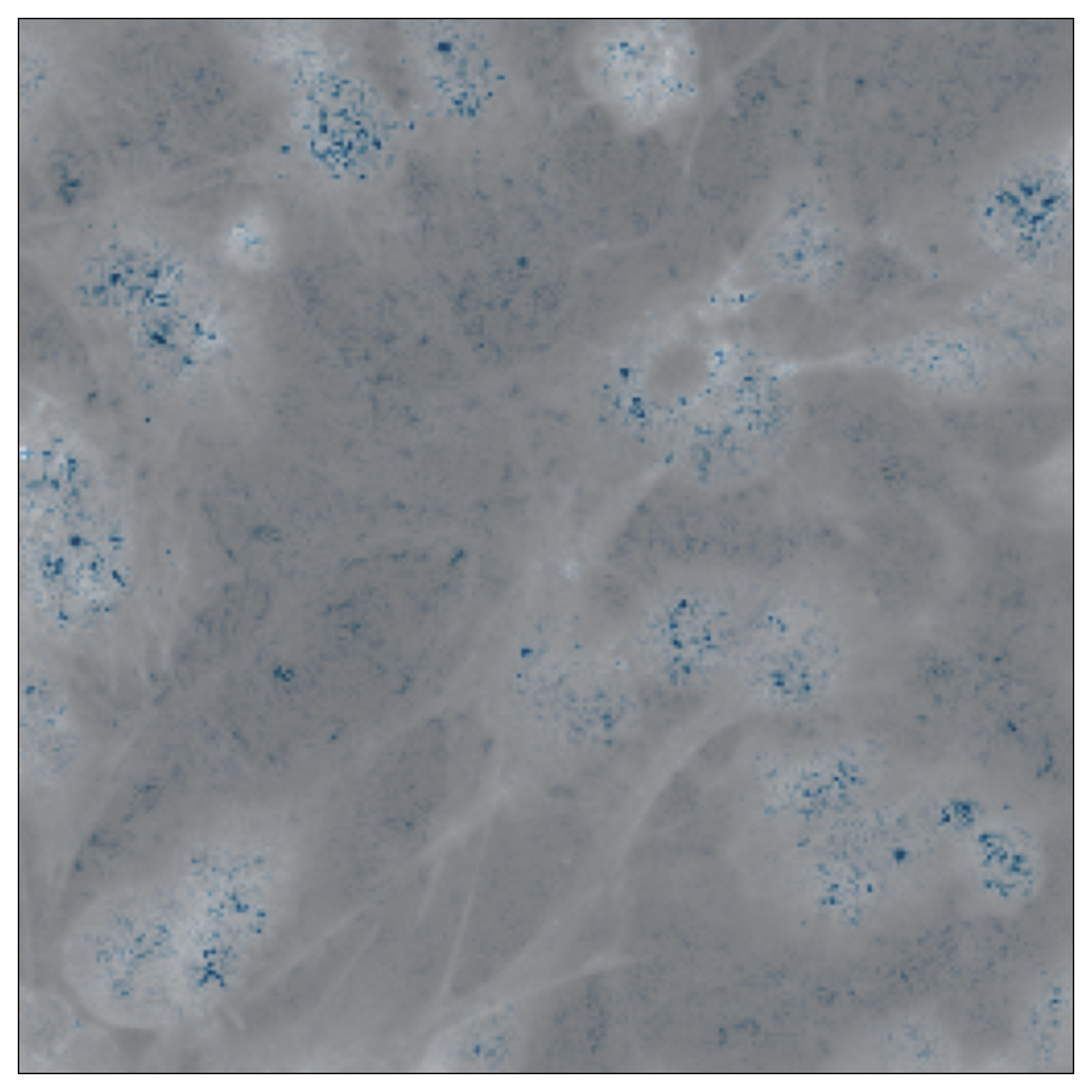}
    }%
    \subfigure[\textsc{RxRx1}-\ours]{
        \includegraphics[width=0.33\textwidth]{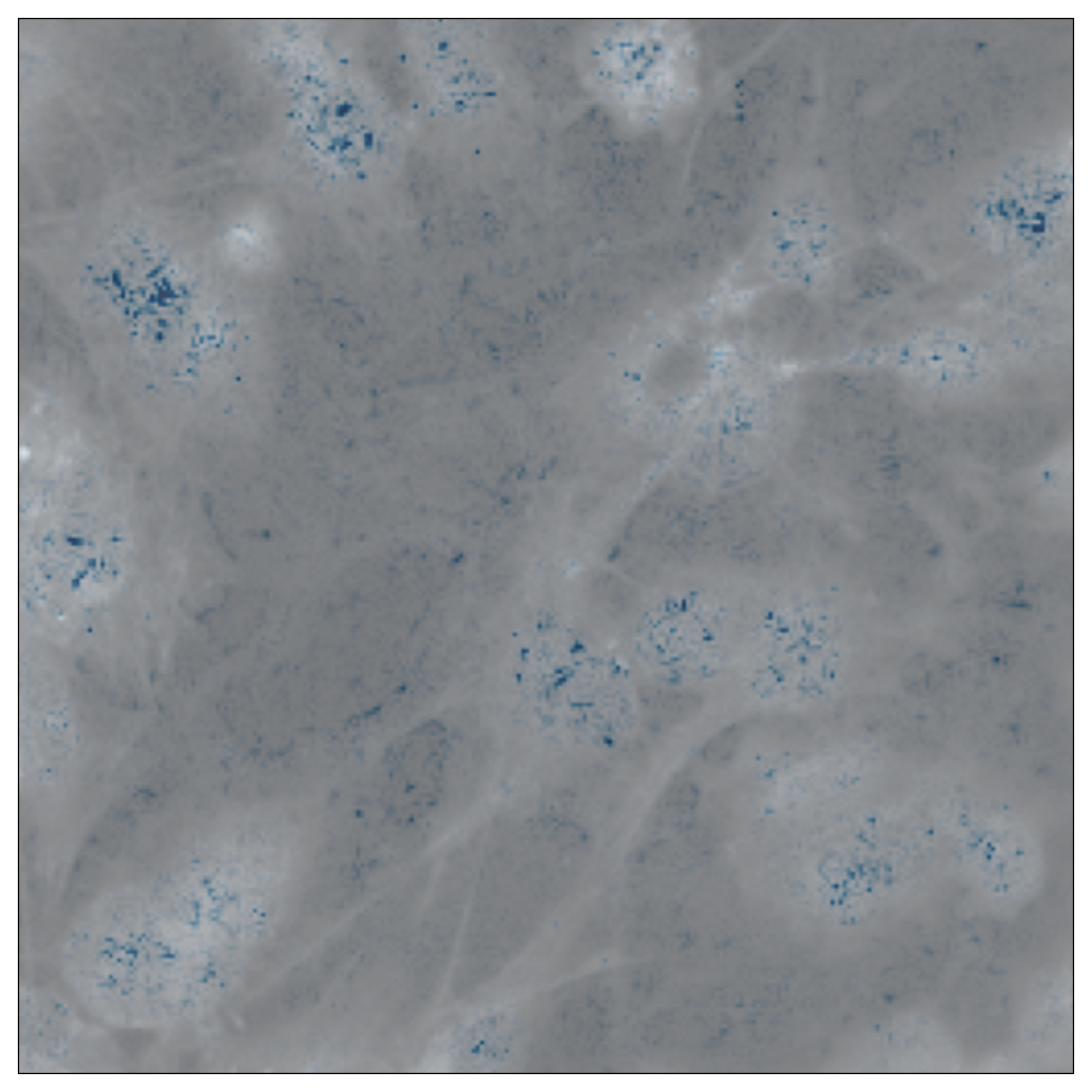}
    }%
    \\
    \subfigure[\textsc{RxRx1}-ERM]{
        \includegraphics[width=0.33\textwidth]{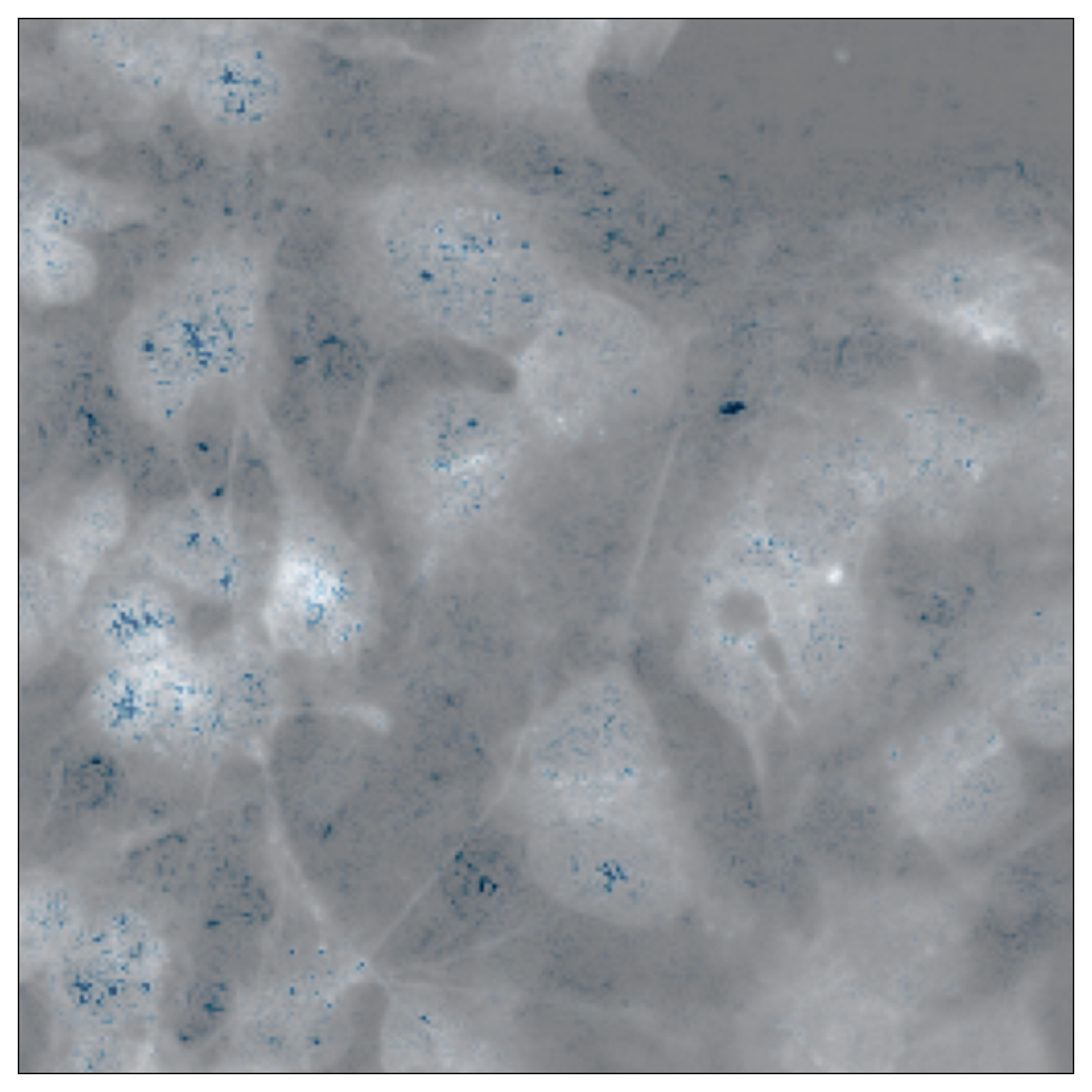}
    }%
    \subfigure[\textsc{RxRx1}-Bonsai]{
        \includegraphics[width=0.33\textwidth]{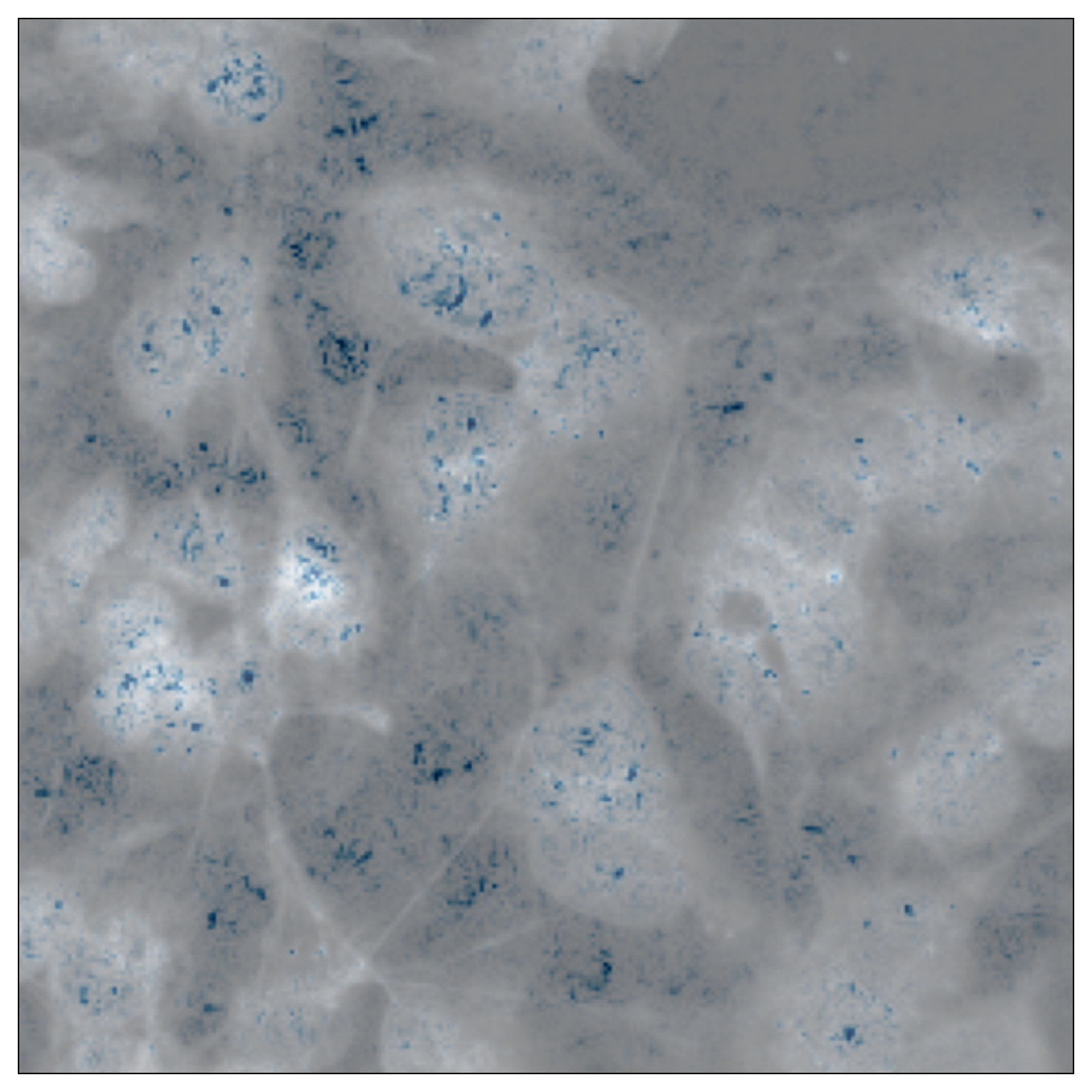}
    }%
    \subfigure[\textsc{RxRx1}-\ours]{
        \includegraphics[width=0.33\textwidth]{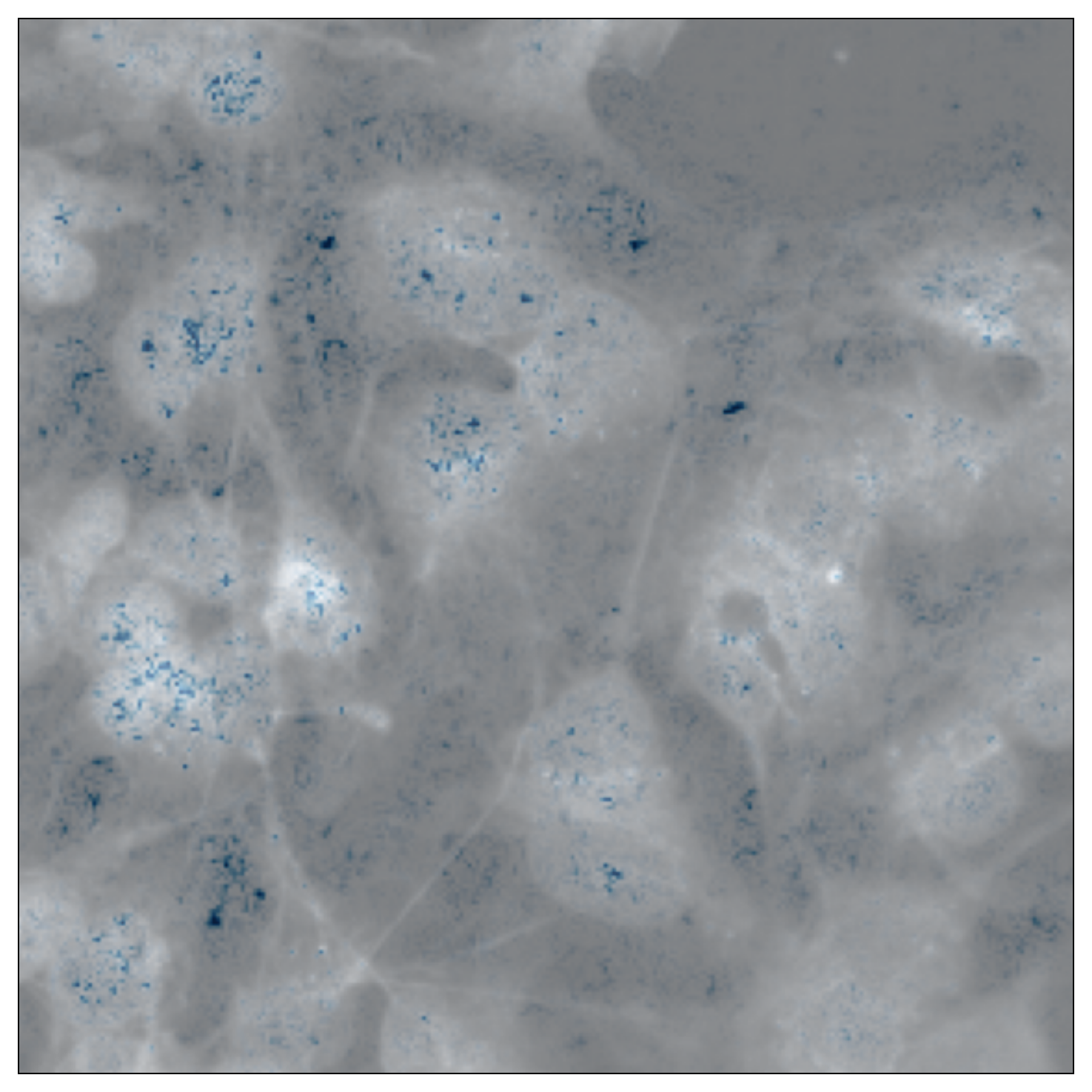}
    }%
    \\
    \vspace{-0.15in}
    \caption{Saliency map of feature learning on \textsc{RxRx1} benchmark. The blue dots are the salient features.
        A deeper blue color denotes more salient features. It can be found that \ours is able to learn more meaningful and diverse features than ERM and Bonsai.}
    \label{fig:gradcam_rxrx_appdx}
    \vspace{-2cm}
\end{figure}

\end{document}